\documentclass[11pt]{article} 
\usepackage{hyperref}
\usepackage{url}
\usepackage{smile}
\usepackage{graphicx} 
\usepackage{algorithm}
\usepackage{algorithmic}
\usepackage{epstopdf}
\usepackage[margin=0.8in]{geometry}
\usepackage[export]{adjustbox}
\usepackage{mathtools, cuted}
\usepackage{bbm}
\usepackage{wrapfig}
\usepackage{subcaption}
\usepackage{caption}
\usepackage{enumitem}
\usepackage{tabularx}
\usepackage{smile}
\usepackage{tcolorbox}
\usepackage{enumitem}
\usepackage{mathtools, cuted}
\usepackage{caption}
\usepackage{subcaption}

\numberwithin{equation}{section}




\usepackage[nocompress]{cite}

\hypersetup{
    colorlinks=true,
    linkcolor=blue,
    anchorcolor=blue, 
    citecolor=blue,
}

\allowdisplaybreaks


\title{
 First-order Policy Optimization for Robust Markov Decision Process
 \thanks{
This research was partially supported by NSF DMS-2134037 and AFOSR FA9550-22-1-0447.
 } 
    }
\author{
    Yan Li   \thanks{H. Milton Stewart School of Industrial and Systems Engineering, Georgia Institute of Technology, Atlanta, GA, 30332. (E-mail: \url{yli939@gatech.edu}).}
        \and 
    Guanghui Lan \thanks{H. Milton Stewart School of Industrial and Systems Engineering, Georgia Institute of Technology, Atlanta, GA, 30332. (E-mail: \url{george.lan@isye.gatech.edu}).}
    \and
    Tuo Zhao \thanks{H. Milton Stewart School of Industrial and Systems Engineering, Georgia Institute of Technology, Atlanta, GA, 30332. (E-mail: \url{tourzhao@gatech.edu}).}
}
\date{\vspace{-6ex}}

\begin{document}

\maketitle

\begin{abstract}
We consider the problem of solving robust Markov decision process (MDP), which involves a set of discounted, finite state, finite action space MDPs with uncertain transition kernels. The goal of planning is to find a robust policy that optimizes the worst-case values against the  transition uncertainties, and thus encompasses the standard MDP planning as a special case. For $(\mathbf{s},\mathbf{a})$-rectangular uncertainty sets, we establish several structural observations on the robust objective, which facilitates the development of a policy-based first-order method, namely the robust policy mirror descent (RPMD). An $\cO (\log(1/\epsilon))$ iteration complexity for finding an $\epsilon$-optimal policy  is established with linearly increasing stepsizes. We further develop a stochastic variant of the robust policy mirror descent method, named SRPMD, when the first-order information is only available through online interactions with the nominal environment. We show that the optimality gap converges linearly up to the noise level, and consequently establish an $\tilde{\cO}(1/\epsilon^2)$ sample complexity by developing a temporal difference learning method for policy evaluation. Both iteration and sample complexities are also discussed for RPMD with a constant stepsize. To the best of our knowledge, all the aforementioned results appear to be new for policy-based first-order methods applied to the robust MDP problem.
\end{abstract}


\section{Introduction}

We consider the problem of solving the robust Markov decision process (MDP) where the transition kernel is uncertain, and one seeks to learn a policy that behaves robustly against such uncertainties.
Specifically, 
a robust MDP  $\cM_\cU \coloneqq \cbr{\cM_u = (\cS, \cA, c, \PP_u, \gamma): u \in \cU}$ consists of a set of MDPs, where $\cS$ and $\cA$ denote the state and action space, respectively;  $\PP_u: \cS \times \cA \to [0,1]$ denotes the transition kernel, indexed by $u \in \cU$; $c: \cS \times \cA \to \RR$ denotes the cost function, which we assume with loss of generality that $0 < c(s,a) \leq 1$ for all $(s,a)$; $\gamma$ denotes the discount factor.
The set of MDPs differ from each other only in their respective transition kernels. 
The standard value function $V^{\pi}_u: \cS \to \RR$ of a policy $\pi$ with respect to MDP $\cM_u$,  is defined as 
\begin{align*}
V^{\pi}_u(s) = \EE \sbr{\tsum_{t=0}^\infty \gamma^t c(s_t, a_t) \big| s_0 = s, a_t \sim \pi(\cdot|s_t), s_{t+1} \sim \PP_u(\cdot| s_t, a_t) }, ~~ \forall s \in \cS.
\end{align*}

Our end goal is to learn a policy $\pi^*$ that is the solution of the following problem 
\begin{align}\label{formulation_plain}
\textstyle 
\min \cbr{V^{\pi}_r(s) \coloneqq \max_{u \in  \cU} V^{\pi}_u(s): ~ \pi \in \Pi} ,
\end{align} 
where $\Pi$ denotes the set of all stationary and randomized policies.
That is, \eqref{formulation_plain} aims to learn a policy that minimizes the worst-case value simultaneously for every state.
 Clearly, when $\cU$ is a singleton, the problem of finding robust policy \eqref{formulation_plain} reduces to solving a standard MDP planning problem. 
 
 Before any technical discussions, we first construct a simple example motivating our study of finding a robust policy in the sense of \eqref{formulation_plain}, when facing transition uncertainty. 
 Specifically, we will construct a pair of MDP $\cM$ and $\cM_u$, with the same $(\cS, \cA ,c, \gamma)$,  and the transition kernels of these two MDPs are close to each other, yet the optimal policy for $\cM$ achieves highly suboptimal value in $\cM_u$.
 In contrast, we show that there exists a policy that achieves close-to-optimal performances in both $\cM$ and $\cM_u$.
 
 \begin{figure}[t!]
 \centering
\makebox[\linewidth][c]{%
     \centering
     \begin{subfigure}[b]{0.35\textwidth}
         \centering
         \includegraphics[width=\textwidth]{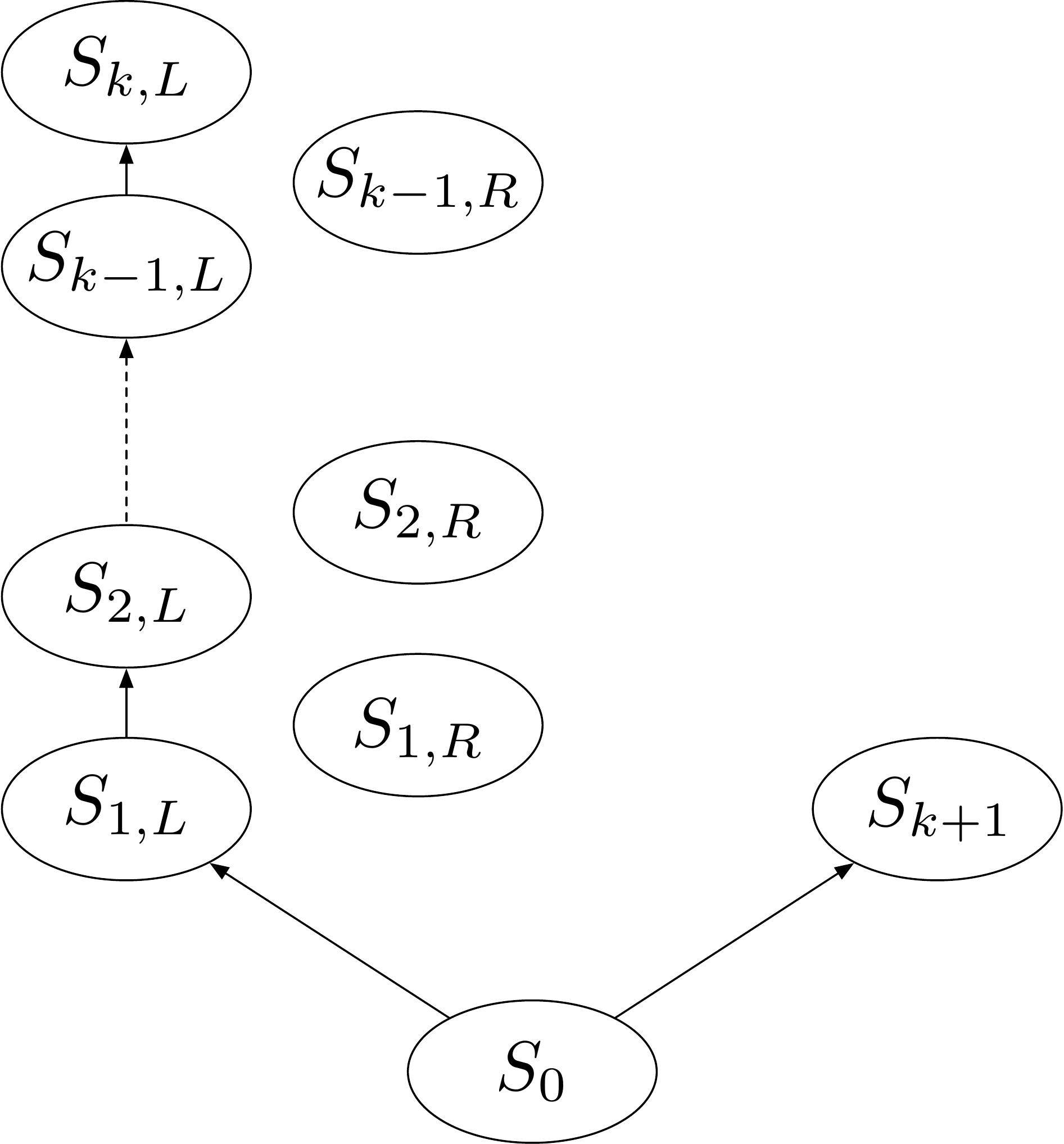}
             \vspace{-0.1in} 
             \caption{Nominal $\cM$ with transition kernel $\PP$.}
             \label{subfig:nominal}
     \end{subfigure}
     \hspace{0.1in}
     \begin{subfigure}[b]{0.35\textwidth}
         \centering
         \includegraphics[width=\textwidth]{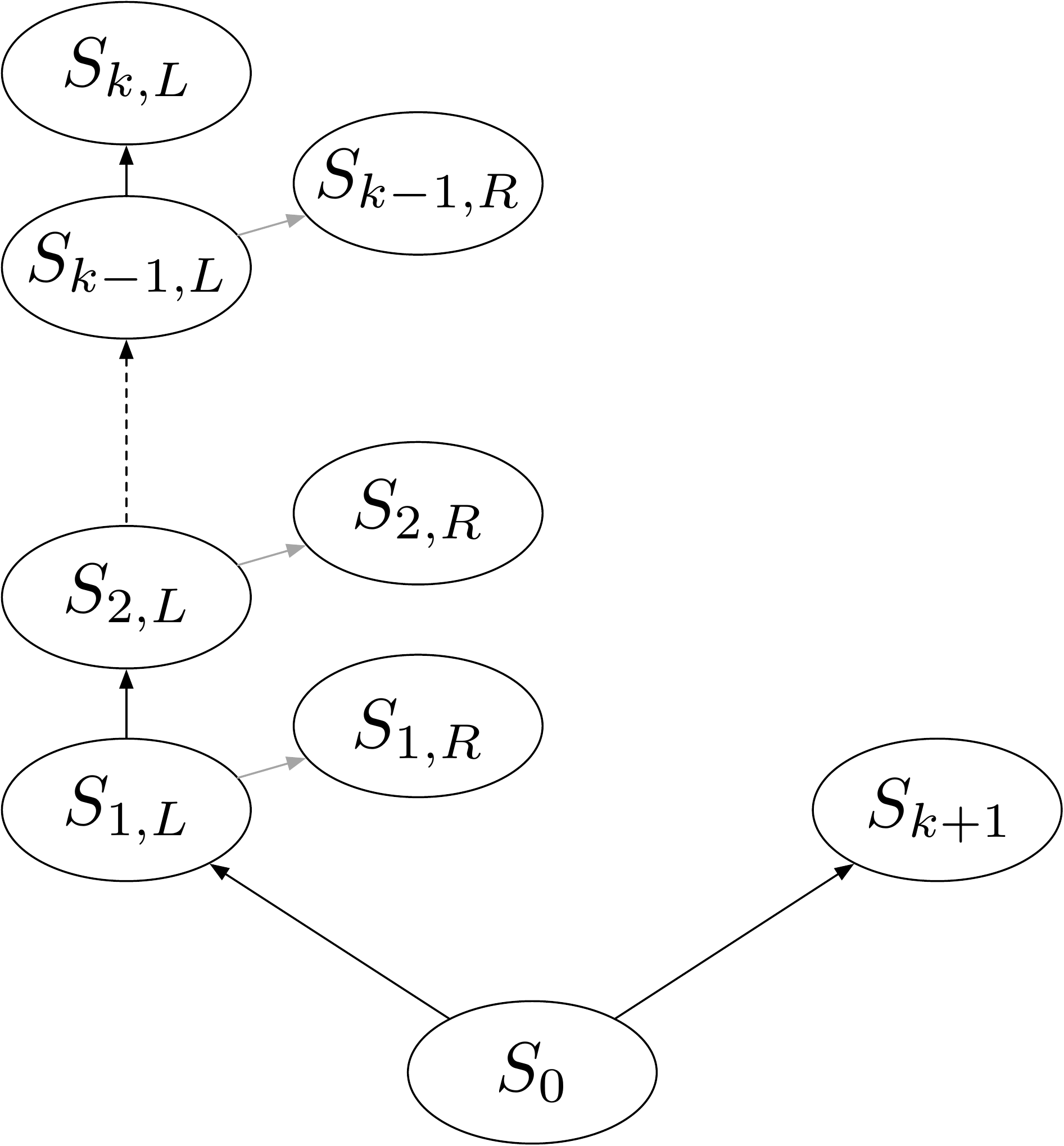}
             \vspace{-0.1in}
            \caption{$\cM_u$ with transition kernel $\PP_u \approx \PP$. }
            \label{subfig:perturb}
     \end{subfigure}  
     }
     \vspace{-0.25in}
  \caption{ 
A nominal MDP, and its approximate clone with small changes to the transition kernel. 
  }
  \label{fig:motivation}
\end{figure}

 \begin{example}[Tradeoff between planning efficiency and robustness]\label{example_motivate}
Consider the nominal MDP $\cM$ illustrated in Figure \ref{subfig:nominal}. 
Starting at $S_0$, the agent has two possible actions $\cbr{L, R}$,  consisting of going either left or right. 
Going right incurs a cost of $- 1$, going left incurs a cost of $0$.
We assume the cost occurs immediately after the action is made. 
Whenever the agent arrives at $S_{i, R}$ for $1 \leq i \leq k-1$ and $S_{k+1}$, the agent stays at the same place going forward.

Additionally, if the agent goes left from $S_0$, then in the ensuing rounds the agent has only one available action, which is to transit to the next state following the arrows.
No cost is incurred until the agent transits from $S_{k-1, L}$ to $S_{k,L}$, which incurs a cost of $-(1 + \epsilon) \gamma^{-k + 1}$  for some small positive number $\epsilon \ll 1$ (e.g., $\epsilon = 0.01$). 

Now consider another MDP $\cM_u$ that is exact the same as $\cM$, except in the transition kernel, illustrated in Figure \ref{subfig:perturb}.
In particular, for any $1 \leq m \leq k-1$, the transition changes to $\PP_u(S_{m+1, L} | S_{m, L}) = p, ~ \PP_u(S_{m, R} | S_{m, L}) = 1-p$, for some $p \in (0,1)$ close to $1$ (e.g., $p=0.99$).
Thus $\cM_u$ can be viewed as an approximate copy of $\cM$.

It should be clear that for MDP $\cM$, going left at $S_0$ incurs a value of $-(1+ \epsilon)$, which is close, but still strictly small than the value $-1$ of going right.
Thus the optimal policy $\pi^*$ for $\cM$ is to always go left.

However, when deploying $\pi^*$ in the slightly changed environment $\cM_u$, one can clearly see that going left at $S_0$ now incurs a value of $-(1+ \epsilon) p^{k-1}$, while going right still has the value of $-1$.
For $k$ large enough, we obverse that the value of going left approaches $0$, which is significantly worse than the value of going right.

In conclusion, we see that going left at $S_0$ serves as the optimal policy in $\cM$, but its performance degrades significantly despite being deployed at a similar environment $\cM_u$. 
In fact, such a policy is even worse than the policy of randomly going left or right with equal probability.
In contrast, going right at $S_0$ is an $\epsilon$-optimal policy in $\cM$, and also the optimal policy in $\cM_u$, thus being a much more desirable choice in terms of robustness.
\end{example}

The tensions between accuracy and robustness  have been discussed extensively in the supervised learning literature \cite{tsipras2018robustness, xu2009robustness, xu2008robust, Li2020Implicit}. Example \ref{example_motivate} demonstrates that similar tensions between cost-minimization (planning) and robustness also exists in the control of uncertain Markov decision process.
It should be noted that the key ingredient for the lack of robustness in the MDP demonstrated in Figure \ref{fig:motivation} is  the {reward (cost) sparseness}.
This feature has been widely observed for practical MDP applications \cite{pathak2017curiosity, nair2018overcoming}, and we believe the same mechanism can be one of the few important factors that lead to brittle robustness observed in existing reinforcement learning applications. 

To proceed, we will focus on the case of $(\mathbf{s}, \mathbf{a})$-rectangular uncertainty set, defined below.

\begin{definition}[$(\mathbf{s}, \mathbf{a})$-Rectangular Uncertainty]\label{def_rectangular}
We assume the  transition kernel $\PP_u$ for the MDP $\cM_u$ takes the form of 
\begin{align}\label{rectangular_set}
\PP_u (\cdot | s,a) = \PP_{\NN}(\cdot | s,a) + u(\cdot|s,a), ~ \forall (s,a) \in \cS \times \cA, 
\end{align}
where $\PP_{\NN}(\cdot | s, a)$ denotes the nominal transition kernel, and 
 $u \in \RR^{\abs{\cS} \times \abs{\cS} \times \abs{\cA}}$ denotes the perturbation to the nominal transition kernel. 
   The uncertainty is said to be rectangular if $\cU$ satisfies 
   \begin{align*}
   \cU = \Pi_{s \in \cS, a \in \cA} \cU_{s,a}, ~~ \text{where}~~ \cU_{s,a} = \cbr{u(\cdot|s,a): u \in \cU}.
   \end{align*}
We assume $\cU$ is compact.
 In addition, we let 
 \begin{align}\label{eq_ambiguity_set}
 \cP_{s,a} = \PP_{\NN}(\cdot|s,a) + \cU_{s,a}
 \end{align}
 denote the set of possible transition probabilities at $(s,a) \in \cS \times \cA$, which we refer to as the ambiguity set.
 \end{definition}

\begin{remark}
 While one can also consider uncertain cost function $c_u$ when modeling, our motivation to focus on modeling uncertain transitions  is due to the observation that cost function is mostly an endogenous user choice, and thus seems less suitable to be modeled as an uncertainty.
 \end{remark}
 
 From Definition \ref{def_rectangular}, it should be also clear that  Example \ref{example_motivate} can be readily modeled into a robust MDP with a rectangular uncertainty set.
 We will also define the nominal environment of the robust MDP $\cM_\cU$ as follows, a useful notion for our discussions in the stochastic settings.
 
\begin{definition}[Nominal Environment]
The nominal environment $\cM_{\mathrm{N}}$ for a robust MDP problem $\cM_\cU$, is the MDP with transition kernel $\PP_{\NN}$.
\end{definition}

Given a policy $\pi$, we define its robust value function $V^{\pi}_r: \cS \to \RR$ as  $V^{\pi}_r(s) = \max_{u \in \cU} V^{\pi}_u(s)$ for all $s\in \cS$.
Consequently, solving \eqref{formulation_plain} is equivalently to minimizing the robust value function:
\begin{align}\label{formulation_robust}
\textstyle
\pi \in  \Argmin_{\pi \in \Pi} V^{\pi}_r(s).
\end{align}
The existence of an optimal policy $\pi^*$ solving \eqref{formulation_robust} is well known in the literature \cite{nilim2005robust, iyengar2005robust},
and we denote the set of optimal policies as $\Pi^* \subseteq \Pi$.
Hence, we can succinctly reformulate \eqref{formulation_robust} into a single objective optimization problem 
\begin{align}\label{formulation_single_obj}
\textstyle
\min_{\pi \in \Pi} \cbr{f_\rho(\pi) \coloneqq \EE_{s \sim \rho} V^{\pi}_r(s) },
\end{align}
where $\rho$ is a nonnegative measure defined over the state space $\cS$.

For any $u \in \cU$, we also define the state-action value function  of policy $\pi$ with respect to $\cM_u$ as 
\begin{align*}
Q^{\pi}_u(s, a) = \EE \sbr{\tsum_{t=0}^\infty \gamma^t c(s_t, a_t) \big| s_0 = s, a_0 =a, , a_t \sim \pi(\cdot|s_t), s_{t} \sim \PP_u(\cdot| s_{t-1}, a_{t-1}), t \geq 1 }, ~\forall (s,a) \in \cS \times \cA.
\end{align*} 
Accordingly, the robust state-action value function  $Q^{\pi}_r: \cS \times \cA \to \RR$ as 
$Q^{\pi}_r(s,a) = \max_{u \in \cU} Q^{\pi}_{u} (s,a)$ for all $(s,a) \in \cS \times \cA$.
Note that from the definition of $Q^{\pi}_u$ and $V^{\pi}_u$,  we have 
\begin{align}\label{standard_value_q_relation}
V^{\pi}_{u} (s) = \inner{Q^{\pi}_u(s, \cdot)}{ \pi(\cdot | s) } \coloneqq \inner{Q_u^{\pi}}{\pi}_s.
\end{align}

Given a policy $\pi$, 
and an uncertainty $u \in \cU$,
 the discounted state visitation measure jointly induced by $(\pi, u)$ is defined as  
\begin{align}
d^{\pi,u}_s(s') & = (1-\gamma) \tsum_{t=0}^{\infty} \gamma^t \PP_{u}^{\pi}(s_t = s' | s_0 = s), \label{def_discounted_visit_measure} 
\end{align}
where $\PP_{u}^{\pi}(s_t = s' | s_0 = s) \coloneqq \PP \rbr{ s_t = s'| s_0 = s, a_t = \pi(\cdot|s_t), s_{t+1} \sim \PP_{u}(\cdot| s_t, a_t)} $ denotes the probability of reaching state $s'$ at timestep $t$, given initial state $s$, and following policy $\pi$ within MDP $\cM_u$.
Given any distribution $\rho$ over $\cS$, we define
distribution $d_{\rho}^{\pi, u}$ over $\cS$ as 
 $d_{\rho}^{\pi, u}(s') = \EE_{s\sim \rho} d_s^{\pi, u}(s')$.
For a finite set $\cX$, we will denote $\Delta_\cX$ as the $(\abs{\cX}-1)$-dimensional simplex.

\vspace{0.1in}
{\bf Related Literature.}
Solving robust MDP \eqref{formulation_plain} with rectangular uncertainty sets has been extensively studied in the dynamic programming literature.
Among value-based methods, value iteration (VI) is known to achieve linear convergence to the optimal robust values \cite{nilim2005robust, iyengar2005robust}.
When the environment is unknown, sample-based value based methods  \cite{ma2022distributionally, liu2022distributionally, zhou2021finite, roy2017reinforcement, wang2021online, panaganti2022sample}, including robust Q-learning, have also been developed to directly learn the optimal value function.
Policy-based methods, including the (modified) policy iteration (PI), have been studied in \cite{iyengar2005robust, ruszczynski2010risk, kaufman2013robust, wiesemann2013robust, ho2021partial}.
Approximate dynamic programming (ADP) techniques \cite{powell2007approximate} for both type of methods have also been developed, 
which allow approximate computation of policy update and evaluation  in PI \cite{badrinath2021robust, tamar2014scaling}, or approximate Bellman update of VI \cite{tamar2014scaling, ma2022distributionally, zhou2021finite}.
The application of ADP to policy-based methods also enables function approximation to be used to handle the curse of dimensionality \cite{tamar2014scaling, badrinath2021robust} while incorporating approximate policy evaluation (e.g., robust TD learning \cite{roy2017reinforcement, kose2020risk}).
It should be noted there also exists another complementary line of research, on studying robust MDP with uncertainty set beyond $(\mathbf{s}, \mathbf{a})$-rectangularity \cite{wiesemann2013robust, goyal2022robust, kumar2022efficient, derman2021twice}.

In addition to the prior developments in the context of dynamic programming, there has been a rising interests in  developing first-order methods for solving the special case of \eqref{formulation_single_obj},   where there is no uncertainty in the transition (e.g., $\cU$ being a singleton).
By using first-order information of objective \eqref{formulation_single_obj} to update the policy, 
these policy-based methods are thus termed policy gradient methods (PGM), with their convergence behavior extensively studied in the literature. 
Sublinear convergence of the optimality gap for constant stepsize PGMs  have been established in \cite{agarwal2021theory, lan2021policy}, 
and linearly converging variants have been proposed in \cite{lan2021policy, khodadadian2021linear, xiao2022convergence, cen2021fast},
with local superlinear convergence studied in \cite{li2022homotopic, khodadadian2021linear}.
\cite{li2022homotopic} recently further characterizes the policy convergence of a  PGM variant.
Moreover, stochastic PGMs, which utilize sample to estimate the first-order information, have also been proposed in \cite{lan2021policy, xu2020improving, shani2020adaptive}, and both sample and iteration complexity have been studied therein. 
Complementary to the policy-based first-order methods, \cite{goyal2022first} propose an accelerated first-order value-based method, and establish improved dependence on discount factor compared to value iteration.

In contrast to the aforementioned developments of PGMs for solving non-robust MDP,
solving robust MDP \eqref{formulation_single_obj} with first-order methods has been largely under-explored.
Specifically, \cite{grand2020scalable} propose a first-order value-based method derived based on value iteration,
while \cite{wang2022policy} seems to be the only PGM variant to date that directly aims to solve \eqref{formulation_single_obj}, which focuses on a subclass of polyhedral uncertainty.
Given the abundant empirical observations on the unsatisfactory  performance of PGM-trained RL agents when the deployment environment differs from the training environment \cite{shen2020deep, yang2022rorl}, there seems to be a practical need to develop first-order policy-based methods that can learn a policy with robustness  guarantees. 
 
\vspace{0.1in}
Our contributions mainly exist in the the following aspects.
First, we establish some new structural results for the robust Markov decision process.
In particular, we discuss the differentiability of robust values, a robust version of performance difference lemma (c.f., \cite{kakade2002approximately}), and a novel variational inequality perspective on solving robust MDP. 
These results serve as a crucial component that facilitates our ensuing computational development, and might be of independent interests for other algorithmic studies (e.g., natural robust policy gradient, see Section \ref{sec_rpmd}).

Second, we develop a first-order policy-based method, named robust policy mirror descent (RPMD),  for solving the robust MDP problem \eqref{formulation_single_obj}.
Despite the non-convex and non-smooth structure of the objective (see \cite{wang2022policy}), RPMD finds an $\epsilon$-optimal policy in $\cO(\log (1/\epsilon))$ iterations.
The established convergence results hold for any Bregman divergence, as long as the policy space has a bounded distance to the initial policy measured in the same divergence.
To the best of our knowledge, no existing PGM can attain the obtained iteration complexity when solving \eqref{formulation_single_obj}.
In addition, we also establish the sublinear convergence of constant-stepsize RPMD, for both Euclidean Bregman divergence, 
and a more general class of Bregman divergence applied to any relatively strongly convex uncertainty set.



Finally, we develop stochastic variants of the RPMD method, named SRPMD,  when the  first-order information is only available through online interactions with the nominal environment.
For general Bregman divergences, we show that SRPMD with linearly increasing stepsizes converges linearly up to the noise level, and consequently determine an $\cO(1/\epsilon^2)$  sample complexity.
For Euclidean Bregman divergence, we show an $\cO(1/\epsilon^3)$ sample complexity with a properly chosen constant stepsize. 
To the best of our knowledge, all the developed sample complexity results of RPMD appear to be new for PGM methods applied to the robust MDP problem.


The rest of this manuscript is organized as follows. 
Section \ref{sec_structural_props} makes some structural observations on the robust Markov decision process that will prove useful in the ensuing algorithmic developments. 
Section \ref{sec_rpmd} introduces the deterministic RPMD method and establish its convergence properties.
Section \ref{sec_srpmd} develops the stochastic variants of RPMD when only stochastic first-order information is available. 
Section \ref{sec_sample_complexity} then establishes the sample complexity for the proposed stochastic RPMD methods.
Finally, concluding remarks are made in Section \ref{sec_conclusion}.


\section{Structural Properties of Robust MDP}\label{sec_structural_props}

In this section, we develop some important observations on the structural properties of robust MDP, which will prove to be useful in our ensuing algorithmic developments.

\subsection{Structure of Robust Value Functions}
We first characterize the robust value function of any stochastic policy, following similar arguments for deterministic policies in \cite{nilim2005robust}.

\begin{proposition}\label{prop_bellman_v}
For robust MDP $\cM_\cU$ with a compact rectangular uncertainty set $\cU$, defined in Definition \ref{def_rectangular},
the robust value function satisfies the following nonlinear Bellman equation
\begin{align}\label{bellman_robust_value}
V^{\pi}_r(s) = 
\sum_{a \in \cA} c(s,a) \pi(a|s) + 
\gamma \sum_{a \in \cA} \pi(a|s) \max_{u \in \cU} \sum_{s' \in \cS}   \PP_u(s'|s,a) V^{\pi}_r(s') , ~~\forall s \in \cS.
\end{align}
In addition, a worst-case transition kernel $\PP_{u_\pi}$ for the policy $\pi$ is given by 
\begin{align}\label{def_worst_case_transition}
u_{\pi} (\cdot|s,a) \in \Argmax_{u(\cdot|s,a) \in \cU_{s,a}} \sum_{s' \in \cS} \PP_u(s'|s,a) V^{\pi}_r(s'), ~~ \forall (s,a) \in \cS \times \cA, 
\end{align}
or equivalently, 
\begin{align*}
V^{\pi}_r(s) = 
\sum_{a \in \cA} c(s,a) \pi(a|s) + 
\gamma \sum_{a \in \cA} \pi(a|s) \sum_{s' \in \cS} \PP_{u_{\pi}}(s'|s,a) V^{\pi}_r(s') ,~~ \forall s\in \cS.
\end{align*}
\end{proposition}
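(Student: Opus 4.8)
The plan is to realize $V^{\pi}_r$ as the unique fixed point of a \emph{robust Bellman operator} and then identify this fixed point. Define $T^{\pi}: \RR^{\abs{\cS}} \to \RR^{\abs{\cS}}$ by
\[
(T^{\pi} V)(s) = \sum_{a \in \cA} c(s,a)\pi(a|s) + \gamma \sum_{a \in \cA} \pi(a|s) \max_{u \in \cU} \sum_{s' \in \cS} \PP_u(s'|s,a) V(s'), \quad s \in \cS.
\]
By Definition \ref{def_rectangular} the inner maximization depends on $u$ only through the block $u(\cdot|s,a)$, so it equals $\max_{u(\cdot|s,a) \in \cU_{s,a}} \sum_{s'} \PP_u(s'|s,a) V(s')$; since $\cU_{s,a}$ is compact and $u(\cdot|s,a) \mapsto \sum_{s'} \PP_u(s'|s,a) V(s')$ is affine (hence continuous), this maximum is finite and attained. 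First I would check two elementary facts: $T^{\pi}$ is monotone (immediate, since $\PP_u(\cdot|s,a)\ge 0$ and $\pi(\cdot|s) \ge 0$), and $T^{\pi}$ is a $\gamma$-contraction in $\|\cdot\|_\infty$, which follows from $\abs{\max_x f(x) - \max_x g(x)} \le \max_x \abs{f(x) - g(x)}$ together with the fact that $\PP_u(\cdot|s,a)$ is a probability vector. By the Banach fixed-point theorem $T^{\pi}$ then has a unique fixed point $\bar V$, with $(T^{\pi})^n V_0 \to \bar V$ from any starting point $V_0$.

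The remaining task is to show $\bar V = V^{\pi}_r$, which I would do through two inequalities. For $\bar V \ge V^{\pi}_r$: fix any $u \in \cU$; the standard value $V^{\pi}_u$ solves the linear Bellman equation $V^{\pi}_u = T^{\pi}_u V^{\pi}_u$, where $T^{\pi}_u$ is the non-robust Bellman operator of $\cM_u$, obtained by replacing the inner max in $T^{\pi}$ by the single kernel $\PP_u$. Since $u \in \cU$, we have $T^{\pi}_u V \le T^{\pi} V$ pointwise for every $V$; hence $V^{\pi}_u = T^{\pi}_u V^{\pi}_u \le T^{\pi} V^{\pi}_u$, and iterating $T^{\pi}$ while using its monotonicity gives the nondecreasing chain $V^{\pi}_u \le T^{\pi} V^{\pi}_u \le (T^{\pi})^2 V^{\pi}_u \le \cdots \to \bar V$, so $\bar V \ge V^{\pi}_u$; taking the maximum over $u \in \cU$ yields $\bar V \ge V^{\pi}_r$. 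For $\bar V \le V^{\pi}_r$ --- the step I expect to be the crux --- I would exploit rectangularity to assemble the state-action-wise maximizers into one admissible perturbation. From $\bar V = T^{\pi}\bar V$ and the attainment noted above, pick $u_{\pi}(\cdot|s,a) \in \Argmax_{u(\cdot|s,a)\in\cU_{s,a}} \sum_{s'} \PP_u(s'|s,a)\bar V(s')$ for each $(s,a)$; by rectangularity the assembled $u_{\pi} = \rbr{u_{\pi}(\cdot|s,a)}_{(s,a)}$ lies in $\cU$, and by construction $\bar V = T^{\pi}_{u_{\pi}} \bar V$. As $T^{\pi}_{u_{\pi}}$ is a $\gamma$-contraction with unique fixed point $V^{\pi}_{u_{\pi}}$, this forces $\bar V = V^{\pi}_{u_{\pi}} \le V^{\pi}_r$.

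Combining the two inequalities gives $V^{\pi}_r = \bar V = T^{\pi} V^{\pi}_r$, which is exactly \eqref{bellman_robust_value}, and simultaneously shows $V^{\pi}_{u_{\pi}} = V^{\pi}_r$, so $\PP_{u_{\pi}}$ is indeed a worst-case transition kernel; since $u_{\pi}$ was chosen to attain the inner maximum against $\bar V = V^{\pi}_r$, it satisfies \eqref{def_worst_case_transition}, and plugging \eqref{def_worst_case_transition} back into \eqref{bellman_robust_value} gives the equivalent linear form. The main obstacle is thus the upper bound $\bar V \le V^{\pi}_r$: it is precisely the place where $(\mathbf{s},\mathbf{a})$-rectangularity is indispensable, since it lets the pointwise maximizers be glued into a single $u_{\pi} \in \cU$; the passage from deterministic policies (as in \cite{nilim2005robust}) to stochastic $\pi$ is otherwise cosmetic, the weights $\pi(a|s)$ passing harmlessly through all the contraction and monotonicity estimates.
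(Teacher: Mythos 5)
Your proposal is correct, and it reaches the conclusion by a somewhat different route than the paper. Both arguments ultimately rest on the same two pillars: the robust Bellman operator $\cT^{\pi}v = \max_{u\in\cU}\cT^{\pi}_u v$ is a monotone $\gamma$-contraction in $\norm{\cdot}_\infty$, and $(\mathbf{s},\mathbf{a})$-rectangularity together with compactness lets the state-action-wise maximizers be glued into a single admissible $u_\pi\in\cU$. Where you differ is in how you connect the fixed point $\bar V$ to $V^{\pi}_r$. The paper goes through the classical linear-programming characterization $V^{\pi}_u(s)=\max\{\mathbf{e}_s^\top v: v\le \cT^{\pi}_u v\}$, lifts it to a joint program over $(v,u)$, shows that program is equivalent to one with the elementwise-supremum constraint $v\le\cT^{\pi}v$, and then argues that the fixed point of $\cT^{\pi}$ dominates every feasible $v$ and is itself feasible, hence optimal; a separate step is needed to show a single $u'$ is worst-case uniformly over initial states. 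Your sandwich argument replaces all of that with two direct inequalities: $V^{\pi}_u\le\bar V$ for every $u$ via the monotone nondecreasing chain $V^{\pi}_u\le\cT^{\pi}V^{\pi}_u\le(\cT^{\pi})^2V^{\pi}_u\le\cdots\to\bar V$, and $\bar V=V^{\pi}_{u_\pi}\le V^{\pi}_r$ via the glued maximizer. This is more self-contained (no LP machinery, no separate uniform-in-$s$ argument, since $\bar V=V^{\pi}_{u_\pi}$ is an identity of vectors), and it correctly isolates the upper bound as the one place rectangularity is indispensable; what the paper's LP route buys is an explicit link to the occupancy-measure/LP formulation of MDPs used in \cite{nilim2005robust}, which is the formalism the authors lean on elsewhere.
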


It is worth mentioning that the worst-case environment $u_{\pi}$ defined in \eqref{def_worst_case_transition} can be non-unique. 
In this case, one can choose any of them by an arbitrary deterministic rule. 
Note that the last relation in Proposition \ref{prop_bellman_v} also shows that $V^{\pi}_r$ is the solution of the standard Bellman equation for standard value function with uncertainty $u_\pi$, denoted by $V^{\pi}_{u_{\pi}}$. Hence from the uniqueness of the solution for the Bellman equation, we obtain 
\begin{align}\label{robust_and_standard_value_relation}
V^{\pi}_r = V^{\pi}_{u_{\pi}}.
\end{align}

Following similar lines as in Proposition \ref{prop_bellman_v}, we can establish the following properties of $Q^{\pi}_r$.

\begin{proposition}\label{prop_robust_bellman_q}
The robust state-action value function $Q^{\pi}_r$ satisfies 
\begin{align}\label{eq_robust_bellman_q}
Q^{\pi}_r(s,a) =  c(s,a) + \max_{u \in \cU} \gamma \sum_{s' \in \cS}  \PP_u(s'|s,a) \sum_{a' \in \cA} \pi(a'|s') Q^{\pi}_r(s',a'), ~~\forall (s,a) \in \cS \times \cA.
\end{align}
Moreover, $Q^{\pi}_r$ and $V^{\pi}_r$ satisfies the following relation
\begin{align}\label{def_robust_q}
Q^{\pi}_r(s,a) =
c(s,a) + \gamma \max_{u \in \cU} \sum_{s' \in \cS} \PP_u(s'|s,a) V^{\pi}_r(s'),  ~~\forall (s,a) \in \cS \times \cA,
\end{align}
Finally, we also have
\begin{align}
V^{\pi}_r(s) &= \inner{Q^{\pi}_r(s, \cdot)}{ \pi(\cdot | s) } \coloneqq \inner{Q^{\pi}_r}{\pi}_s, ~~\forall s \in \cS ,\label{robust_value_q_relation} \\
Q^{\pi}_r &= Q^{\pi}_{u_{\pi}} \label{robust_and_standard_q_relation} ,
\end{align}
where $u_{\pi}$ is defined as in \eqref{def_worst_case_transition}.
\end{proposition}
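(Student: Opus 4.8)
The plan is to deduce all four identities from Proposition~\ref{prop_bellman_v}, whose two key outputs are the worst-case kernel $\PP_{u_\pi}$ satisfying \eqref{def_worst_case_transition} and the identity $V^\pi_r = V^\pi_{u_\pi}$ of \eqref{robust_and_standard_value_relation}. The backbone of the argument is to prove \eqref{robust_and_standard_q_relation}, namely $Q^\pi_r = Q^\pi_{u_\pi}$, first; the remaining three relations then drop out with no new ideas.

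First I would record the elementary one-step relation $Q^\pi_u(s,a) = c(s,a) + \gamma \sum_{s'} \PP_u(s'|s,a) V^\pi_u(s')$, valid for each \emph{fixed} $u \in \cU$ --- this follows by peeling the $t=0$ term off the defining series of $Q^\pi_u$ and applying the Markov property, exactly as for standard MDPs. Now fix $(s,a)$ and take an arbitrary $u \in \cU$. Two observations: (i) $V^\pi_u(s') \le V^\pi_r(s')$ for every $s'$, by the definition $V^\pi_r = \max_{u\in\cU} V^\pi_u$; and (ii) the map $u \mapsto \sum_{s'} \PP_u(s'|s,a) V^\pi_r(s')$ depends on $u$ only through the slice $u(\cdot|s,a) \in \cU_{s,a}$, since $\PP_u(\cdot|s,a) = \PP_{\NN}(\cdot|s,a) + u(\cdot|s,a)$ while $V^\pi_r$ is a fixed function. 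Combining (i), (ii), and the defining property \eqref{def_worst_case_transition} of $u_\pi$ gives
\[
Q^\pi_u(s,a) \;\le\; c(s,a) + \gamma \sum_{s'} \PP_u(s'|s,a) V^\pi_r(s') \;\le\; c(s,a) + \gamma \sum_{s'} \PP_{u_\pi}(s'|s,a) V^\pi_r(s').
\]
Replacing $V^\pi_r$ by $V^\pi_{u_\pi}$ via \eqref{robust_and_standard_value_relation} turns the rightmost expression into $Q^\pi_{u_\pi}(s,a)$ by the one-step relation applied at $u_\pi$; since $u_\pi \in \cU$, maximizing over $u$ yields $Q^\pi_r(s,a) = Q^\pi_{u_\pi}(s,a)$, which is \eqref{robust_and_standard_q_relation}.

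Granting \eqref{robust_and_standard_q_relation}, the rest is bookkeeping. For \eqref{def_robust_q} I would apply the one-step relation to $Q^\pi_{u_\pi}$, substitute $V^\pi_{u_\pi} = V^\pi_r$, and invoke \eqref{def_worst_case_transition} once more to write $\sum_{s'}\PP_{u_\pi}(s'|s,a) V^\pi_r(s') = \max_{u\in\cU}\sum_{s'}\PP_u(s'|s,a) V^\pi_r(s')$. For \eqref{robust_value_q_relation} I would chain $V^\pi_r(s) = V^\pi_{u_\pi}(s) = \inner{Q^\pi_{u_\pi}(s,\cdot)}{\pi(\cdot|s)} = \inner{Q^\pi_r(s,\cdot)}{\pi(\cdot|s)}$, using \eqref{standard_value_q_relation} and \eqref{robust_and_standard_q_relation}. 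Finally, \eqref{eq_robust_bellman_q} follows by substituting $V^\pi_r(s') = \inner{Q^\pi_r(s',\cdot)}{\pi(\cdot|s')}$ from \eqref{robust_value_q_relation} into \eqref{def_robust_q}.

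The one point I would take care to state precisely --- and the only place where any content beyond Proposition~\ref{prop_bellman_v} enters --- is observation (ii) together with the feasibility of $u_\pi$: it is $(\mathbf{s},\mathbf{a})$-rectangularity, $\cU = \Pi_{s,a}\cU_{s,a}$, that lets us replace the $(s,a)$-slice of an arbitrary $u$ by $u_\pi(\cdot|s,a)$ without being forced to change the other slices, so that the decoupled maximizer is again an element of $\cU$. Absent this product structure the chain of inequalities above breaks. Everything else is a direct transcription of the non-robust Bellman calculus, so I expect no other obstacle.
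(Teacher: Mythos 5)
Your proof is correct, and it reorganizes the argument in a way that differs from the paper's. The paper proves \eqref{eq_robust_bellman_q} by rerunning the fixed-point/optimization machinery of Proposition~\ref{prop_bellman_v} for the state-action value, establishes \eqref{def_robust_q} by the two-sided bound $\max_{u}\sum_{s'}\PP_u(s'|s,a)V^{\pi}_u(s')=\max_{u}\sum_{s'}\PP_u(s'|s,a)V^{\pi}_r(s')$, obtains \eqref{robust_value_q_relation} by averaging \eqref{def_robust_q} over $a\sim\pi(\cdot|s)$ and matching against \eqref{bellman_robust_value}, and only then deduces \eqref{robust_and_standard_q_relation} from uniqueness of the solution to the standard Bellman system for $Q^{\pi}_{u_\pi}$. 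You invert the order: you prove $Q^{\pi}_r=Q^{\pi}_{u_\pi}$ first by the direct sandwich $Q^{\pi}_u\le c+\gamma\sum_{s'}\PP_u V^{\pi}_r\le c+\gamma\sum_{s'}\PP_{u_\pi}V^{\pi}_r=Q^{\pi}_{u_\pi}$ with equality attained at $u=u_\pi$, and then read off the other three identities by substitution, in particular obtaining \eqref{eq_robust_bellman_q} for free from \eqref{def_robust_q} and \eqref{robust_value_q_relation} rather than by a separate fixed-point argument. The underlying inequality content is the same (both hinge on $V^{\pi}_u\le V^{\pi}_r$, the achievability of the maximum at the single rectangular $u_\pi$, and \eqref{robust_and_standard_value_relation}), but your route avoids invoking uniqueness of the Bellman fixed point and avoids duplicating the Proposition~\ref{prop_bellman_v} machinery, which makes it somewhat more economical; the paper's route has the minor advantage that \eqref{eq_robust_bellman_q} is established as a self-contained fixed-point characterization independent of the other identities. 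You are also right that rectangularity is the one load-bearing hypothesis: it is what lets the $(s,a)$-slice maximization decouple so that a single $u_\pi\in\cU$ attains the maximum simultaneously at every $(s,a)$.
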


The proof of Proposition \ref{prop_bellman_v} and \ref{prop_robust_bellman_q} are deferred to Appendix \ref{section_appendix}.
We next proceed to discuss the differentiability of robust values.

\subsection{Differentiability of Robust Values}

A seemingly natural concept to minimize $f_\rho$ defined in \eqref{formulation_single_obj} is to iteratively update the policy by following its negative gradient direction.
At the same time, 
it should be noted that even for a fixed uncertainty (i.e., non-robust MDP), 
 for any state $s \in \cS$,  the value function $V^{\pi}_u(s)$ is only well defined over the set of randomized policies $\Pi$, for which any $\pi \in \Pi$ must satisfy $\mathbf{1}^\top \pi(\cdot |s) = 1$ for all $s\in \cS$.
Hence $\Pi = \mathrm{dom}(f_\rho)$ belongs to a lower-dimensional subspace in $ \RR^{ \abs{\cS}  \times \abs{\cA}}$ with dimension $(\abs{\cA} - 1)\abs{\cS}$.
Consequently, 
the implicitly assumed  gradient  $\nabla f_\rho$, given by 
$\lim_{\delta \to \mathbf{0}, \delta \in \RR^{\abs{\cS} \times \abs{\cA}}} \abs{ f_\rho(\pi + \delta) - f_\rho(\pi) - \inner{\nabla f_\rho(\pi)}{\delta}} /  \norm{\delta} \to 0$,
is not well defined.
Given this observation, we then adopt the following definition of policy gradient for objective \eqref{formulation_single_obj} when considering direct policy parameterization.

\begin{definition}[Policy Gradient with Direct Parameterization]\label{def_policy_grad}
For any function of policy $f: \Pi \to \RR$, the policy gradient of $f$ with respect to $\pi$, denoted by $\nabla f(\pi)$, is the vector satisfying the following,
\begin{align}\label{policy_grad_def}
\lim_{\delta \to \mathbf{0}, \pi + \delta \in \Pi} \abs{ f(\pi + \delta) - f(\pi) - \inner{\nabla f(\pi)}{\delta}} /  \norm{\delta}_2 \to 0.
\end{align}
\end{definition}

Given Definition \ref{def_policy_grad}, it should be clear that for any policy $\pi \in \Pi$, if $\nabla f(\pi)$ exists, then it is unique. 
Definition \ref{def_policy_grad} slightly generalizes the notion of Fr\'echet derivative of objective \eqref{formulation_single_obj} as $\Pi$ is a closed set  in its affine span.
We then proceed to derive the policy gradient of $V^{\pi}_u(s)$ for a given uncertainty $u \in \cU$.

\begin{lemma}[Policy Gradient for Fixed Uncertainty with Direct Parameterization]\label{lemma_pg_standard}
Given $u \in \cU$ and a state $s \in \cS$, then the policy gradient of $V^{\pi}_u(s)$ with respect to $\pi$ is given by 
\begin{align*}
\nabla V^{\pi}_u (s) [s',a] = \tfrac{1}{1-\gamma} d_s^{\pi, u} (s') Q^{\pi}_u (s', a), ~~ \forall (s, a) \in \cS \times \cA,
\end{align*}
where $\nabla V^{\pi}_u (s) [s',a]$ denotes the entry of $\nabla V^{\pi}_u (s)$ corresponding to the $(s', a)$ state-action pair.
\end{lemma}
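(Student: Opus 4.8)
The plan is to establish the policy-gradient formula by first verifying that the candidate vector $g(s)[s',a] \coloneqq \tfrac{1}{1-\gamma} d_s^{\pi,u}(s') Q^{\pi}_u(s',a)$ satisfies the defining limit \eqref{policy_grad_def}, and then invoking uniqueness from Definition \ref{def_policy_grad}. The natural route is through the classical policy-gradient identity for the \emph{standard} (non-robust) value function with a fixed transition kernel $\PP_u$: for direct parameterization, this is a known result, but since here we must be careful that $\pi$ only ranges over $\Pi$ (the product of simplices), I would prove it from first principles using the performance difference lemma for the fixed-kernel MDP $\cM_u$.

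The key steps, in order, are as follows. First I would recall/derive the performance difference lemma for $\cM_u$: for any two policies $\pi, \tilde\pi \in \Pi$,
\[
V^{\tilde\pi}_u(s) - V^{\pi}_u(s) = \tfrac{1}{1-\gamma} \EE_{s' \sim d^{\tilde\pi,u}_s} \inner{Q^{\pi}_u(s',\cdot)}{\tilde\pi(\cdot|s') - \pi(\cdot|s')}.
\]
Second, setting $\tilde\pi = \pi + \delta$ with $\pi + \delta \in \Pi$ (so each $\delta(\cdot|s')$ sums to zero), this gives
\[
f(\pi+\delta) - f(\pi) = \tfrac{1}{1-\gamma} \tsum_{s'} d^{\pi+\delta,u}_s(s') \inner{Q^{\pi}_u(s',\cdot)}{\delta(\cdot|s')},
\]
writing $f = V^{\pi}_u(s)$. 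Third, I would replace $d^{\pi+\delta,u}_s$ by $d^{\pi,u}_s$ at the cost of an error term, showing $\norm{d^{\pi+\delta,u}_s - d^{\pi,u}_s}_1 = O(\norm{\delta}_2)$ — this follows because $d^{\pi,u}_s$ is a rational (indeed smooth) function of $\pi$ on a neighborhood, e.g.\ via $d^{\pi,u}_s = (1-\gamma)(I - \gamma P_\pi^\top)^{-1} e_s$ where $P_\pi$ depends linearly on $\pi$. Combined with boundedness of $Q^{\pi}_u$ (since $0 < c \le 1$ gives $\norm{Q^{\pi}_u}_\infty \le \tfrac{1}{1-\gamma}$), the cross term $\tsum_{s'}(d^{\pi+\delta,u}_s(s') - d^{\pi,u}_s(s'))\inner{Q^{\pi}_u(s',\cdot)}{\delta(\cdot|s')}$ is $O(\norm{\delta}_2^2)$, hence negligible in the limit \eqref{policy_grad_def}. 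What remains is exactly $\inner{g(s)}{\delta}$, which verifies the limit and, by uniqueness, identifies $\nabla V^{\pi}_u(s) = g(s)$.

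The main obstacle is the bookkeeping around the constrained domain: since $\nabla f(\pi)$ is only required to satisfy \eqref{policy_grad_def} for $\delta$ with $\pi + \delta \in \Pi$, the candidate $g(s)$ is not unique as a vector in $\RR^{\abs{\cS}\times\abs{\cA}}$ unless one notes that \eqref{policy_grad_def} only pins down $\nabla f(\pi)$ modulo the orthogonal complement of the tangent space of $\Pi$ — but Definition \ref{def_policy_grad}'s uniqueness claim is precisely what resolves this, so I would make explicit that we report the representative given by the stated formula (the same convention used implicitly throughout). The secondary technical point is the Lipschitz continuity of $\pi \mapsto d^{\pi,u}_s$ near $\pi$; this is routine from the Neumann-series / matrix-inverse representation and the fact that $\norm{(I-\gamma P_\pi^\top)^{-1}}$ is uniformly bounded by $\tfrac{1}{1-\gamma}$, so I would state it as a lemma-level estimate rather than grinding through the constants.
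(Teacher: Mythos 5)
Your proposal is correct and follows essentially the same route as the paper's proof: apply the performance difference lemma for the fixed-kernel MDP $\cM_u$, split off the linear term $\inner{g}{\delta}$, and control the remainder via the bound $\norm{d_s^{\pi+\delta,u} - d_s^{\pi,u}}_1 = \cO(\norm{\delta}_2)$ obtained from the resolvent identity and $\norm{(I-\gamma\PP^\pi_u)^{-1}}_1 \le (1-\gamma)^{-1}$. Your remark on the non-uniqueness of the gradient modulo the orthogonal complement of the tangent space of $\Pi$ is a point the paper passes over silently, but it does not affect the validity of either argument.
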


\begin{proof}
For MDP $\cM_u$, and any pair of policies $(\pi, \pi')$ with $\pi' = \pi + \delta$, we have 
\begin{align*}
V^{\pi'}_u(s) - V^{\pi}_u(s)
&\overset{(a)}{=} \tfrac{1}{1-\gamma} \EE_{s' \sim d_s^{\pi', u}} \inner{Q^{\pi}_u(s', \cdot)}{\pi'(\cdot|s') - \pi(\cdot|s')} \\
&= \tfrac{1}{1-\gamma} \sum_{s' \in \cS}  \sum_{a \in \cA}
d_s^{\pi', u}(s') Q^{\pi}_u(s', a) \rbr{\pi'(a|s') - \pi(a|s') } \\
& = \tfrac{1}{1-\gamma} \sum_{s' \in \cS}  \sum_{a \in \cA}
d_s^{\pi, u}(s') Q^{\pi}_u(s', a) \rbr{\pi'(a|s') - \pi(a|s') }
 \\
&  ~~~~~~ + 
 \tfrac{1}{1-\gamma} \sum_{s' \in \cS}  \sum_{a \in \cA}
(d_s^{\pi', u}(s') - d_s^{\pi, u}(s')) Q^{\pi}_u(s', a) \rbr{\pi'(a|s') - \pi(a|s') } \\
& = \underbrace{\tfrac{1}{1-\gamma} \sum_{s' \in \cS}  \sum_{a \in \cA}
d_s^{\pi, u}(s') Q^{\pi}_u(s', a) \delta(a|s') }_{(A)} \\
&  ~~~~~~ + 
\underbrace{ \tfrac{1}{1-\gamma} \sum_{s' \in \cS}  \sum_{a \in \cA}
(d_s^{\pi', u}(s') - d_s^{\pi, u}(s')) Q^{\pi}_u(s', a) \rbr{\pi'(a|s') - \pi(a|s') } }_{(B)},
\end{align*}
where equality $(a)$ follows directly from the performance difference lemma for standard MDPs \cite{kakade2002approximately,lan2021policy}.
It is clear that term $(A) = \inner{g}{\delta}$, where the entry of $g$ associated with $(s',a)$ state-action pair, denoted by $g(a|s')$, is given by 
$g(a|s') = d_s^{\pi, u}(s') Q^{\pi}_u(s' , a)$.
It remains to show that term $(B) = \smallO(\norm{\pi - \pi'}_2)$ where we identify $\pi$ as a matrix in $\RR^{\abs{\cS} \times \abs{\cA}}$.
To this end, is suffices to show  $\abs{d_s^{\pi', u}(s') - d_s^{\pi, u}(s')} = \cO (\norm{\pi - \pi'}_2)$ for any $s' \in \cS$.

Let us define $\PP^{\pi}_u : \cS \times \cS \to [0,1]$ by $\PP^{\pi}_u (s', s) = \sum_{a \in \cA} \PP_u(s'|s,a) \pi(a|s)$, then for any $\pi \in \Pi$,  we obtain
\begin{align}
d_{s}^{\pi, u}
= (1-\gamma) \sum_{t=0}^{\infty} \gamma^t (\PP^{\pi}_u)^t e_{s} = (1 -\gamma) \rbr{ 1- \gamma \PP^{\pi}_u}^{-1} e_s,
\end{align}
where in the last inequality we use the fact that $\rho( \gamma \PP^{\pi}_u) \leq \gamma < 1$.
Hence we have 
\begin{align*}
d^{\pi', u}_{s} - d^{\pi, u}_s 
& = (1-\gamma) \rbr{
 \rbr{ I - \gamma \PP^{\pi'}_{u}}^{-1} -  \rbr{  I - \gamma \PP^{\pi}_{u}}^{-1}
} e_s \\
& = 
(1-\gamma) \gamma
\rbr{ I - \gamma \PP^{\pi'}_{u} }^{-1}
\rbr{\PP^{\pi}_{u}  - \PP^{\pi'}_{u} }
\rbr{ I - \gamma \PP^{\pi}_{u} }^{-1}
 e_s ,
\end{align*}
where the last equality uses the matrix identity
$A^{-1} - B^{-1} = A^{-1} (B-A) B^{-1}$ for any invertible matrix $A, B$.
Note that 
\begin{align}\label{bound_matrix_ell1_norm}
\norm{\rbr{ I - \gamma \PP^{\pi}_{u} }^{-1}}_1
= \rbr{\min \cbr{ \norm{\rbr{ I - \gamma \PP^{\pi}_{u} } x}_1:  \norm{x}_1 = 1} }^{-1}
 \leq (1-\gamma)^{-1}
 \end{align}
  for any $\pi \in \Pi$ and $u \in \cU$,
we can then further obtain
\begin{align}\label{state_visitation_continuity_of_pi_fix_u}
\norm{d^{\pi', u}_{s} - d^{\pi, u}_s }_1 
\leq \tfrac{\gamma}{1-\gamma} \norm{ \PP^{\pi'}_u - \PP^{\pi}_u}_1 \overset{(a)}{=} \cO (\norm{\pi - \pi'}_\infty) \overset{(b)}{=} \cO (\norm{\pi - \pi'}_2), 
\end{align}
where the equality $(a)$ simply follows from the definition of $\PP^{\pi}_u$, and $(b)$ follows from the equivalence of norm.
The proof is then completed.
\end{proof}

 Lemma \ref{lemma_pg_standard} serves as an important stepping stone for establishing the existence of Fr\'echet subdifferential for policy objective $f_\rho$ in \eqref{formulation_single_obj}, when viewing it as an extended real-valued function with domain~$\Pi$.
 
 \begin{lemma}[Fr\'echet Subgradient of Robust MDP]\label{lemma_frechet_subgrad}
 Define $\overline{f}_\rho : \RR^{\abs{\cS} \abs{\cA}} \to \overline{R} = \RR \cup \cbr{+\infty}$ as the extended-value version of function $f_\rho$, that is, 
 $\overline{f}_\rho(\pi) = f_\rho(\pi)$ for any $\pi \in \Pi$ and $\overline{f}_\rho(\pi) = \infty$ otherwise. 
 For any $\pi \in \Pi$, let $\nabla f_\rho(\pi) \in \RR^{\abs{\cS}  \abs{\cA}}$ with the $(s,a)$-entry specified as 
 \begin{align}\label{frechet_subgrad_of_objective}
 \nabla f_\rho(\pi)[s,a] 
 = \tfrac{1}{1-\gamma} d_\rho^{\pi, u_\pi}(s) Q^\pi_{u_\pi}(s,a), ~~\forall (s,a) \in \cS \times \cA.
 \end{align}
Then $ \nabla f_\rho(\pi)$ is a Fr\'echet subgradient of $\overline{f}_\rho$ at $\pi$.
 \end{lemma}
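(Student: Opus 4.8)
The plan is to verify directly from the definition of Fr\'echet subgradient that $\nabla f_\rho(\pi)$ as specified in \eqref{frechet_subgrad_of_objective} satisfies
\begin{align*}
\liminf_{\delta \to \mathbf{0},\, \pi + \delta \in \Pi} \frac{\overline{f}_\rho(\pi + \delta) - \overline{f}_\rho(\pi) - \inner{\nabla f_\rho(\pi)}{\delta}}{\norm{\delta}_2} \geq 0.
\end{align*}
Since $\overline{f}_\rho = +\infty$ outside $\Pi$, only directions $\delta$ with $\pi + \delta \in \Pi$ matter, so this reduces to analyzing $f_\rho(\pi + \delta) - f_\rho(\pi) - \inner{\nabla f_\rho(\pi)}{\delta}$ for feasible perturbations. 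The first step is to relate the robust objective to a fixed-uncertainty objective using the structural results already established: by \eqref{robust_and_standard_value_relation}, $V^\pi_r = V^\pi_{u_\pi}$ for the worst-case kernel $u_\pi$ from \eqref{def_worst_case_transition}, so $f_\rho(\pi) = \EE_{s\sim\rho} V^\pi_{u_\pi}(s)$. Crucially, for the perturbed policy $\pi' = \pi + \delta$ we have the one-sided bound $V^{\pi'}_r(s) = \max_{u\in\cU} V^{\pi'}_u(s) \geq V^{\pi'}_{u_\pi}(s)$, since $u_\pi \in \cU$. Therefore
\begin{align*}
f_\rho(\pi') - f_\rho(\pi) \;\geq\; \EE_{s\sim\rho}\sbr{V^{\pi'}_{u_\pi}(s) - V^{\pi}_{u_\pi}(s)},
\end{align*}
which reduces the robust increment to a fixed-uncertainty increment with $u = u_\pi$.

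The second step is to invoke Lemma \ref{lemma_pg_standard} with $u = u_\pi$: the function $\pi \mapsto \EE_{s\sim\rho} V^\pi_{u_\pi}(s)$ is Fr\'echet differentiable at $\pi$ (in the sense of Definition \ref{def_policy_grad}) with gradient whose $(s,a)$-entry equals $\tfrac{1}{1-\gamma} d_\rho^{\pi,u_\pi}(s) Q^\pi_{u_\pi}(s,a)$ — which is exactly $\nabla f_\rho(\pi)[s,a]$ in \eqref{frechet_subgrad_of_objective}. (Strictly, Lemma \ref{lemma_pg_standard} gives this for each fixed $s$; taking the $\rho$-expectation and noting $d_\rho^{\pi,u}(s') = \EE_{s\sim\rho} d_s^{\pi,u}(s')$ gives the aggregated statement, and the $o(\norm{\delta}_2)$ error terms combine linearly.) Combining with the inequality from the first step,
\begin{align*}
f_\rho(\pi+\delta) - f_\rho(\pi) - \inner{\nabla f_\rho(\pi)}{\delta} \;\geq\; \sbr{\EE_{s\sim\rho} V^{\pi+\delta}_{u_\pi}(s) - \EE_{s\sim\rho} V^\pi_{u_\pi}(s) - \inner{\nabla f_\rho(\pi)}{\delta}} \;=\; o(\norm{\delta}_2),
\end{align*}
so dividing by $\norm{\delta}_2$ and taking $\liminf$ yields a value $\geq 0$, which is precisely the Fr\'echet subgradient condition.

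The main subtlety — and the step I would be most careful with — is the interplay between the convention that the Fr\'echet subgradient inequality is one-sided and the fact that $\Pi$ is not full-dimensional. One must confirm that Definition \ref{def_policy_grad}'s notion of policy gradient for the fixed-uncertainty objective (a two-sided Fr\'echet-type limit over the affine span of $\Pi$) still implies the desired $o(\norm{\delta}_2)$ control when restricted to the feasible directions used here; this is immediate since two-sided control implies one-sided control. A second point worth stating explicitly is why only $u_\pi$ (and not some other maximizer $u_{\pi'}$ varying with $\delta$) is needed: the argument exploits asymmetry — we lower-bound $V^{\pi'}_r$ by plugging in the \emph{fixed} $u_\pi$, which is legitimate precisely because we only need a lower bound on the increment for the subgradient (not the supergradient) inequality. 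No continuity of $u \mapsto u_\pi$ is required, which is fortunate given the earlier remark that $u_\pi$ may be non-unique.
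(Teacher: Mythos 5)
Your proposal is correct and follows essentially the same route as the paper's proof: lower-bounding $V^{\pi'}_r(s)=V^{\pi'}_{u_{\pi'}}(s)\geq V^{\pi'}_{u_\pi}(s)$ with the fixed worst-case kernel $u_\pi$, then applying Lemma \ref{lemma_pg_standard} at $u=u_\pi$ and taking the $\liminf$ in the Fr\'echet subgradient definition. The extra remarks on one-sidedness and on not needing continuity or uniqueness of $u_\pi$ are accurate but do not change the argument.
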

 
 \begin{proof}
 Fixing $\pi$, it suffices to consider any $\pi'$ also in $\Pi$. Let $\delta = \pi' - \pi$, we have
 \begin{align*}
 V^{\pi'}_{u_{\pi'}} (s) - V^{\pi}_{u_\pi} (s) & =
 V^{\pi'}_{u_{\pi'}} (s) - V^{\pi'}_{u_{\pi}} (s)
 + V^{\pi'}_{u_{\pi}} (s) - V^{\pi}_{u_{\pi}} (s) \\
 & \overset{(a)}{\geq}  V^{\pi'}_{u_{\pi}} (s) - V^{\pi}_{u_{\pi}} (s) \\
 & \overset{(b)}{=} \tfrac{1}{1-\gamma} \sum_{s' \in \cS}  \sum_{a \in \cA}
d_s^{\pi, u_\pi}(s') Q^{\pi}_{u_\pi}(s', a) \delta(a|s') 
+
\smallO(\norm{\pi - \pi'}_2),
 \end{align*}
 where $(a)$ follows from the definition of $u_{\pi'}$, which guarantees $V^{\pi'}_{u_{\pi'}} (s) - V^{\pi'}_{u_{\pi}} (s) \geq 0$ for any  $s \in \cS$,
 and $(b)$ follows directly from the proof of Lemma \ref{lemma_pg_standard}.
 Thus we obtain from the definition of $f_\rho$ that 
 \begin{align*}
 f_\rho(\pi') - f_\rho(\pi) 
&  \geq 
 \tfrac{1}{1-\gamma} \sum_{s \in \cS} \rho(s) \sum_{s' \in \cS}  \sum_{a \in \cA}
d_s^{\pi, u_\pi}(s') Q^{\pi}_{u_\pi}(s', a) \delta(a|s') 
+
\smallO(\norm{\pi - \pi'}_2) \\
& =  \tfrac{1}{1-\gamma}  \sum_{s \in \cS}  \sum_{a \in \cA}
d_\rho^{\pi, u_\pi}(s) Q^{\pi}_{u_\pi}(s, a) \delta(a|s) 
+
\smallO(\norm{\pi - \pi'}_2),
 \end{align*}
 Dividing both sides of the previous relation by $\norm{\delta}$, and taking infimum over $\pi' \neq \pi$, and further taking $\norm{\delta} \downarrow 0$, we obtain 
 \begin{align*}
  \liminf_{\pi' \to \pi, \pi' \neq \pi} \rbr{ f_\rho(\pi') - f_\rho(\pi)  -  \tfrac{1}{1-\gamma}  \sum_{s \in \cS}  \sum_{a \in \cA}
d_\rho^{\pi, u_\pi}(s) Q^{\pi}_{u_\pi}(s, a) \delta(a|s) } \big/\norm{\pi' - \pi}_2 \geq 0.    
 \end{align*}
 Hence we conclude from the prior relation, and the definition of Fr\'echet subdifferential \cite{kruger2003frechet} that $\nabla f_\rho(\pi) \in \RR^{\abs{\cS}  \abs{\cA}}$ with the $(s,a)$-entry specified as in \eqref{frechet_subgrad_of_objective}
 is a Fr\'echet subgradient of $\overline{f}_\rho$ at  $\pi \in \Pi$.
 \end{proof}

With Lemma \ref{lemma_frechet_subgrad} in place, we proceed to discuss the differentiability of robust values, and determine the analytic form of the gradients. 
Our next lemma shows that the objective \eqref{formulation_single_obj} is indeed almost everywhere differentiable (in the sense of Definition \ref{def_policy_grad}), when taking the measure to be $(\abs{\cA} - 1){\abs{\cS}}$-dimensional Hausdorff measure. 
We remark that Hausdorff measure is a natural choice for our discussion of differentiability over $\Pi$, as it adapts to the low-dimensional nature of $\Pi$. 
On the other hand, choosing Lebesgue measure yields a trivial almost-everywhere-differentiable claim, as $\Pi$ itself takes a Lebesgue measure of zero in $\RR^{\abs{\cS} \abs{\cA}}$.  

\begin{lemma}[Almost-everywhere Differentiability of Robust MDP]\label{lemma_almost_differentiability}
The policy optimization objective \eqref{formulation_single_obj} is  everywhere differentiable in its domain $\Pi$ except in a zero-measure set (in $(\abs{\cA} - 1){\abs{\cS}}$-dimensional Hausdorff measure),
and the differentiability is defined as in Definition~\ref{def_policy_grad}.
\end{lemma}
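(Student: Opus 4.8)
The plan is to view $\Pi$ inside its affine hull and apply Rademacher's theorem there. I would write $\mathrm{aff}(\Pi) = \pi_0 + T$, where $T = \cbr{\delta \in \RR^{\abs{\cS}\abs{\cA}}: \sum_{a} \delta(a|s) = 0 ~\text{for all}~ s \in \cS}$ is the $(\abs{\cA}-1)\abs{\cS}$-dimensional space of admissible policy increments, and fix a linear isometry $T \cong \RR^{(\abs{\cA}-1)\abs{\cS}}$; since $\mathrm{aff}(\Pi)$ is flat, the $(\abs{\cA}-1)\abs{\cS}$-dimensional Hausdorff measure on it agrees, up to a fixed constant, with Lebesgue measure under this identification, so ``Hausdorff-null'' and ``Lebesgue-null'' coincide for subsets of $\mathrm{aff}(\Pi)$. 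At any $\pi$ in the relative interior $\mathrm{relint}(\Pi)$, every sufficiently small $\delta$ with $\pi + \delta \in \Pi$ lies in $T$, and conversely; hence Definition \ref{def_policy_grad} at such a $\pi$ is precisely the assertion that the restriction $f_\rho|_{\mathrm{aff}(\Pi)}$ is ordinarily (Fr\'echet) differentiable at $\pi$. It therefore suffices to show $f_\rho|_{\mathrm{aff}(\Pi)}$ is differentiable at Lebesgue-a.e.\ point of the open set $\mathrm{relint}(\Pi)$: the leftover set $\Pi \setminus \mathrm{relint}(\Pi)$ is contained in the finite union of the faces $\cbr{\pi \in \Pi: \pi(a|s) = 0}$, each of which lies in an affine subspace of dimension $(\abs{\cA}-1)\abs{\cS} - 1$ and thus has $(\abs{\cA}-1)\abs{\cS}$-dimensional Hausdorff measure zero.

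The one analytic input I need is that $f_\rho$ is locally Lipschitz on $\Pi$. This follows from Lemma \ref{lemma_pg_standard}: for each fixed $u \in \cU$ and $s\in\cS$, the gradient of $\pi \mapsto V^\pi_u(s)$ has $(s',a)$-entry $\tfrac{1}{1-\gamma}d_s^{\pi,u}(s')Q^\pi_u(s',a)$, and because $d_s^{\pi,u}$ is a probability measure and $0 < Q^\pi_u \le \tfrac{1}{1-\gamma}$ (using $0 < c \le 1$), this gradient is bounded in norm by $\tfrac{1}{(1-\gamma)^2}$, uniformly in $(u,s)$. Hence each $\pi \mapsto V^\pi_u(s)$ is $\tfrac{1}{(1-\gamma)^2}$-Lipschitz with a constant not depending on $u$ or $s$; taking the pointwise supremum over the compact set $\cU$ preserves the Lipschitz constant, and averaging over $s\sim\rho$ then shows $f_\rho$ is Lipschitz on $\Pi$ (with constant proportional to $\tfrac{1}{(1-\gamma)^2}$). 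Transported through the isometry above, $f_\rho|_{\mathrm{aff}(\Pi)}$ becomes a Lipschitz function on the open subset of $\RR^{(\abs{\cA}-1)\abs{\cS}}$ corresponding to $\mathrm{relint}(\Pi)$.

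Rademacher's theorem then applies and yields differentiability of this function at Lebesgue-a.e.\ point; pulling the exceptional set back to $\mathrm{relint}(\Pi)$ and adjoining $\Pi\setminus\mathrm{relint}(\Pi)$ produces a set $N\subseteq\Pi$ of $(\abs{\cA}-1)\abs{\cS}$-dimensional Hausdorff measure zero outside of which $f_\rho$ is differentiable in the sense of Definition \ref{def_policy_grad}. As a consistency check, wherever this gradient exists it must coincide (modulo $T^\perp$) with the Fr\'echet subgradient $\nabla f_\rho(\pi)$ exhibited in Lemma \ref{lemma_frechet_subgrad}, since a Fr\'echet-differentiable function has its gradient as its only Fr\'echet subgradient.

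The only delicate points, more bookkeeping than genuine obstacle, will be (i) verifying that the one-sided, direction-restricted limit in Definition \ref{def_policy_grad} is really the same as ordinary Fr\'echet differentiability of $f_\rho|_{\mathrm{aff}(\Pi)}$ at relative-interior points, so that Rademacher's conclusion is exactly what is claimed, and (ii) confirming that passing between $(\abs{\cA}-1)\abs{\cS}$-dimensional Hausdorff measure on the affine hull and Lebesgue measure on $\RR^{(\abs{\cA}-1)\abs{\cS}}$ preserves null sets. Both hinge only on $\Pi$ having nonempty relative interior and on its relative boundary being a finite union of lower-dimensional polytopes, so neither poses a real difficulty.
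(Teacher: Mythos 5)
Your proposal is correct and follows essentially the same route as the paper's proof: coordinatize $\Pi$ by its $(\abs{\cA}-1)\abs{\cS}$-dimensional affine hull, establish a uniform Lipschitz bound on $f_\rho$ (the paper gets it from the performance-difference-based Lemma \ref{lemma_lipschitz_value_wrt_policy}, you from the gradient bound in Lemma \ref{lemma_pg_standard} plus the fact that a pointwise supremum of uniformly Lipschitz functions is Lipschitz --- the same estimate in integrated form), invoke Rademacher's theorem, and transfer the Lebesgue-null exceptional set back to a Hausdorff-null subset of $\Pi$ while discarding the lower-dimensional relative boundary. Your use of an isometric identification streamlines the measure-transfer step that the paper carries out explicitly via covering arguments, but the argument is the same.
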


\begin{proof}
We defer the proof to Appendix \ref{section_appendix} given its technical nature.
\end{proof}

Combining Lemma \ref{lemma_frechet_subgrad} and \ref{lemma_almost_differentiability}, we are now ready to show that for any $\pi \in \mathrm{ReInt}(\Pi)$, whenever $f_\rho$ is differentiable at $\pi$, then the policy gradient defined in the sense of Definition \ref{def_policy_grad} is  given exactly by \eqref{frechet_subgrad_of_objective} in Lemma \ref{lemma_frechet_subgrad}.

\begin{lemma}[Policy Gradient for Robust MDP with Direct Parameterization]\label{lemma_grad_rmdp_equiv_subgrad}
At any policy $\pi \in \mathrm{ReInt}(\Pi)$, except for a zero measure set (in $(\abs{\cA} -1)\abs{\cS}$-dimensional Hausdorff measure),  the policy optimization objective $f_\rho$ defined in \eqref{formulation_single_obj} is differentiable, and its policy gradient $\nabla f_\rho$ defined in the sense of Definition \ref{def_policy_grad}, is given by \eqref{frechet_subgrad_of_objective}.
\end{lemma}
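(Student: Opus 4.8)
The plan is to combine the Fr\'echet subgradient characterization from Lemma \ref{lemma_frechet_subgrad} with the almost-everywhere differentiability from Lemma \ref{lemma_almost_differentiability}, using the standard fact that at a point where a function is Fr\'echet differentiable, the (unique) derivative coincides with any Fr\'echet subgradient. First I would fix $\pi \in \mathrm{ReInt}(\Pi)$ lying outside the exceptional zero-measure set $N$ from Lemma \ref{lemma_almost_differentiability}, so that $f_\rho$ is differentiable at $\pi$ in the sense of Definition \ref{def_policy_grad}; call the policy gradient $\nabla f_\rho(\pi)$, which by the remark following Definition \ref{def_policy_grad} is unique. On the other hand, Lemma \ref{lemma_frechet_subgrad} supplies a specific vector $g$ with $(s,a)$-entry $\tfrac{1}{1-\gamma} d_\rho^{\pi,u_\pi}(s) Q^\pi_{u_\pi}(s,a)$ that is a Fr\'echet subgradient of $\overline f_\rho$ at $\pi$, i.e. $\liminf_{\pi' \to \pi} \bigl( f_\rho(\pi') - f_\rho(\pi) - \langle g, \pi'-\pi\rangle\bigr)/\|\pi'-\pi\|_2 \geq 0$, with $\pi'$ ranging over $\Pi$.

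The key step is to show $\nabla f_\rho(\pi) = g$ as elements of the affine span of $\Pi$. From differentiability in the sense of \eqref{policy_grad_def} we have $f_\rho(\pi') - f_\rho(\pi) - \langle \nabla f_\rho(\pi), \pi'-\pi\rangle = \smallO(\|\pi'-\pi\|_2)$ as $\pi' \to \pi$ within $\Pi$. Subtracting this from the Fr\'echet subgradient inequality gives $\liminf_{\pi' \to \pi} \langle \nabla f_\rho(\pi) - g, \pi'-\pi\rangle / \|\pi'-\pi\|_2 \geq 0$, where again $\pi'$ ranges over $\Pi$. Since $\pi \in \mathrm{ReInt}(\Pi)$, the feasible directions $\delta = \pi'-\pi$ with $\pi'$ near $\pi$ in $\Pi$ span the entire tangent (difference) subspace $T = \{\delta \in \RR^{|\cS||\cA|}: \mathbf{1}^\top \delta(\cdot|s) = 0 \ \forall s\}$ and, crucially, come in pairs $\pm\delta$ (if $\pi + \delta \in \Pi$ near $\pi$ then so is $\pi - \delta$, by relative-interiority). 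Applying the inequality to both $\delta$ and $-\delta$ forces $\langle \nabla f_\rho(\pi) - g, \delta\rangle = 0$ for all $\delta \in T$, hence the orthogonal projections of $\nabla f_\rho(\pi)$ and $g$ onto $T$ agree. Since the policy gradient in Definition \ref{def_policy_grad} is only determined up to the orthogonal complement of $T$ (note $\langle \nabla f_\rho(\pi), \delta\rangle$ only ever sees $\delta \in T$), we may take $\nabla f_\rho(\pi)$ to be exactly the expression \eqref{frechet_subgrad_of_objective}; equivalently, one checks that $g$ as defined there already satisfies \eqref{policy_grad_def}, which is immediate by reversing the subtraction.

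I would conclude by noting that the exceptional set is the same zero-measure set (in $(|\cA|-1)|\cS|$-dimensional Hausdorff measure) furnished by Lemma \ref{lemma_almost_differentiability}, intersected with $\mathrm{ReInt}(\Pi)$, so the claimed statement holds off a Hausdorff-null set as required. The main obstacle I anticipate is the bookkeeping around the ambient-space non-uniqueness: Definition \ref{def_policy_grad} pins down $\nabla f(\pi)$ uniquely as a vector in $\RR^{|\cS||\cA|}$ because the limit \eqref{policy_grad_def} is taken over $\pi+\delta \in \Pi$ and $\Pi$ is full-dimensional in its affine span, so the pairing $\pm\delta$ argument does give genuine uniqueness rather than uniqueness-up-to-$T^\perp$ — one must be careful to invoke relative-interiority of $\pi$ precisely so that enough directions $\delta$ (a relative neighborhood of $0$ in $T$) are admissible, which is exactly why the statement restricts to $\pi \in \mathrm{ReInt}(\Pi)$. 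A secondary point worth a line is verifying that $g$ from \eqref{frechet_subgrad_of_objective} indeed lies in (or can be chosen in) the affine span so that equality with $\nabla f_\rho(\pi)$ is meaningful; this follows since \eqref{frechet_subgrad_of_objective} is built from the per-state gradients in Lemma \ref{lemma_pg_standard}, mirroring the non-robust case.
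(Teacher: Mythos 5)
Your proposal is correct and follows essentially the same route as the paper: combine the Fr\'echet subgradient from Lemma \ref{lemma_frechet_subgrad} with the almost-everywhere differentiability from Lemma \ref{lemma_almost_differentiability}, and identify the derivative with that subgradient at each point of differentiability. The only difference is cosmetic — the paper performs an explicit change of variables into the $(\abs{\cA}-1)\abs{\cS}$-dimensional subspace and cites Proposition 1.1 of \cite{kruger2003frechet} for the fact that the Fr\'echet subdifferential is a singleton at a differentiable point, whereas you prove that fact directly via the $\pm\delta$ pairing available at relative-interior points, correctly handling the up-to-$T^\perp$ ambiguity in Definition \ref{def_policy_grad}.
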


\begin{proof}
The proof is deferred to Appendix \ref{section_appendix}.
\end{proof}

As the last result in this subsection, we show that if for any $\pi$, the corresponding worst-case uncertainty $u_\pi$ is unique, then the robust policy optimization objective \eqref{formulation_single_obj} is indeed differentiable everywhere inside $\mathrm{ReInt}(\Pi)$.
A sufficient condition can also be found in Lemma \ref{lemma_lipschitz_transition_wrt_policy}, which certifies the uniqueness of the worst-case uncertainty.

\begin{lemma}\label{lemma_everywhere_differentiability_with_unique_assump}
If for any $\pi \in \mathrm{ReInt}(\Pi)$, the corresponding worst-case environment defined in \eqref{def_worst_case_transition} is unique.
Then the policy optimization objective \eqref{formulation_single_obj} is differentiable everywhere in $\mathrm{ReInt}(\Pi)$ in the sense of Definition \ref{def_policy_grad}, and the policy gradient in given as \eqref{frechet_subgrad_of_objective}.
\end{lemma}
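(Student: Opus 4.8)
\medskip
\noindent\textbf{Proof proposal.}
The plan is to bootstrap the one-sided estimate already contained in Lemma~\ref{lemma_frechet_subgrad} into a genuine (two-sided) differentiability statement. Fix $\pi \in \mathrm{ReInt}(\Pi)$ and write $g = \nabla f_\rho(\pi)$ for the vector defined in \eqref{frechet_subgrad_of_objective}. For every $\pi' \in \Pi$, Lemma~\ref{lemma_frechet_subgrad} supplies the lower estimate $f_\rho(\pi') - f_\rho(\pi) \geq \inner{g}{\pi'-\pi} + \smallO(\norm{\pi'-\pi}_2)$, so it suffices to establish the matching upper estimate $f_\rho(\pi') - f_\rho(\pi) \leq \inner{g}{\pi'-\pi} + \smallO(\norm{\pi'-\pi}_2)$; adding the two yields $\abs{f_\rho(\pi') - f_\rho(\pi) - \inner{g}{\pi'-\pi}} = \smallO(\norm{\pi'-\pi}_2)$, which is exactly differentiability in the sense of Definition~\ref{def_policy_grad} with gradient $g$ given by \eqref{frechet_subgrad_of_objective}.

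For the upper estimate, put $\delta = \pi' - \pi$. Since $u_{\pi'}$ need not be worst-case for $\pi$, we have $V^{\pi}_{u_{\pi'}}(s) \leq V^{\pi}_r(s) = V^{\pi}_{u_\pi}(s)$ for every $s$, whence $f_\rho(\pi') - f_\rho(\pi) \leq \EE_{s\sim\rho}\sbr{V^{\pi'}_{u_{\pi'}}(s) - V^{\pi}_{u_{\pi'}}(s)}$. Applying the standard-MDP performance difference lemma (equality $(a)$ in the proof of Lemma~\ref{lemma_pg_standard}) to the \emph{fixed} uncertainty $u_{\pi'}$ rewrites the right-hand side as $\tfrac{1}{1-\gamma}\sum_s \rho(s)\sum_{s',a} d_s^{\pi',u_{\pi'}}(s')\,Q^{\pi}_{u_{\pi'}}(s',a)\,\delta(a|s')$. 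I would then decompose $d_s^{\pi',u_{\pi'}}(s')Q^{\pi}_{u_{\pi'}}(s',a)$ as $d_s^{\pi,u_\pi}(s')Q^{\pi}_{u_\pi}(s',a) + \bigl(d_s^{\pi',u_{\pi'}}(s') - d_s^{\pi,u_\pi}(s')\bigr)Q^{\pi}_{u_{\pi'}}(s',a) + d_s^{\pi,u_\pi}(s')\bigl(Q^{\pi}_{u_{\pi'}}(s',a) - Q^{\pi}_{u_\pi}(s',a)\bigr)$. Summed against $\tfrac{1}{1-\gamma}\delta(a|s')$ and averaged over $\rho$, the first piece is precisely $\inner{g}{\delta}$ (using the definition \eqref{frechet_subgrad_of_objective} and $d_\rho^{\pi,u_\pi}(s') = \sum_s\rho(s)d_s^{\pi,u_\pi}(s')$); using $\norm{Q^{\pi}_u}_\infty \leq (1-\gamma)^{-1}$, $\norm{d_s^{\pi,u}}_1 = 1$, and equivalence of norms, the second piece is bounded by $\cO(\norm{\delta}_2)\cdot\sum_s\rho(s)\norm{d_s^{\pi',u_{\pi'}} - d_s^{\pi,u_\pi}}_1$ and the third by $\cO(\norm{\delta}_2)\cdot\norm{Q^{\pi}_{u_{\pi'}} - Q^{\pi}_{u_\pi}}_\infty$. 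Both are $\smallO(\norm{\delta}_2)$ once we verify $d_s^{\pi',u_{\pi'}} \to d_s^{\pi,u_\pi}$ and $Q^{\pi}_{u_{\pi'}} \to Q^{\pi}_{u_\pi}$ as $\pi'\to\pi$.

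These two convergences are precisely where the uniqueness hypothesis enters, and establishing the continuity of $\pi \mapsto u_\pi$ at $\pi$ is the step I expect to be the main obstacle. I would prove it by a Berge/compactness argument: along any $\pi_k \to \pi$, the uncertainties $u_{\pi_k}$ lie in the compact set $\cU$; the map $\pi \mapsto V^{\pi}_r$ is (Lipschitz) continuous, being the pointwise maximum over $u\in\cU$ of the functions $\pi \mapsto V^{\pi}_u(s)$, whose policy gradients from Lemma~\ref{lemma_pg_standard} are bounded by $(1-\gamma)^{-2}$ uniformly in $u$; hence, passing to a convergent subsequence $u_{\pi_{k_j}} \to \bar u \in \cU$ and letting $j\to\infty$ in the defining inequalities \eqref{def_worst_case_transition} for $u_{\pi_{k_j}}$ (whose integrand $w \mapsto \sum_{s'}(\PP_{\NN}(s'|s,a) + w(s'))V^{\pi_{k_j}}_r(s')$ is jointly continuous in $w$ and in $V^{\pi}_r$), we find that $\bar u(\cdot|s,a)$ is again a maximizer in \eqref{def_worst_case_transition} for $\pi$; uniqueness forces $\bar u = u_\pi$, and since every cluster point of the bounded sequence $(u_{\pi_k})$ equals $u_\pi$, we conclude $u_{\pi_k}\to u_\pi$. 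Granting this, $Q^{\pi}_{u_{\pi'}} \to Q^{\pi}_{u_\pi}$ follows from continuity of $u \mapsto Q^{\pi}_u$ — write $Q^{\pi}_u(s,a) = c(s,a) + \gamma\sum_{s'}\PP_u(s'|s,a)V^{\pi}_u(s')$, with $\PP_u$ affine in $u$ and $V^{\pi}_u$ continuous in $u$ as the fixed point of a $\gamma$-contraction depending continuously on $u$ — while $d_s^{\pi',u_{\pi'}} \to d_s^{\pi,u_\pi}$ follows from the joint continuity of $(\pi,u)\mapsto d_s^{\pi,u} = (1-\gamma)\rbr{I - \gamma\PP^{\pi}_u}^{-1}e_s$ (already used in the proof of Lemma~\ref{lemma_pg_standard}) together with continuity of matrix inversion. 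Combining the upper and lower estimates finishes the proof, and the gradient is read off to be \eqref{frechet_subgrad_of_objective}.
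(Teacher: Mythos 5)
Your proposal is correct, but it takes a genuinely different route from the paper. The paper proves this lemma by invoking Danskin's theorem: it first passes to the affine coordinates $a^\pi = \cM^{-1}(\pi)$ so that the domain becomes an open set in $\RR^{(\abs{\cA}-1)\abs{\cS}}$ (Danskin requires a full-dimensional open domain, which $\Pi$ is not), verifies that $V^{\pi_a}_u(s)$ is differentiable in $a$ with partial gradient jointly continuous in $(a,u)$, and then uses the uniqueness hypothesis as the singleton-argmax condition of Danskin's theorem to differentiate the pointwise maximum directly. You instead reuse the one-sided Fr\'echet-subgradient inequality of Lemma~\ref{lemma_frechet_subgrad} as the lower estimate and supply the matching upper estimate by bounding $f_\rho(\pi')-f_\rho(\pi)$ via $V^{\pi'}_{u_{\pi'}}-V^{\pi}_{u_{\pi'}}$, applying the standard performance difference lemma at the fixed uncertainty $u_{\pi'}$, and controlling the two correction terms through continuity of $(\pi,u)\mapsto d_s^{\pi,u}$ and $u\mapsto Q^\pi_u$ together with a compactness-plus-uniqueness argument showing $u_{\pi'}\to u_\pi$; your three-term decomposition is algebraically exact and the Berge-type argument for continuity of the argmax is where the uniqueness hypothesis correctly enters. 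Your route is more elementary and self-contained (no appeal to Danskin, no coordinate change needed since you work with Definition~\ref{def_policy_grad} directly), and it only requires uniqueness of $u_\pi$ at the point of differentiation rather than on a neighborhood; the paper's route is shorter once the coordinate-change machinery from Lemma~\ref{lemma_almost_differentiability} is in place and packages the continuity verifications into the hypotheses of a classical theorem. Both arguments ultimately rest on the same continuity facts for $d_s^{\pi,u}$, $Q^\pi_u$, and $V^\pi_r$.
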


\begin{proof}
The proof is deferred to Appendix \ref{section_appendix}.
\end{proof}

Given our prior discussions on the  differentiability of objective \eqref{formulation_single_obj},  one can perhaps directly update the policy using the policy gradient specified in \eqref{frechet_subgrad_of_objective}. 
Moreover, the results developed in this subsection seems to be readily extensible for computing policy gradient when function approximation is adopted.

On the other hand, it should be noted that the task of calculating or estimating the gradient  for the robust MDP seems difficult when one can only access information through interactions with the nominal environment,  except for some special subclasses of uncertainty sets. 
In particular, although one can sample directly from the worst-case environment $\cM_{u_\pi}$ to perform estimation, in practice we believe this is against the principle of pursuing robustness.
Instead, a much more desirable alternative is to train the policy in a fixed (nominal) environment, without actually deploying it in its worst-case environment.
More importantly, due to the nonconvex and potentially nonsmooth landscape, one typically could only get a sublinear convergence to a stationary point in the best-iterate sense, which would further require additional landscape analysis for showing approximate optimality of the learned policy  \cite{wang2022policy}.

In the next subsection, we introduce an alternative viewpoint of solving the robust MDP based on variational inequalities.
As we will demonstrate in Section \ref{sec_rpmd}, solving the variational inequality allows us to bypass the aforementioned difficulties,
and develop a policy mirror descent method with fast global convergence guarantees.

\subsection{A Variational Inequality Perspective}

To proceed, we introduce a characterization that upper bounds the difference of robust values for two policies, and serves as a keystone in the ensuing algorithmic developments.

\begin{lemma}\label{lemma_perf_diff}
For any pair of policies $\pi, \pi'$, we have 
\begin{align}\label{ineq_perf_diff}
V^{\pi'}_r(s) - V^{\pi}_r(s)
\leq \tfrac{1}{1-\gamma} \EE_{s' \sim d^{\pi', u_{\pi'}}_s} \inner{Q^{\pi}_r}{\pi' - \pi}_{s'}.
\end{align}
\end{lemma}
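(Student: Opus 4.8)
The plan is to establish \eqref{ineq_perf_diff} by bounding the difference $V^{\pi'}_r(s) - V^{\pi}_r(s)$ through the worst-case transition kernel associated with $\pi'$. Recall from \eqref{robust_and_standard_value_relation} that $V^{\pi'}_r = V^{\pi'}_{u_{\pi'}}$, and that $V^{\pi}_r$ satisfies the \emph{standard} Bellman equation under the transition kernel $\PP_{u_{\pi'}}$ with a nonnegative slack; more precisely, since $u_{\pi'}$ is only guaranteed to be the argmax for $\pi'$ (not $\pi$), the robust value $V^{\pi}_r$ dominates the standard value $V^{\pi}_{u_{\pi'}}$, i.e. $V^{\pi}_r(s) \geq V^{\pi}_{u_{\pi'}}(s)$ for all $s$. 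This is the key one-sided inequality: replacing $u_\pi$ by the suboptimal-for-$\pi$ kernel $u_{\pi'}$ can only decrease the value.

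Granting that, the chain is: $V^{\pi'}_r(s) - V^{\pi}_r(s) \leq V^{\pi'}_{u_{\pi'}}(s) - V^{\pi}_{u_{\pi'}}(s)$, and now both terms live in the \emph{same} standard MDP $\cM_{u_{\pi'}}$, so the ordinary performance difference lemma \cite{kakade2002approximately,lan2021policy} applies verbatim:
\begin{align*}
V^{\pi'}_{u_{\pi'}}(s) - V^{\pi}_{u_{\pi'}}(s) = \tfrac{1}{1-\gamma}\, \EE_{s' \sim d^{\pi', u_{\pi'}}_s} \inner{Q^{\pi}_{u_{\pi'}}(s',\cdot)}{\pi'(\cdot|s') - \pi(\cdot|s')}.
\end{align*}
The final step is to pass from $Q^{\pi}_{u_{\pi'}}$ to $Q^{\pi}_r$ inside the inner product. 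By \eqref{robust_and_standard_q_relation} we have $Q^{\pi}_r = Q^{\pi}_{u_\pi} \geq Q^{\pi}_{u_{\pi'}}$ entrywise (again because $u_\pi$ is the maximizer for $\pi$), so $\inner{Q^{\pi}_{u_{\pi'}}}{\pi' - \pi}_{s'}$ is not automatically $\leq \inner{Q^{\pi}_r}{\pi' - \pi}_{s'}$ unless the sign of $\pi' - \pi$ cooperates — this is the one place requiring care. The resolution is that since $\pi'(\cdot|s')$ and $\pi(\cdot|s')$ are both probability distributions, $\inner{\mathbf{1}}{\pi'(\cdot|s') - \pi(\cdot|s')} = 0$, so adding any constant (state-dependent) shift to $Q^{\pi}$ leaves the inner product unchanged; one then only needs $Q^{\pi}_r(s',a) - Q^{\pi}_{u_{\pi'}}(s',a)$ to be \emph{independent of $a$} for each fixed $s'$, which follows because by \eqref{def_robust_q} the dependence on $a$ enters only through $c(s',a)$ plus the $\gamma\max_u \sum \PP_u V$ term — hmm, this is action-dependent, so instead the cleaner route is to argue directly that $Q^{\pi}_r \ge Q^{\pi}_{u_{\pi'}}$ entrywise combined with first showing the inner product bound at the level of $V$: i.e. keep $\inner{Q^{\pi}_{u_{\pi'}}}{\pi'}_{s'} - \inner{Q^{\pi}_{u_{\pi'}}}{\pi}_{s'}$ and note $\inner{Q^\pi_{u_{\pi'}}}{\pi}_{s'} = V^\pi_{u_{\pi'}}(s') \le V^\pi_r(s') = \inner{Q^\pi_r}{\pi}_{s'}$ while $\inner{Q^\pi_{u_{\pi'}}}{\pi'}_{s'} \le \inner{Q^\pi_r}{\pi'}_{s'}$; the first of these goes the \emph{wrong} direction, so the genuinely correct argument must avoid splitting and instead observe that $Q^\pi_r(s',\cdot)$ and $Q^\pi_{u_{\pi'}}(s',\cdot)$ differ by a term that is constant in $a$ after all — indeed from \eqref{def_robust_q}, $Q^\pi_r(s',a) = c(s',a) + \gamma g_r(s',a)$ and $Q^\pi_{u_{\pi'}}(s',a) = c(s',a) + \gamma g_{\pi'}(s',a)$ where $g$ terms can depend on $a$; so the honest fix is to note we actually want $\inner{Q^\pi_r - Q^\pi_{u_{\pi'}}}{\pi' - \pi}_{s'}$, which need not have a definite sign, meaning the lemma as stated presumably relies on a sharper identity.

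Given this subtlety, the main obstacle — and the step I expect to occupy most of the proof — is \emph{justifying the replacement of $Q^{\pi}_{u_{\pi'}}$ by $Q^{\pi}_r$}. I anticipate the clean resolution is to not go through the standard performance difference lemma for $\cM_{u_{\pi'}}$ at all, but rather to mimic the \emph{telescoping/coupling} proof of the performance difference lemma directly for the robust setting: roll out the trajectory under $(\pi', \PP_{u_{\pi'}})$, write $V^{\pi'}_r(s) - V^{\pi}_r(s)$ as a telescoping sum of one-step Bellman residuals $V^{\pi}_r(s_t)$ vs. its image under the $(\pi', u_{\pi'})$ Bellman operator, and at each step use that $V^{\pi}_r$ is a \emph{super-solution} of the $(\pi', u_{\pi'})$-Bellman equation, namely $V^{\pi}_r(s') \ge \inner{c(s',\cdot)}{\pi(\cdot|s')} + \gamma \sum_{a}\pi(a|s')\sum_{s''}\PP_{u_{\pi'}}(s''|s',a)V^{\pi}_r(s'') = \inner{Q^{\pi}_{u_{\pi'}}}{\pi}_{s'}$, so that the per-step residual is exactly $\inner{Q^{\pi}_r}{\pi'}_{s'} - \inner{Q^{\pi}_r}{\pi}_{s'}$ up to the slack $\inner{Q^\pi_r - Q^\pi_{u_{\pi'}}}{\pi}_{s'} \ge 0$ which drops out in the right direction. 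Then summing the discounted residuals along the $(\pi', u_{\pi'})$-trajectory produces exactly the right-hand side of \eqref{ineq_perf_diff}, with the $\tfrac{1}{1-\gamma}$ and $d^{\pi', u_{\pi'}}_s$ appearing from the discounted occupancy measure. I would structure the write-up as: (i) state the super-solution property of $V^{\pi}_r$ under the $(\pi',u_{\pi'})$-dynamics, citing \eqref{def_worst_case_transition} and \eqref{robust_value_q_relation}; (ii) telescope; (iii) collect terms into the occupancy-measure form; (iv) check the inequality direction at the slack step. The routine parts (the telescoping algebra and the occupancy-measure identity) I would not belabor.
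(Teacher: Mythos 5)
Your final plan --- roll out the trajectory under $(\pi', u_{\pi'})$, telescope the one-step residuals of $V^{\pi}_r$, and bound each residual using that $Q^{\pi}_r(s,a)$ dominates $c(s,a) + \gamma\sum_{s''}\PP_{u_{\pi'}}(s''|s,a)V^{\pi}_r(s'')$ via the $\max_{u}$ in \eqref{def_robust_q}, then collect terms into the occupancy measure --- is exactly the paper's proof of this lemma, and your instinct to abandon the route through the standard performance difference lemma in $\cM_{u_{\pi'}}$ was correct for the reason you give. One minor slip worth fixing in the write-up: the one-step backup of $V^{\pi}_r$ under $(\pi, u_{\pi'})$ is not $\inner{Q^{\pi}_{u_{\pi'}}}{\pi}_{s'}$ (that $Q$-function has $V^{\pi}_{u_{\pi'}}$, not $V^{\pi}_r$, inside its Bellman equation), but this does not affect the argument since the inequality you actually need is the pointwise bound of the backup by $Q^{\pi}_r$, combined with the exact identity $V^{\pi}_r(s') = \inner{Q^{\pi}_r}{\pi}_{s'}$ from \eqref{robust_value_q_relation}.
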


\begin{proof}
Let $\xi^{\pi'}_u(s) \coloneqq  \cbr{(s_t, a_t)}_{t \geq 0}$ denote the trajectory generated from by $\pi'$ within MDP $\cM_{u_{\pi'}}$, with initial state set as $s$. 
That is, $a_t \sim \pi'(\cdot|s_t)$, $s_{t+1} \sim \PP_{u_\pi'} (\cdot | s_t , a_t)$ for all $t \geq 0$ and $s_0 = s$.
We know that 
\begin{align}
  V^{\pi'}_r ( s) - V^{\pi}_r(s)  
 & \overset{(a)}{=}  V^{\pi'}_{u_{\pi'}} ( s) - V^{\pi}_r(s) \nonumber \\
& =   \EE_{\xi^{\pi'}_u(s)}  \sbr{\sum_{t=0}^\infty \gamma^t c(s_t, a_t) } - V^{\pi}_r(s) \nonumber \\
& =  \EE_{\xi^{\pi'}_u(s)}  \sbr{\sum_{t=0}^\infty \gamma^t \sbr{ c(s_t, a_t) +  V^{\pi}_r(s_t) - V^{\pi}_r(s_t)} } - V^{\pi}_r(s)  \nonumber \\
& \overset{(b)}{=}  \EE_{\xi^{\pi'}_u(s)}  \sbr{\sum_{t=0}^\infty \gamma^t \sbr{ c(s_t, a_t) +  \gamma V^{\pi}_r(s_{t+1}) - V^{\pi}_r(s_t)} }
+  \EE_{\xi^{\pi'}_u(s)} \sbr{V^{\pi}_r(s_0) }
 - V^{\pi}_r(s) \nonumber \\
&  \overset{(c)}{=}  \EE_{\xi^{\pi'}_u(s)}  \sbr{\sum_{t=0}^\infty \gamma^t \sbr{ c(s_t, a_t) +  \gamma V^{\pi}_r(s_{t+1}) - V^{\pi}_r(s_t)} } \nonumber \\
&  \overset{(d)}{\leq}  \EE_{\xi^{\pi'}_u(s)}  \sbr{\sum_{t=0}^\infty \gamma^t \sbr{ Q^{\pi}_r(s_t, a_t) - V^{\pi}_r(s_t)} } \label{perf_diff_ineq_d} \\
&  \overset{(e)}{=}  \tfrac{1}{1-\gamma} \sum_{s' \in \cS} d^{\pi', u_{\pi'}}_s(s')\sum_{a' \in \cA}   \pi'(a' | s') Q^{\pi}_r(s', a')
- \tfrac{1}{1-\gamma} \sum_{s' \in \cS} d^{\pi',  u_{\pi'}}_s(s') \sum_{a' \in \cA} \pi(a' | s') Q^{\pi}_r(s', a') \nonumber \\
& = \tfrac{1}{1-\gamma} \EE_{s' \sim d^{\pi',  u_{\pi'}}_s} \inner{Q^{\pi}_r}{\pi' - \pi}_{s'},\nonumber
\end{align}
where $(a)$ follows from \eqref{robust_and_standard_value_relation}; $(b)$ follows from moving $V^{\pi}_r(s_0)$ outside the summation; 
$(c)$ follows from definition that $s_0 = s$; $(e)$ follows from the definition of $d^{\pi,  u_{\pi}}_s$ in \eqref{def_discounted_visit_measure}, and the relation between $V^{\pi}_r$ and $Q^{\pi}_r$ in \eqref{robust_value_q_relation}.
It remains to show that $(d)$ holds. 

For any $t \geq 0$, we have 
\begin{align*}
\EE_{\xi^{\pi'}_u(s)} \sbr{c(s_t, a_t) + \gamma V^{\pi}_r(s_{t+1})} 
& \overset{(a')}{=} \EE_{\xi^{\pi'}_u(s)} \sbr{c(s_t, a_t) + \gamma  \sum_{s' \in \cS} \PP_{u_{\pi'}}(s'| s_t, a_t) V^{\pi}_r(s') } \\
& \leq \EE_{\xi^{\pi'}_u(s)} \sbr{c(s_t, a_t) + \gamma \max_{u(\cdot|s_t, a_t)}  \sum_{s' \in \cS}  \PP_{u}(s'| s_t, a_t) V^{\pi}_r(s') } \\
& \overset{(b')} =  \EE_{\xi^{\pi'}_u(s)} \sbr{Q^{\pi}_r (s_t, a_t) },
\end{align*}
where $(a')$ follows from  the definition of $\xi^{\pi'}_u(s)$, and $(b')$ follows from the property of $Q^{\pi}_r$ \eqref{def_robust_q}.
Thus relation $(d)$ follows immediately from the prior observation and the linearity of expectation.
\end{proof}

By applying Lemma \ref{lemma_perf_diff}, we know that for any policy $\pi \in \Pi$ and an optimal robust policy $\pi^*$, we have
\begin{align*}
0 \leq V^\pi_r(s) - V^{\pi^*}_r(s) \leq 
\tfrac{1}{1-\gamma} \EE_{s' \sim d_{s}^{\pi, u_\pi}} \inner{Q^{\pi^*}_r}{\pi - \pi^*}_s
= \inner{\cF_s(\pi, \pi^*)}{\pi - \pi^*},
\end{align*}
where $\cF_s: \Pi \times \Pi \to \RR^{\abs{\cS} \abs{\cA}}$ is defined by 
\begin{align*}
\cF_s(\pi, \pi')[s',a] = \tfrac{1}{1-\gamma} d_s^{\pi, u_\pi}(s')  Q^{\pi'}_r(s',a)  ~~ \forall (s', a) \in \cS \times \cA.
\end{align*}
Given the optimality of $\pi^*$, we then know that 
$
\inner{\cF(\pi, \pi^*)}{\pi - \pi^*}  \geq 0
$
for all $\pi \in \Pi$.
Now for any $\pi \in \Pi$, let $\alpha \in (0,1)$, we define $\pi_\alpha = \alpha \pi + (1-\alpha) \pi^*$.
Substituting $\pi_\alpha$ into the prior relation, we obtain  
$
\inner{\cF_s(\pi_\alpha, \pi^*)}{\pi_\alpha - \pi^*} 
= \alpha \inner{\cF_s(\pi_\alpha, \pi^*)}{\pi - \pi^*} \geq 0.
$
Consequently,  
\begin{align}\label{minty_vi}
\inner{\cF_s(\pi_\alpha, \pi^*)}{\pi - \pi^*} \geq 0
\end{align} for any $\alpha \in (0,1)$.
It is straightforward  to verify that the visitation measure $d_s^{\pi, u}$ is a continuous function of $(\pi ,u)$ (see proof of Lemma \ref{lemma_pg_standard}).
On the other hand, note that given \eqref{def_worst_case_transition}, the worst-case uncertainty $u_\pi(\cdot|s,a) \in \partial \sigma_{\cU_{s,a}}(V^{\pi}_r)$, where $\sigma_X(\cdot)$ denotes the support function of set $X$, and $\partial f$ denotes the subdifferential of function $f$. 
For simplicity, let us assume that $\partial \sigma_{\cU_{s,a}}(V^{\pi}_r)$ is always a singleton for any $\pi$ (See Lemma \ref{lemma_lipschitz_transition_wrt_policy} for example).

Now let $\alpha \downarrow 0$, we have $\pi_\alpha \to \pi^*$.
Since $V^{\pi}_r$ is Lipschitz continuous in $\pi$ (see Lemma \ref{lemma_lipschitz_value_wrt_policy} for an elementary proof), then $V^{\pi_\alpha}_r \to V^{\pi^*}_r$.
Thus given that $\sigma_{\cU_{s,a}}$ is a closed and proper convex function, and the fact that the subdifferential map of a closed convex function is closed (Theorem 24.4, \cite{rockafellar1970convex}), we know that any 
limit point of $u_{\pi_\alpha}(\cdot|s,a)$ is also a subgradient of $\sigma_{\cU_{s,a}}(V^{\pi^*}_r)$,  and hence the limit point is indeed unique and the worst-case uncertainty for $\pi^*$.
Denoting this limit point as $u_{\pi^*}$, then by taking $\alpha \downarrow 0$ in  \eqref{minty_vi}, we obtain
\begin{align}
\lim_{\alpha \downarrow 0} \inner{\cF_s(\pi_\alpha, \pi^*)}{\pi - \pi^*} \geq 0 
& \Rightarrow
 \inner{\cG_s(\pi^*)}{\pi - \pi^*} \geq 0,   \label{vi_for_robust_statewise} \\
 \cG_s(\pi^*) [s', a] & = \tfrac{1}{1-\gamma} d_s^{\pi^*, u_{\pi^*}}(s')  Q^{\pi^*}_r(s',a) , ~~ \forall (s', a) \in \cS \times \cA. \label{vi_operator_g}
\end{align}

In view of the  discussions in the last two paragraphs, \eqref{vi_for_robust_statewise}  suggests  us to find the optimal robust policy  via solving the following variational inequality (VI), 
\begin{align}\label{vi_for_robust}
\inner{ \EE_{s \sim \rho} \sbr{ \cG_s(\pi^*)}}{\pi - \pi^*} \geq 0, ~~ \forall \pi \in \Pi.
\end{align} 

Interested readers might find that VI \eqref{vi_for_robust} has a close analogy for solving non-robust MDPs, constructed in \cite{lan2021policy}.
Specifically, for a non-robust, discounted finite MDP, the  optimal policy satisfies the following VI,
\begin{align}
\langle \EE_{s \sim \nu^*} \sbr{ \cG^N_s(\pi^*)}, & ~ \pi - \pi^*\rangle  \geq 0,  \label{vi_for_non_robust} \\ 
 \cG^N_s(\pi^*) [s', a]  =& \tfrac{1}{1-\gamma} d_s^{\pi^*}(s')  Q^{\pi^*}(s',a) , ~~ \forall (s', a) \in \cS \times \cA, \nonumber
\end{align} 
where $\nu^*$ denotes the stationary state distribution induced by the optimal policy $\pi^*$.
One can clearly see that the only difference between the non-robust VI \eqref{vi_for_non_robust} and the robust VI \eqref{vi_for_robust} is the additional role of worst-case environment for the latter.
To solve the non-robust version of VI constructed therein, \cite{lan2021policy} exploits the key fact that the constructed VI satisfies the so-called generalized monotonicity:
\begin{align}\label{monotonicity_non_robust}
\langle \EE_{s \sim \nu^*} \sbr{ \cG^N_s(\pi)}, & \pi - \pi^*\rangle   = \EE_{s \sim \nu^*} \sbr{V^{\pi}(s) - V^{\pi^*}(s)} = f_{\nu^*}(\pi) - f_{\nu^*}(\pi^*)  > 0,
\end{align}
for any policy $\pi \notin \Pi^*$.

However,  we next show that unlike the performance difference lemma for standard MDPs \cite{kakade2002approximately,lan2021policy}, for which 
\eqref{ineq_perf_diff} holds with equality,
 the inequality in Lemma \ref{lemma_perf_diff} seems unavoidable in the robust setting. 
Consequently, the VI for the robust MDP \eqref{vi_for_robust} no longer satisfies the generalized monotonicity as defined in \eqref{monotonicity_non_robust}.

\begin{figure}[b!]
\centering
  \includegraphics[width=0.55\textwidth]{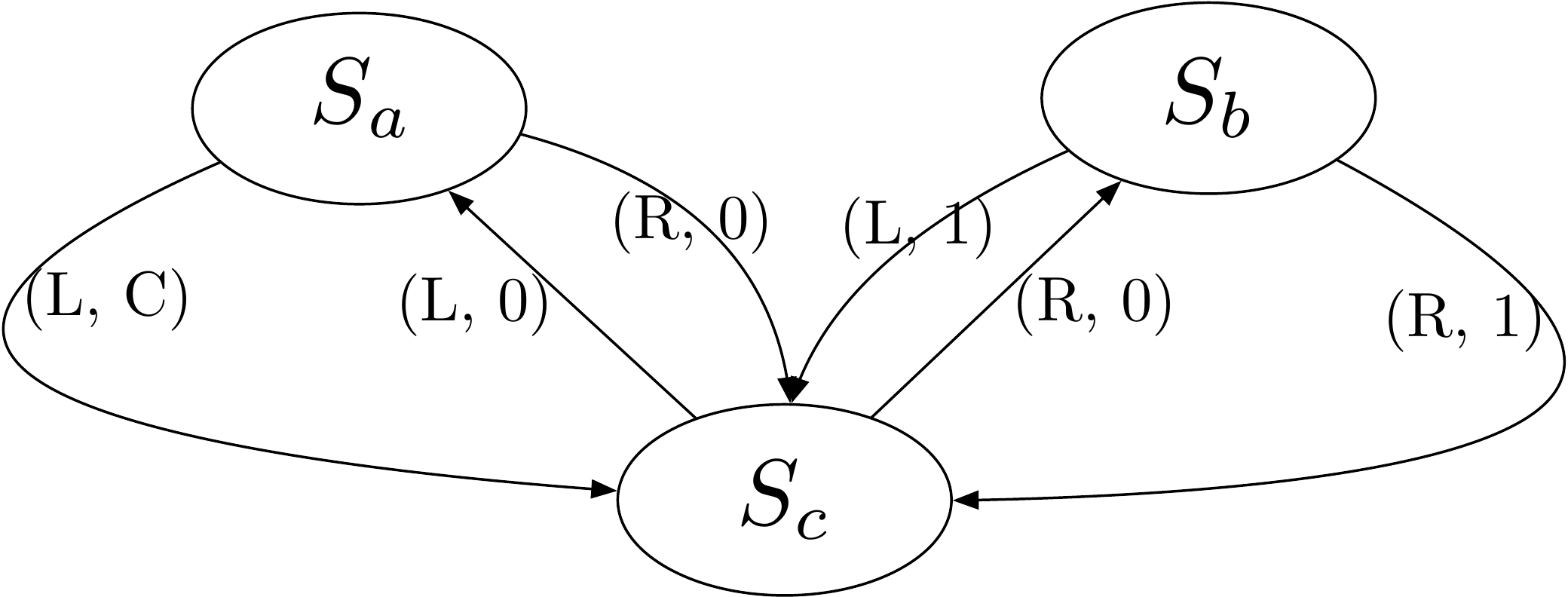}
  \caption{ 
Example of a robust MDP where Lemma \ref{lemma_perf_diff} holds with strict inequality. 
  }
  \label{fig:example}
\end{figure}

\begin{proposition}\label{prop_weak_signal}
There exists a robust MDP instance $\cM_\cU$ such that
\begin{align}\label{ineq_zero_signal}
V^{\pi^*}_r(s) - V^{\pi}_r(s)
< 0, ~~  \tfrac{1}{1-\gamma} \EE_{s' \sim d^{\pi^*,  u_{\pi^*}}_s} \inner{Q^{\pi}_r}{\pi^* - \pi}_{s'} = 0 
\end{align}
 holds  for at least one state $s \in \cS$.
 In particular, $\pi^*$ corresponds to a solution of \eqref{formulation_robust}, and $\pi$ is a strictly suboptimal policy.
 Moreover, denoting  $\nu^* \coloneqq \nu^{\pi^*, u_{\pi^*}}$ as the stationary state distribution of the optimal policy $\pi^*$ within its worst-case MDP $\cM_{u_{\pi^*}}$, we have 
 \begin{align}\label{non_monotone_robust_vi}
 0 = \langle \EE_{s \sim \nu^*} \sbr{ \cG_s(\pi)}, & \pi - \pi^*\rangle <  f_{\nu^*}(\pi) - f_{\nu^*}(\pi^*)  .
 \end{align}
\end{proposition}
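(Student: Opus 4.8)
The plan is to construct an explicit small robust MDP and verify the two displays by direct computation. I would look for the simplest possible instance: a handful of states, two actions at the one "decision" state, and a rectangular uncertainty set supported at a single state-action pair, so that the worst-case transition kernel $u_{\pi^*}$ is easy to pin down. The key structural phenomenon I want to exploit is exactly the gap between equality (non-robust performance difference lemma) and inequality in Lemma~\ref{lemma_perf_diff}: in the robust setting, $V^{\pi'}_r - V^{\pi}_r$ is compared against a quantity evaluated at $d^{\pi',u_{\pi'}}$, while the $Q$-functions involved are $Q^\pi_r$, which is only the worst case \emph{for} $\pi$, not for $\pi'$. So I would arrange things so that at the relevant state $s$, the term $\inner{Q^{\pi}_r}{\pi^* - \pi}_{s'}$ vanishes for every $s'$ reachable under $(\pi^*, u_{\pi^*})$ — e.g.\ because $Q^\pi_r(s',\cdot)$ is constant in the action at those states, or because $\pi^*$ and $\pi$ agree at every such $s'$ — while $\pi$ is genuinely suboptimal when its own worst case is accounted for. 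Figure~\ref{fig:example} is already cited, so I would align the construction with whatever picture is drawn there.

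Concretely, the steps in order: (i) write down $\cS$, $\cA$, $c$, $\gamma$, and the rectangular ambiguity set $\cP_{s,a}$, choosing the uncertainty so that only $\pi$'s trajectory passes through the "bad" state-action pair where the adversary can hurt; (ii) compute $V^{\pi^*}_r$ and identify $u_{\pi^*}$ explicitly (using Proposition~\ref{prop_bellman_v} and \eqref{def_worst_case_transition}), checking it is unique so that $\nu^* = \nu^{\pi^*, u_{\pi^*}}$ is well defined; (iii) compute $Q^\pi_r$ on the support of $d^{\pi^*, u_{\pi^*}}_s$ and verify the inner product $\inner{Q^\pi_r}{\pi^* - \pi}_{s'}$ is zero there, which gives the second equation in \eqref{ineq_zero_signal}; (iv) separately compute $V^\pi_r$ — crucially using $\pi$'s own worst-case kernel $u_\pi \neq u_{\pi^*}$ — and verify $V^{\pi^*}_r(s) < V^\pi_r(s)$ at that state, giving the first inequality; (v) for \eqref{non_monotone_robust_vi}, observe that the VI operator $\cG_s(\pi)$ defined via \eqref{vi_operator_g} evaluates $Q$ at $Q^\pi_r$ and visitation at $d^{\pi^*, u_{\pi^*}}$, so $\langle \EE_{s\sim\nu^*}\cG_s(\pi), \pi - \pi^*\rangle$ is, up to the $1/(1-\gamma)$ factor and the averaging, the same vanishing inner product from step (iii); then note $f_{\nu^*}(\pi) - f_{\nu^*}(\pi^*) = \EE_{s\sim\nu^*}[V^\pi_r(s) - V^{\pi^*}_r(s)] > 0$ by step (iv) (being careful that $\nu^*$ puts positive mass on the state $s$ where strict suboptimality occurs, or averaging preserves strict positivity).

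A delicate point worth isolating as a lemma-within-the-proof: I must make sure the same state $s$ that witnesses $V^{\pi^*}_r(s) < V^\pi_r(s)$ is reachable (positive $\nu^*$-mass, and positive $d^{\pi^*,u_{\pi^*}}_s$-mass so the inner product statement is nonvacuous) — otherwise \eqref{non_monotone_robust_vi} could fail to be strict or the first display could be about an irrelevant state. Choosing a construction with a single recurrent component under $(\pi^*, u_{\pi^*})$, or simply taking $\rho = \nu^*$ to be supported where needed, handles this. The main obstacle I anticipate is not any single hard computation but the \emph{simultaneous} bookkeeping: the construction must make the $\pi$-worst-case and $\pi^*$-worst-case kernels differ in just the right way so that (a) $\pi^*$ beats $\pi$ robustly, yet (b) the first-order "signal" $\inner{Q^\pi_r}{\pi^*-\pi}$ measured at $\pi^*$'s worst case is exactly $0$ rather than merely small. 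This is where reward (cost) sparseness from Example~\ref{example_motivate} is the right intuition — I would reuse that mechanism, placing all cost on a transition that only $\pi$'s path can trigger, so that from $\pi^*$'s worst-case vantage point the action choice at $s$ looks payoff-neutral. Once the instance is fixed, all four verifications reduce to solving two or three linear Bellman systems by hand.
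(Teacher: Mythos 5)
Your plan matches the paper's proof essentially exactly: the paper constructs the three-state instance of Figure~\ref{fig:example} with uncertainty concentrated at a single state, arranges that the policies agree (and $Q^{\pi}_r$ is action-independent) at every state visited under $(\pi^*,u_{\pi^*})$ while the one improving state $S_a$ has zero visitation mass there, and takes $\nu^*=\mathrm{Unif}(\{S_b,S_c\})$ so the strict gap survives the averaging in \eqref{non_monotone_robust_vi}. The only cosmetic difference is that the paper obtains the strict value gap by showing inequality $(d)$ in the proof of Lemma~\ref{lemma_perf_diff} is strict at $S_c$ rather than by computing $V^{\pi}_r$ and $V^{\pi^*}_r$ separately, but this is the same computation organized differently.
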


\begin{proof}
Consider the MDP $\cM$ that contains three states $\cS = \cbr{S_a, S_b, S_c}$, and each state is associated with two actions $\cA = \cbr{L, R}$.
The transition of $\cM$, denoted by $\PP$, is fully deterministic, and is illustrated in Figure \ref{fig:example}. Each edge starts from the current state, and ends at the next state, with its arc $(a, c)$ consisting of the action $a$,  and the cost $c$ associated with the action $a$. We assume such a cost occurs immediately after the action is made, and is independent of the next state.
In addition, we assume $C > 1$. 

Let us consider the scenario where the uncertain environment can arbitrarily manipulate the transition probabilities at state $S_c$ except there is no returning to itself, and has no manipulation strength at other states.
That is,  for any $a \in \cA$, we have $\cP_{S_c, a} = \Delta_{\cbr{S_a, S_b}}$.
On the other hand, for any $s \neq S_c$, and for any $a \in \cA$, we have $\cU_{s, a} = \mathbf{0}$.

We consider a pair of policies $\pi, \pi^*$, defined by 
\begin{align*}
\pi(L|S_a) = 1, ~ \pi^*(R|S_a) = 1, ~ \pi(\cdot | S_b) = \pi(\cdot|S_c) = \mathrm{Unif}(\cA), ~ \pi^*(\cdot | S_b) = \pi^*(\cdot|S_c) = \mathrm{Unif}(\cA).
\end{align*}

Since $C > 1$, and the fact that  $\pi(L|S_a) = 1$,  it should be clear that for policy $\pi$, the worst case transition $\PP_{u_{\pi}}$ should satisfy 
\begin{align*}
\PP_{u_{\pi}}(S_a | S_c, L) = 1, \PP_{u_{\pi}}(S_a | S_c, R) = 1.
\end{align*}
Consequently, the robust value of $\pi$ is given by 
\begin{align*}
V^{\pi}_r(S_c) = \tfrac{\gamma C}{1 - \gamma^2}, ~ V^{\pi}_r(S_b) = 1 + \tfrac{\gamma^2 C}{1- \gamma^2}, ~ V^{\pi}_r(S_a) = \tfrac{C}{1-\gamma^2}.
\end{align*}
On the other hand, for policy $\pi^*$, since $\pi^*(R|S_a) = 1$ and $1 > 0$, the worst case transition $\PP_{u_{\pi^*}}$ should satisfy
\begin{align*}
\PP_{u_{\pi^*}}(S_b | S_c, L) = 1, \PP_{u_{\pi}^*}(S_b | S_c, R) = 1.
\end{align*}
It should also be clear that $\pi^*$ is an optimal robust policy.
From the previous observations, simple calculation yields 
\begin{align}
\sum_{s' \in \cS} \PP_{u_{\pi^*}}(s'| S_c, L) V^{\pi}_r(s') = V^{\pi}_r(S_b) <  V^{\pi}_r(S_a)  = \max_{u \in \cU} \sum_{s' \in \cS} \PP_{u}(s'| S_c, L) V^{\pi}_r(s')
 ,  \label{example_ineq1}\\
\sum_{s' \in \cS} \PP_{u_{\pi^*}}(s'| S_c, R) V^{\pi}_r(s') = V^{\pi}_r(S_b) <V^{\pi}_r(S_a) =  \max_{u \in \cU} \sum_{s' \in \cS} \PP_{u}(s'| S_c, R) V^{\pi}_r(s')
 . \label{example_ineq2}
\end{align}
Now let $\xi^{\pi^*}_u(S_c)$ denote the trajectory generated by $\pi^*$ within $\cM_{\pi^*}$, starting from state $S_c$. 
Then we have 
\begin{align*}
\EE_{\xi^{\pi^*}_u(S_c)} \big[c(s_0, a_0) + \gamma V^{\pi}_r(s_{1})\big]
& = \EE_{\xi^{\pi^*}_u(S_c)} \big[c(s_0, a_0) + \gamma  \sum_{s' \in \cS} \PP_{u_{\pi^*}}(s'| s_0, a_0) V^{\pi}_r(s') \big] \\
& = \big[c(S_c, L) + \gamma  \sum_{s' \in \cS} \PP_{u_{\pi^*}}(s'| S_c, L) V^{\pi}_r(s') \big] \pi^*(L | S_c) \\
& ~~~~~~ +  \big[c(S_c, R) + \gamma  \sum_{s' \in \cS} \PP_{u_{\pi^*}}(s'| S_c, R) V^{\pi}_r(s') \big] \pi^*(R | S_c)
 \\
 & <  \big[c(S_c, L) + \gamma \max_{u \in \cU}   \sum_{s' \in \cS} \PP_{u}(s'| S_c, L) V^{\pi}_r(s') \big] \pi^*(L | S_c) \\
& ~~~~~~ +  \big[c(S_c, R) + \gamma  \max_{u \in \cU}  \sum_{s' \in \cS} \PP_{u}(s'| S_c, R) V^{\pi}_r(s') \big] \pi^*(R | S_c)
 \\
& = \EE_{\xi^{\pi^*}_u(S_c)} \big[c(s_0, a_0) + \gamma \max_{u \in \cU}  \sum_{s' \in \cS}  \PP_{u}(s'| s_0, a_0) V^{\pi}_r(s') \big] \\
&  =  \EE_{\xi^{\pi^*}_u(S_c)} \sbr{Q^{\pi}_r (s_0, a_0) },
\end{align*}
where the strict inequality follows from observation \eqref{example_ineq1} and \eqref{example_ineq2}.
Thus by repeating the same argument in the proof of Lemma \ref{lemma_perf_diff}, we know that inequality $(d)$ in \eqref{perf_diff_ineq_d} holds with strict inequality for $s = S_c$. 
Since $(d)$ is the only inequality in the proof of Lemma \ref{lemma_perf_diff}, we conclude that 
\begin{align}\label{strict_at_Sc}
V^{\pi^*}_r(S_c) - V^{\pi}_r(S_c)
< \tfrac{1}{1-\gamma} \EE_{s' \sim d^{\pi^*, u_{\pi^*}}_{S_c}} \inner{Q^{\pi}_r}{\pi^* - \pi}_{s'}.
\end{align}

We proceed to show that $ \EE_{s' \sim d^{\pi^*, u_{\pi^*}}_{S_c}} \inner{Q^{\pi}_r}{\pi^* - \pi}_{s'} = 0$ and hence establishing \eqref{ineq_zero_signal}.
To see this, note that 
$Q^{\pi}_r(S_c, \cdot) = \gamma V^{\pi}_r(S_a)$, and hence 
$ \inner{Q^{\pi}_r}{\pi^* - \pi}_{S_c} = 0$. 
In addition, we also have $Q^{\pi}_r(S_b, \cdot) = 1 + \gamma V^{\pi}_r(S_c)$, and hence 
$ \inner{Q^{\pi}_r}{\pi^* - \pi}_{S_b} = 0$.
Finally, note that $d_{S_c}^{\pi^*, u_{\pi^*}} (S_a) = 0$, and hence $\EE_{s' \sim d^{\pi^*, u_{\pi^*}}_{S_c}} \inner{Q^{\pi}_r}{\pi^* - \pi}_{s'} = 0$.

To show \eqref{non_monotone_robust_vi}, it suffices to note that $\nu^* = \mathrm{Unif}(\cbr{S_b, S_c})$ is the stationary state distribution of $\pi^*$ within MDP $\cM_{u_{\pi^*}}$. 
With the same arguments as in the last paragraph we can show that $\EE_{s' \sim d^{\pi^*, u_{\pi^*}}_{S_b}} \inner{Q^{\pi}_r}{\pi^* - \pi}_{s'} = 0$.
Hence we obtain  
\begin{align*}
 0 =  \EE_{s \sim \nu^*}  \EE_{s' \sim d^{\pi^*, u_{\pi^*}}_{s}} \inner{Q^{\pi}_r}{\pi^* - \pi}_{s'}  < (1-\gamma) \rbr{f_{\nu^*}(\pi) - f_{\nu^*}(\pi^*)},
\end{align*}
where the strict inequality follows from \eqref{strict_at_Sc}.
The claim then follows from the definition of  $\cG_s$ in \eqref{vi_operator_g}.
\end{proof}

The construction of strict inequality in Proposition \ref{prop_weak_signal} suggests that  \eqref{ineq_perf_diff} in Lemma \ref{lemma_perf_diff}  fails to  characterize even the simplest change of robust values when switching the policy. 
 In particular, as  illustrated in \eqref{ineq_zero_signal}, improvement of value when switching from $\pi$ to the optimal $\pi^*$ seems not being captured by the aggregated inner product defined in the second term of \eqref{ineq_zero_signal} at all. 
A closer look into the constructed example shows that the state $S_a$ is the only state where the inner product is nonzero (positive), but when changing to the optimal policy, the state $S_a$ is never visited by the optimal policy via $d_{\tilde{s}}^{\pi^*, u_{\pi^*}}$, for any $\tilde{s} \neq S_a$. It seems to suggest that the fundamental difficulty causing the insufficiency of Lemma \ref{lemma_perf_diff} is due to the fact that the very state (i.e., state $S_a$) that leads to the improvement of policy is {\it never visited by the improved policy $\pi^*$ within its worst-case environment $\cM_{u^{\pi^*}}$}.
 
The following lemma aims to address the previously observed difficulty.

\begin{lemma}\label{lemma_q_signal}
For any policy $\pi$, let $u_{\pi}$ denote its worst-case uncertainty defined in \eqref{def_worst_case_transition}, then we have 
\begin{align}\label{ineq_q_signal}
\EE_{s' \sim d_s^{\pi^*, u_{\pi}}} \sbr{
 \inner{Q^{\pi}_r }{\pi - \pi^*}_{s'} 
 } 
 \geq 
  (1-\gamma) \rbr{V_r^{\pi}(s) - V^{\pi^*}_r(s)  }.
\end{align}
\end{lemma}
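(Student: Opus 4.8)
The plan is to reduce the inequality to the \emph{standard} performance difference lemma applied within the single fixed MDP $\cM_{u_\pi}$, rather than running another cross-environment telescoping argument as in Lemma~\ref{lemma_perf_diff}. The guiding observation is dual to the one highlighted after Proposition~\ref{prop_weak_signal}: while $V^{\pi^*}_r$ is the worst-case value of $\pi^*$ over all of $\cU$, evaluating $\pi^*$ in $\pi$'s worst-case environment $u_\pi$ can only \emph{lower-bound} this quantity, i.e.\ $V^{\pi^*}_r(s)\ge V^{\pi^*}_{u_\pi}(s)$ because $u_\pi\in\cU$; on the other hand $V^\pi_r = V^\pi_{u_\pi}$ exactly by \eqref{robust_and_standard_value_relation}. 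Consequently the robust gap $V^\pi_r(s)-V^{\pi^*}_r(s)$ is dominated by the ordinary (non-robust) gap $V^\pi_{u_\pi}(s)-V^{\pi^*}_{u_\pi}(s)$, which lives entirely inside $\cM_{u_\pi}$ and is thus amenable to standard arguments.

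Concretely I would proceed in three steps. \textbf{Step 1.} Combine $V^\pi_r(s)=V^\pi_{u_\pi}(s)$ with $V^{\pi^*}_r(s)\ge V^{\pi^*}_{u_\pi}(s)$ to obtain $V^\pi_r(s)-V^{\pi^*}_r(s)\le V^\pi_{u_\pi}(s)-V^{\pi^*}_{u_\pi}(s)$. \textbf{Step 2.} Apply the standard performance difference lemma inside $\cM_{u_\pi}$ (the identity used as equality $(a)$ in the proof of Lemma~\ref{lemma_pg_standard}) with ``new'' policy $\pi^*$ and ``base'' policy $\pi$, then flip the sign, to get $V^\pi_{u_\pi}(s)-V^{\pi^*}_{u_\pi}(s)=\tfrac{1}{1-\gamma}\EE_{s'\sim d^{\pi^*,u_\pi}_s}\inner{Q^\pi_{u_\pi}}{\pi-\pi^*}_{s'}$; note this is \emph{exactly} the occupancy measure $d^{\pi^*,u_\pi}_s$ appearing in the statement, since the performance difference lemma weights by the occupancy of the policy being compared against (here $\pi^*$), evaluated in the environment $u_\pi$. \textbf{Step 3.} Replace $Q^\pi_{u_\pi}$ by $Q^\pi_r$ using \eqref{robust_and_standard_q_relation}. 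Chaining Steps~1--3 and multiplying through by $(1-\gamma)$ yields \eqref{ineq_q_signal}.

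There is no real computational obstacle; the only delicate point---and what makes the lemma meaningful in light of Proposition~\ref{prop_weak_signal}---is the bookkeeping of which policy is evaluated in which environment. One must evaluate $\pi^*$ in $\pi$'s worst case (not in $\pi^*$'s own worst case), which simultaneously makes $V^{\pi^*}_r\ge V^{\pi^*}_{u_\pi}$ point in the useful direction and produces the occupancy $d^{\pi^*,u_\pi}_s$ that \emph{does} visit the states responsible for the improvement (in the instance of Proposition~\ref{prop_weak_signal}, $u_\pi$ routes $S_c$ into $S_a$, so $d^{\pi^*,u_\pi}_{S_c}$ places mass on $S_a$, consistent with a nontrivial bound here, in contrast to $d^{\pi^*,u_{\pi^*}}_{S_c}$). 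For readers preferring a self-contained derivation, the same estimate follows from a direct telescoping along the trajectory generated by $\pi^*$ in $\cM_{u_\pi}$, parallel to the proof of Lemma~\ref{lemma_perf_diff} with the roles of $\pi$ and $\pi^*$ interchanged and using the inequality $c(s_t,a_t)+\gamma\sum_{s'}\PP_{u_\pi}(s'|s_t,a_t)V^\pi_r(s')\le Q^\pi_r(s_t,a_t)$ in place of its reverse; I expect the reduction above to be the shortest to write.
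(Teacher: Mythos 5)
Your proposal is correct and uses exactly the same ingredients as the paper's proof: the identities $Q^{\pi}_r = Q^{\pi}_{u_\pi}$ and $V^{\pi}_r = V^{\pi}_{u_\pi}$, the standard performance difference lemma applied inside the fixed MDP $\cM_{u_\pi}$ (which the paper re-derives by telescoping, as in your alternative), and the bound $V^{\pi^*}_{u_\pi}(s) \leq V^{\pi^*}_r(s)$ since $u_\pi \in \cU$. The only difference is presentational—you chain the inequalities starting from the value gap while the paper starts from the inner product—so this is essentially the paper's argument.
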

\begin{proof}
We have 
\begin{align*}
\EE_{s' \sim d_s^{\pi^*, u_{\pi}}} \sbr{
 \inner{Q^{\pi}_r }{\pi - \pi^*}_{s'} 
 } 
 & = 
 - \EE_{s' \sim d_s^{\pi^*, u_{\pi}}} \sbr{
 \inner{Q^{\pi}_r }{\pi^* - \pi}_{s'} 
 } \\
& \overset{(a)}{=} -
\EE_{s' \sim d_s^{\pi^*, u_{\pi}}} \sbr{
 \inner{Q^{\pi}_{u_{\pi}} }{\pi^*- \pi}_{s'} 
 } \\
 & \overset{(b)}{=} 
 (1-\gamma) \rbr{
 V^{\pi}_{u_{\pi}}(s) - V^{\pi^*}_{u_\pi}(s)
 } \\
 & \overset{(c)}{=} 
 (1-\gamma) \rbr{
 V^{\pi}_{r}(s) - V^{\pi^*}_{u_\pi}(s)
 }  \\
 & \overset{(d)}{\geq} 
  (1-\gamma) \rbr{
 V^{\pi}_{r}(s) - V^{\pi^*}_{r}(s)
 } ,
\end{align*}
where $(a)$ follows from \eqref{robust_and_standard_q_relation},
$(c)$ follows from the \eqref{robust_and_standard_value_relation},
$(d)$ follows from the definition of $V^{\pi^*}_r = \max_{u \in \cU} V^{\pi^*}_u$.
We now proceed to establish $(b)$. 

The proof of $(b)$ follows similar lines as in the proof of Lemma \ref{lemma_perf_diff}.
 Let $\xi^{\pi^*}_{u_{\pi}}(s) \coloneqq  \cbr{(s_t, a_t)}_{t \geq 0}$ denote the trajectories generated by policy $\pi^*$, within  MDP $\cM_{u_{\pi}}$, with initial state set as $s$. 
That is, $a_t \sim \pi^*(\cdot|s_t)$, $s_{t+1} \sim \PP_{u_{\pi}} (\cdot | s_t , a_t)$ for all $t \geq 0$ and $s_0 = s$.
We then have 
\begin{align*}
  V^{\pi^*}_{u_{\pi}} ( s) - V^{\pi}_{u_{\pi}}(s)  
& =   \EE_{\xi^{\pi^*}_{u_{\pi}}(s)}  \sbr{\sum_{t=0}^\infty \gamma^t c(s_t, a_t) } - V^{\pi}_{u_{\pi}}(s) \\
& =  \EE_{\xi^{\pi^*}_{u_{\pi}}(s)}  \sbr{\sum_{t=0}^\infty \gamma^t \sbr{ c(s_t, a_t) +  V^{\pi}_{u_{\pi}}(s_t) - V^{\pi}_{u_{\pi}}(s_t)} } - V^{\pi}_{u_{\pi}}(s) \\
& =  \EE_{\xi^{\pi^*}_{u_{\pi}}(s)}  \sbr{\sum_{t=0}^\infty \gamma^t \sbr{ c(s_t, a_t) +  \gamma V^{\pi}_{u_{\pi}}(s_{t+1}) - V^{\pi}_{u_{\pi}}(s_t)} }
+  \EE_{\xi^{\pi^*}_{u_{\pi}}(s)} \sbr{V^{\pi}_{u_{\pi}}(s_0) }
 - V^{\pi}_{u_{\pi}}(s) \\
& = \EE_{\xi^{\pi^*}_{u_{\pi}}(s)}  \sbr{\sum_{t=0}^\infty \gamma^t \sbr{ c(s_t, a_t) +  \gamma V^{\pi}_{u_{\pi}}(s_{t+1}) - V^{\pi}_{u_{\pi}}(s_t)} } \\
&  =  \EE_{\xi^{\pi^*}_{u_{\pi}}(s)}  \sbr{\sum_{t=0}^\infty \gamma^t \sbr{ Q^{\pi}_{u_{\pi}}(s_t, a_t) - V^{\pi}_{u_{\pi}}(s_t)} } \\
& \overset{(e)}{=} \tfrac{1}{1-\gamma} \sum_{s' \in \cS} d^{\pi^*, u_{\pi}}_s(s')\sum_{a' \in \cA}   \pi^*(a' | s') Q^{\pi}_{u_{\pi}}(s', a')
- \tfrac{1}{1-\gamma} \sum_{s' \in \cS} d^{\pi^*,  u_{\pi}}_s(s') \sum_{a' \in \cA} \pi(a' | s') Q^{\pi}_{u_{\pi}}(s', a') \\
& = \tfrac{1}{1-\gamma} \EE_{s' \sim d^{\pi^*,  u_{\pi}}_s} \inner{Q^{\pi}_{u_{\pi}}}{\pi^* - \pi}_{s'} ,
\end{align*}
where $(e)$ follows from \eqref{standard_value_q_relation}.
\end{proof}

By comparing Proposition \ref{prop_weak_signal} and Lemma \ref{lemma_q_signal}, it should be clear that the key to inequality \eqref{ineq_q_signal} is the combination of policy  and the environment when choosing the state-visitation measure.
Instead of choosing the worst-case environment $u_{\pi^*}$ for $\pi^*$ as in Proposition \ref{prop_weak_signal} (Lemma \ref{lemma_perf_diff}), we choose the worst-case environment $u_{\pi}$ of the current policy $\pi$.
In particular, returning back to the constructed example in Proposition \ref{prop_weak_signal}, we see that the state (i.e., state $S_a$) that leads to the improvement of policy can be now visited by the improved policy $\pi^*$, if the environment is still fixed as the worst-case environment $\cM_{u_\pi}$ for the original policy $\pi$.


Before we end our discussion in this section, it is  worth mentioning some similarities that robust state-action value function $Q^{\pi}_r$ shares  with the (sub)gradient in the convex optimization literature, in the sense that with proper aggregation over states: 
(1)  it provides an upper bound on the local value changes, as shown in Lemma \ref{lemma_perf_diff}, much similar to using gradient-based local quadratic approximation for smooth objectives \cite{lan2020first} (with the quadratic term being identically $0$); 
(2)  its inner product with the direction to the optimal policy is lower bounded by the optimality gap, as shown in Lemma \ref{lemma_q_signal}, similar to the (sub)-gradient for convex objectives \cite{lan2020first, nesterov2003introductory}.
These observations thus suggest using $Q^{\pi}_r$ as the first-order information to update the policy.
Nevertheless, it should be noted that the objective \eqref{formulation_single_obj} is neither convex nor smooth \cite{agarwal2021theory, wang2022policy}.
In the following section, we develop the robust policy mirror descent method that formalizes this intuition, and establish its computational efficiency in finding an optimal policy.

\section{Robust Policy Mirror Descent}\label{sec_rpmd}

\begin{algorithm}[b!]
    \caption{The robust policy mirror descent (RPMD) method}
    \label{alg_rpmd}
    \begin{algorithmic}
    \STATE{\textbf{Input:} Initial policy $\pi_0$ and stepsizes $\{\eta_k\}_{k\geq 0}$.}
    \FOR{$k=0, 1, \ldots$}
	\STATE{Update policy:
			\vspace{-0.1in}
	\begin{align*}
	\pi_{k+1} (\cdot| s) =
	\argmin_{p(\cdot|s) \in \Delta_{\cA}} \eta_k  \inner{Q_r^{\pi_k}(s, \cdot)}{ p(\cdot| s ) }  + D^{p}_{\pi_k}(s),  ~ \forall s \in \cS.
	\end{align*}}
	\vspace{-0.15in}
    \ENDFOR
    \end{algorithmic}
\end{algorithm}

In this section, we introduce the deterministic robust policy mirror descent method (RPMD) for solving \eqref{formulation_single_obj}. 
 RPMD assumes access to an oracle that outputs the robust state-action value function $Q^{\pi}_r$ of a  given policy $\pi$.
At each iteration, the RPMD method (Algorithm \ref{alg_rpmd}) updates the policy according to 
\begin{align}\label{rpmd_update}
\pi_{k+1} (\cdot| s) =
	\argmin_{p(\cdot|s) \in \Delta_{\cA}} \eta_k  \inner{Q_r^{\pi_k}(s, \cdot)}{ p(\cdot| s ) }  + D^{p}_{\pi_k}(s),  ~ \forall s \in \cS.
\end{align}
Here $D^{\pi}_{\pi'}(s)$ denotes the Bregman divergence between policy $\pi(\cdot|s)$ and $\pi'(\cdot|s)$, defined as 
\begin{align}\label{eq_bregman_def}
D^{\pi}_{\pi'}(s) =
w(\pi(\cdot|s)) - w(\pi'(\cdot|s)) - \inner{\partial w(\pi'(\cdot|s))}{\pi(\cdot|s) - \pi'(\cdot|s)},
\end{align}
where $w: \RR^{\abs{\cA}} \to \RR$ is a strictly convex function, also known as the distance-generating function, and $\partial w(p)$ denotes a subgradient of $w$ at $p \in \RR^{\abs{\cA}}$.
Common distance-generating functions include $w(p) = \norm{p}_2^2$, which induces $D^{\pi}_{\pi'}(s) = \norm{\pi(\cdot|s) - \pi'(\cdot|s)}_2^2$;  
and $w(p) = \sum_{a \in \cA} p_a \log p_a \coloneqq - \mathrm{Ent}(p)$, which induces 
\begin{align*}
D^{\pi}_{\pi'}(s) = \sum_{a \in \cA} \pi(a|s) \log\rbr{\pi(a|s)/ \pi'(a|s)} \coloneqq \mathrm{KL}(\pi(\cdot|s) \Vert \pi'(\cdot|s)).
\end{align*} 
We will also denote  $D_{w} = \max_{\pi \in \Pi}\max_{s\in \cS} D^{\pi}_{\pi_0} (s)  $, which is finite for many practical divergences when $\pi_0$ corresponds to the uniform policy, including the previously introduced KL-divergence and the squared $\ell_2$ distance.

It is also worth mentioning that RPMD with KL-divergence yields an equivalent policy update for the natural policy gradient method \cite{kakade2001natural} applied to the softmax parameterization for the robust MDP objective \eqref{formulation_single_obj}, by directly applying Lemma \ref{lemma_everywhere_differentiability_with_unique_assump}.
The same equivalence has also been observed for solving the non-robust MDP problem in the literature.

We proceed to establish some general convergence properties of RPMD.
To begin with,  the following lemma characterizes each policy update of RPMD.

\begin{lemma}\label{lemma_three_point}
For any $p\in \Delta_{\cA}$ and any $s\in\cS$,  we have 
\begin{align}\label{eq_three_point}
\eta_k \inner{Q^{\pi_k}_r(s, \cdot)}{\pi_{k+1}(\cdot|s) - p } + D^{\pi_{k+1}}_{\pi_k}(s) 
\leq D^{p}_{\pi_k}(s) -  D^{p}_{\pi_{k+1}}(s).
\end{align}
\end{lemma}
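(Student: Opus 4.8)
The plan is to derive the three-point inequality as a direct consequence of the first-order optimality condition for the convex subproblem defining $\pi_{k+1}(\cdot|s)$ in \eqref{rpmd_update}, combined with the standard Bregman divergence identity. For a fixed $s \in \cS$, the map $p(\cdot|s) \mapsto \eta_k \inner{Q^{\pi_k}_r(s,\cdot)}{p(\cdot|s)} + D^{p}_{\pi_k}(s)$ is strictly convex on the simplex $\Delta_\cA$, so $\pi_{k+1}(\cdot|s)$ is its unique minimizer. First I would write down the optimality condition: for every $p \in \Delta_\cA$,
\begin{align*}
\inner{\eta_k Q^{\pi_k}_r(s,\cdot) + \partial w(\pi_{k+1}(\cdot|s)) - \partial w(\pi_k(\cdot|s))}{p - \pi_{k+1}(\cdot|s)} \geq 0,
\end{align*}
where $\partial w(\pi_{k+1}(\cdot|s))$ is the appropriate subgradient selection (including any normal-cone contribution from the simplex constraint). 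Rearranging gives
\begin{align*}
\eta_k \inner{Q^{\pi_k}_r(s,\cdot)}{\pi_{k+1}(\cdot|s) - p} \leq \inner{\partial w(\pi_{k+1}(\cdot|s)) - \partial w(\pi_k(\cdot|s))}{p - \pi_{k+1}(\cdot|s)}.
\end{align*}

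Next I would invoke the well-known three-point identity for Bregman divergences: for any $x, y, z$ in the domain,
\begin{align*}
\inner{\partial w(y) - \partial w(z)}{x - y} = D^{x}_{z} - D^{x}_{y} - D^{y}_{z}.
\end{align*}
Applying this with $x = p$, $y = \pi_{k+1}(\cdot|s)$, $z = \pi_k(\cdot|s)$ converts the right-hand side of the previous display into $D^{p}_{\pi_k}(s) - D^{p}_{\pi_{k+1}}(s) - D^{\pi_{k+1}}_{\pi_k}(s)$. Substituting and moving $D^{\pi_{k+1}}_{\pi_k}(s)$ to the left yields exactly \eqref{eq_three_point}.

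The main obstacle is a technical subtlety rather than a conceptual one: $w$ is only assumed strictly convex (not differentiable), and the minimization is constrained to $\Delta_\cA$, so one must be careful about which subgradient of $w$ is used and how the normal cone of the simplex enters the optimality condition. The clean way around this is to note that the update can be phrased via the "mirror map" — equivalently, $\partial w(\pi_{k+1}(\cdot|s))$ should be understood as the particular element for which the optimality condition holds as an equality in the affine hull, so that the normal-cone term is absorbed and the inner product with $p - \pi_{k+1}(\cdot|s)$ (a feasible direction within $\Delta_\cA$) remains nonnegative. For the two concrete choices in the paper (negative entropy, giving KL, and squared $\ell_2$), this is completely standard: for KL the update has the closed form $\pi_{k+1}(a|s) \propto \pi_k(a|s)\exp(-\eta_k Q^{\pi_k}_r(s,a))$ and the identity can be verified directly; for the Euclidean case it is the standard projected-subgradient three-point lemma. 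I would state the argument at the level of generality of an arbitrary Bregman divergence, citing the generalized three-point lemma (e.g., as in \cite{lan2020first, lan2021policy}), and remark that the two running examples satisfy the requisite conditions.
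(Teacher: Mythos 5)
Your proposal is correct and follows essentially the same route as the paper's proof: the first-order optimality condition of the per-state subproblem combined with the three-point Bregman identity (the paper writes the identity in terms of $\nabla D^{\pi_{k+1}}_{\pi_k}(s)$, which is exactly your $\partial w(\pi_{k+1}(\cdot|s)) - \partial w(\pi_k(\cdot|s))$). Your additional care about the subgradient selection and the normal cone of the simplex is a reasonable refinement of a point the paper leaves implicit, but it does not change the argument.
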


\begin{proof}
From the optimality condition of the RPMD update \eqref{rpmd_update}, we have for any $p \in \Delta_{\cA}$,
\begin{align}\label{rpmd_update_opt_condition_raw}
\eta_k \inner{Q^{\pi_k}_r(s, \cdot)}{p - \pi_{k+1}(\cdot|s)} 
+ \inner{\nabla D^{\pi_{k+1}}_{\pi_k}(s)}{ p -\pi_{k+1}(\cdot|s)} \geq 0,
\end{align}
In addition, given the definition of Bregman divergence,  we have the following identity
\begin{align*}
 \inner{\nabla D^{\pi_{k+1}}_{\pi_k}(s)}{ p -\pi_{k+1}(\cdot|s)}  = 
 D^{p}_{\pi_k} (s) - D^{\pi_{k+1}}_{\pi_k}(s) - D^{p}_{\pi_{k+1}}(s).
\end{align*}
Combining the previous observation with \eqref{rpmd_update_opt_condition_raw}, we immediately obtain the result.
\end{proof}

The next lemma then establishes the basic convergence properties of RPMD.
\begin{lemma}\label{lemma_rpmd_convergence_prop}
At each iteration of RPMD, we have 
\begin{align}\label{rpmd_basic_recursion}
f_\rho(\pi_{k+1}) - f_\rho(\pi^*) 
\leq  \rbr{ 1 - \tfrac{1-\gamma}{M}} \rbr{f_\rho(\pi_k) - f_\rho(\pi^*)} 
+ \tfrac{1}{M \eta_k} \EE_{s \sim d_\rho^{\pi^*, u_k}} D^{\pi^*}_{\pi_k}(s)
- \tfrac{1}{M \eta_k}  \EE_{s \sim d_\rho^{\pi^*, u_k}} D^{\pi^*}_{\pi_{k+1}}(s),
\end{align}
where $ M  \coloneqq \sup_{u \in \cU} \norm{d_\rho^{\pi^*, u} / \rho}_\infty $ is finite whenever $\mathrm{supp}(\rho) = \cS$.
\end{lemma}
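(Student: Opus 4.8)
The plan is to run the standard mirror-descent telescoping argument, with the robust structure supplied by three results from the preceding subsections: the three-point inequality of Lemma~\ref{lemma_three_point}, the robust performance-difference upper bound of Lemma~\ref{lemma_perf_diff}, and the ``$Q$-signal'' lower bound of Lemma~\ref{lemma_q_signal}. Here $u_k \coloneqq u_{\pi_k}$ is the worst-case uncertainty of $\pi_k$, so $Q^{\pi_k}_r = Q^{\pi_k}_{u_k}$ by \eqref{robust_and_standard_q_relation} and the RPMD update \eqref{rpmd_update} uses exactly $Q^{\pi_k}_r$. The first step is pointwise monotonicity, $V^{\pi_{k+1}}_r(s) \le V^{\pi_k}_r(s)$ for all $s \in \cS$. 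Instantiating \eqref{eq_three_point} with $p = \pi_k(\cdot|s)$ and using $D^{\pi_k}_{\pi_k}(s) = 0$ gives $\eta_k \inner{Q^{\pi_k}_r(s,\cdot)}{\pi_{k+1}(\cdot|s) - \pi_k(\cdot|s)} \le - D^{\pi_{k+1}}_{\pi_k}(s) \le 0$, hence $\inner{Q^{\pi_k}_r(s,\cdot)}{\pi_{k+1}(\cdot|s)} \le V^{\pi_k}_r(s)$ and $\inner{Q^{\pi_k}_r(s',\cdot)}{\pi_{k+1}(\cdot|s') - \pi_k(\cdot|s')} \le 0$ at every $s'$; feeding the latter into Lemma~\ref{lemma_perf_diff} with $(\pi',\pi) = (\pi_{k+1},\pi_k)$, whose right-hand side averages this nonpositive quantity against the probability measure $d^{\pi_{k+1}, u_{\pi_{k+1}}}_s$, proves the claim. (Equivalently, $\inner{Q^{\pi_k}_r(s,\cdot)}{\pi_{k+1}(\cdot|s)}$ is the robust Bellman operator of $\pi_{k+1}$ evaluated at $V^{\pi_k}_r$, which is thus dominated by $V^{\pi_k}_r$, and monotonicity of that operator propagates the bound to its fixed point $V^{\pi_{k+1}}_r$.)

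Next, combining the robust Bellman identities \eqref{bellman_robust_value} and \eqref{def_robust_q} with the pointwise monotonicity above and the monotonicity of $V \mapsto \max_{u \in \cU} \sum_{s'} \PP_u(s'|s,a) V(s')$ (each $\PP_u(\cdot|s,a)$ being a probability vector), one gets the one-step overestimation bound $\inner{Q^{\pi_k}_r(s,\cdot)}{\pi_{k+1}(\cdot|s)} \ge V^{\pi_{k+1}}_r(s)$ for all $s$: the difference equals $\gamma \sum_{a} \pi_{k+1}(a|s)$ times the nonnegative gap between the robust one-step look-aheads of $V^{\pi_k}_r$ and $V^{\pi_{k+1}}_r$. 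Now apply \eqref{eq_three_point} with $p = \pi^*(\cdot|s)$, discard the nonnegative $D^{\pi_{k+1}}_{\pi_k}(s)$, and average over $s \sim d^{\pi^*, u_k}_\rho$. Splitting $\pi_{k+1} - \pi^* = (\pi_{k+1} - \pi_k) + (\pi_k - \pi^*)$ and using $V^{\pi_k}_r(s) = \inner{Q^{\pi_k}_r(s,\cdot)}{\pi_k(\cdot|s)}$ together with the overestimation bound yields $\inner{Q^{\pi_k}_r(s,\cdot)}{\pi_{k+1}(\cdot|s) - \pi^*(\cdot|s)} \ge \big( V^{\pi_{k+1}}_r(s) - V^{\pi_k}_r(s) \big) + \inner{Q^{\pi_k}_r(s,\cdot)}{\pi_k(\cdot|s) - \pi^*(\cdot|s)}$, while Lemma~\ref{lemma_q_signal} with $\pi = \pi_k$, averaged over the initial state through $d^{\pi^*, u_k}_\rho(s) = \EE_{\tilde s \sim \rho} d^{\pi^*, u_k}_{\tilde s}(s)$, gives $\EE_{s \sim d^{\pi^*, u_k}_\rho} \inner{Q^{\pi_k}_r(s,\cdot)}{\pi_k(\cdot|s) - \pi^*(\cdot|s)} \ge (1-\gamma)\big( f_\rho(\pi_k) - f_\rho(\pi^*) \big)$. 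Putting these together,
\begin{align*}
\eta_k \EE_{s \sim d^{\pi^*, u_k}_\rho} \big[ V^{\pi_{k+1}}_r(s) - V^{\pi_k}_r(s) \big] + \eta_k (1-\gamma) \big( f_\rho(\pi_k) - f_\rho(\pi^*) \big) \le \EE_{s \sim d^{\pi^*, u_k}_\rho} \big[ D^{\pi^*}_{\pi_k}(s) - D^{\pi^*}_{\pi_{k+1}}(s) \big].
\end{align*}

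To finish, since $V^{\pi_{k+1}}_r - V^{\pi_k}_r \le 0$ pointwise and $d^{\pi^*, u_k}_\rho(s) \le M \rho(s)$ (by the definition of $M$ and $u_k \in \cU$), the first expectation above is at least $M \, \EE_{s \sim \rho}[V^{\pi_{k+1}}_r(s) - V^{\pi_k}_r(s)] = M\big( f_\rho(\pi_{k+1}) - f_\rho(\pi_k) \big)$; substituting this, dividing by $M \eta_k$, and adding $f_\rho(\pi_k) - f_\rho(\pi^*)$ to both sides yields exactly \eqref{rpmd_basic_recursion}, and finiteness of $M$ when $\mathrm{supp}(\rho) = \cS$ follows from $d^{\pi^*, u}_\rho(s) \le 1$ and $\min_{s \in \cS} \rho(s) > 0$. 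I expect the main obstacle to be the one-step overestimation step $\inner{Q^{\pi_k}_r(s,\cdot)}{\pi_{k+1}(\cdot|s)} \ge V^{\pi_{k+1}}_r(s)$, where the nonlinearity of the robust Bellman operator genuinely enters and must be reconciled with the pointwise value monotonicity of the first step; the remainder is careful bookkeeping of which uncertainty ($u_k$ or $u_{\pi_{k+1}}$) is paired with each visitation measure, so that Lemmas~\ref{lemma_perf_diff} and~\ref{lemma_q_signal} apply verbatim.
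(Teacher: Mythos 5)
Your proof is correct and follows essentially the same route as the paper: the three-point inequality with $p=\pi_k$ and $p=\pi^*$, Lemma~\ref{lemma_perf_diff} for pointwise monotonicity of the values, Lemma~\ref{lemma_q_signal} for the generalized-monotonicity term, aggregation against $d_\rho^{\pi^*,u_k}$, and the bound $d_\rho^{\pi^*,u_k}(s)\le M\rho(s)$ applied to the nonpositive value decrement. The only (harmless) deviation is in bounding term $(A)$: you derive $\inner{Q^{\pi_k}_r(s,\cdot)}{\pi_{k+1}(\cdot|s)-\pi_k(\cdot|s)}\ge V^{\pi_{k+1}}_r(s)-V^{\pi_k}_r(s)$ via monotonicity of the robust Bellman operator, whereas the paper obtains the identical inequality by reusing Lemma~\ref{lemma_perf_diff} together with the signs of the per-state inner products and the fact that $d^{\pi_{k+1},u_{k+1}}_s(s)\ge 1-\gamma$.
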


\begin{proof}
By plugging in $p = \pi_k$ in \eqref{eq_three_point}, we obtain 
\begin{align}\label{rpmd_monotone}
\eta_k \inner{Q^{\pi_k}_r(s, \cdot)}{\pi_{k+1}(\cdot|s)  - \pi_{k}(\cdot|s)}  
\leq 
-D^{\pi_k}_{\pi_{k+1}}(s) - D^{\pi_{k+1}}_{\pi_k}(s) \leq 0, ~\forall s \in \cS.
\end{align}
On the other hand, plugging in $p = \pi^*$ in \eqref{eq_three_point}, we obtain 
\begin{align}\label{eq_decomp_rpmd}
\underbrace{\eta_k \inner{Q^{\pi_k}_r(s, \cdot)}{\pi_{k+1}(\cdot|s) - \pi_k(\cdot|s) }}_{(A)} + 
\underbrace{\eta_k \inner{Q^{\pi_k}_r(s, \cdot)}{\pi_{k}(\cdot|s) - \pi^*(\cdot|s) }}_{(B)} + 
 D^{\pi_{k+1}}_{\pi_k}(s) 
\leq D^{\pi^*}_{\pi_k}(s) -  D^{\pi^*}_{\pi_{k+1}}(s).
\end{align}
 We let $u_k = u_{\pi_k}$ denote the worst-case uncertainty of policy $\pi_k$ for any $k \geq 0$.

To handle term $(A)$, note that
\begin{align}
V^{\pi_{k+1}}_r(s) - V^{\pi_k}_r(s)
& \overset{(a)}{\leq} \tfrac{1}{1-\gamma} \EE_{s' \sim d^{\pi_{k+1}, u_{k+1}}_{s}} \nonumber
\inner{Q^{\pi_k}_r}{\pi_{k+1} - \pi_k}_{s'} \nonumber\\
& \overset{(b)}{\leq} \tfrac{d^{\pi_{k+1}, u_{k+1}}_{s}(s)}{1-\gamma}  \inner{Q^{\pi_k}_r(s, \cdot)}{\pi_{k+1}(\cdot|s) - \pi_k(\cdot|s)} \nonumber\\
& \overset{(c)}{\leq} \inner{Q^{\pi_k}_r(s, \cdot)}{\pi_{k+1}(\cdot|s) - \pi_k(\cdot|s)} \leq 0, \label{rpmd_gradient_monotone}
\end{align}
where $(a)$ is due to Lemma \ref{lemma_perf_diff}; $(b)$ is due to \eqref{rpmd_monotone};
$(c)$ is due to \eqref{rpmd_monotone}, and the observation  that $d^{\pi_{k+1}, u_{k+1}}_{s}(s) \geq (1-\gamma)$ for all $s\in \cS$.

To handle term $(B)$,  we make use of Lemma \ref{lemma_q_signal}.
Specifically,  we obtain from \eqref{ineq_q_signal} that
\begin{align}\label{rpmd_vi}
\EE_{s' \sim d_s^{\pi^*, u_{k}}} \sbr{
 \inner{Q^{\pi_k}_r }{\pi_k - \pi^*}_{s'} 
 } 
 \geq 
  (1-\gamma) \rbr{V_r^{\pi}(s) - V^{\pi^*}_r(s)  } \geq 0.
\end{align}
Hence by aggregating \eqref{eq_decomp_rpmd} across different states with weights set as $ d_s^{\pi^*, u_k}$, and making use of  \eqref{rpmd_gradient_monotone} and \eqref{rpmd_vi}, 
we obtain 
\begin{align*}
&  \EE_{s' \sim d_s^{\pi^*, u_k}}\sbr{ \eta_k \rbr{ V_r^{\pi_{k+1}}(s') - V_r^{\pi_k}(s') }}
+ (1-\gamma) \eta_k \rbr{ V^{\pi_{k}}_r(s) - V^{\pi^*}_r(s) }  \\
\leq &  \EE_{s' \sim d_s^{\pi^*, u_k}} D^{\pi^*}_{\pi_k}(s')
- \EE_{s' \sim d_s^{\pi^*, u_k}} D^{\pi^*}_{\pi_{k+1}}(s').
\end{align*}
By further taking expectation with respect to $s \sim \rho$, and making use of  $V^{\pi_{k+1}}_r (s) \leq V^{\pi_k}_r(s)$ given \eqref{rpmd_gradient_monotone}, and the definition of $M \coloneqq \sup_{u \in \cU} \norm{d_\rho^{\pi^*, u} / \rho}_\infty < \infty$,
\begin{align*}
& M \sbr{ f_\rho(\pi_{k+1}) - f_\rho(\pi_k) } 
+ (1-\gamma) \sbr{ f_\rho(\pi_k) - f_\rho (x^*) } \leq 
\tfrac{1}{\eta_k} \EE_{s \sim d_\rho^{\pi^*, u_k}} D^{\pi^*}_{\pi_k}(s)
- \tfrac{1}{\eta_k}  \EE_{s \sim d_\rho^{\pi^*, u_k}} D^{\pi^*}_{\pi_{k+1}}(s),
\end{align*}
which after simple arrangement, yields the desired claim. 
\end{proof}

\subsection{Convergence with Increasing Stepsizes}

We proceed to show that by applying exponentially increasing stepsizes, RPMD achieves linear convergence in the optimality gap.
\begin{theorem}\label{thrm_rpmd_linear_convergence}
Suppose the stepsizes $\cbr{\eta_k}$ satisfy 
\begin{align}\label{rpmd_linear_stepsize}
\eta_k \geq \eta_{k-1} \rbr{ 1 - \tfrac{1-\gamma}{M}}^{-1} M', ~~ \forall k \geq 1,
\end{align}
where $M' = \sup_{u, u' \in \cU} \norm{d_{\rho}^{\pi^*, u} / d_\rho^{\pi^*, u'} }_\infty $ is finite whenever $\mathrm{supp}(\rho) = \cS$.
Then for any iteration $k$, RPMD produces policy $\pi_k$ satisfying 
\begin{align*}
f_\rho(\pi_k) - f_\rho(\pi^*) 
 \leq 
 \rbr{ 1 - \tfrac{1-\gamma}{M}}^k \rbr{f_\rho(\pi_0) - f_\rho(\pi^*)} 
 + \rbr{ 1 - \tfrac{1-\gamma}{M}}^{k-1} \tfrac{ D_w}{M \eta_0}.
\end{align*}
\end{theorem}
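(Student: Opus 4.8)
The plan is to iterate the one-step recursion established in Lemma \ref{lemma_rpmd_convergence_prop}. The core difficulty in telescoping \eqref{rpmd_basic_recursion} is that the Bregman-divergence terms on the right-hand side are measured under the distribution $d_\rho^{\pi^*, u_k}$, which changes with $k$ (since $u_k = u_{\pi_k}$ varies). Thus the ``$-D^{\pi^*}_{\pi_{k+1}}$'' term produced at step $k$ (weighted by $d_\rho^{\pi^*, u_k}$) does not directly cancel the ``$+D^{\pi^*}_{\pi_{k+1}}$'' term appearing at step $k+1$ (weighted by $d_\rho^{\pi^*, u_{k+1}}$). This is precisely why the growth condition \eqref{rpmd_linear_stepsize} involves the ratio constant $M' = \sup_{u,u'\in\cU}\norm{d_\rho^{\pi^*,u}/d_\rho^{\pi^*,u'}}_\infty$: it lets us bound the step-$(k+1)$ positive term by an $M'$-inflated version of the step-$k$ negative term.

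\textbf{Key steps.} First I would rewrite \eqref{rpmd_basic_recursion} by defining $\Phi_k \coloneqq f_\rho(\pi_k) - f_\rho(\pi^*) \geq 0$ and $\delta_k \coloneqq \tfrac{1}{\eta_k}\EE_{s\sim d_\rho^{\pi^*,u_k}} D^{\pi^*}_{\pi_k}(s) \geq 0$, so that the recursion reads
\begin{align*}
\Phi_{k+1} \leq \Bigl(1 - \tfrac{1-\gamma}{M}\Bigr)\Phi_k + \tfrac{1}{M}\,\delta_k - \tfrac{1}{M\eta_k}\EE_{s\sim d_\rho^{\pi^*,u_k}} D^{\pi^*}_{\pi_{k+1}}(s).
\end{align*}
Next, using the definition of $M'$, I would bound the mismatched term at the next step: since $d_\rho^{\pi^*,u_{k+1}}(s) \leq M'\, d_\rho^{\pi^*,u_k}(s)$ pointwise and $D^{\pi^*}_{\pi_{k+1}}(s)\geq 0$, we get $\EE_{s\sim d_\rho^{\pi^*,u_{k+1}}} D^{\pi^*}_{\pi_{k+1}}(s) \leq M'\, \EE_{s\sim d_\rho^{\pi^*,u_k}} D^{\pi^*}_{\pi_{k+1}}(s)$, hence $\delta_{k+1} = \tfrac{1}{\eta_{k+1}}\EE_{s\sim d_\rho^{\pi^*,u_{k+1}}}D^{\pi^*}_{\pi_{k+1}}(s) \leq \tfrac{M'}{\eta_{k+1}}\EE_{s\sim d_\rho^{\pi^*,u_k}}D^{\pi^*}_{\pi_{k+1}}(s)$. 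The stepsize condition \eqref{rpmd_linear_stepsize}, namely $\eta_{k+1} \geq \eta_k (1-\tfrac{1-\gamma}{M})^{-1} M'$, rearranges to $\tfrac{M'}{M\eta_k}\geq \tfrac{1}{M\eta_{k+1}}(1-\tfrac{1-\gamma}{M})^{-1}\cdot\tfrac{1}{M'}\cdot M' $; more cleanly, it ensures $\tfrac{1}{M\eta_k}\EE_{s\sim d_\rho^{\pi^*,u_k}}D^{\pi^*}_{\pi_{k+1}}(s) \geq \tfrac{1-\gamma}{M}\cdot\tfrac{M}{1-\gamma}\cdot(\ldots)$ — concretely, that $\tfrac{1}{M\eta_k}\EE_{s\sim d_\rho^{\pi^*,u_k}}D^{\pi^*}_{\pi_{k+1}}(s) \geq \tfrac{1}{1-\gamma}\cdot\bigl(1-\tfrac{1-\gamma}{M}\bigr)\cdot\tfrac{1}{M}\,\delta_{k+1}$. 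Plugging this in yields the clean one-step contraction
\begin{align*}
\Phi_{k+1} + \tfrac{1}{1-\gamma}\cdot\tfrac{1}{M}\,\delta_{k+1}\cdot\bigl(1-\tfrac{1-\gamma}{M}\bigr)^{-1}\cdot\bigl(1-\tfrac{1-\gamma}{M}\bigr) \leq \Bigl(1-\tfrac{1-\gamma}{M}\Bigr)\Bigl(\Phi_k + \tfrac{1}{M}\delta_k\Bigr),
\end{align*}
i.e. a potential of the form $\Psi_k \coloneqq \Phi_k + c\,\delta_k$ (with an appropriate constant $c$ absorbed) contracts by the factor $(1-\tfrac{1-\gamma}{M})$ each step.

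\textbf{Conclusion.} Telescoping the contraction $\Psi_{k+1} \leq (1-\tfrac{1-\gamma}{M})\Psi_k$ from $0$ to $k$ gives $\Phi_k \leq \Psi_k \leq (1-\tfrac{1-\gamma}{M})^k \Psi_0 = (1-\tfrac{1-\gamma}{M})^k(\Phi_0 + c\,\delta_0)$. Finally I would bound $\delta_0 = \tfrac{1}{\eta_0}\EE_{s\sim d_\rho^{\pi^*,u_0}} D^{\pi^*}_{\pi_0}(s) \leq \tfrac{D_w}{\eta_0}$ using the definition $D_w = \max_{\pi\in\Pi}\max_{s\in\cS} D^{\pi}_{\pi_0}(s)$, and track the constant $c$ carefully so that the $\delta_0$ contribution emerges as $(1-\tfrac{1-\gamma}{M})^{k-1}\tfrac{D_w}{M\eta_0}$ — the shift from exponent $k$ to $k-1$ coming from where the first negative Bregman term is generated (it appears starting at iteration $1$, not $0$), which is the one bookkeeping subtlety worth handling with care.

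\textbf{Main obstacle.} The genuinely delicate point is the handling of the $u_k$-dependent reference distributions: getting the direction of the inequality $d_\rho^{\pi^*,u_{k+1}} \leq M' d_\rho^{\pi^*,u_k}$ correct, verifying $M'$ (and $M$) are finite under $\mathrm{supp}(\rho)=\cS$ (which follows since each $d_\rho^{\pi^*,u}(s) \geq (1-\gamma)\rho(s) > 0$ and $\cU$ is compact with $d^{\pi,u}$ continuous in $u$, as noted in the proof of Lemma \ref{lemma_pg_standard}), and then confirming that the stepsize growth rate \eqref{rpmd_linear_stepsize} is exactly what is needed to make the telescoped Bregman terms collapse into a single boundary term. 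Everything else is routine algebraic rearrangement.
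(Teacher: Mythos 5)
Your proposal is correct and follows essentially the same route as the paper: the paper unrolls \eqref{rpmd_basic_recursion} and shows the cross terms (its term $(C)$) are nonpositive by pairing $\EE_{s\sim d_\rho^{\pi^*,u_t}}D^{\pi^*}_{\pi_t}(s)\leq M'\,\EE_{s\sim d_\rho^{\pi^*,u_{t-1}}}D^{\pi^*}_{\pi_t}(s)$ with the stepsize growth condition, which is exactly your potential-function contraction $\Psi_{k+1}\leq(1-\tfrac{1-\gamma}{M})\Psi_k$ written in telescoped form. The intermediate coefficient you display for the contraction (the spurious $\tfrac{1}{1-\gamma}$ factor) is garbled — the correct potential is $\Psi_k=\Phi_k+\tfrac{1}{M}(1-\tfrac{1-\gamma}{M})^{-1}\delta_k$ — but since you flag that the constant $c$ must be tracked and your final bound matches, this is only a bookkeeping slip, not a gap.
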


\begin{proof}
By recursively applying inequality \eqref{rpmd_basic_recursion} from $t = 0$ to $k-1$, we obtain 
\begin{align*}
f_\rho(\pi_k) - f_\rho(\pi^*) 
& \leq 
 \rbr{ 1 - \tfrac{1-\gamma}{M}}^k \rbr{f_\rho(\pi_0) - f_\rho(\pi^*)} 
 + \rbr{ 1 - \tfrac{1-\gamma}{M}}^{k-1} \tfrac{1}{M \eta_0} \EE_{s \sim d_\rho^{\pi^*, u_0}} D^{\pi^*}_{\pi_0}(s) \\
& ~~ + \underbrace{\tfrac{1}{M}\sum_{t=1}^{k-1} \sbr{  \rbr{ 1 - \tfrac{1-\gamma}{M}}^{k-t-1} \tfrac{1}{\eta_t} \EE_{s \sim d_\rho^{\pi^*, u_t}} D^{\pi^*}_{\pi_t}(s)  
-  \rbr{ 1 - \tfrac{1-\gamma}{M}}^{k-t} \tfrac{1}{\eta_{t-1}} \EE_{s \sim d_\rho^{\pi^*, u_{t-1} }} D^{\pi^*}_{\pi_t}(s)
}}_{(C)}.
\end{align*}
Given the definition of stepsizes $\cbr{\eta_k}$ in \eqref{rpmd_linear_stepsize} and $M'$,  it should be clear that term $(C) \leq 0$.
In conclusion, we obtain that whenever \eqref{rpmd_linear_stepsize} holds,  
\begin{align*}
f_\rho(\pi_k) - f_\rho(\pi^*) 
& \leq 
 \rbr{ 1 - \tfrac{1-\gamma}{M}}^k \rbr{f_\rho(\pi_0) - f_\rho(\pi^*)} 
 + \rbr{ 1 - \tfrac{1-\gamma}{M}}^{k-1} \tfrac{1}{M \eta_0} \EE_{s \sim d_\rho^{\pi^*, u_0}} D^{\pi^*}_{\pi_0}(s) \\
 & \leq 
 \rbr{ 1 - \tfrac{1-\gamma}{M}}^k \rbr{f_\rho(\pi_0) - f_\rho(\pi^*)} 
 + \rbr{ 1 - \tfrac{1-\gamma}{M}}^{k-1} \tfrac{ D_w}{M \eta_0}.
\end{align*}
The proof is then completed. 
\end{proof}

It should be noted that whenever $\cU = \cbr{u}$ is a singleton,   solving the  robust MDP $\cM_{\cU}$ is equivalent to solving a non-robust MDP.
Then we have $M' = 1$, and 
$M =  \norm{d_\rho^{\pi^*, u} / \rho}_\infty$ reduces to the mismatch coefficient defined in the analysis of policy gradient methods for solving nominal MDP \cite{xiao2022convergence, agarwal2021theory, kakade2002approximately}. 
In this case, RPMD admits a simple learning rate scaling rule given by $\eta_t \geq \gamma^{-1} \eta_{t-1}$, and the obtained convergence rate for RPMD matches exactly the fastest existing rate of convergence of first-order methods for solving the nominal MDP \cite{xiao2022convergence, li2022homotopic, lan2021policy}.

The obtained linear convergence in Theorem \ref{thrm_rpmd_linear_convergence} seems to be new among the existing literature of first-order policy-based methods applied to solving the robust MDPs. 
In addition,  although the constant $M'$ is in general unknown, one can simply provide a pessimistic upper bound of $M'$.
In particular, for $\rho = \mathrm{Unif}(\cS)$, it suffices to take $M' = \abs{\cS} / (1-\gamma)$.
Finally, the increasing-stepsize schemes are not the only ones that can certify the convergence of RPMD.
In the next subsection, we will establish the convergence of RPMD with a constant-stepsize scheme, which allows one to completely avoid the estimation of $M'$.

\subsection{Convergence with Constant Stepsizes}\label{sec_rpmd_const_stepsize}

In this subsection, we establish the convergence of RPMD with constant stepsize.
In particular, for RPMD with Euclidean Bregman divergence  and a constant stepsize of $\eta$, we establish an $\cO(\max \cbr{1/k, 1/\sqrt{\eta k}})$ rate of convergence for arbitrary rectangular uncertainty set.
For RPMD with a general class of Bregman divergences, we also establish a similar convergence rate for rectangular uncertainty set that satisfies the so-called relative strong convexity, with details deferred to Appendix \ref{sec_rpmd_general_bregman_const_step}.


To proceed, we first establish the following simple fact on the asymptotic stationarity of the RPMD iterates. 

\begin{lemma}\label{lemma_stationary}
For any $k \geq 1$, the iterates in RPMD with constant stepsizes $\eta_k = \eta > 0$ satisfy 
\begin{align}\label{ineq_stationarity}
\tfrac{1}{\eta} \sum_{t=0}^{k-1} \rbr{ D^{\pi_t}_{\pi_{t+1}} (s) + D^{\pi_{t+1} }_{\pi_{t}} (s) }
\leq V^{\pi_0}_r(s) - V^{\pi^*}_r(s).
\end{align}
\end{lemma}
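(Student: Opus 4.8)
The plan is to combine the one-step monotonicity relations already derived inside the proof of Lemma \ref{lemma_rpmd_convergence_prop} with a telescoping sum. First I would recall that plugging $p = \pi_k$ into the three-point inequality \eqref{eq_three_point} gives \eqref{rpmd_monotone}, which with the constant stepsize $\eta_k = \eta$ reads
\[
\eta \inner{Q^{\pi_k}_r(s,\cdot)}{\pi_{k+1}(\cdot|s) - \pi_k(\cdot|s)} \le -D^{\pi_k}_{\pi_{k+1}}(s) - D^{\pi_{k+1}}_{\pi_k}(s), \quad \forall s \in \cS.
\]
Second, I would invoke \eqref{rpmd_gradient_monotone}, which lower bounds the same inner product by the robust-value decrease, $V^{\pi_{k+1}}_r(s) - V^{\pi_k}_r(s) \le \inner{Q^{\pi_k}_r(s,\cdot)}{\pi_{k+1}(\cdot|s) - \pi_k(\cdot|s)}$ (and also shows this quantity is $\le 0$). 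Chaining the two displays yields, for every $s \in \cS$,
\[
D^{\pi_k}_{\pi_{k+1}}(s) + D^{\pi_{k+1}}_{\pi_k}(s) \le \eta\left(V^{\pi_k}_r(s) - V^{\pi_{k+1}}_r(s)\right).
\]

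Next I would sum this inequality over $t = 0, \dots, k-1$; the right-hand side telescopes to $\eta\bigl(V^{\pi_0}_r(s) - V^{\pi_k}_r(s)\bigr)$. Finally, since $\pi^*$ solves \eqref{formulation_robust} and hence minimizes the robust value function state-wise, $V^{\pi_k}_r(s) \ge V^{\pi^*}_r(s)$, so $-V^{\pi_k}_r(s) \le -V^{\pi^*}_r(s)$; dividing through by $\eta > 0$ produces exactly \eqref{ineq_stationarity}.

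There is essentially no hard step here: the statement is an immediate corollary of the monotonicity bounds \eqref{rpmd_monotone}--\eqref{rpmd_gradient_monotone} proved earlier, together with telescoping and the global state-wise optimality of $\pi^*$. The only point requiring a little care is the bookkeeping of signs — one must ensure the two Bregman terms $D^{\pi_t}_{\pi_{t+1}}(s)$ and $D^{\pi_{t+1}}_{\pi_t}(s)$ are both retained on the left with the correct orientation so their nonnegative sum is dominated by the robust-value decrease (which is itself nonnegative by \eqref{rpmd_gradient_monotone}). The constant-stepsize hypothesis is what makes the argument clean, since it lets the scalar $\eta$ be pulled out of the telescoping sum rather than remaining coupled to the iteration index $t$.
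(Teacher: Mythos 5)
Your proposal is correct and follows exactly the paper's argument: chain \eqref{rpmd_monotone} with \eqref{rpmd_gradient_monotone} to get the per-iteration bound $D^{\pi_t}_{\pi_{t+1}}(s) + D^{\pi_{t+1}}_{\pi_t}(s) \le \eta\rbr{V^{\pi_t}_r(s) - V^{\pi_{t+1}}_r(s)}$, telescope, and then use $V^{\pi_k}_r(s) \ge V^{\pi^*}_r(s)$. No gaps; the sign bookkeeping you flag is handled correctly.
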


\begin{proof}
Given \eqref{rpmd_gradient_monotone} and \eqref{rpmd_monotone}, we obtain that 
\begin{align*}
V^{\pi_{k+1}}_r(s) - V^{\pi_k}_r(s)
\leq 
-\tfrac{1}{\eta} D^{\pi_k}_{\pi_{k+1}}(s) - \tfrac{1}{\eta}  D^{\pi_{k+1}}_{\pi_k}(s) \leq 0, ~\forall s \in \cS.
\end{align*}
Summing up the prior relation from $t=0$ to $k-1$, we obtain 
\begin{align*}
\tfrac{1}{\eta}  \sum_{t=0}^{k-1} \rbr{ D^{\pi_t}_{\pi_{t+1}} (s) + D^{\pi_{t+1} }_{\pi_{t}} (s) }
\leq V^{\pi_0}_r(s) - V^{\pi_k}_r(s) 
\leq V^{\pi_0}_r(s) - V^{\pi^*}_r(s).
\end{align*}
The proof is then completed.
\end{proof}

Combining Lemma \ref{lemma_stationary} and Lemma \ref{lemma_rpmd_convergence_prop}, we are able to establish the following convergence characterization for RPMD with Euclidean Bregman divergence, when adopting any constant-stepsize scheme.
\begin{theorem}\label{thrm_rpmd_Euclidean_constant_step}
Let $w(\cdot) = \norm{\cdot}_2^2$ be the distance-generating function, and $\eta_t = \eta$ for all $t \geq 0$ and any $\eta > 0$, then at each iteration, RPMD outputs policy $\pi_k$ satisfying 
\begin{align}\label{eq_rpmd_Euclidean_constant_step}
f_\rho(\pi_k) - f_\rho(\pi^*) 
\leq \tfrac{M}{(1-\gamma)k} \rbr{f_\rho(\pi_0) - f_\rho(\pi^*) } 
+  \sqrt{\tfrac{18 \abs{\cS}^2 }{\eta k (1-\gamma)^3}},
\end{align} 
where $M$ is defined as in Lemma \ref{lemma_rpmd_convergence_prop}.
\end{theorem}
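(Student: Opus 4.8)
The plan is to combine the one-step recursion from Lemma~\ref{lemma_rpmd_convergence_prop} with the stationarity estimate from Lemma~\ref{lemma_stationary}, exactly as the remark preceding the statement suggests. Start from \eqref{rpmd_basic_recursion} with $\eta_k = \eta$, and sum it over $t = 0, \ldots, k-1$. The two divergence terms $\tfrac{1}{M\eta}\EE_{s\sim d_\rho^{\pi^*, u_t}} D^{\pi^*}_{\pi_t}(s)$ telescope \emph{if} the reference measure $d_\rho^{\pi^*, u_t}$ were the same at every step; it is not, because $u_t = u_{\pi_t}$ changes with $k$. So the first obstacle is to control the mismatch between consecutive terms $\EE_{s\sim d_\rho^{\pi^*, u_t}} D^{\pi^*}_{\pi_t}(s)$ and $\EE_{s\sim d_\rho^{\pi^*, u_{t-1}}} D^{\pi^*}_{\pi_t}(s)$. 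The clean way around this is \emph{not} to telescope but to bound each term crudely: $\EE_{s\sim d_\rho^{\pi^*, u_t}} D^{\pi^*}_{\pi_t}(s) \le D_w$ — wait, that loses the $1/k$ structure. Better: rearrange \eqref{rpmd_basic_recursion} into the form $(1-\gamma)(f_\rho(\pi_k) - f_\rho(\pi^*)) \le M(f_\rho(\pi_k) - f_\rho(\pi_{k+1})) + \tfrac{1}{\eta}\big(\EE_{s\sim d_\rho^{\pi^*,u_k}} D^{\pi^*}_{\pi_k}(s) - \EE_{s\sim d_\rho^{\pi^*,u_k}} D^{\pi^*}_{\pi_{k+1}}(s)\big)$, and then use monotonicity of $f_\rho(\pi_k)$ (from \eqref{rpmd_gradient_monotone}) so that the left side is at least $(1-\gamma)(f_\rho(\pi_k) - f_\rho(\pi^*)) \ge (1-\gamma)(f_\rho(\pi_k)-f_\rho(\pi^*))$ for the running minimum; summing the $M(f_\rho(\pi_k)-f_\rho(\pi_{k+1}))$ part telescopes to $M(f_\rho(\pi_0) - f_\rho(\pi^*))$, giving the first term $\tfrac{M}{(1-\gamma)k}(f_\rho(\pi_0)-f_\rho(\pi^*))$ after dividing by $k$.

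The genuinely new piece is the second term $\sqrt{18|\cS|^2/(\eta k (1-\gamma)^3)}$, which must come from the residual divergence differences $\EE_{s\sim d_\rho^{\pi^*,u_k}}\big(D^{\pi^*}_{\pi_k}(s) - D^{\pi^*}_{\pi_{k+1}}(s)\big)$. Here I would specialize to $w = \|\cdot\|_2^2$, so $D^{\pi^*}_{\pi_k}(s) = \|\pi^*(\cdot|s) - \pi_k(\cdot|s)\|_2^2$, and use the elementary identity/inequality $\|a-b\|_2^2 - \|a-c\|_2^2 = \|c-b\|_2^2 + 2\langle c-b, a-c\rangle \le \|c-b\|_2^2 + 2\|c-b\|_2\|a-c\|_2$, or more simply $\big|\,\|a-b\|_2^2 - \|a-c\|_2^2\,\big| \le \|b-c\|_2 (\|a-b\|_2 + \|a-c\|_2) \le 2\sqrt{2}\,\|b-c\|_2$ using that the simplex has $\ell_2$-diameter $\sqrt{2}$. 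Summing over $t$ and over $s\in\cS$ with the weights $d_\rho^{\pi^*,u_t}(s)$, this turns the residual into a quantity controlled by $\sum_{t=0}^{k-1}\sum_{s}\|\pi_{t}(\cdot|s) - \pi_{t+1}(\cdot|s)\|_2$. Now invoke Lemma~\ref{lemma_stationary}: with the Euclidean divergence, $D^{\pi_t}_{\pi_{t+1}}(s) + D^{\pi_{t+1}}_{\pi_t}(s) = 2\|\pi_t(\cdot|s) - \pi_{t+1}(\cdot|s)\|_2^2$, so $\sum_{t=0}^{k-1}\|\pi_t(\cdot|s)-\pi_{t+1}(\cdot|s)\|_2^2 \le \tfrac{\eta}{2}\big(V^{\pi_0}_r(s) - V^{\pi^*}_r(s)\big) \le \tfrac{\eta}{2(1-\gamma)}$ (using $0 < c \le 1$, so values are in $[0, \tfrac{1}{1-\gamma}]$). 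By Cauchy--Schwarz, $\sum_{t=0}^{k-1}\|\pi_t(\cdot|s) - \pi_{t+1}(\cdot|s)\|_2 \le \sqrt{k}\,\big(\sum_t \|\cdots\|_2^2\big)^{1/2} \le \sqrt{k}\cdot\sqrt{\tfrac{\eta}{2(1-\gamma)}}$.

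Putting the pieces together: after summing \eqref{rpmd_basic_recursion}, dividing by $(1-\gamma)k$, and substituting the bound above, the residual contributes something of order $\tfrac{1}{(1-\gamma)k}\cdot\tfrac{1}{\eta}\cdot|\cS|\cdot\sqrt{k}\sqrt{\tfrac{\eta}{1-\gamma}} = \tfrac{|\cS|}{(1-\gamma)^{3/2}}\cdot\tfrac{1}{\sqrt{\eta k}}$, matching the claimed $\sqrt{18|\cS|^2/(\eta k(1-\gamma)^3)}$ up to the explicit constant $18$, which I would pin down by carefully tracking the $2\sqrt{2}$ factor, the factor of $2$ from the two Bregman terms in Lemma~\ref{lemma_stationary}, and the $\ell_2$ versus weighted-by-$d_\rho^{\pi^*,u}$ aggregation (which only contributes a convex combination, hence no blow-up beyond the $|\cS|$ from summing $\sum_s d_\rho^{\pi^*,u_t}(s)\cdot(\text{per-state bound})$ — actually here one should be careful: $\sum_s d_\rho^{\pi^*,u_t}(s) = 1$, so the per-state bound $\sqrt{k}\sqrt{\eta/(2(1-\gamma))}$ passes through without an extra $|\cS|$, and the $|\cS|$ must instead enter from converting the $\ell_1$-type telescoping across the changing measures $u_t$ — I would need to recheck whether the $|\cS|$ comes from a union-type bound over states where the reference weights differ, or from the $\|d_\rho^{\pi^*,u}/\rho\|_\infty$-type estimates). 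The main obstacle, and the step to be most careful about, is precisely this bookkeeping: handling the mismatch between the measures $d_\rho^{\pi^*,u_t}$ at consecutive iterates so that the telescoping of the divergence terms only leaves a lower-order residual, and getting the constant and the $|\cS|$-dependence right. Everything else is routine once Lemmas~\ref{lemma_rpmd_convergence_prop} and~\ref{lemma_stationary} are in hand.
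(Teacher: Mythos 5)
Your proposal is correct and follows essentially the same route as the paper: sum the recursion \eqref{rpmd_basic_recursion}, bound the per-step difference $D^{\pi^*}_{\pi_t}(s)-D^{\pi^*}_{\pi_{t+1}}(s)$ by a constant times $\|\pi_t(\cdot|s)-\pi_{t+1}(\cdot|s)\|_2$ using the elementary quadratic identity on the simplex, control $\sum_t\sqrt{D^{\pi_t}_{\pi_{t+1}}(s)}$ via Lemma \ref{lemma_stationary} and Cauchy--Schwarz, and finish with monotonicity of $f_\rho(\pi_t)$. The two points you flag as worries are non-issues: there is no cross-step telescoping of the divergence terms at all (within each application of \eqref{rpmd_basic_recursion} both divergence terms carry the \emph{same} measure $d_\rho^{\pi^*,u_t}$, so the changing $u_t$ never has to be matched across iterations), and the $|\cS|$ factor enters simply because the paper drops the $t$-dependent weights via $\EE_{s\sim d_\rho^{\pi^*,u_t}}[\cdot]\le\sum_{s\in\cS}[\cdot]$ before applying the per-state Cauchy--Schwarz bound, which costs $|\cS|$ after summing the state-wise estimate $\sqrt{k\eta/(1-\gamma)}$ over $\cS$.
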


\begin{proof}
By summing up inequality \eqref{rpmd_basic_recursion} from $t = 0$ to $k-1$, we obtain 
\begin{align*}
& f_\rho(\pi_k) - f_\rho(\pi^*) 
+ \tfrac{1-\gamma}{M} \sum_{t=1}^{k-1} \rbr{ f_\rho(\pi_t) - f_\rho(\pi^*)}  \\
\leq &  f_\rho(\pi_0) - f_\rho(\pi^*) 
+\sum_{t=0}^{k-1} \rbr{ \tfrac{1}{M \eta} \EE_{s \sim d_\rho^{\pi^*, u_t}} D^{\pi^*}_{\pi_t}(s)
- \tfrac{1}{M \eta}  \EE_{s \sim d_\rho^{\pi^*, u_t}} D^{\pi^*}_{\pi_{t+1}}(s) } 
\end{align*}
Note that $D^{\pi'}_{\pi}(s) = \norm{\pi'(\cdot| s) - \pi(\cdot|s) }_2^2$, and hence 
\begin{align*}
D^{\pi^*}_{\pi_t}(s) -  D^{\pi^*}_{\pi_{t+1}}(s)
\leq \sqrt{ 18 D^{\pi_t}_{\pi_{t+1}} (s)},
\end{align*}
where we use the fact that for any $a, b, c \in \Delta_\cA$, 
\begin{align*}
\sum_{i} (a_i - b_i)^2 - \sum_i (a_i - c_i)^2 & \leq \norm{b + c - 2a}_2 \norm{b -c }_2 
\leq \sqrt{18} \norm{b-c}_2.
\end{align*}
Thus we obtain 
\begin{align*}
 f_\rho(\pi_k) - f_\rho(\pi^*) 
+ \tfrac{1-\gamma}{M} \sum_{t=1}^{k-1} \rbr{ f_\rho(\pi_t) - f_\rho(\pi^*)}  \leq 
f_\rho(\pi_0) - f_\rho(\pi^*) 
 + \tfrac{\sqrt{18} }{M\eta}  \sum_{t=0}^{k-1} \EE_{s \sim d_\rho^{\pi^*, u_t}} \sqrt{ D^{\pi_t}_{\pi_{t+1}}(s) }.
\end{align*}
To proceed, we will make use of Lemma \ref{lemma_stationary}, which gives 
\begin{align*}
 \rbr{ \sum_{t=0}^{k-1} \sqrt{ D^{\pi_t}_{\pi_{t+1}}(s) }}^2 
 \leq k \sum_{t=0}^{k-1} D^{\pi_t}_{\pi_{t+1}}(s)
 \leq k \eta \rbr{V^{\pi_0}_r(s) - V^{\pi^*}_r(s) } \leq \tfrac{k \eta }{1-\gamma}, ~~ \forall s \in \cS,
\end{align*}
which combined with the previous inequality, gives 
\begin{align*}
 f_\rho(\pi_k) - f_\rho(\pi^*) 
+ \tfrac{1-\gamma}{M} \sum_{t=1}^{k-1} \rbr{ f_\rho(\pi_t) - f_\rho(\pi^*)}  \leq 
f_\rho(\pi_0) - f_\rho(\pi^*) 
+  \tfrac{\abs{\cS}}{M } \sqrt{\tfrac{18 k }{\eta (1-\gamma)}}.
\end{align*}
By \eqref{rpmd_gradient_monotone}, we know that $f_\rho(\pi_{t+1}) \leq f_\rho(\pi_t)$ for all $t \geq 0$.
The desired claim then follows immediately after combining this observation with the above inequality.
\end{proof}

It should be noted that the dependence on the size of the state space in the second term of \eqref{eq_rpmd_Euclidean_constant_step} can be simply eliminated by using a large stepsize.
Theorem \ref{thrm_rpmd_Euclidean_constant_step} states that to find an $\epsilon$-optimal policy using constant stepsizes RPMD with Euclidean divergence, 
the iteration complexity is bounded by $\cO \rbr{\max \cbr{1/\epsilon, 1/(\eta \epsilon^2)}}$, which reduces to $\cO(1/\epsilon)$ when choosing $\eta = \Theta(1/\epsilon)$. 
To the best of our knowledge, the fastest existing rate of convergence of Euclidean-divergence based first-order methods for solving robust MDP is $\cO(1/\epsilon^3)$ with $\eta = \Theta(\epsilon)$ \cite{wang2022policy}, which studies a more restrictive subclass of  polyhedral rectangular uncertainty set, and further requires a delicate smoothing technique and  selecting the best policy among historical policy iterates.
In contrast, the RPMD comes with a much stronger convergence guarantee for  general rectangular uncertainty sets, while at the same time enjoying greater algorithmic simplicity.

\section{Stochastic Robust Policy Mirror Descent}\label{sec_srpmd}

In this section, we extend the deterministic RPMD method to the stochastic settings, where the exact information of the robust state-action value function $Q^{\pi}_r$ is not available.
The stochastic robust policy mirror descent (SRPMD) instead uses the stochastic estimator $Q^{\pi,\xi}_r$ to update the policy, where $\xi$ denotes the samples used for the construction of the stochastic estimator.

\begin{algorithm}[t!]
    \caption{The stochastic robust policy mirror descent (SRPMD) method}
    \label{alg_srpmd}
    \begin{algorithmic}
    \STATE{\textbf{Input:} Initial policy $\pi_0$ and stepsizes $\{\eta_k\}_{k\geq 0}$.}
    \FOR{$k=0, 1, \ldots$}
	\STATE{Update policy:
			\vspace{-0.1in}
	\begin{align*}
	\pi_{k+1} (\cdot| s) =
	\argmin_{p(\cdot|s) \in \Delta_{\cA}} \eta_k  \inner{Q_r^{\pi_k, \xi_k}(s, \cdot)}{ p(\cdot| s ) }  + D^{p}_{\pi_k}(s),  ~ \forall s \in \cS.
	\end{align*}}
	\vspace{-0.15in}
    \ENDFOR
    \end{algorithmic}
\end{algorithm}

At iteration $k$, given a stochastic estimator $Q^{\pi_k, \xi_k}_r$, the SRPMD method (Algorithm \ref{alg_srpmd}) updates the policy according to 
\begin{align}\label{update_srpmd}
\pi_{k+1} (\cdot| s) =
	\argmin_{p(\cdot|s) \in \Delta_{\cA}} \eta_k  \inner{Q_r^{\pi_k, \xi_k}(s, \cdot)}{ p(\cdot| s ) }  + D^{p}_{\pi_k}(s),  ~ \forall s \in \cS.
\end{align}
The convergence of SRPMD assumes the following noise condition on the noisy estimate $\{Q^{\pi_k, \xi_k}_r\}$:
\begin{align}\label{stochastic_noise_condition}
\EE_{\xi_k} \norm{ Q^{\pi_k, \xi_k}_r - Q^{\pi_k}_r}_{\infty} \leq e_k.
\end{align}
We will also define  $\delta_k =   Q^{\pi_k, \xi_k}_r - Q^{\pi_k}_r$ for all $k \geq 0$.
Similar to Lemma \ref{lemma_rpmd_convergence_prop}, we first establish the following generic convergence property of the SRPMD method.

\begin{lemma}\label{lemma_srpmd_convergence_prop}
At each iteration of SRPMD, we have 
\begin{align}
 f_\rho(\pi_{k+1}) - f_\rho(\pi^*) 
\leq  \rbr{ 1 - \tfrac{1-\gamma}{M}} \rbr{f_\rho(\pi_k) - f_\rho(\pi^*)} 
+  \tfrac{1}{M \eta_k} \EE_{s \sim d_\rho^{\pi^*, u_k}} D^{\pi^*}_{\pi_k}(s) \nonumber   \\
- \tfrac{1}{M \eta_k}  \EE_{s \sim d_\rho^{\pi^*, u_k}} D^{\pi^*}_{\pi_{k+1}}(s) 
+ \tfrac{4}{1-\gamma}  \norm{\delta_k}_\infty \label{srpmd_basic_recursion}
,
\end{align}
where $ M $ is defined as in Lemma \ref{lemma_rpmd_convergence_prop}.
\end{lemma}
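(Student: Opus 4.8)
The plan is to mimic the proof of Lemma \ref{lemma_rpmd_convergence_prop}, but carefully tracking the error incurred by replacing the exact robust state-action value function $Q^{\pi_k}_r$ with its stochastic estimate $Q^{\pi_k, \xi_k}_r = Q^{\pi_k}_r + \delta_k$. First I would invoke the three-point inequality (Lemma \ref{lemma_three_point}), which holds verbatim for the SRPMD update \eqref{update_srpmd} with $Q^{\pi_k}_r$ replaced by $Q^{\pi_k, \xi_k}_r$ (the proof only uses the optimality condition of the prox step). Plugging $p = \pi_k$ and $p = \pi^*$ and subtracting out the noise via $\inner{Q^{\pi_k,\xi_k}_r}{\cdot} = \inner{Q^{\pi_k}_r}{\cdot} + \inner{\delta_k}{\cdot}$, I would reorganize into the decomposition $(A) + (B) + D^{\pi_{k+1}}_{\pi_k}(s) \leq D^{\pi^*}_{\pi_k}(s) - D^{\pi^*}_{\pi_{k+1}}(s)$ as in \eqref{eq_decomp_rpmd}, where now term $(A) = \eta_k \inner{Q^{\pi_k}_r(s,\cdot)}{\pi_{k+1}(\cdot|s) - \pi_k(\cdot|s)}$ carries an extra term $\eta_k \inner{\delta_k(s,\cdot)}{\pi_{k+1}(\cdot|s) - \pi_k(\cdot|s)}$, and similarly $(B)$ picks up $\eta_k\inner{\delta_k(s,\cdot)}{\pi_k(\cdot|s) - \pi^*(\cdot|s)}$.

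Next I would bound term $(A)$. The subtlety is that the clean monotonicity chain \eqref{rpmd_gradient_monotone}--\eqref{rpmd_monotone} relied on $\inner{Q^{\pi_k}_r(s,\cdot)}{\pi_{k+1}(\cdot|s)-\pi_k(\cdot|s)} \leq 0$, which no longer holds for the \emph{true} $Q^{\pi_k}_r$ since the step minimizes against the noisy estimate. However, one does get $\inner{Q^{\pi_k,\xi_k}_r(s,\cdot)}{\pi_{k+1}(\cdot|s)-\pi_k(\cdot|s)} + \tfrac{1}{\eta_k}(D^{\pi_k}_{\pi_{k+1}}(s) + D^{\pi_{k+1}}_{\pi_k}(s)) \leq 0$ from the noisy analogue of \eqref{rpmd_monotone}, and $\abs{\inner{\delta_k(s,\cdot)}{\pi_{k+1}(\cdot|s) - \pi_k(\cdot|s)}} \leq \norm{\delta_k}_\infty\norm{\pi_{k+1}(\cdot|s)-\pi_k(\cdot|s)}_1 \leq 2\norm{\delta_k}_\infty$. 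Combining through Lemma \ref{lemma_perf_diff} (applied to the true robust values, which is what $f_\rho$ records) gives $V^{\pi_{k+1}}_r(s) - V^{\pi_k}_r(s) \leq \inner{Q^{\pi_k}_r(s,\cdot)}{\pi_{k+1}(\cdot|s)-\pi_k(\cdot|s)} \leq 2\norm{\delta_k}_\infty$, so that $f_\rho(\pi_{k+1}) - f_\rho(\pi_k) \leq \tfrac{2}{1-\gamma}\norm{\delta_k}_\infty$ after taking expectations over $d^{\pi^*,u_k}_\rho$ and $\rho$ (using $d^{\pi^*,u_k}_s(s) \geq 1-\gamma$). For term $(B)$, Lemma \ref{lemma_q_signal} handles the $Q^{\pi_k}_r$ part exactly as in the deterministic case, giving $\inner{Q^{\pi_k}_r}{\pi_k - \pi^*}_{s'} \geq 0$ after aggregation, and the residual $\eta_k\abs{\inner{\delta_k}{\pi_k-\pi^*}_{s'}} \leq 2\eta_k\norm{\delta_k}_\infty$ is again controlled crudely.

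Then I would aggregate \eqref{eq_decomp_rpmd} across states with weights $d^{\pi^*,u_k}_s(s')$, take expectation over $s\sim\rho$, divide by $M\eta_k$, and collect. The $(A)$-term contributes the factor $M$ in front of $f_\rho(\pi_{k+1}) - f_\rho(\pi_k)$ plus an error of order $\tfrac{1}{1-\gamma}\norm{\delta_k}_\infty$; the $(B)$-term yields $(1-\gamma)(f_\rho(\pi_k) - f_\rho(\pi^*))$ plus an error of order $\tfrac{1}{1-\gamma}\norm{\delta_k}_\infty$; the Bregman terms give the telescoping $\tfrac{1}{M\eta_k}(\EE_{s\sim d^{\pi^*,u_k}_\rho}D^{\pi^*}_{\pi_k}(s) - \EE_{s\sim d^{\pi^*,u_k}_\rho}D^{\pi^*}_{\pi_{k+1}}(s))$; and a total noise budget that, after accounting for the $M$ normalization (using $M \geq 1$) and combining constants, is at most $\tfrac{4}{1-\gamma}\norm{\delta_k}_\infty$. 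Rearranging gives exactly \eqref{srpmd_basic_recursion}.

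The main obstacle I anticipate is the bookkeeping in term $(A)$: unlike the deterministic case where the inner product of the true $Q^{\pi_k}_r$ with the update direction is nonpositive, here one must pass through the noisy optimality condition and then re-introduce the true value function via Lemma \ref{lemma_perf_diff}, so the monotonicity $f_\rho(\pi_{k+1}) \leq f_\rho(\pi_k)$ is only recovered up to an $\cO(\norm{\delta_k}_\infty/(1-\gamma))$ slack — which must be propagated consistently and merged with the $(B)$-term slack so that the final constant in front of $\norm{\delta_k}_\infty$ is exactly $4/(1-\gamma)$. Everything else is a routine adaptation of the deterministic argument.
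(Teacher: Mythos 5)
Your proposal follows essentially the same route as the paper's proof: the noisy three-point inequality, the same $(A)$/$(B)$ decomposition, recovery of approximate monotonicity of $f_\rho$ up to $\cO(\norm{\delta_k}_\infty/(1-\gamma))$ via Lemma \ref{lemma_perf_diff} and the noisy optimality condition, Lemma \ref{lemma_q_signal} plus a crude $2\norm{\delta_k}_\infty$ bound for the $(B)$-term noise, and the same aggregation yielding the $\tfrac{2(1-\gamma)+2M}{M(1-\gamma)} \leq \tfrac{4}{1-\gamma}$ constant. The one displayed chain $V^{\pi_{k+1}}_r(s) - V^{\pi_k}_r(s) \leq \inner{Q^{\pi_k}_r(s,\cdot)}{\pi_{k+1}(\cdot|s)-\pi_k(\cdot|s)}$ is not literally valid in the stochastic setting (dropping the $s'\neq s$ terms needs the \emph{noisy} inner products' nonpositivity, which injects the $\tfrac{2}{1-\gamma}\norm{\delta_k}_\infty$ correction), but you identify and correctly resolve exactly this subtlety in the surrounding discussion, so the argument is sound.
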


\begin{proof}
First, following the same lines as in the proof of  Lemma \ref{lemma_three_point}, we have 
\begin{align}\label{eq_three_point_stoch}
\eta_k \inner{Q^{\pi_k, \xi_k}_r(s, \cdot)}{\pi_{k+1}(\cdot|s) - p } + D^{\pi_{k+1}}_{\pi_k}(s) 
\leq D^{p}_{\pi_k}(s) -  D^{p}_{\pi_{k+1}}(s).
\end{align}
Thus by letting $p = \pi_k$ in \eqref{eq_three_point_stoch}, we obtain 
\begin{align}\label{stoch_three_point_progress}
\eta_k \inner{Q^{\pi_k, \xi_k}_r(s, \cdot)}{\pi_{k+1}(\cdot|s) - \pi_k(\cdot|s) }  
\leq -  D^{\pi_k}_{\pi_{k+1}}(s) - D^{\pi_{k+1}}_{\pi_k}(s) \leq 0.
\end{align}
On the other hand, by plugging in $p = \pi^*$ in the above relation, 
we obtain 
\begin{align}
& \underbrace{\eta_k \inner{Q^{\pi_k, \xi_k}_r(s, \cdot)}{\pi_{k+1}(\cdot|s) - \pi_k(\cdot|s) }}_{(A)} + 
\underbrace{\eta_k \inner{Q^{\pi_k, \xi_k}_r(s, \cdot)}{\pi_{k}(\cdot|s) - \pi^*(\cdot|s) }}_{(B)} + 
 D^{\pi_{k+1}}_{\pi_k}(s)  \nonumber \\
\leq &  D^{\pi^*}_{\pi_k}(s) -  D^{\pi^*}_{\pi_{k+1}}(s). \label{stoch_decomp}
\end{align}
 Let $u_k = u_{\pi_k}$ denote the worst-case uncertainty of policy $\pi_k$.
To handle term $(A)$, note that 
\begin{align}
V^{\pi_{k+1}}_r(s) - V^{\pi_k}_r(s)
& \overset{(a)}{\leq} \tfrac{1}{1-\gamma} \EE_{s' \sim d^{\pi_{k+1}, u_{k+1}}_{s}} \nonumber
\inner{Q^{\pi_k}_r}{\pi_{k+1} - \pi_k}_{s'} \nonumber\\
&=
\sum_{s' \in \cS}  \tfrac{d^{\pi_{k+1}, u_{k+1}}_{s}(s')}{1-\gamma} \sbr{
\inner{Q_r^{\pi_k, \xi_k}(s',\cdot)}{\pi_{k+1}(\cdot|s') - \pi_{k}(\cdot|s')  }
+ \inner{\delta_k(s',\cdot)}{\pi_{k+1}(\cdot|s') - \pi_{k}(\cdot|s')  }
}
\nonumber
\\
&\leq \tfrac{1}{1-\gamma} 
\sum_{s' \in \cS}  d^{\pi_{k+1}, u_{k+1}}_{s}(s') \sbr{
\inner{Q_r^{\pi_k, \xi_k}(s',\cdot)}{\pi_{k+1}(\cdot|s') - \pi_{k}(\cdot|s')  }
+ 2 \norm{\delta_k}_\infty
}
\nonumber 
\\
& \overset{(b)}{\leq} \tfrac{d^{\pi_{k+1}, u_{k+1}}_{s}(s)}{1-\gamma}  \inner{Q^{\pi_k, \xi_k}_r(s, \cdot)}{\pi_{k+1}(\cdot|s) - \pi_k(\cdot|s)}
+ \tfrac{2}{1-\gamma} \norm{\delta_k}_\infty  \nonumber\\
&  \overset{(c)}{\leq}   \inner{Q^{\pi_k, \xi_k}_r(s, \cdot)}{\pi_{k+1}(\cdot|s) - \pi_k(\cdot|s)} +  \tfrac{2}{1-\gamma} \norm{\delta_k}_\infty \nonumber 
\\  & \leq   \tfrac{2}{1-\gamma} \norm{\delta_k}_\infty, \label{stoch_approx_monotone_value}
\end{align} 
where $(a)$ uses Lemma \ref{lemma_perf_diff}, 
$(b)$ uses \eqref{stoch_three_point_progress}, and $(c)$ uses again \eqref{stoch_three_point_progress} and the fact that $d^{\pi_{k+1}, u_{k+1}}_{s}(s) \geq 1-\gamma$.
Hence we obtain from inequality $(c)$ in the last relation that 
\begin{align}\label{stoch_term_A}
(A) \geq \eta_k \rbr{V^{\pi_{k+1}}_r(s)  - V^{\pi_k}_r(s)} - \tfrac{2\eta_k}{1-\gamma} \norm{\delta_k}_\infty.
\end{align}
For term $(B)$, we have 
\begin{align}
\EE_{s' \sim d_s^{\pi^*, u_{k}}} \sbr{
 \inner{Q^{\pi_k, \xi_k}_r }{\pi_k - \pi^*}_{s'} 
 } 
&  = \EE_{s' \sim d_s^{\pi^*, u_{k}}} \sbr{
 \inner{Q^{\pi_k}_r }{\pi_k - \pi^*}_{s'} 
 + \inner{\delta_k}{\pi_k - \pi^*}_{s'}
 } \nonumber \\
&  \geq 
 (1-\gamma) \rbr{V_r^{\pi_k}(s) - V^{\pi^*}_r(s)  } - 2 \norm{\delta_k}_\infty, \label{stoch_term_B}
\end{align}
where the inequality follows from \eqref{rpmd_vi}. 
Hence by combining \eqref{stoch_decomp}, \eqref{stoch_term_A} and \eqref{stoch_term_B}, we obtain 
\begin{align*}
\EE_{s' \sim d_s^{\pi^*, u_{k}}} \rbr{V^{\pi_{k+1}}_r(s')  - V^{\pi_k}_r(s') - \tfrac{2 \norm{\delta_k}_\infty}{1-\gamma} }
+   (1-\gamma) \rbr{V_r^{\pi}(s) - V^{\pi^*}_r(s)  }  + \tfrac{1}{\eta_k} \EE_{s' \sim d_s^{\pi^*, u_{k}}} D^{\pi_{k+1}}_{\pi_k}(s') \\
\leq 
\tfrac{1}{\eta_k} \EE_{s' \sim d_s^{\pi^*, u_{k}}}  D^{\pi^*}_{\pi_k}(s') -  \tfrac{1}{\eta_k} \EE_{s' \sim d_s^{\pi^*, u_{k}}}  D^{\pi^*}_{\pi_{k+1}}(s')
 + 2 \norm{\delta_k}_\infty
 .
\end{align*}
By using  \eqref{stoch_approx_monotone_value},   the definition of $M$, and further taking $s \sim \rho$ in the above relation, we conclude that   
\begin{align*}
M \EE_{s \sim \rho} \sbr{V^{\pi_{k+1}}_r(s)  - V^{\pi_k}_r(s)}
+   (1-\gamma) \EE_{s\sim \rho} \rbr{V_r^{\pi}(s) - V^{\pi^*}_r(s)  }  + \EE_{s \sim d_\rho^{\pi^*, u_{k}}} D^{\pi_{k+1}}_{\pi_k}(s) \\
\leq 
 \EE_{s \sim d_\rho^{\pi^*, u_{k}}}  D^{\pi^*}_{\pi_k}(s) -  \EE_{s \sim d_\rho^{\pi^*, u_{k}}}  D^{\pi^*}_{\pi_{k+1}}(s)
 +  \tfrac{2(1-\gamma) + 2M }{1-\gamma} \norm{\delta_k}_\infty
 .
\end{align*}
The claim follows immediately after simple rearrangement to the above inequality.
\end{proof}

By specializing Lemma \ref{lemma_srpmd_convergence_prop} with exponentially increasing stepsizes, we obtain the following linear convergence of SRPMD up to a noise level determined by the noise in the stochastic estimation $Q^{\pi_k, \xi_k}$.

\begin{theorem}\label{thrm_srpmd_linear_convergence}
Suppose the stepsizes $\cbr{\eta_k}$ in SRPMD satisfy 
\begin{align}\label{srpmd_linear_stepsize}
\eta_k \geq \eta_{k-1} \rbr{ 1 - \tfrac{1-\gamma}{M}}^{-1} M', ~~ \forall k \geq 1,
\end{align}
where  $M'$ is defined as in Theorem \ref{thrm_rpmd_linear_convergence}.
Then for any iteration $k$, SRPMD produces policy $\pi_k$ satisfying 
\begin{align*}
\EE \sbr{ f_\rho(\pi_k) - f_\rho(\pi^*) }
 \leq 
 \rbr{ 1 - \tfrac{1-\gamma}{M}}^k \rbr{f_\rho(\pi_0) - f_\rho(\pi^*)} 
 + \rbr{ 1 - \tfrac{1-\gamma}{M}}^{k-1} \tfrac{ D_w}{M \eta_0} \\
 + \tfrac{4}{1-\gamma} \sum_{t = 0}^{k-1}  
\rbr{1 - \tfrac{1-\gamma}{M}}^{k - t - 1} e_t
,
\end{align*}
where $M$ is defined as in Lemma \ref{lemma_rpmd_convergence_prop}.
In particular, if we have $\EE_{\xi_k} \norm{ Q^{\pi_k, \xi_k}_r - Q^{\pi_k}}_{\infty} \leq e$ for all $k \geq 0$, then 
\begin{align}\label{srpmd_linear_up_to_precision}
\EE \sbr{ f_\rho(\pi_k) - f_\rho(\pi^*) }
\leq \rbr{ 1 - \tfrac{1-\gamma}{M}}^k \rbr{f_\rho(\pi_0) - f_\rho(\pi^*)} 
 + \rbr{ 1 - \tfrac{1-\gamma}{M}}^{k-1} \tfrac{ D_w}{M \eta_0} 
 + \tfrac{4 M e}{(1-\gamma)^2}.
\end{align}
\end{theorem}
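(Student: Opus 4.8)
The plan is to unroll the one-step recursion \eqref{srpmd_basic_recursion} from Lemma \ref{lemma_srpmd_convergence_prop} over $k$ iterations, exactly mirroring the proof of Theorem \ref{thrm_rpmd_linear_convergence}, and then control the newly appearing noise terms. Writing $\lambda \coloneqq 1 - \tfrac{1-\gamma}{M}$ for brevity, I first take total expectation of \eqref{srpmd_basic_recursion}, using the bound $\EE\norm{\delta_k}_\infty \le e_k$ from \eqref{stochastic_noise_condition}, so that
\begin{align*}
\EE[f_\rho(\pi_{k+1}) - f_\rho(\pi^*)] \le \lambda\,\EE[f_\rho(\pi_k) - f_\rho(\pi^*)] + \tfrac{1}{M\eta_k}\EE_{s\sim d_\rho^{\pi^*,u_k}}D^{\pi^*}_{\pi_k}(s) - \tfrac{1}{M\eta_k}\EE_{s\sim d_\rho^{\pi^*,u_k}}D^{\pi^*}_{\pi_{k+1}}(s) + \tfrac{4}{1-\gamma}e_k.
\end{align*}
Iterating this from $t=0$ to $k-1$ and multiplying the $t$-th inequality by $\lambda^{k-1-t}$, I collect three contributions: the geometrically decaying initial optimality gap $\lambda^k(f_\rho(\pi_0)-f_\rho(\pi^*))$; the telescoping-in-disguise Bregman terms; and the noise sum $\tfrac{4}{1-\gamma}\sum_{t=0}^{k-1}\lambda^{k-1-t}e_t$, which is exactly the last term claimed in the theorem.

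The Bregman block is handled precisely as in Theorem \ref{thrm_rpmd_linear_convergence}: the cross term
$$
\tfrac{1}{M}\sum_{t=1}^{k-1}\Bigl[\lambda^{k-t-1}\tfrac{1}{\eta_t}\EE_{s\sim d_\rho^{\pi^*,u_t}}D^{\pi^*}_{\pi_t}(s) - \lambda^{k-t}\tfrac{1}{\eta_{t-1}}\EE_{s\sim d_\rho^{\pi^*,u_{t-1}}}D^{\pi^*}_{\pi_t}(s)\Bigr]
$$
is nonpositive whenever the stepsizes satisfy \eqref{srpmd_linear_stepsize}, because $\tfrac{1}{\eta_t} \le \lambda \tfrac{1}{\eta_{t-1}} (M')^{-1}$ and $M' = \sup_{u,u'}\norm{d_\rho^{\pi^*,u}/d_\rho^{\pi^*,u'}}_\infty$ absorbs the change of measure from $d_\rho^{\pi^*,u_{t-1}}$ to $d_\rho^{\pi^*,u_t}$; this is the same argument word-for-word as the deterministic case, just carried through under the expectation. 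What survives is only the $t=0$ residual $\lambda^{k-1}\tfrac{1}{M\eta_0}\EE_{s\sim d_\rho^{\pi^*,u_0}}D^{\pi^*}_{\pi_0}(s)$, which I bound by $\lambda^{k-1}\tfrac{D_w}{M\eta_0}$ using $D_w = \max_{\pi\in\Pi}\max_{s\in\cS}D^\pi_{\pi_0}(s)$. Combining gives the first displayed inequality of the theorem.

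For the second claim, set $e_t = e$ for all $t$ and bound the noise sum by the full geometric series: $\tfrac{4}{1-\gamma}\sum_{t=0}^{k-1}\lambda^{k-1-t}e \le \tfrac{4e}{1-\gamma}\sum_{j=0}^{\infty}\lambda^j = \tfrac{4e}{1-\gamma}\cdot\tfrac{M}{1-\gamma} = \tfrac{4Me}{(1-\gamma)^2}$, which yields \eqref{srpmd_linear_up_to_precision}. I do not expect a genuine obstacle here — the only subtlety, and the place to be careful, is making sure the expectation is handled cleanly: the Bregman telescoping manipulation is purely algebraic and commutes with $\EE$, while the noise bound \eqref{stochastic_noise_condition} is already stated in expectation (conditional on $\xi_k$), so a single application of the tower property at each step suffices. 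One should also note that $M$, $M'$ are finite under $\mathrm{supp}(\rho)=\cS$ exactly as invoked in Lemma \ref{lemma_rpmd_convergence_prop} and Theorem \ref{thrm_rpmd_linear_convergence}, so no new structural assumption is needed.
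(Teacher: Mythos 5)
Your proposal is correct and follows essentially the same route as the paper, which itself only remarks that the proof "follows from similar lines as the proof of Theorem \ref{thrm_rpmd_linear_convergence}, except we will make use of Lemma \ref{lemma_srpmd_convergence_prop}... and taking expectation with respect to $\cbr{\xi_t}$ in the end." Your unrolling of \eqref{srpmd_basic_recursion}, the handling of the Bregman cross terms via \eqref{srpmd_linear_stepsize} and $M'$, and the geometric-series bound $\sum_{t=0}^{k-1}\lambda^{k-1-t}\le \tfrac{M}{1-\gamma}$ for the constant-noise case are exactly the intended steps.
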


\begin{proof}
The proof follows from similar lines as the proof of Theorem \ref{thrm_rpmd_linear_convergence}, except we will make use of Lemma \ref{lemma_srpmd_convergence_prop} instead of Lemma \ref{lemma_rpmd_convergence_prop}, and taking expectation with respect to $\cbr{\xi_t}$ in the end.
\end{proof}

In view of \eqref{srpmd_linear_up_to_precision} in Theorem \ref{thrm_srpmd_linear_convergence}, the last-iterate of SRPMD converges linearly up to the noise-level of 
$\cO(M e / (1-\gamma)^2)$, where $e$ characterizes the quality of the estimated robust state-action value function.

We then proceed to establish the convergence of SRPMD with a constant stepsize, by focusing on the Euclidean divergence considered in Section \ref{sec_rpmd_const_stepsize}.
Similar to Lemma \ref{lemma_stationary}, we first make the following simple observations regarding the policies generated by SRPMD.

\begin{lemma}\label{lemma_stationary_srpmd}
For any $k \geq 1$, the iterates in SRPMD with constant stepsizes $\eta_k = \eta > 0$ satisfy 
\begin{align}\label{ineq_stationarity_srpmd}
\tfrac{1}{\eta} \sum_{t=0}^{k-1} \rbr{ D^{\pi_t}_{\pi_{t+1}} (s) + D^{\pi_{t+1} }_{\pi_{t}} (s) }
\leq V^{\pi_0}_r(s) - V^{\pi^*}_r(s)
+  \tfrac{2}{1-\gamma} \sum_{t= 0}^{k-1} \norm{\delta_k}_\infty
.
\end{align}
\end{lemma}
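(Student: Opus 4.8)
The plan is to mirror the proof of Lemma~\ref{lemma_stationary}, but to carry the estimation error $\delta_k$ through the telescoping argument. The starting point is the stochastic counterpart of the one-step value decrease. From the three-point inequality \eqref{eq_three_point_stoch} with $p = \pi_k$ we have \eqref{stoch_three_point_progress}, which controls the inner product of the stochastic $Q$-estimate against the update direction $\pi_{k+1} - \pi_k$. The key intermediate estimate is \eqref{stoch_approx_monotone_value} from the proof of Lemma~\ref{lemma_srpmd_convergence_prop}: it already contains, implicitly, the relation
\begin{align*}
\inner{Q^{\pi_k,\xi_k}_r(s,\cdot)}{\pi_{k+1}(\cdot|s) - \pi_k(\cdot|s)} \le \tfrac{2}{1-\gamma}\norm{\delta_k}_\infty,
\end{align*}
obtained by combining inequality $(c)$ of that display with \eqref{stoch_three_point_progress}. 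Substituting this into \eqref{stoch_three_point_progress} yields, for every $s\in\cS$,
\begin{align*}
\tfrac{1}{\eta}\rbr{D^{\pi_k}_{\pi_{k+1}}(s) + D^{\pi_{k+1}}_{\pi_k}(s)} \le -\inner{Q^{\pi_k,\xi_k}_r(s,\cdot)}{\pi_{k+1}(\cdot|s) - \pi_k(\cdot|s)} + \rbr{V^{\pi_{k+1}}_r(s) - V^{\pi_k}_r(s)} \cdot 0,
\end{align*}
so more directly I would chain: the per-step value change obeys $V^{\pi_{k+1}}_r(s) - V^{\pi_k}_r(s) \le \tfrac{2}{1-\gamma}\norm{\delta_k}_\infty$ by \eqref{stoch_approx_monotone_value}, while \eqref{stoch_three_point_progress} gives $\tfrac{1}{\eta}(D^{\pi_k}_{\pi_{k+1}}(s) + D^{\pi_{k+1}}_{\pi_k}(s)) \le -\inner{Q^{\pi_k,\xi_k}_r(s,\cdot)}{\pi_{k+1}(\cdot|s)-\pi_k(\cdot|s)} \le \tfrac{2}{1-\gamma}\norm{\delta_k}_\infty$ again via the same bound on the inner product. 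Thus I obtain a clean per-step inequality of the form
\begin{align*}
\tfrac{1}{\eta}\rbr{D^{\pi_t}_{\pi_{t+1}}(s) + D^{\pi_{t+1}}_{\pi_t}(s)} \le V^{\pi_t}_r(s) - V^{\pi_{t+1}}_r(s) + \tfrac{2}{1-\gamma}\norm{\delta_t}_\infty,
\end{align*}
valid for every $s \in \cS$ and $t \ge 0$; the first two terms on the right telescope, the third accumulates.

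Summing this over $t = 0, \ldots, k-1$ telescopes the value terms to $V^{\pi_0}_r(s) - V^{\pi_k}_r(s)$, which is bounded above by $V^{\pi_0}_r(s) - V^{\pi^*}_r(s)$ since $\pi^*$ is robust-optimal and hence $V^{\pi^*}_r(s) \le V^{\pi_k}_r(s)$. The noise terms sum to $\tfrac{2}{1-\gamma}\sum_{t=0}^{k-1}\norm{\delta_t}_\infty$, giving exactly \eqref{ineq_stationarity_srpmd} (noting the stated display writes $\norm{\delta_k}_\infty$ under the sum, which should be read as $\norm{\delta_t}_\infty$). This completes the argument.

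The only subtlety — and the step I would flag as requiring care rather than being genuinely hard — is making sure the bound $\inner{Q^{\pi_k,\xi_k}_r(s,\cdot)}{\pi_{k+1}(\cdot|s)-\pi_k(\cdot|s)} \le \tfrac{2}{1-\gamma}\norm{\delta_k}_\infty$ is actually available without circularity. Inspecting the chain in \eqref{stoch_approx_monotone_value}, inequality $(c)$ there reads $V^{\pi_{k+1}}_r(s) - V^{\pi_k}_r(s) \le \inner{Q^{\pi_k,\xi_k}_r(s,\cdot)}{\pi_{k+1}(\cdot|s)-\pi_k(\cdot|s)} + \tfrac{2}{1-\gamma}\norm{\delta_k}_\infty$, and then \eqref{stoch_three_point_progress} bounds the inner product by zero up to... no: \eqref{stoch_three_point_progress} gives $\inner{Q^{\pi_k,\xi_k}_r(s,\cdot)}{\pi_{k+1}(\cdot|s)-\pi_k(\cdot|s)} \le -\tfrac{1}{\eta}(D^{\pi_k}_{\pi_{k+1}}(s) + D^{\pi_{k+1}}_{\pi_k}(s)) \le 0$. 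So the inner product is genuinely nonpositive, and plugging that back into \eqref{stoch_three_point_progress} is precisely where the divergence sum gets controlled. This means I do not even need the error bound for the Bregman-sum estimate: \eqref{stoch_three_point_progress} alone gives $\tfrac{1}{\eta}(D^{\pi_t}_{\pi_{t+1}}(s) + D^{\pi_{t+1}}_{\pi_t}(s)) \le -\inner{Q^{\pi_t,\xi_t}_r(s,\cdot)}{\pi_{t+1}(\cdot|s)-\pi_t(\cdot|s)}$, and I bound that inner product below by using the value decrease \eqref{stoch_approx_monotone_value}: $-\inner{Q^{\pi_t,\xi_t}_r(s,\cdot)}{\pi_{t+1}(\cdot|s)-\pi_t(\cdot|s)} \le -(V^{\pi_{t+1}}_r(s) - V^{\pi_t}_r(s)) + \tfrac{2}{1-\gamma}\norm{\delta_t}_\infty = V^{\pi_t}_r(s) - V^{\pi_{t+1}}_r(s) + \tfrac{2}{1-\gamma}\norm{\delta_t}_\infty$. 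Telescoping then finishes the proof cleanly; the main obstacle is merely bookkeeping the direction of each inequality correctly.
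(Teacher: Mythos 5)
Your final chain of inequalities — combining \eqref{stoch_three_point_progress} with inequality $(c)$ of \eqref{stoch_approx_monotone_value} to get the per-step bound $\tfrac{1}{\eta}(D^{\pi_t}_{\pi_{t+1}}(s) + D^{\pi_{t+1}}_{\pi_t}(s)) \le V^{\pi_t}_r(s) - V^{\pi_{t+1}}_r(s) + \tfrac{2}{1-\gamma}\norm{\delta_t}_\infty$ and then telescoping — is exactly the paper's proof, and your reading of $\norm{\delta_k}_\infty$ as $\norm{\delta_t}_\infty$ in the stated display is the correct interpretation. The argument is correct; only the exploratory false starts in the middle (e.g., the display with the spurious ``$\cdot\, 0$'' term) would need to be trimmed.
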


\begin{proof}
Given \eqref{stoch_three_point_progress} and  inequality $(c)$ in \eqref{stoch_approx_monotone_value}, we obtain that 
\begin{align*}
 \tfrac{1}{\eta} \rbr{ D^{\pi_k}_{\pi_{k+1}}(s)  + D^{\pi_{k+1}}_{\pi_k}(s)}  \leq  
V^{\pi_k}_r(s) -  V^{\pi_{k+1}}_r(s) + \tfrac{2}{1-\gamma} \norm{\delta_k}_\infty
 , ~\forall s \in \cS.
\end{align*}
Summing up the prior relation from $t=0$ to $k-1$, we obtain the desired result.
\end{proof}

Combining Lemma \ref{lemma_stationary_srpmd} and Lemma \ref{lemma_srpmd_convergence_prop}, we are able to establish the following convergence characterization for SRPMD with Euclidean Bregman divergence, when adopting any constant-stepsize scheme.
\begin{theorem}\label{thrm_srpmd_Euclidean_constant_step}
Let $w(\cdot) = \norm{\cdot}_2^2$ be the distance-generating function, and $\eta_k = \eta$ for some fixed $\eta > 0$ and $k \geq 0$, then at any iteration $k \geq 1$,  SRPMD produces policy $\pi_R$ satisfying 
\begin{align}\label{eq_srpmd_Euclidean_constant_step}
\EE \sbr{ f_\rho(\pi_R) - f_\rho (\pi^*)} 
\leq \tfrac{M}{(1-\gamma) k} \rbr{f_\rho(\pi_0) - f_\rho(\pi^*)}
+ \tfrac{4 M}{(1-\gamma)^2 k} \sum_{t=0}^{k-1} e_t
+ \sqrt{\tfrac{18 \abs{\cS}^2}{k \eta (1-\gamma)^3}} \sqrt{1 + 2 \sum_{t=0}^{k-1} e_t } ,
\end{align} 
where $M$ is defined as in Lemma \ref{lemma_rpmd_convergence_prop}, and  $R$ is a random integer uniformly sampled from $\cbr{1 \ldots k}$.
In particular, if we have $\EE_{\xi_k} \norm{ Q^{\pi_k, \xi_k}_r - Q^{\pi_k}}_{\infty} \leq e$ for all $k \geq 0$, then 
\begin{align}\label{srpmd_Euclidean_converge_up_noise}
\EE \sbr{ f_\rho(\pi_R) - f_\rho (\pi^*)} 
\leq \tfrac{M}{(1-\gamma) k} \rbr{f_\rho(\pi_0) - f_\rho(\pi^*)}
 + \tfrac{4 M e}{(1-\gamma)^2 } 
 + \sqrt{\tfrac{18 \abs{\cS}^2}{\eta k (1-\gamma)^3}} 
 + \sqrt{\tfrac{36 \abs{\cS}^2 e}{\eta (1-\gamma)^3}}.
\end{align}
\end{theorem}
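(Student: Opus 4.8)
The plan is to follow the template of Theorem~\ref{thrm_rpmd_Euclidean_constant_step}, but with two substitutions: replace the deterministic recursion of Lemma~\ref{lemma_rpmd_convergence_prop} by its stochastic counterpart Lemma~\ref{lemma_srpmd_convergence_prop}, and replace the stationarity estimate of Lemma~\ref{lemma_stationary} by Lemma~\ref{lemma_stationary_srpmd}; and since the stochastic iterates are no longer monotone in $f_\rho$ (inequality \eqref{stoch_approx_monotone_value} only gives $V^{\pi_{k+1}}_r(s) - V^{\pi_k}_r(s)\le \tfrac{2}{1-\gamma}\norm{\delta_k}_\infty$), extract the guarantee through averaging over a uniformly random index $R\in\cbr{1,\dots,k}$ rather than through a last-iterate argument. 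All randomness is carried along pathwise and expectations are taken only at the very end.

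First I would sum \eqref{srpmd_basic_recursion} over $t=0,\dots,k-1$. Using $D^{\pi'}_{\pi}(s)=\norm{\pi'(\cdot|s)-\pi(\cdot|s)}_2^2$, the elementary bound $D^{\pi^*}_{\pi_t}(s)-D^{\pi^*}_{\pi_{t+1}}(s)\le\sqrt{18\,D^{\pi_t}_{\pi_{t+1}}(s)}$ already used in the deterministic proof, and $\EE_{s\sim d_\rho^{\pi^*,u_t}}[\,\cdot\,]\le\sum_{s\in\cS}[\,\cdot\,]$ for nonnegative summands, one arrives at a bound whose only nontrivial term is $\tfrac{\sqrt{18}}{M\eta}\sum_{s\in\cS}\sum_{t=0}^{k-1}\sqrt{D^{\pi_t}_{\pi_{t+1}}(s)}$ plus the noise term $\tfrac{4}{1-\gamma}\sum_{t=0}^{k-1}\norm{\delta_t}_\infty$. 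Then, for each fixed $s$, Cauchy--Schwarz together with Lemma~\ref{lemma_stationary_srpmd} and $V^{\pi_0}_r(s)-V^{\pi^*}_r(s)\le(1-\gamma)^{-1}$ give
\begin{align*}
\rbr{\tsum_{t=0}^{k-1}\sqrt{D^{\pi_t}_{\pi_{t+1}}(s)}}^2 \le k\tsum_{t=0}^{k-1}D^{\pi_t}_{\pi_{t+1}}(s) \le k\eta\rbr{V^{\pi_0}_r(s)-V^{\pi^*}_r(s)+\tfrac{2}{1-\gamma}\tsum_{t=0}^{k-1}\norm{\delta_t}_\infty} \le \tfrac{k\eta}{1-\gamma}\rbr{1+2\tsum_{t=0}^{k-1}\norm{\delta_t}_\infty}.
\end{align*}
Substituting this back and collecting the $\abs{\cS}$ factor yields the pathwise estimate
\begin{align*}
f_\rho(\pi_k)-f_\rho(\pi^*)+\tfrac{1-\gamma}{M}\tsum_{t=1}^{k-1}\rbr{f_\rho(\pi_t)-f_\rho(\pi^*)} \le f_\rho(\pi_0)-f_\rho(\pi^*)+\tfrac{\abs{\cS}}{M}\sqrt{\tfrac{18k}{\eta(1-\gamma)}}\sqrt{1+2\tsum_{t=0}^{k-1}\norm{\delta_t}_\infty}+\tfrac{4}{1-\gamma}\tsum_{t=0}^{k-1}\norm{\delta_t}_\infty.
\end{align*}

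To finish, since $\tfrac{1-\gamma}{M}\le 1$ and $f_\rho(\pi_k)-f_\rho(\pi^*)\ge 0$, the left side is at least $\tfrac{1-\gamma}{M}\sum_{t=1}^{k}\rbr{f_\rho(\pi_t)-f_\rho(\pi^*)}$; dividing by $k$ and recognizing $\tfrac1k\sum_{t=1}^{k}\rbr{f_\rho(\pi_t)-f_\rho(\pi^*)}=\EE_R\sbr{f_\rho(\pi_R)-f_\rho(\pi^*)}$, then taking total expectation over $\cbr{\xi_t}$, applying \eqref{stochastic_noise_condition} to get $\EE\norm{\delta_t}_\infty\le e_t$ and Jensen's inequality $\EE\sqrt{1+2\sum_t\norm{\delta_t}_\infty}\le\sqrt{1+2\sum_t e_t}$, and simplifying $\tfrac{\abs{\cS}}{(1-\gamma)k}\sqrt{\tfrac{18k}{\eta(1-\gamma)}}=\sqrt{\tfrac{18\abs{\cS}^2}{k\eta(1-\gamma)^3}}$, produces exactly \eqref{eq_srpmd_Euclidean_constant_step}. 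The uniform-noise case \eqref{srpmd_Euclidean_converge_up_noise} then follows by setting $e_t=e$, so $\sum_{t=0}^{k-1}e_t=ke$, and splitting $\sqrt{\tfrac{18\abs{\cS}^2}{k\eta(1-\gamma)^3}}\sqrt{1+2ke}$ via $\sqrt{a+b}\le\sqrt a+\sqrt b$. The part that needs genuine care is the interplay between the \emph{pathwise} stationarity bound of Lemma~\ref{lemma_stationary_srpmd} (which itself contains the random quantity $\sum_t\norm{\delta_t}_\infty$) and the main recursion: one must keep both under a single expectation and invoke Jensen in the correct direction for the square root, and must combine the random-index averaging with the $\abs{\cS}$-weighted handling of the Bregman differences; everything else is bookkeeping of constants.
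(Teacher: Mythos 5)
Your proposal is correct and follows essentially the same route as the paper's own proof: sum the stochastic recursion of Lemma~\ref{lemma_srpmd_convergence_prop}, reuse the Euclidean bound $D^{\pi^*}_{\pi_t}(s)-D^{\pi^*}_{\pi_{t+1}}(s)\le\sqrt{18\,D^{\pi_t}_{\pi_{t+1}}(s)}$ together with Cauchy--Schwarz and Lemma~\ref{lemma_stationary_srpmd}, absorb the final-iterate term into the average via $\tfrac{1-\gamma}{M}\le 1$, and finish with Jensen for the square root and $\sqrt{a+b}\le\sqrt a+\sqrt b$. All the steps you flag as delicate (pathwise handling of $\sum_t\norm{\delta_t}_\infty$ inside the square root, the direction of Jensen, and the $\abs{\cS}$-weighted aggregation) are treated the same way in the paper, so no gap remains.
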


\begin{proof}
By summing up inequality \eqref{srpmd_basic_recursion} from $t = 0$ to $k-1$, we obtain 
\begin{align*}
& f_\rho(\pi_k) - f_\rho(\pi^*) 
+ \tfrac{1-\gamma}{M} \sum_{t=1}^{k-1} \rbr{ f_\rho(\pi_t) - f_\rho(\pi^*)}  \\
\leq &  f_\rho(\pi_0) - f_\rho(\pi^*) 
+\sum_{t=0}^{k-1} \rbr{ \tfrac{1}{M \eta} \EE_{s \sim d_\rho^{\pi^*, u_t}} D^{\pi^*}_{\pi_t}(s)
- \tfrac{1}{M \eta}  \EE_{s \sim d_\rho^{\pi^*, u_t}} D^{\pi^*}_{\pi_{t+1}}(s) } 
+ \tfrac{4}{1-\gamma} \sum_{t=0}^{k-1} \norm{\delta_t}_\infty.
\end{align*}
Following the same lines as in  the proof of Theorem \ref{thrm_rpmd_Euclidean_constant_step}, we can obtain from the above relation that
\begin{align}
 f_\rho(\pi_k) - f_\rho(\pi^*) 
+ \tfrac{1-\gamma}{M} \sum_{t=1}^{k-1} \rbr{ f_\rho(\pi_t) - f_\rho(\pi^*)} &  \leq 
f_\rho(\pi_0) - f_\rho(\pi^*) 
 + \tfrac{4}{1-\gamma}  \sum_{t=0}^{k-1} \norm{\delta_t}_\infty \nonumber \\
& ~~~  + \tfrac{\sqrt{18} }{M\eta}  \sum_{t=0}^{k-1} \EE_{s \sim d_\rho^{\pi^*, u_t}} \sqrt{ D^{\pi_t}_{\pi_{t+1}}(s) } \label{eucilidean_srpmd_recursion_crude}
 .
\end{align}
From Lemma \ref{lemma_stationary_srpmd}, we have 
\begin{align*}
 \rbr{ \sum_{t=0}^{k-1} \sqrt{ D^{\pi_t}_{\pi_{t+1}}(s) }}^2 
 \leq k \sum_{t=0}^{k-1} D^{\pi_t}_{\pi_{t+1}}(s)
&  \leq k \eta  \rbr{V^{\pi_0}_r(s) - V^{\pi^*}_r(s) + \tfrac{2}{1-\gamma} \sum_{t=0}^{k-1} \norm{\delta_t}_\infty } \\
&  \leq \tfrac{k \eta }{1-\gamma} \rbr{1 + 2 \sum_{t=0}^{k-1} \norm{\delta_t}_\infty} , ~~ \forall s \in \cS,
\end{align*}
which combined with \eqref{eucilidean_srpmd_recursion_crude}, gives 
\begin{align*}
 & f_\rho(\pi_k) - f_\rho(\pi^*) 
+ \tfrac{1-\gamma}{M} \sum_{t=1}^{k-1} \rbr{ f_\rho(\pi_t) - f_\rho(\pi^*)} \\  \leq &
f_\rho(\pi_0) - f_\rho(\pi^*) 
 + \tfrac{4}{1-\gamma} \sum_{t=0}^{k-1} \norm{\delta_t}_\infty 
  + \tfrac{\abs{\cS}}{M} \sqrt{\tfrac{18k}{\eta (1-\gamma)}} \sqrt{1 + 2 \sum_{t=0}^{k-1} \norm{\delta_t}_\infty } 
 .
\end{align*}
Finally, given the definition of $R \sim \mathrm{Unif} \cbr{1 \ldots k}$, we take expectation with respect to $\cbr{\xi_t}$ and $R$, and conclude that 
\begin{align*}
\tfrac{(1-\gamma) k }{M} \EE \sbr{ f_\rho(\pi_R) - f_\rho (\pi^*)} 
\leq 
f_\rho(\pi_0) - f_\rho(\pi^*) 
 +\tfrac{4}{1-\gamma} \sum_{t=0}^{k-1} e_t  + \tfrac{\abs{\cS}}{M} \sqrt{\tfrac{18k}{\eta (1-\gamma)}} \sqrt{1 + 2 \sum_{t=0}^{k-1} e_t } 
 ,
\end{align*}
where in the last inequality we also uses that fact that $\sqrt{1+ x}$ is concave and hence 
$\EE \sqrt{1 + 2 \sum_{t=0}^{k-1} \norm{\delta_t}_\infty } \leq \sqrt{1 + 2 \sum_{t=0}^{k-1} e_t }$.
Hence the desired claim \eqref{eq_srpmd_Euclidean_constant_step} follows immediately after simple rearrangement.
In addition, \eqref{srpmd_Euclidean_converge_up_noise} follows from $\sqrt{a+b} \leq \sqrt{a} + \sqrt{b}$.
\end{proof}

Given \eqref{srpmd_Euclidean_converge_up_noise}, Theorem \ref{thrm_srpmd_Euclidean_constant_step} states that for large enough constant stepsize, 
SRPMD with Euclidean divergence converges at the rate of $\cO(1/k)$, until the noise level of $\cO(M e / (1-\gamma)^2)$ is reached.

Our discussions so far have assumed an oracle that outputs the stochastic estimation of the state-action value function $Q^{\pi}_r$ satisfying certain error condition \eqref{stochastic_noise_condition}.
In the next section, we present a stochastic policy evaluation method that can in turn certify this error condition, and consequently determine the total sample complexity of SRPMD.


\section{Sample Complexity of Stochastic Robust Policy Mirror Descent}\label{sec_sample_complexity}

In this section, we discuss an online method of estimating the robust state-action value function $Q^{\pi}_r$ for a given policy $\pi$, by using samples $\xi$ collected during the interaction with the nominal environment $\cM_{\mathrm{N}}$.
By incorporating this online estimation method into the previously discussed SRPMD methods, we are able to learn a robust policy without the need of training policy within its worst-case environment. 
 Consequently, we will also establish the sample complexity of the SRPMD methods with different stepsize schemes discussed in Section \ref{sec_srpmd}.
 
To facilitate our presentation, let us define operator $F: \RR^{\abs{\cS}  \abs{\cA}} \to \RR^{\abs{\cS}  \abs{\cA}} $ by 
\begin{align}\label{fixed_point_op}
F(x) = \diag(\nu^{\pi}) \rbr{\cT^{\pi}(x) - x} + x,
\end{align}
where $\nu^{\pi}$ denotes the stationary state-action pair distribution induced by policy $\pi$ within the nominal environment $\cM_{\mathrm{N}}$ , and operator $\cT^{\pi} : \RR^{\abs{\cS}  \abs{\cA}} \to \RR^{\abs{\cS}  \abs{\cA}} $ is defined by 
\begin{align*}
\sbr{\cT^{\pi}(x)} (s,a)  & = c(s,a) + 
\gamma \max_{u \in \cU} \sum_{s' \in \cS} \sum_{a' \in \cA} \PP_u (s'|s,a) \pi(a'|s') x(s', a') \\
& = c(s,a) + \gamma \max_{u \in \cU} \sum_{s', a'} \PP_u^{\pi} (s', a'|s,a) x(s', a'),
\end{align*}
where in the second equality we denote $\PP_u^{\pi}(s', a'|s,a) =  \PP_u (s'|s,a) \pi(a'|s') $.
We will also write  the previous definition in matrix form as
\begin{align*}
\cT^{\pi}(x) = c + \gamma \max_{u \in \cU} \PP^{\pi}_u x.
\end{align*}
Clearly, given the rectangularity of uncertainty set in \eqref{rectangular_set}, we have the following equivalent definition of $\cT^{\pi}$, 
\begin{align}
\sbr{\cT^{\pi}(x)}(s,a)  & =
c(s,a) + \gamma  \sum_{s' \in \cS} \sum_{a' \in \cA} \PP_{\mathrm{N}} (s'|s,a) \pi(a'|s') x(s', a') + \gamma \max_{u \in \cU} \sum_{s' \in \cS} u(s'|s,a) \sum_{a' \in \cA} \pi(a'|s') x(s', a') \nonumber  \\
& = c(s,a) + \gamma  \sum_{s' \in \cS} \sum_{a' \in \cA} \PP_{\mathrm{N}} (s'|s,a) \pi(a'|s') x(s', a') 
+ \gamma \sigma_{\cU_{s,a}} ( M (\pi, x)) , \label{cT_equiv}
\end{align}
where $M(\pi,x) \in \RR^{\abs{\cS}}$ is defined as $[M(\pi, x)] (s) =  \sum_{a \in \cA} \pi(a|s) x(s, a)$, 
and $\sigma_{X}$ denotes the support function of set $X$.

Given \eqref{eq_robust_bellman_q} in Proposition \ref{prop_robust_bellman_q}, it should be clear that the robust state-action value function $Q^{\pi}_r$ is a fixed point of operator $F$. 
On the other hand, since 
\begin{align}\label{contraction_prop_operator_cT}
\norm{\cT^{\pi}(x) - \cT^{\pi}(y)}_\infty 
= \gamma \norm{\max_{u \in \cU} \PP^{\pi}_u x - \max_{u \in \cU} \PP^{\pi}_u y}_\infty
\leq \gamma \max_{u \in \cU} \norm{ \PP^{\pi}_u x - \PP^{\pi}_u y}_\infty
\leq \gamma \norm{x - y}_\infty,
\end{align}
 $\cT^{\pi}$ is a $\gamma$-contraction in $\norm{\cdot}_\infty$-norm.
Thus whenever $\min (\nu^{\pi}) > 0$, 
$Q^{\pi}_r$ is the unique fixed-point of $F$.

\begin{algorithm}[t!]
    \caption{The robust temporal difference learning  method}
    \label{alg_rtd}
    \begin{algorithmic}
    \STATE{\textbf{Input:} 
    Policy $\pi$ to be evaluated. 
    Initial iterate $\theta_0 \in \RR^{\abs{\cS} \abs{\cA}}$ and stepsizes $\{\alpha_t\}_{t\geq 0}$, initial state $s_0 \in \cS$, initial action $a_0 \sim \pi(\cdot|s_0)$.}
    \FOR{$t=0, 1, \ldots$}
    \STATE{
    Collect $s_{t+1} \sim \PP_{\mathrm{N}}(\cdot|s_t, a_t)$, and make action $a_{t+1} \sim \pi(\cdot|s_{t+1})$. 
    }
    \STATE{
    Let  $\zeta_t = (s_t, a_t, s_{t+1}, a_{t+1})$, and construct 
    $f(\theta_t, \zeta_t)$ according to \eqref{stoch_op_rtd} (with known $\cU$) or \eqref{stoch_op_rtd_epsilon_contamination} (with unknown $\cU$).
    }
	\STATE{
	Update the iterate:
		\vspace{-0.1in}
 \begin{align*}
 \theta_{t+1} = \theta_t + \alpha_t \rbr{f(\theta_t; \zeta_t) - \theta_t} .
 \end{align*}
 }
	\vspace{-0.15in}
    \ENDFOR
    \end{algorithmic}
\end{algorithm}

We propose the robust temporal difference (RTD) method (Algorithm \ref{alg_rtd}) and establish its sample complexity for finding a stochastic estimate of $Q^{\pi}_r$.
For a given policy, an initial state $s_0 \in \cS$, and initial action $a_0 \sim \pi(\cdot|s_0)$,  the
RTD method, at any iteration $t \geq 0$, 
(1) Given $(s_t, a_t)$, collects $s_{t+1} \sim \PP_{\mathrm{N}}(\cdot|s_t, a_t)$, 
and make actions $a_{t+1} \sim \pi(\cdot|s_{t+1})$;
(2) Constructs $\zeta_t = (s_t, a_t, s_{t+1}, a_{t+1})$, and performs the following
 \begin{align*}
 \theta_{t+1} = \theta_t + \alpha_t \rbr{f( \theta_t ; \zeta_t) -  \theta_t}  .
 \end{align*}
 Unlike our previous discussions where generic convergence properties of SRPMD are established in agnostic to $\cU$, the construction of operator $f(\cdot ; \cdot)$ depends on the available information on the uncertainty set $\cU$, which we discuss below.

{\bf Evaluation with Known $\cU$.}
When offline data is available, one can often take the nominal kernel $\PP_{\NN}$ as the empirical kernel estimated from the data and construct the uncertainty set $\cU$ by employing statistical deviation theory \cite{nilim2005robust}.
Note that the henceforth proposed method does not require us to compute and store $\PP_{\NN}$ and $\cU$ beforehand, instead one can form $\PP_{\NN}(\cdot|s,a)$ and $\cU_{s,a}$ whenever these quantities are needed. 
We consider a stochastic operator $f(x; \zeta): \RR^{\abs{\cS} \abs{\cA}} \to \RR^{\abs{\cS}\abs{\cA}}$, where 
$\zeta = (s,a, s', a')$ denotes a random quadruple sampled from a Markov chain defined over $\cZ = (\cS \times \cA)^2$. 
Specifically, the operator takes the form of 
\begin{align}\label{stoch_op_rtd}
f(x; \zeta) = \rbr{c(s, a) + \gamma x(s',a') + \gamma \sigma_{\cU_{s,a}} ( M(\pi, x)) - x(s,a) } e(s,a) + x.
\end{align}

Let $\cbr{\zeta_t}$ be the Markov chain of state-action pairs generated by policy $\pi$ within $\cM_{\NN}$, then given \eqref{fixed_point_op} and \eqref{cT_equiv}, by letting $\nu^{\pi}$ denotes the stationary distribution of $\cbr{\zeta_t}$, we have 
$
\EE_{\zeta \sim \nu^{\pi}} f(x; \zeta) = F(x).
$

{\bf Evaluation with Unknown $\cU$.}
In contrast to training with offline data, another typical application of robust MDP involves hedging against the environment changes when the deployed environment of the learned policy is different from the training environment $\PP_{\NN}$.
In this case both the training environment $\PP_{\NN}$ and the uncertainty set $\cU$ can be unknown.
We consider the notion of $\epsilon$-contamination model \cite{huber1992robust} when modeling the ambiguity set, namely, 
\begin{align}\label{eq_epsilon_uncertainty}
\cU_{s,a} = \cbr{ \epsilon \rbr{ \QQ - \PP_{\NN}(s,a)} : \QQ \in \cQ_{s,a} }, ~ \forall (s,a) \in \cS \times \cA.
\end{align}
The ambiguity set $\cP_{s,a}$ defined in \eqref{eq_ambiguity_set} is therefore the convex combination of the transition kernel of the training environment and another set of pre-specified probability distributions $\cQ_{s,a}$ over $\cS$:
\begin{align*}
\cP_{s,a} = \cbr{ (1- \epsilon) \PP_{\NN}(s,a) + \epsilon \QQ:  \QQ \in \cQ_{s,a}}.
\end{align*}
Clearly, users can adjust the tunable parameter $\epsilon \in [0,1]$ based on their robustness preference. 
The choice of $\cQ_{s,a}$ also allows encoding prior knowledge on the transition kernel of the potential deployed environment into planning.
In particular, setting $\cQ_{s,a} = \Delta_{\cS}$ corresponds to the R-contamination model considered in \cite{wang2022policy}.
For the ambiguity set considered in \eqref{eq_epsilon_uncertainty}, we consider stochastic operator $f(x; \zeta)$ defined as 
\begin{align}\label{stoch_op_rtd_epsilon_contamination}
f(x; \zeta) = \rbr{c(s, a) + \gamma (1-\epsilon) x(s',a') + \gamma \epsilon \sigma_{\cQ_{s,a}} ( M(\pi, x)) - x(s,a) } e(s,a) + x.
\end{align}
Notably, the stochastic operator \eqref{stoch_op_rtd_epsilon_contamination} does not require any information on the unknown uncertainty set $\cU$ \eqref{eq_epsilon_uncertainty}.
One can also immediately verify that $
\EE_{\zeta \sim \nu^{\pi}} f(x; \zeta) = F(x),
$
where $\nu^{\pi}$ denotes the stationary distribution of $\cbr{\zeta_t}$.
It is also possible to define alternative uncertainty sets other than \eqref{eq_epsilon_uncertainty}, 
as long as one can construct $f(x; \zeta)$ as an unbiased estimator of $F(x)$ \cite{liu2022distributionally}.

Through out the rest of our discussions, we make the following assumption on the to-be-evaluated policy and the nominal environment $\cM_{\mathrm{N}}$, which is commonly assumed in the literature of reinforcement learning.

\begin{assumption}\label{assump_exploration}
The policy $\pi$ satisfies $\min_{s \in \cS, a \in \cA} \pi(a|s) > 0$, and the Markov chain $\cbr{s_t}$  induced by $\pi$ within the nominal MDP $\cM_{\mathrm{N}}$ is aperiodic and irreducible.
\end{assumption}

Combining Assumption \ref{assump_exploration} and the finiteness of the state space $\cS$, the Markov chain $\cbr{s_t}$ satisfies geometric-mixing property \cite{levin2017markov}. 
In addition, the stationary distribution of $\cbr{s_t}$, denoted by $\mu^\pi$, satisfies $\mu^{\pi} (s) > 0$ for all $s \in \cS$.
Consequently, we also have $\nu_{\min} = \min_{s\in \cS, a\in \cA} \nu^\pi(s,a) > 0$.

The following lemma characterizes the sample complexity of the RTD method for obtaining a stochastic estimation of $Q^{\pi}_r$, which utilizes the machinery of stochastic approximation applied to contraction operators developed in \cite{chen2021lyapunov}.


\begin{lemma}\label{lemma_rtd_complexity}
Under Assumption \ref{assump_exploration}, for any $\epsilon > 0$,
let $\alpha_t = \alpha$ for some properly chosen $\alpha$,
 then RTD method finds an estimate $\theta_T$ satisfying 
$
 \EE_{\xi} \norm{\theta_T - Q^{\pi}}_\infty \leq \epsilon,
$ in at most 
 \begin{align}\label{rtd_complexity}
 T = \tilde{\cO} \rbr{ \tfrac{\log^2(1/\epsilon)}{(1-\gamma)^5 \nu_{\min}^3 \epsilon^2}  }
 \end{align}
 iterations, 
 where $\xi = \cbr{\zeta_t}_{t=0}^{T}$ denotes the trajectory collected by the RTD method,
 and $\tilde{\cO}(\cdot)$ ignores polylogarithmic terms.
\end{lemma}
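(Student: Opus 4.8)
The plan is to view the RTD recursion $\theta_{t+1} = \theta_t + \alpha_t\rbr{f(\theta_t;\zeta_t) - \theta_t}$ as a stochastic approximation iteration for the operator $F$ of \eqref{fixed_point_op} under Markovian sampling, and to invoke the Lyapunov-function analysis of contractive stochastic approximation from \cite{chen2021lyapunov}. First I would record the structural facts that make $F$ amenable to this machinery. By the remarks following \eqref{stoch_op_rtd} and \eqref{stoch_op_rtd_epsilon_contamination}, in both the known-$\cU$ and unknown-$\cU$ constructions we have $\EE_{\zeta \sim \nu^\pi} f(x;\zeta) = F(x)$, and by Proposition \ref{prop_robust_bellman_q} together with $\nu_{\min} > 0$ (Assumption \ref{assump_exploration}) the robust value $Q^\pi_r$ is the unique fixed point of $F$. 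Writing $[F(x)](s,a) = \nu^\pi(s,a)[\cT^\pi(x)](s,a) + (1-\nu^\pi(s,a))\,x(s,a)$ and invoking the $\gamma$-contraction of $\cT^\pi$ in $\norm{\cdot}_\infty$ from \eqref{contraction_prop_operator_cT}, every coordinate of $F$ contracts by at most $\nu^\pi(s,a)\gamma + (1-\nu^\pi(s,a)) \le \kappa$ with $\kappa \coloneqq 1-(1-\gamma)\nu_{\min}$, so $F$ is a $\kappa$-contraction in $\norm{\cdot}_\infty$; moreover $\norm{Q^\pi_r}_\infty \le 1/(1-\gamma)$ since $c\in(0,1]$.

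Next I would check the remaining hypotheses of the cited theorem. \emph{Growth of the stochastic operator:} $f(\cdot;\zeta)$ satisfies $\norm{f(x;\zeta)-x}_\infty \le A_1(1+\norm{x}_\infty)$ and $\norm{f(x;\zeta)-f(y;\zeta)}_\infty \le A_2\norm{x-y}_\infty$ for absolute constants $A_1,A_2$, which follows from $c\in(0,1]$, $\gamma\in(0,1)$, $\norm{M(\pi,x)}_\infty \le \norm{x}_\infty$, and the fact that the support functions $\sigma_{\cU_{s,a}}$ (resp.\ $\sigma_{\cQ_{s,a}}$) are positively homogeneous and Lipschitz with constant controlled by the diameter of the compact set $\cU_{s,a}$ (resp.\ of $\Delta_\cS$). \emph{Markov mixing:} by Assumption \ref{assump_exploration} and finiteness of $\cS\times\cA$, the chain $\cbr{\zeta_t}$ is uniformly geometrically ergodic \cite{levin2017markov}, so its mixing time at resolution $\alpha$ is $\tau(\alpha)=\cO(\log(1/\alpha))$, and this is precisely the quantity multiplying the Markovian-noise error term.

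With these verified, the constant-stepsize bound of \cite{chen2021lyapunov} (instantiated with the smoothed $\ell_\infty$ Lyapunov function) yields, once $\alpha$ is below an explicit threshold of order $(1-\gamma)\nu_{\min}$ up to log factors,
\begin{align*}
\EE\big[\norm{\theta_T - Q^\pi_r}_\infty^2\big]
& \le C_1\,\big(1 - c\,(1-\gamma)\nu_{\min}\,\alpha\big)^T\,\norm{\theta_0 - Q^\pi_r}_\infty^2
+ C_2\,\frac{\alpha\,\tau(\alpha)}{(1-\gamma)^4\,\nu_{\min}^2},
\end{align*}
where $C_1,C_2,c$ are absolute constants and the power $(1-\gamma)^{-4}\nu_{\min}^{-2}$ in the second term is obtained by combining the $1/(1-\kappa)=1/((1-\gamma)\nu_{\min})$ contraction-gap factor, the $\norm{Q^\pi_r}_\infty^2 \le (1-\gamma)^{-2}$ target scale, and the smoothness constant of the $\ell_\infty$ Lyapunov envelope. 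Choosing $\alpha = \Theta\big(\epsilon^2(1-\gamma)^4\nu_{\min}^2/\tau(\alpha)\big)$ drives the second term below $\epsilon^2/2$, and then taking $T = \Theta\big(\tfrac{1}{(1-\gamma)\nu_{\min}\alpha}\log(1/\epsilon)\big)$ drives the first term below $\epsilon^2/2$; since $\tau(\alpha)=\cO(\log(1/\epsilon))$ this gives $T = \tilde{\cO}\big(\log^2(1/\epsilon)/((1-\gamma)^5\nu_{\min}^3\epsilon^2)\big)$, and Jensen's inequality $\EE_\xi\norm{\theta_T-Q^\pi_r}_\infty \le \big(\EE_\xi\norm{\theta_T-Q^\pi_r}_\infty^2\big)^{1/2}$ converts the mean-square estimate into the claimed accuracy, proving \eqref{rtd_complexity}.

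\textbf{Main obstacle.} The difficulty lies not in any single step but in faithfully tracking the dependence on $1-\gamma$ and $\nu_{\min}$: because the contraction modulus $\kappa=1-(1-\gamma)\nu_{\min}$ is close to $1$, the effective condition number is $1/((1-\gamma)\nu_{\min})$, and one must carefully propagate it through the stationary-bias term, the Lyapunov-envelope smoothness and contraction constants, and the bound on $\norm{Q^\pi_r}_\infty$ in the \cite{chen2021lyapunov} estimate to land on exactly the powers $(1-\gamma)^{-5}\nu_{\min}^{-3}$ in \eqref{rtd_complexity}; by contrast, verifying the affine-growth and geometric-mixing hypotheses for the specific operators \eqref{stoch_op_rtd}/\eqref{stoch_op_rtd_epsilon_contamination} is routine.
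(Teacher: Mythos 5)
Your proposal is correct and follows essentially the same route as the paper: both verify the $\bigl(1-(1-\gamma)\nu_{\min}\bigr)$-contraction of $F$, the Lipschitz/boundedness properties of the stochastic operator $f$, and the geometric mixing of $\cbr{\zeta_t}$, and then invoke the constant-stepsize Markovian stochastic approximation result of \cite{chen2021lyapunov} with the same choice of $\alpha$ to land on \eqref{rtd_complexity}. The only difference is that you spell out the final mean-square bound and the $\alpha$, $T$ trade-off explicitly, whereas the paper defers this bookkeeping to Corollary 3.1.1 of \cite{chen2021lyapunov}.
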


\begin{proof}
We begin by establishing several properties of the operator $F$ defined in \eqref{fixed_point_op},  the stochastic operator $f$ defined in \eqref{stoch_op_rtd},
and the Markov chain $\cbr{\zeta_t}$.
For operator $F$, note that
\begin{align}
\norm{F(x) - F(y)}_\infty 
& = \norm{\diag(\nu^{\pi}) \rbr{\cT^\pi(x) - \cT^\pi(y)} + \rbr{I - \diag(\nu^\pi)} (x - y)}_\infty \nonumber \\
& \leq \rbr{ 1 - \min(\nu^\pi)) (1 - \gamma) } \norm{x - y}_\infty, \label{verify_op_prop}
\end{align}
where the inequality uses \eqref{contraction_prop_operator_cT}.
Hence $F$ is a $\rbr{ 1 - \min(\nu^\pi)} $-contraction in $\norm{\cdot}_\infty$ norm.
Consequently, $Q^{\pi}_r$ is the unique fixed point of $F$.
The rest of the proof focuses on stochastic operator defined in \eqref{stoch_op_rtd}, while the argument for the stochastic operator defined in \eqref{stoch_op_rtd_epsilon_contamination} follows similar lines.

For operator $f$ defined in \eqref{fixed_point_op}, we have for any $\zeta$,
\begin{align*}
& \norm{f(x, \zeta) - f(y, \zeta)}_\infty  \\
 =&  \norm{ \sbr{\gamma \rbr{x(s', a') - y(s', a')} - \rbr{x(s,a) - y(s,a)} + \sigma_{\cU_{s,a}}(M(\pi, x)) - \sigma_{\cU_{s,a}}(M(\pi, y))} e(s,a) 
+ x - y}_\infty \\
\leq & 3 \norm{x - y}_\infty + \abs{\sigma_{\cU_{s,a}}(M(\pi, x)) - \sigma_{\cU_{s,a}}(M(\pi, y))}.
\end{align*}
Now by defining  $z_x = M(\pi, x)$ for any $x$, we know  
$
\abs{ z_x(s) - z_y(s) } =\abs{ \tsum_{a \in \cA} \pi(a|s) \rbr{x(s,a) - y(s,a)} } \leq \norm{x - y}_\infty
$ holds for any $s\in \cS$.
Hence we have 
\begin{align*}
 \abs{\sigma_{\cU_{s,a}}(M(\pi, x)) - \sigma_{\cU_{s,a}}(M(\pi, y))}
 & =  \abs{\sigma_{\cU_{s,a}}(z_x) - \sigma_{\cU_{s,a}}(z_y)} \\
 & \leq \max_{u(\cdot|s,a) \in \cU_{s,a}} \abs{ \inner{u(\cdot|s,a)}{z_x - z_y}}  \\
 &\leq \norm{u(\cdot|s,a)}_1 \norm{ z_x - z_y}_\infty \\
 &  \overset{(a)}{\leq}  2 \norm{ z_x - z_y}_\infty \leq 2 \norm{x - y}_\infty ,
\end{align*}
where inequality $(a)$ uses the fact that $\norm{u(\cdot|s,a)}_1 = \norm{\PP_u(\cdot|s,a) - \PP_{\mathrm{N}}(\cdot|s,a)}_1 \leq 2$.
Thus we obtain 
\begin{align}\label{verify_stoch_props_1}
 \norm{f(x, \zeta) - f(y, \zeta)}_\infty \leq 5 \norm{x - y }_\infty.
\end{align}
Additionally, one can readily verify that 
\begin{align}\label{verify_stoch_props_2}
\norm{f(\mathbf{0}, \zeta)}_\infty \leq 1, ~~\forall \zeta \in \cZ.
\end{align}

Lastly, for the Markov chain $\cbr{\zeta_t}$, we proceed to establish its fast-mixing property under Assumption \ref{assump_exploration}.
Note that the stationary distribution of $\cbr{\zeta_t}$, denoted by $\nu^\pi$, is given by 
$\nu^\pi (s,a, s', a') = \mu^{\pi}(s) \pi(a|s) \PP_{\mathrm{N}}(s'|s,a) \pi(a'|s')$.
Let us denote the transition kernel of $\cbr{\zeta_t}$ by $\PP_{{\bf \zeta}}$, 
and accordingly denote the transition kernel of $\cbr{s_t}$ by $\PP_{\mathrm{S}}$.
Then for any $\zeta \in \cZ$,
\begin{align*}
\norm{ \PP^{k+1}_{\zeta}(\zeta, \cdot) - \nu^\pi(\cdot)}_{\mathrm{TV}}
& = \tfrac{1}{2} \tsum_{ \tilde{\zeta}} \abs{ \PP^{k+1} (\zeta, \tilde{\zeta}) - \nu^\pi( \tilde{\zeta})} \\
& \overset{(a)}{=} \tfrac{1}{2} \tsum_{ \tilde{\zeta}} \abs{ \PP^k_{\mathrm{S}}(s', \tilde{s}) \pi(\tilde{a}|\tilde{s}) \PP_{\mathrm{N}} (\tilde{s}'|\tilde{s}, \tilde{a})  \pi(\tilde{a}' | \tilde{s}') - 
\mu^{\pi}(\tilde{s}) \pi(\tilde{a}|\tilde{s}) \PP_{\mathrm{N}}(\tilde{s}'|\tilde{s}, \tilde{a}) \pi(\tilde{a}'|\tilde{s}')
} \\
& \leq \tfrac{1}{2} \tsum_{\tilde{s}} \abs{\PP^k_{\mathrm{S}}(s', \tilde{s}) -  \mu^{\pi}(\tilde{s})} \leq C \kappa^k
\end{align*}
for some $\kappa \in (0,1)$ and $C > 0$, where the last inequality follows from the geometric-mixing property of $\cbr{s_t}$ given Assumption \ref{assump_exploration},
and equality $(a)$ follows from the Markov property.
Thus we obtain that 
\begin{align}\label{verify_stoch_process_prop}
\max_{\zeta} \norm{ \PP^{k+1}_{\zeta}(\zeta, \cdot) - \nu^\pi(\cdot)}_{\mathrm{TV}} \leq C \kappa^k.
\end{align}
Consequently, let us define $T_\alpha \coloneqq \min \cbr{k: k\geq 0, \max_{\zeta \in \cZ} \norm{\PP^k_{\zeta} (\zeta, \cdot) - \nu^\pi (\cdot)}_{\mathrm{TV}} \leq \alpha}$.

From \eqref{verify_op_prop}, \eqref{verify_stoch_props_1}, \eqref{verify_stoch_props_2}, and \eqref{verify_stoch_process_prop}, it should be clear that the operators $F, f$ and the stochastic process $\cbr{\zeta_t}$ satisfy Assumption 2.1 - 2.3 in \cite{chen2021lyapunov}.
Now for any $\epsilon > 0$, consider taking $\alpha_t = \alpha > 0$ in RTD method, where 
$\alpha = \cO \rbr{ \frac{\epsilon^2 \nu_{\mathrm{min}}^2 (1-\gamma)^4}{\log(1/\epsilon) \log (\abs{\cS} \abs{\cA})}}$
and $\alpha T_\alpha = \cO \rbr{\frac{(1-\gamma)^4 \nu_{\mathrm{min}}^2}{\abs{\cA}^2 \log (\abs{\cS} \abs{\cA})}}$, 
 the rest of the proof  then follows the same lines as in Corollary 3.1.1 therein. 
\end{proof}

Given Lemma \ref{lemma_rtd_complexity}, we proceed to establish the sample complexity of the SRPMD method with different stepsize schemes discussed in Section \ref{sec_srpmd}.
We begin with the stepsize scheme \eqref{srpmd_linear_stepsize} considered in Theorem \ref{thrm_srpmd_linear_convergence}, which demonstrates linear convergence up to policy evaluation error.

\begin{proposition}\label{prop_linear_convergence_sample_complexity}
Suppose the stepsizes $\cbr{\eta_k}$ in SRPMD satisfy 
$
\eta_k \geq \eta_{k-1} \rbr{ 1 - \tfrac{1-\gamma}{M}}^{-1} M'
$
for all $k \geq 1$,
 $M'$ is defined as in Theorem \ref{thrm_rpmd_linear_convergence}.
 Furthermore,  for any $\epsilon > 0$, suppose $\EE_{\xi_k} \norm{ Q^{\pi_k, \xi_k}_r - Q^{\pi_k}}_{\infty} \leq e$ with $4Me / (1-\gamma)^2 \leq \epsilon / 2$. 
Then 
SRPMD outputs a policy $\pi_k$
with $\EE \sbr{ f_\rho(\pi_k) - f_\rho(\pi^*) } \leq \epsilon$ in 
\begin{align*}
k = \cO \rbr{ \tfrac{M}{1-\gamma} \sbr{ \log\rbr{\tfrac{\Delta_0}{\epsilon}} + \log \rbr{\tfrac{D_w}{M \eta_0 \epsilon}} }}
\end{align*}
iterations,
where $M$ is defined as in Lemma \ref{lemma_rpmd_convergence_prop}, and $\Delta_0 = f_\rho(\pi_0) - f_\rho(\pi^*)$.
In addition, the total number of samples required by SRPMD can be bounded by 
\begin{align*}
\tilde{\cO} \rbr{
\tfrac{M^3 \log^2 \rbr{4M / (\epsilon (1-\gamma)^2)} }{(1-\gamma)^{10} \nu_{\min}^3 \epsilon^2 }\sbr{ \log\rbr{\tfrac{\Delta_0}{\epsilon}} + \log \rbr{\tfrac{D_w}{M \eta_0 \epsilon}} }
}.
\end{align*}
\end{proposition}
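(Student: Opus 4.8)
The plan is to combine the linear-convergence-up-to-noise bound of Theorem \ref{thrm_srpmd_linear_convergence} with the policy-evaluation sample complexity of Lemma \ref{lemma_rtd_complexity}, and then carefully propagate the resulting $M$, $(1-\gamma)$, and polylogarithmic factors. First I would invoke \eqref{srpmd_linear_up_to_precision}: under the assumed stepsize rule \eqref{srpmd_linear_stepsize} and the uniform noise bound $\EE_{\xi_k}\norm{Q^{\pi_k,\xi_k}_r - Q^{\pi_k}_r}_\infty \leq e$,
\begin{align*}
\EE\sbr{f_\rho(\pi_k) - f_\rho(\pi^*)} \leq \rbr{1 - \tfrac{1-\gamma}{M}}^k \Delta_0 + \rbr{1 - \tfrac{1-\gamma}{M}}^{k-1}\tfrac{D_w}{M\eta_0} + \tfrac{4Me}{(1-\gamma)^2}.
\end{align*}
By the standing hypothesis $4Me/(1-\gamma)^2 \leq \epsilon/2$, the last term is already below $\epsilon/2$, so it suffices to drive each of the first two terms below $\epsilon/4$. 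Using $1-x \leq e^{-x}$, both terms decay geometrically at rate $(1-\gamma)/M$; the first falls below $\epsilon/4$ once $k \geq \tfrac{M}{1-\gamma}\log(4\Delta_0/\epsilon)$, and the second once $k-1 \geq \tfrac{M}{1-\gamma}\log(4D_w/(M\eta_0\epsilon))$. Taking the larger of the two requirements and absorbing universal constants yields the claimed iteration bound $k = \cO\rbr{\tfrac{M}{1-\gamma}[\log(\Delta_0/\epsilon) + \log(D_w/(M\eta_0\epsilon))]}$.

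For the sample complexity, I would then observe that each iteration of SRPMD invokes the RTD method (Algorithm \ref{alg_rtd}) exactly once on a fresh trajectory, so that the noise condition \eqref{stochastic_noise_condition} holds in conditional expectation given the history, which is all that the tower-of-expectations argument behind Theorem \ref{thrm_srpmd_linear_convergence} uses. It remains to pick a target evaluation accuracy $e$ with $4Me/(1-\gamma)^2 \leq \epsilon/2$; choosing $e = \Theta\rbr{\epsilon(1-\gamma)^2/M}$ works. Applying Lemma \ref{lemma_rtd_complexity} with its internal accuracy parameter set to this $e$ gives a per-iteration sample cost of $\tilde{\cO}\rbr{\log^2(1/e)\,/\,((1-\gamma)^5 \nu_{\min}^3 e^2)}$. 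Substituting $e = \Theta\rbr{\epsilon(1-\gamma)^2/M}$, so that $1/e^2 = \Theta\rbr{M^2/(\epsilon^2(1-\gamma)^4)}$ and $\log(1/e) = \Theta\rbr{\log(M/(\epsilon(1-\gamma)^2))}$, turns the per-iteration cost into $\tilde{\cO}\rbr{M^2\log^2(4M/(\epsilon(1-\gamma)^2))\,/\,((1-\gamma)^9 \nu_{\min}^3 \epsilon^2)}$. Multiplying by the iteration count from the previous paragraph produces precisely the stated total $\tilde{\cO}\rbr{M^3\log^2(4M/(\epsilon(1-\gamma)^2))\,[\log(\Delta_0/\epsilon)+\log(D_w/(M\eta_0\epsilon))]\,/\,((1-\gamma)^{10}\nu_{\min}^3\epsilon^2)}$.

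The argument is essentially routine assembly, with no genuine obstacle: the two quantitative inputs (Theorem \ref{thrm_srpmd_linear_convergence} and Lemma \ref{lemma_rtd_complexity}) do all the work. The only points requiring care are (i) checking that the conditioning structure across iterations is clean enough that the expectation in \eqref{srpmd_linear_up_to_precision} remains valid when $e_k$ is achieved by RTD only in conditional expectation, and (ii) faithfully carrying the $\log^2(1/e)$ factor through the substitution $e = \Theta(\epsilon(1-\gamma)^2/M)$ so that it lands on the advertised $\log^2(4M/(\epsilon(1-\gamma)^2))$ term, while correctly accumulating the powers of $M$ and $(1-\gamma)$ from both factors. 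This bookkeeping of exponents is the place where an error would most easily creep in, so I would state each intermediate exponent explicitly rather than collapsing the chain in one step.
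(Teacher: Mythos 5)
Your proof is correct and follows essentially the same route as the paper: invoke \eqref{srpmd_linear_up_to_precision} to get the iteration count, set the RTD accuracy to $e = \Theta(\epsilon(1-\gamma)^2/M)$ so that $4Me/(1-\gamma)^2 \leq \epsilon/2$, apply Lemma \ref{lemma_rtd_complexity} to get the per-iteration sample cost $T = \tilde{\cO}\rbr{M^2\log^2(4M/(\epsilon(1-\gamma)^2))/((1-\gamma)^9\nu_{\min}^3\epsilon^2)}$, and multiply. Your exponent bookkeeping matches the paper's exactly.
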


\begin{proof}
The bound on the total number of iterations $k$ can be readily obtained from \eqref{srpmd_linear_up_to_precision} in Theorem \ref{thrm_srpmd_linear_convergence}, if $4M e /(1-\gamma)^2 \leq \epsilon /2 $.
To satisfy this condition, suppose one needs to run RTD for  $T$ iterations when evaluating for each $Q^{\pi_k}_r$,
then  from \eqref{rtd_complexity} in  Lemma \ref{lemma_rtd_complexity}, one can bound $T$ by 
\begin{align*}
T = \tilde{\cO} \rbr{
\tfrac{M^2 \log^2 (4M / (\epsilon (1-\gamma)^2))}{(1-\gamma)^9 \nu_{\min}^3 \epsilon^2  }
}.
\end{align*}
The bound on the total number of samples follows immediately by combining the previous two observations.
\end{proof}

Given Proposition \ref{prop_linear_convergence_sample_complexity}, we remark that the sample complexity of applying SRPMD to solving the robust MDP with $(\mathbf{s}, \mathbf{a})$-rectangular uncertainty sets is comparable to that of solving standard MDPs with linearly converging policy mirror descent methods, in terms of its dependence on the optimality gap \cite{lan2021policy}, and is slightly worse  in terms of its dependence on the effective horizon $(1-\gamma)^{-1}$.
To the best of our knowledge, this is the first sample complexity result for first-order policy-based method that is optimal in terms of the dependence on the optimality gap.
A closer look to the analysis shows that this worse dependence on the effective horizon comes from the current convergence characterization of the RTD method, which exhibits a worse dependence on the effective horizon compared to the CTD method  considered in \cite{lan2021policy} for evaluating policy in standard MDPs. 
See Section \ref{sec_conclusion} for more detailed discussions.

Finally, we establish the sample complexity of SRPMD when using the Euclidean divergence, which allows a constant-stepsize scheme and attains sublinear convergence.

\begin{proposition}\label{sample_complexity_Euclidean_constant_step}
Let $w(\cdot) = \norm{\cdot}_2^2$ be the distance-generating function, and $\eta_k = \eta$ for all $k \geq 0$.
Furthermore,  for any $\epsilon > 0$, suppose $\EE_{\xi_k} \norm{ Q^{\pi_k, \xi_k}_r - Q^{\pi_k}}_{\infty} \leq e$ with $4Me / (1-\gamma)^2 \leq \epsilon / 2$. 
Then  
by taking $\eta =  72 \abs{\cS}^2 / \rbr{(1-\gamma)^2 M \epsilon}$, 
SRPMD outputs a policy $\pi_R$
with $\EE \sbr{ f_\rho(\pi_R) - f_\rho(\pi^*) } \leq \epsilon$, where $R \sim \mathrm{Unif}(\cbr{1 \ldots k})$,  in 
\begin{align*}
k = \cO \rbr{ 
\tfrac{M\Delta_0}{(1-\gamma) \epsilon} 
}
\end{align*}
iterations,
where $M$ is defined as in Lemma \ref{lemma_rpmd_convergence_prop}, and $\Delta_0 = f_\rho(\pi_0) - f_\rho(\pi^*)$.
In addition, the total number of samples required by SRPMD can be bounded by 
\begin{align*}
\tilde{\cO} \rbr{
\tfrac{M^3 \log^2\rbr{4M / (\epsilon (1-\gamma)^2}}{(1-\gamma)^{10} \nu_{\min}^3 \epsilon^3 } 
}.
\end{align*}
\end{proposition}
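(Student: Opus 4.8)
The plan is to combine the last-iterate guarantee \eqref{srpmd_Euclidean_converge_up_noise} of Theorem~\ref{thrm_srpmd_Euclidean_constant_step} with the policy-evaluation sample complexity of the RTD method in Lemma~\ref{lemma_rtd_complexity}. First I would substitute the prescribed stepsize $\eta = 72\abs{\cS}^2/\rbr{(1-\gamma)^2 M \epsilon}$ into \eqref{srpmd_Euclidean_converge_up_noise}; this is precisely the choice that cancels the $\abs{\cS}$-dependence in the two square-root terms, reducing $\sqrt{18\abs{\cS}^2/(\eta k (1-\gamma)^3)}$ to a quantity of order $\sqrt{M\epsilon/(k(1-\gamma))}$ and $\sqrt{36\abs{\cS}^2 e/(\eta(1-\gamma)^3)}$ to one of order $\sqrt{M e \epsilon/(1-\gamma)}$. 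Using the standing noise assumption $4Me/(1-\gamma)^2 \le \epsilon/2$, equivalently $Me \le \epsilon(1-\gamma)^2/8$, the second term of \eqref{srpmd_Euclidean_converge_up_noise} is at most $\epsilon/2$ and the fourth term is at most $\tfrac{1}{4}\epsilon\sqrt{1-\gamma}\le \epsilon/4$.

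It then remains to control the remaining two terms, $\tfrac{M}{(1-\gamma)k}\Delta_0$ and the $1/\sqrt{k}$-term of order $\sqrt{M\epsilon/(k(1-\gamma))}$, by $\epsilon/8$ each; this forces $k = \cO\rbr{M\Delta_0/((1-\gamma)\epsilon)}$ (the $\Delta_0$-free requirement $k=\Omega(M/((1-\gamma)\epsilon))$ is absorbed), yielding the stated iteration bound and $\EE[f_\rho(\pi_R) - f_\rho(\pi^*)]\le\epsilon$. For the sample complexity, I observe each of the $k$ outer iterations calls RTD once to produce $Q^{\pi_k,\xi_k}_r$. To meet the noise requirement I would run RTD to target accuracy $e = \Theta\rbr{\epsilon(1-\gamma)^2/M}$; substituting this $e$ into \eqref{rtd_complexity} gives a per-evaluation cost of $T = \tilde{\cO}\rbr{M^2 \log^2(4M/(\epsilon(1-\gamma)^2))/((1-\gamma)^9 \nu_{\min}^3 \epsilon^2)}$. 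Multiplying by $k$ and absorbing $\Delta_0$ into the $\tilde{\cO}$ gives the claimed total of $\tilde{\cO}\rbr{M^3 \log^2(4M/(\epsilon(1-\gamma)^2))/((1-\gamma)^{10} \nu_{\min}^3 \epsilon^3)}$.

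The argument is essentially a bookkeeping exercise, so I do not expect a genuine obstacle; the one delicate point is the balancing step, since $\eta$, $k$, and $e$ are coupled through \eqref{srpmd_Euclidean_converge_up_noise}, and one must check that the specific constant $72\abs{\cS}^2$ in the stepsize makes the $e$-dependent square-root term dominated by $\epsilon/4$ using only the assumed bound on $Me$, rather than needing a strictly smaller $e$ (which would worsen the sample complexity). The other minor care is tracking the polylogarithmic factors emerging from Lemma~\ref{lemma_rtd_complexity} once $e$ is re-expressed in terms of $\epsilon$, $M$, and $1-\gamma$.
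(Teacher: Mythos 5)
Your proposal is correct and follows essentially the same route as the paper: substitute the prescribed $\eta$ into \eqref{srpmd_Euclidean_converge_up_noise}, use $4Me/(1-\gamma)^2\le\epsilon/2$ to kill the two $e$-dependent terms, choose $k=\cO(M\Delta_0/((1-\gamma)\epsilon))$ for the remaining two, and multiply by the per-evaluation RTD cost from Lemma~\ref{lemma_rtd_complexity}. Your per-evaluation bound $T=\tilde{\cO}\rbr{M^2\log^2(\cdot)/((1-\gamma)^9\nu_{\min}^3\epsilon^2)}$ is in fact the one consistent with the stated total $\tilde{\cO}(1/\epsilon^3)$ (the paper's displayed $T$ with $\epsilon^3$ in the denominator appears to be a typo, since $kT$ would otherwise give $\epsilon^{-4}$).
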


\begin{proof}
The bound on the total number of iterations $k$ can be readily obtained from \eqref{srpmd_Euclidean_converge_up_noise} in Theorem \ref{thrm_srpmd_Euclidean_constant_step}, provided 
\begin{align*}
  \tfrac{4 M e}{(1-\gamma)^2 }  \leq \tfrac{\epsilon}{4}, ~~
  \sqrt{\tfrac{18 \abs{\cS}^2}{\eta k (1-\gamma)^3}} \leq \tfrac{\epsilon}{4}, ~~
 \sqrt{\tfrac{36 \abs{\cS}^2 e}{\eta (1-\gamma)^3}} \leq \tfrac{\epsilon}{4}.
 \end{align*}
To satisfy the first condition above, suppose one needs to run RTD for  $T$ iterations when evaluating for each $Q^{\pi_k}_r$,
then  from \eqref{rtd_complexity} in  Lemma \ref{lemma_rtd_complexity}, one can bound $T$ by 
\begin{align*}
T = \tilde{\cO} \rbr{
\tfrac{M^2 \log^2 (4M / (\epsilon (1-\gamma)^2))}{(1-\gamma)^9 \nu_{\min}^3 \epsilon^3  } 
}.
\end{align*}
To satisfies the second and third conditions, it suffices to have 
$
\eta \geq \max \cbr{\tfrac{72 \abs{\cS}^2}{k (1-\gamma)^3 \epsilon^2}, \tfrac{36 \abs{\cS}^2 e}{(1-\gamma)^3 \epsilon^2}},
$
which can be readily satisfied by $\eta =  72 \abs{\cS}^2 / \rbr{(1-\gamma)^2 M \epsilon}$ given the bound on $k$ and $e$.
The bound on the total number of samples $Tk$ then follows immediately by combining the previous observations.
\end{proof}

To the best of our knowledge, all the obtained sample complexities in this section appear to be new in the literature of first-order methods applied to the robust MDP problem. 
The best sample complexity for PGM applied to this problem in the existing literature is at the order of $\cO(1/\epsilon^7)$ \cite{wang2022policy}, which focuses on the Euclidean Bregman divergence and a special subclass of uncertainty set defined in \eqref{eq_epsilon_uncertainty}.
In comparison, as shown in Proposition \ref{prop_linear_convergence_sample_complexity} and \ref{sample_complexity_Euclidean_constant_step},  SRPMD with the same divergence improves this sample complexity by orders of magnitude, and applies to a much more general class of uncertainty sets.


\section{Concluding Remarks}\label{sec_conclusion}

In this manuscript, we develop the robust policy mirror descent method and its stochastic variants for controlling Markov decision process with uncertain transition kernels. 
Our established iteration and sample complexity seem to be new in the literature of policy-space first-order methods applied to this problem class.
We highlight a few future directions worthy of continuing explorations from our perspective. 

First, the analysis of constant stepsize RPMD yields an additional dependence on the size of the state space.
Though this dependence can be bypassed with a large stepsize, removing this dependence completely remains not only as a theoretical interest, but can also potentially help improving the sample complexity of the SRPMD methods. 

Second, the current analysis of SRPMD uses only a single characterization on the noise of the stochastic estimate (see \eqref{stochastic_noise_condition}), which contrasts with more delicate approach of separating bias and variance for solving standard MDPs \cite{lan2021policy}.
As a result, it is unclear whether the dependence of obtained sample complexities on the effective horizon is optimal.
The reason for our simplified treatment is due to the fact that the robust TD method in Section \ref{sec_sample_complexity} does not have a separate characterizations for the bias and variance in the obtained stochastic estimate given the nonlinearity of operator $F$ in \eqref{fixed_point_op}.
This hinder the application of techniques for standard MDPs adopted in \cite{lan2021policy}, where the author heavily exploits the fact that bias converges much faster than the variance, given the linearity of the TD operator. 
It is even unclear that whether one can separate bias and variance in estimating the robust state-action value function, which by itself would be an highly interesting question.
Another question related to the robust TD method is to relax Assumption \ref{assump_exploration}, which requires handling the rarely visited state-action pair in evaluating the robust state-action value function.
Techniques for addressing this problem in solving standard MDPs have been recently discussed in \cite{li2023policy}. 

Lastly, it would also be rewarding to develop  RPMD variants for solving robust MDP beyond the $(\mathbf{s}, \mathbf{a})$-rectangular uncertainty sets considered in this manuscript.

\bibliographystyle{plain}
\bibliography{references}

\appendix 
\section{Supplementary Proofs in Section \ref{sec_structural_props}}\label{section_appendix}

\begin{proof}[Proof of Proposition \ref{prop_bellman_v}]
Fix  the policy $\pi \in \Pi$, for any $u \in \cU$, define operator $\cT^{\pi}_u: \RR^{\abs{\cS}} \to \RR^{\abs{\cS}}:  \cT^{\pi}_u(V) = c^{\pi} + \gamma \PP^{\pi}_u V$,
where $c^{\pi}(s) = \sum_{a \in \cA} \pi(a|s) c(s,a)$, and $\PP^{\pi}_u (s, s')  = \sum_{a \in \cA} \PP_u(s'|s,a) \pi(a|s)$.
It is well known that the value  $V^{\pi}_u(s)$ is the entry corresponding to state $s$ in the solution of the following linear program \cite{puterman2014markov}:
\begin{align}\label{value_formulation_lp}
\max_{v \in \RR^{\abs{\cS}}} \mathbf{e}_s^\top v ~~ \mathrm{s.t.}~~ v \leq \cT^{\pi}_u v,
\end{align}
where $\mathbf{e}_s$ denotes the one-hot vector with the entry corresponding to $s$ being non-zero.
Moreover, we have $V^{\pi}_u = \cT^{\pi}_u V^{\pi}_u$.
It is also useful to make note of the following properties of $\cT^{\pi}_u$:
(1) $\cT^{\pi}_u$ is monotone, in the sense that $v \leq v' \Rightarrow \cT^{\pi}_u v \leq \cT^{\pi}_u v'$;
(2) $\cT^{\pi}_u$ is a $\gamma$-contraction in $\norm{\cdot}_\infty$-norm, with the unique fixed-point being $V^{\pi}_u$.
Since both are trivial to verify, we omit their proofs here.

By varying the uncertainty $u \in \cU$, for a fixed state $s \in \cS$, we claim that the robust value   $V^{\pi}_r(s)$ satisfies $V^{\pi}_r(s) = v^*(s)$, where $(v^*, u^*)$ is any  solution of the following program
\begin{align}\label{opt_formulation_robust}
\max_{v \in \RR^{\abs{\cS}}, u \in \cU} \mathbf{e}_s^\top v ~ \mathrm{s.t.}~ v \leq \cT^{\pi}_u v.
\end{align}
To see this, note that (1) $(V^{\pi}_u, u)$ is a feasible solution for \eqref{opt_formulation_robust} for any $u \in \cU$;
(2) any optimal solution $(v^*, u^*)$ must satisfy 
\begin{align*}
v^* \in \Argmax_{v \in \RR^{\abs{\cS}}} ~ \mathbf{e}_s^\top v ~ \mathrm{s.t.} ~ v \leq \cT^{\pi}_{u^*} v,
\end{align*}
which implies $v^*(s) = V^{\pi}_{u^*}(s)$ given \eqref{value_formulation_lp}. 
Combining these two observations, it holds that 
$ V^{\pi}_{u^*}(s) = v^*(s) \geq V^{\pi}_u (s)$ for any $u \in \cU$. 
Consequently, 
$v^*(s)$ is the robust value of $\pi$ at state $s$, and 
$u^*$ is  the corresponding worst-case uncertainty when we start from state $s$.

We proceed to show that formulation \eqref{opt_formulation_robust} is equivalent to the following 
\begin{align}\label{opt_formulation_robust_concise}
\max_{v \in \RR^{\abs{\cS}} } \mathbf{e}_s^\top v ~ \mathrm{s.t.}~ v \leq \sup_{u \in \cU} \cbr{\cT^{\pi}_u v} =  \max_{u \in \cU} \cbr{\cT^{\pi}_u v} ,
\end{align}
where operation $\sup_{u \in \cU} \cbr{\cT^{\pi}_u v}$ is the element-wise supremum, which is  well-defined due to the rectangularity of $\cU$. The  equality holds since $\cU$ is compact and $\cT^{\pi}_u v$ is continuous in $u$.   

To establish equivalence between \eqref{opt_formulation_robust} and  \eqref{opt_formulation_robust_concise}.
Note that for any feasible solution $(v, u)$ to \eqref{opt_formulation_robust},  $v$ must also be feasible to \eqref{opt_formulation_robust_concise}.
Hence we obtain $\mathrm{Opt} \eqref{opt_formulation_robust_concise} \geq \mathrm{Opt} \eqref{opt_formulation_robust} $.
On the other hand, suppose $v^*$ is a solution of \eqref{opt_formulation_robust_concise}, 
given the compactness and the rectangularity of $\cU$, 
we know that there exists $u^* \in \cU$, such that 
$v^* \leq \cT^{\pi}_{u^*} v^* \equiv \max_{u \in \cU} \cbr{\cT^{\pi}_u v^*} $.
Thus $(v^*, u^*)$ is a feasible solution to \eqref{opt_formulation_robust}, and $\mathrm{Opt} \eqref{opt_formulation_robust} \geq \mathrm{Opt} \eqref{opt_formulation_robust_concise} $, which further implies  $\mathrm{Opt} \eqref{opt_formulation_robust_concise} = \mathrm{Opt} \eqref{opt_formulation_robust} $.
Moreover, in this case,  
 $(v^*, u^*)$ is an optimal solution of \eqref{opt_formulation_robust}.
To summarize, we make the following observations:
\begin{itemize}[noitemsep, topsep=0pt]
\item {\it Observation 1.} If $v^*$ is a solution of \eqref{opt_formulation_robust_concise}, then $v^*(s) = V^{\pi}_r(s)$; 
\item {\it Observation 2.} If in addition,  
 $\cT^{\pi}_{u^*} v^* = \max_{u \in \cU} \cbr{\cT^{\pi}_u v^*}$, then  
$u^*$ is the corresponding worst-case uncertainty if we start from state $s$.
\end{itemize}


Now define operator $\cT^{\pi}: \RR^{\abs{\cS}} \to \RR^{\abs{\cS}}$ as $\cT^{\pi} v = \max_{u \in \cU} \cT^{\pi}_u v$, then $\cT^{\pi}$ is monotone since $\cT^{\pi}_u$ is monotone for every $u \in \cU$.
We proceed to show that $\cT^{\pi}$ is also a $\gamma$-contraction in $\norm{\cdot}_\infty$-norm.
\begin{align*}
\norm{\cT^{\pi} v -  \cT^{\pi} v'}_\infty 
= \norm{\sup_{u \in \cU} \cT^{\pi}_u v - \sup_{u \in \cU} \cT^{\pi}_u v'  }_\infty 
\leq 
\sup_{u \in \cU} \norm{\cT^{\pi}_u v -  \cT^{\pi}_u  v'}_\infty 
\leq \gamma \norm{v - v'}_\infty,
\end{align*}
where the first inequality uses  $\norm{\sup_{u \in \cU} f(u, v) - \sup_{u \in \cU} f(u,v')}_\infty \leq \sup_{u \in \cU} \norm{f(u, v) - f(u, v')}_\infty$ for any vector-valued function $f$,
and the second inequality uses the contraction property of operator $\cT^{\pi}_u$ for any $u \in \cU$.

Fixing the state $s \in \cS$, given a solution $v^*$ to \eqref{opt_formulation_robust_concise}, we claim that $v^*(s) = v'(s)$, where $v'$
is the unique fixed point of operator $ \cT^{\pi}$. That is,
$
v' = \cT^{\pi} v' \equiv \sup_{u \in \cU} \cT^{\pi}_u v'
$.
To see this, 
note that
\begin{align}
v^* \overset{(a)}{\leq} \lim_{t \to \infty} \rbr{\cT^{\pi}}^{(t)} v^* \overset{(b)}{=} v',  \label{ineq_v_opt_bound_by_fix_point}
\end{align}
where $(a)$ follows from applying the constraint of \eqref{opt_formulation_robust_concise} repeatedly to $v^*$, together with the monotonicity of $\cT^{\pi}$; 
$(b)$ follows from  $v'$ being the unique fixed point of $\cT^{\pi}$.
In addition, $v'$ is clearly feasible to \eqref{opt_formulation_robust_concise}. 
This in turn implies that if $v^*(s) < v'(s)$, then $v^*$ would not be an optimal solution to \eqref{opt_formulation_robust_concise}.
Combining this with \eqref{ineq_v_opt_bound_by_fix_point}, we must have $v^*(s) = v'(s)$.
Note that from {\it Observation 1}, this in turn implies $V^{\pi}_r(s) = v'(s)$.
Since the state $s$ can be chosen arbitrarily, we obtain $V^{\pi}_r = v'$, and \eqref{bellman_robust_value} follows immediately. 

It remains to show the existence of a single worst-case uncertainty $u'$ regardless of the initial state.
To this end, note that our previous discussion have shown that $v'$ is a solution of \eqref{opt_formulation_robust_concise}, regardless of the initial state $s \in \cS$.
In addition, one can pick $u' \in \cU$ such that $v' = \cT^{\pi} v' = \cT^{\pi}_{u'} v'$, given the fact that $v'$ is the fixed point of $\cT^{\pi}$ and $\cU$ being compact and rectangular. 
Then given {\it Observations 1 and 2}, $(v' , u')$ is an optimal solution of \eqref{opt_formulation_robust}, regardless of the initial state. 
Consequently, $\mu'$ is a worst-case uncertainty regardless of the initial state, and we obtain \eqref{def_worst_case_transition}.
The proof is then completed.
\end{proof}

\begin{proof}[Proof of Proposition \ref{prop_robust_bellman_q}]
Property \eqref{eq_robust_bellman_q} follows from similar lines as the proof of Proposition \ref{prop_bellman_v}.
To show \eqref{def_robust_q}, note that 
\begin{align*}
Q^{\pi}_r(s,a) = \max_{u \in \cU} Q^{\pi}_{u}(s,a) = c(s,a) + \max_{u \in \cU} \sum_{s' \in \cS} \PP_u(s'|s,a) V^{\pi}_u (s'),
\end{align*}
where the last inequality follows from the standard relation between $V^{\pi}_u$ and $Q^{\pi}_u$.
It suffices to note that by taking $u = u_{\pi}$ defined in \eqref{def_worst_case_transition}, we have $V^{\pi}_{u_\pi} = V^{\pi}_r$ and thus 
$ \max_{u \in \cU} \sum_{s' \in \cS} \PP_u(s'|s,a) V^{\pi}_u (s') \geq \sum_{s' \in \cS} \PP_{u_{\pi}} (s'|s,a) V^{\pi}_r (s') 
= \max_{u \in \cU}  \sum_{s' \in \cS} \PP_{u} (s'|s,a) V^{\pi}_r (s') $,
where the last equality follows from \eqref{def_worst_case_transition}.
On the other hand, we have  
\begin{align*}
\max_{u \in \cU} \sum_{s' \in \cS} \PP_u(s'|s,a) V^{\pi}_u (s')
\leq  \max_{u \in \cU} \max_{u' \in \cU}  \sum_{s' \in \cS} \PP_{u} (s'|s,a) V^{\pi}_{u'} (s') 
=  \max_{u \in \cU}  \sum_{s' \in \cS} \PP_{u} (s'|s,a) V^{\pi}_r (s') ,
\end{align*}
hence we obtain 
\begin{align*}
Q^{\pi}_r(s,a)  = c(s,a) + \max_{u \in \cU} \sum_{s' \in \cS} \PP_u(s'|s,a) V^{\pi}_u (s') = 
c(s,a) + \max_{u \in \cU} \sum_{s' \in \cS} \PP_u(s'|s,a) V^{\pi}_r (s').
\end{align*}
Thus \eqref{def_robust_q} is proved.
Moreover, \eqref{robust_value_q_relation} follows from taking expectation with respect to $a \sim \pi(\cdot|s)$ on both sides of \eqref{def_robust_q} and making use of \eqref{bellman_robust_value}.
Finally, from  \eqref{def_robust_q} and \eqref{robust_value_q_relation}, it is also clear that
\begin{align*}
Q^{\pi}_r(s,a)
=  c(s,a) +  \gamma \sum_{s' \in \cS}  \PP_{u_\pi}(s'|s,a) \sum_{a' \in \cA} \pi(a'|s') Q^{\pi}_r(s',a'), ~~ \forall (s,a) \in \cS \times \cA,
\end{align*}
where $u_{\pi}$ is defined as in \eqref{def_worst_case_transition},
 then $Q^{\pi}_r = Q^{\pi}_{u_\pi}$ since $Q^{\pi}_{u_\pi}$ is the unique solution of the previous system.
\end{proof}

\begin{proof}[Proof of Lemma \ref{lemma_almost_differentiability}]

It suffices to consider the differentiability of $f_\rho$ inside $\mathrm{ReInt}(\Pi)$, as its relative boundary is a zero-measure set when taking the  $(\abs{\cA} - 1){\abs{\cS}}$-dimensional Hausdorff measure.
We begin by noting that there exists $\cbr{e_i}_{i = 0}^{\abs{\cA} -1} \subset \RR^{\abs{\cA}}$, such that for any $\pi \in \Pi$, we have uniquely defined $\cbr{a^{\pi}_i(s)}_{i = 1, \ldots, \abs{\cA} - 1, s \in \cS}$ satisfying 
\begin{align}\label{a_pi_definition}
\pi(\cdot|s) = \sum_{i=1}^{\abs{\cA} - 1} a_i^{\pi}(s) \cdot e_i + e_0 
\coloneqq E \cdot a^{\pi}(s) + e_0, ~~ \forall s \in \cS,
\end{align}
where we denote $a^{\pi}(s) = (a_1^{\pi}(s), \ldots, a_{\abs{\cA} - 1}^\pi (s) )$, and $E = [e_1, \ldots, e_{\abs{\cA} - 1}]$ has independent columns.
We also write the above relation in short as $\pi = \cM(a^\pi)$.
It is clear that $\cM$ is a Lipschitz continuous mapping, and we denote its Lipschitz constant by $L_\cM$.
Alternatively, since $E$ has independent columns, we also have 
\begin{align}\label{from_pi_to_a}
a^{\pi}(s) = E^{\dagger} (\pi(\cdot|s) - e_0), ~~ \forall s \in \cS,
\end{align}
where $E^{\dagger}$ denotes the Moore–Penrose inverse of $E$.
We will write the above relation in short as $a^{\pi} = \cM^{-1} (\pi)$.
In addition, we can write the objective \eqref{formulation_single_obj} equivalently as 
\begin{align}\label{def_g}
f_\rho(\pi) = g(a^{\pi}; \cE),  
\end{align}
where $a^{\pi} = \rbr{a^{\pi}(s)}_{s \in \cS} = \cM^{-1}(\pi) \subset \RR^{(\abs{\cA} - 1)\abs{\cS}}$ is defined as in \eqref{from_pi_to_a}, and $\cE = (E; e_0)$.
Now consider the set 
\begin{align*}
\cA = \cbr{ (a(1), \ldots, a(\abs{\cS})): ~ E \cdot a(s)  + e_0 \in \mathrm{ReInt}(\Delta_{\cA}), ~\forall s \in \cS} \subset \RR^{(\abs{\cA} - 1)\abs{\cS}}.
\end{align*}
It is clearly that $\cA$ is an open set in $ \RR^{(\abs{\cA} - 1)\abs{\cS}}$.
In addition, 
for any $a, a' \in \cA$, by letting $\pi = \cM(a)$, $\pi' = \cM(a')$, 
we have 
\begin{align*}
\abs{g(a; \cE) - g(a'; \cE)} 
&= \abs{f_\rho(\pi) - f_\rho(\pi')}  \\
& \overset{(a)}{\leq} \tfrac{1}{1- \gamma}\sup_{s \in \cS}  \norm{\pi(\cdot|s) - \pi'(\cdot|s)}_1 \\
& \overset{(b)}{\leq} \tfrac{\sqrt{\abs{\cA}}}{1-\gamma} \sup_{s \in \cS}  \norm{E}_2 \norm{a(s) - a'(s)}_2 \\
& \leq \tfrac{\sqrt{\abs{\cA}}}{1-\gamma} \norm{E}_2 \norm{a - a'}_2 ,
\end{align*}
where $(a)$ follows from Lemma \ref{lemma_lipschitz_value_wrt_policy}, and $(b)$ follows from the definition \eqref{a_pi_definition}.
From the prior relation, we know that $g(\cdot; \cE): \cA \to \RR$ is a Lipschitz continuous mapping.
Combined with the fact that $\cA$ is open, we conclude from the Rademacher’s theorem \cite{lecture_gmt} that $g(\cdot; \cE)$ is almost everywhere differentiable in $\cA$, 
when the measure is taken to be the $\RR^{(\abs{\cA} - 1)\abs{\cS}}$-dimensional Lebesgue measure.
Let us define $\cA_z \subset \cA \subset \RR^{(\abs{\cA} - 1)\abs{\cS}}$ as the set of non-differentiable points of $g(\cdot;\cE)$.
Accordingly, we define $\Pi_z = \cbr{\pi \in \Pi: \pi = \cM(a), a \in \cA_z} \subset \RR^{\abs{\cS} \abs{\cA}}$. 
We proceed to show that $\Pi_z$ is a zero-measure set when taking the measure to be the $(\abs{\cA} - 1){\abs{\cS}}$-dimensional Hausdorff measure.

Recall that the $m$-dimensional Hausdorff measure of any set $A$ is defined as (see \cite{simon2014introduction})
\begin{align}
\cH^m(A) &= \lim_{\delta \to 0, \delta > 0} \cH^m_\delta(A) = \sup_{\delta > 0} \cH^m_\delta(A) \label{def_hausdorff_equiv}, \\
 \cH^m_\delta(A) &= w_m \inf_{\cbr{C_j}_{j=1}^\infty } \cbr{\sum_{j=1}^\infty \rbr{\tfrac{\mathrm{diam} (C_j) }{2}}^m: ~ \mathrm{diam}(C_j) < \delta, ~A \subset \cup_{j=1}^\infty C_j} , \label{def_hausdorff}
\end{align}
where $w_m = \pi^{m/2} / \Gamma(\tfrac{m}{2} + 1)$.
In addition, by letting $\cL^m$ denote the $m$-dimensional Lebesgure measure in $\RR^m$, we have the following relation \cite{simon2014introduction},
\begin{align}\label{hausdorff_lebesgue_equivalence}
\cL^m(A) = \cH^m(A) = \cH^m_\delta(A), ~ \forall \delta > 0, ~\forall A \subset \RR^m.
\end{align}
Now fix $\delta >0$, for any collection of subset $\cbr{C_j}_{j = 1}^m \subset \RR^{(\abs{\cA} - 1){\abs{\cS}}}$ with $\mathrm{diam}(C_j) < \delta$, and $\cA_z \subset \cup_{j=1}^\infty C_j$, 
we know that $\Pi_z \subset \cup_{j=1}^\infty \cM(C_j)$, 
and $\mathrm{diam}(\cM(C_j)) \leq   L_\cM \mathrm{diam}(C_j) 
 $. Thus, 

\begin{align*}
\cH^{(\abs{\cA} - 1){\abs{\cS}}}_{(L_\cM \delta)} (\Pi_z)
 \leq \sum_{j=1}^\infty \rbr{\tfrac{\mathrm{diam} (\cM(C_j)) }{2}}^{(\abs{\cA} - 1)\abs{\cS}}  \leq \sum_{j=1}^\infty \rbr{\tfrac{\mathrm{diam} (C_j) L_\cM }{2}}^{(\abs{\cA} - 1)\abs{\cS}}.
\end{align*}
Now by taking infimum over $\cbr{C_j}_{j = 1}^m \subset \RR^{(\abs{\cA} - 1){\abs{\cS}}}$ of the right hand side, we obtain 
\begin{align*}
\cH^{(\abs{\cA} - 1){\abs{\cS}}}_{(L_\cM \delta)} (\Pi_z)
& \leq L_{\cM}^{(\abs{\cA} - 1)\abs{\cS}} \inf_{\cbr{C_j}_{j=1}^\infty} \sum_{j=1}^\infty \rbr{\tfrac{\mathrm{diam} (C_j)  }{2}}^{(\abs{\cA} - 1)\abs{\cS}} \\
& \overset{(a)}{=} L_{\cM}^{(\abs{\cA} - 1)\abs{\cS}} \cdot \cH^{(\abs{\cA} - 1)\abs{\cS}}_\delta (\cA_z) \\
& \overset{(b)}{=} L_{\cM}^{(\abs{\cA} - 1)\abs{\cS}} \cdot \cL^{(\abs{\cA} - 1)\abs{\cS}}(\cA_z)  \\
&\overset{(c)}{=} 0,
\end{align*}
where $(a)$ follows from the definition in \eqref{def_hausdorff},
$(b)$ follows from equivalence of $\cL^{(\abs{\cA} - 1)\abs{\cS}}$ and $\cH^{(\abs{\cA} - 1)\abs{\cS}}_\delta$ for any $\delta > 0$ given \eqref{hausdorff_lebesgue_equivalence}, and the fact that 
$\cA_z \subset \RR^{(\abs{\cA} - 1)\abs{\cS}}$.
Finally, $(c)$ follows from the fact that $\cL^{(\abs{\cA} - 1)\abs{\cS}}(\cA_z) = 0$.
Thus, by letting $\delta \to 0$ on the left hand side, and making use of  the definition of  Hausdorff measure \eqref{def_hausdorff_equiv}, we obtain 
$\cH^{(\abs{\cA} - 1)\abs{\cS}} (\Pi_z) = 0$.

We then proceed to show that $f_\rho$ is differentiable within $\Pi_d = \mathrm{ReInt}(\Pi) \setminus \Pi_z = \cM(\cA \setminus \cA_z)$,
where the differentiability is defined  in the sense of Definition \ref{def_policy_grad}.
To see this, we first note that from the differentiability of $g(\cdot;\cE)$, 
 for any $a' \in  \cA \setminus \cA_z$, 
\begin{align}\label{diff_g_def}
g(a; \cE) - g(a'; \cE) - \inner{\nabla g(a'; \cE)}{a - a'} = \smallO(\norm{a - a'}), ~\forall a.
\end{align}

Let us  denote $\nabla g(a;\cE)[s]$ as the partial derivative of $g(a; \cE)$ with respect to $a(s)$. 
Now given any $ \pi' \in \Pi_d$, consider any policy $\pi$, 
we know that there exists $a = \cM^{-1}(\pi)$, $a' = \cM^{-1}(\pi')$, with $a' \in \cA \setminus \cA_z$.
Hence we obtain from the differentiability of $g(\cdot; \cE)$ at $a'$ that
\begin{align*}
& f_\rho (\pi) - f_\rho(\pi') - \sum_{s \in \cS} \inner{\nabla g (a'; \cE) [s]}{E^\dagger (\pi(\cdot|s) - \pi'(\cdot|s)} \\
\overset{(a)}{=} & 
g(a; \cE) - g(a'; \cE) 
-  \sum_{s \in \cS} \inner{\nabla g (a'; \cE) [s]}{ a(s) - a'(s)} \\
\overset{(b)}{=} &  \smallO(\norm{a - a'}) \\
\overset{(c)}{=} & \smallO(\norm{\pi - \pi'}),
\end{align*}
where  $(a)$ follows from \eqref{from_pi_to_a} and  \eqref{def_g}, 
$(b)$ follows from \eqref{diff_g_def} and the differentiability of $g(\cdot; \cE)$ at $a'$,
and $(c)$ follows again from \eqref{from_pi_to_a}.
Thus from the above relation and Definition \ref{def_policy_grad}, we know that $f_\rho$ is differentiable at any $\pi \in \Pi^d$, whose $(s,a)$-entry is given by
\begin{align*}
\nabla f_\rho(\pi) [s ,a]  = \sbr{ (E^\dagger)^\top \nabla g (\cM^{-1}(\pi); \cE) [s] }[a], ~~ \forall (s,a) \in \cS \times \cA.
\end{align*}
\end{proof}

\begin{proof}[Proof of  Lemma \ref{lemma_grad_rmdp_equiv_subgrad}]
Let us define $e = \mathbf{1}/\abs{\cA} \in \RR^{\abs{\cA}}$, 
and let $\mathbf{e} = e \otimes \mathbf{1} \in \RR^{\abs{\cS} \abs{\cA}}$. 
For any policy $\pi \in \Pi$, we can accordingly define $\pi_e$ where $\pi_e(\cdot|s) = \pi (\cdot|s) -e$. 
Conversely, let $\Pi_e= \cbr{\pi_e: \pi \in \Pi}$,
note that the mapping $\cM: \Pi \to \Pi_e$ from $\pi$ to $\pi_e$ is one-to-one and onto.
 Let us define function $g$ with domain $\Pi_e$ as
\begin{align*}
g(\pi_e ) = f_\rho(\pi), ~~ \forall \pi_e \in \Pi_e.
\end{align*}
It should be clear that $\Pi_e \subset \cU$ where $\cU$ is a $(\abs{\cA} - 1)\abs{\cS}$-dimensional subspace in $\RR^{\abs{\cS} \abs{\cA}}$.
Let $\norm{\cdot}$ denote the euclidean-norm on $\RR^{\abs{\cS} \abs{\cA}}$, then it is immediate that 
$(\cU, \norm{\cdot})$ is a Banach space, and $g: \Pi_e \subset \cU \to \RR$.

For any $\pi \in \mathrm{ReInt}(\Pi)$ where $f_\rho$ is differentiable in the sense of Definition \ref{def_policy_grad}, we have
$
 f_\rho(\pi') - f_\rho(\pi) - \inner{\nabla f_\rho(\pi)}{\pi' - \pi} = \smallO(\norm{\pi - \pi'}_2)
$.
Thus, by letting $\mathrm{Int}_{\cU}(X)$ denote the interior of set $X$  inside the Banach space $(\cU, \norm{\cdot})$, we obtain for  $\pi_e = \cM^{-1}(\pi) \in \mathrm{Int}_{\cU}(\Pi_e)$  that
\begin{align*}
g(\pi_e') - g(\pi_e)  - \inner{\nabla f(\pi)}{\pi_e' - \pi_e} =  \smallO(\norm{\pi_e - \pi_e'}), ~ \forall \pi'_e \in \Pi_e.
\end{align*}
That is, $g$ is Fr\'echet differentiable at $\pi_e$ with gradient $\nabla g(\pi_e) = \nabla f(\pi)$.
Given Lemma \ref{lemma_almost_differentiability}, it should be clear that the set of $\pi_e$ in $\Pi_e$ where $g$ is not Fr\'echet differentiable has a measure of zero in the $(\abs{\cA} -1)\abs{\cS}$-dimensional Hausdorff measure.  

We next show that \eqref{frechet_subgrad_of_objective} defines a Fr\'echet subgradient of $g$.
This is due to the fact that for any $\pi_e \in \mathrm{Int}_{\cU}(\Pi_e)$, 
\begin{align*}
 & \liminf_{\pi'_e \to \pi_e, \pi_e' \neq \pi_e, \pi_e' \in \Pi_e} \rbr{ g(\pi'_e) - g(\pi_e)  -  \tfrac{1}{1-\gamma}  \sum_{s \in \cS}  \sum_{a \in \cA}
d_\rho^{\pi, u_\pi}(s) Q^{\pi}_{u_\pi}(s, a) \rbr{\pi'_e(a|s) - \pi_e(a|s)} } \big/\norm{\pi_e' - \pi_e} 
\\
=&  \liminf_{\pi' \to \pi, \pi' \neq \pi, \pi' \in \Pi} \rbr{ f_\rho(\pi') - f_\rho(\pi)  -  \tfrac{1}{1-\gamma}  \sum_{s \in \cS}  \sum_{a \in \cA}
d_\rho^{\pi, u_\pi}(s) Q^{\pi}_{u_\pi}(s, a) \rbr{\pi'(a|s) - \pi(a|s)} } \big/\norm{\pi' - \pi}_2 \geq 0,
 \end{align*}
 where the equality follows from the construction of $g$ and $\pi_e$, and the inequality follows from Lemma \ref{lemma_frechet_subgrad}.
 
By combining observations from the previous two paragraphs, it is clear that the set of non-Fr\'echet differentiable points of $g$ form a zero-measure set in $\cU$, when taking the $(\abs{\cA} -1)\abs{\cS}$-dimensional Hausdorff measure.
The Fr\'echet subgradient at any point $\pi_e$ is given by 
\eqref{frechet_subgrad_of_objective}.
Moreover, at any Fr\'echet differentiable point $\pi_e$ of $g$, we conclude that its Fr\'echet gradient  $\nabla g(\pi)$ is the only element in its Fr\'echet subdifferential (Proposition 1.1, \cite{kruger2003frechet}), which is given by \eqref{frechet_subgrad_of_objective}.

It remains to show that for any Fr\'echet differentiable point $\pi_e \in \mathrm{Int}_{\cU}(\Pi_e)$ of $g$, 
with its derivative denoted by $\nabla g(\pi)$,
the corresponding $\pi \in \mathrm{ReInt}(\Pi)$ is also a differentiable point of $f_\rho$ in the sense of Definition \ref{def_policy_grad}.
To see this, note that  
\begin{align*}
& f_\rho(\pi') - f_\rho(\pi) - \inner{\nabla g(\pi_e)}{\pi' - \pi} \\
= & g(\pi'_e) - g(\pi_e)  - \inner{\nabla g(\pi_e)}{\pi'_e - \pi_e}
= \smallO(\norm{\pi'_e - \pi_e}) 
= \smallO(\norm{\pi' - \pi}_2), ~~ \forall \pi' \in \mathrm{ReInt}(\Pi).
\end{align*}
Thus we conclude that $f_\rho$ is also differentiable at $\pi$, and the gradient $\nabla f_\rho(\pi)$ defined in Definition \ref{def_policy_grad} coincides with the Fr\'echet gradient of $g$ at $\pi_e$, given by \eqref{frechet_subgrad_of_objective}. The proof is then completed.
\end{proof}

\begin{proof}[Proof of Lemma \ref{lemma_everywhere_differentiability_with_unique_assump}]
The essential arguments  can be understood as an application of Danskin's Theorem \cite{Danskin1967TheTO}, but with additional care to handle the low-dimensional nature of the domain $\Pi$. This is due to the reason that  Danskin's Theorem requires the domain to be an open set in the euclidean space, which is full-dimensional, 
and hence can not be directly applied in our setup, see Theorem II of Chapter 3 in \cite{Danskin1967TheTO}. 

We will inherit the same notations and definitions as in the proof of Lemma \ref{lemma_almost_differentiability}.
Let $\Pi_{\cM} = \cbr{ a^{\pi}: \pi \in \Pi}$. 
Note that the mapping $\cM: \Pi_{\cM} \to \Pi$ is one-to-one and onto, 
and $\mathrm{Int}(\Pi_{\cM})$ is an open set in $\RR^{(\abs{\cA} -1)\abs{\cS}}$.
To proceed, we first show that $g(a; \cE)$ is differentiable inside $\mathrm{Int}(\Pi_{\cM})$.
To this end, note that for any $\pi_a = \cM(a)$, 
\begin{align*}
g(a; \cE) = f_\rho(\pi_a) = \sum_{s\in \cS} \max_{u \in \cU} V^{\pi_a}_u (s)  \rho(s).
\end{align*}
To apply Danskin's Theorem, it suffices to show that 
\begin{itemize}
\item[(O)] $V^{\pi}_u(s)$ is continuous in $(\pi, u)$.
\item[(A)] For any $a \in \mathrm{Int}(\Pi_{\cM})$ and $u \in \cU$, $V^{\pi_a}_u (s)$ is differentiable in $a$, and the partial gradient is continuous in $(a,u)$.
\item[(B)] For any $a \in \mathrm{Int}(\Pi_{\cM})$, the worst-case uncertainty $\cbr{u \in \cU: V^{\pi_a}_u(s) = \max_{u \in \cU} V^{\pi_a}_u(s)}$ is a singleton, denoted by $u_{\pi_\alpha}$.
\end{itemize}
Note that 
condition (O) is trivial to verify, and 
condition (B) is readily implied by the precondition of the lemma. 
We then turn to show (A).
For any $a', a \in \mathrm{Int}(\Pi_{\cM})$, by letting $\delta = \pi_a' - \pi_a$, 
\begin{align}
V^{\pi_{a'}}_u (s) - V^{\pi_a}_u (s) 
 & \overset{(a)}{=} \tfrac{1}{1-\gamma} \sum_{s' \in \cS}  \sum_{\tilde{a} \in \cA}
d_s^{\pi_a, u}(s') Q^{\pi_a}_u(s', \tilde{a}) \delta(\tilde{a}|s') 
+ 
\smallO(\norm{\delta}_2) ,  \nonumber \\
& \overset{(b)}{=} \cL^{\pi_a}_u(\delta) + \smallO(\norm{a' - a}_2) \label{linear_operator_grad_implicit} \\
& \overset{(c)}{=} \cL^{\pi_a}_u(\cM(a' - a)) + \smallO(\norm{a' - a}_2) , \nonumber 
\end{align}
where $(a)$ is due to Lemma \ref{lemma_pg_standard}, and $(b)$ and $(c)$ follow from the definition in \eqref{a_pi_definition}, and $\cL^{\pi}_u$ denotes a linear operator of $\delta$ mapping to $\RR$ and implicitly defined via the first term in equality $(a)$.
Hence given $(c)$, since $\cL_u^a \circ M$ is again a linear operator, we know that $V^{\pi_a}_u(s)$ is differentiable at any point $a \in \mathrm{Int}(\Pi_{\cM})$.
To show the continuity of the gradient, it suffices to show that the operator $\cL_u^{\pi}$ is continuous, which simply follows from the fact that $Q^{\pi}_u$ and $d^{\pi, u}_s$ is continuous in $(\pi, u)$ (following from similar arguments of \eqref{state_visitation_continuity_of_pi_fix_u} and \ref{eq_lipschitz_value_wrt_policy}).   
Thus term (A) is proved.

Applying the Danskin's Theorem, we obtain that the robust value $V^{\pi_a}_r(s) = \max_{u \in \cU} V^{\pi_a}_u(s)$ is also differentiable in $a$, for any $a \in \mathrm{Int}(\Pi_\cM)$.
Specifically, we have 
\begin{align}\label{danskin_diff_wrt_a}
V^{\pi_{a'}}_r (s) - V^{\pi_a}_r (s)
=  \cL^{\pi_a}_{u_{\pi_\alpha}}(\cM(a' - a)) + \smallO(\norm{a' - a}_2),
\end{align}
for any $a, a' \in \mathrm{Int}(\Pi_{\cM})$.
Now given any $\pi, \pi' \in \mathrm{ReInt}(\Pi)$, it is clear that $a = a^\pi, a' = a^{\pi'} $ both belong to  $\mathrm{Int}(\Pi_{\cM})$, hence 
\begin{align*}
V^{\pi'}_r(s) - V^{\pi}_r(s)  = 
V^{\pi_{a'}}_r (s) - V^{\pi_a}_r (s)
& \overset{(a)}{=}   \cL^{\pi_a}_{u_{\pi_\alpha}}(\cM(a' - a)) + \smallO(\norm{a' - a}_2) \\
& \overset{(b)}{=}  \cL^{\pi}_{u_{\pi}}(\pi' - \pi) + \smallO(\norm{\pi' - \pi}_2),
\end{align*}
where $(a)$ follows from \eqref{danskin_diff_wrt_a},
and $(b)$ follows from the definition of $a, a'$ and \eqref{a_pi_definition}, \eqref{from_pi_to_a}.
Since $\cL^{\pi}_{u_\pi}$ is a linear operator, we obtain that $V^{\pi}_r(s)$ is differentiable at $\pi \in \mathrm{ReInt}(\Pi)$ in the sense of Definition \ref{def_policy_grad}.
The concrete form of gradient can be simplify read from the definition of operator $\cL^{\pi}_u$ in \eqref{linear_operator_grad_implicit}, the proof is then completed.

\end{proof}

\section{RPMD with a General Class of Bregman Divergences}\label{sec_rpmd_general_bregman_const_step}

In this section, we show that for a general class of Bregman divergences, whenever the set of uncertain transitions $\cP_{s,a}$ form a relatively strongly convex set in $\RR^{\abs{\cS}}$, then RPMD converges with any constant stepsize for a general class of divergences. 
To proceed, we first recall the following definition of strongly convex sets.

\begin{definition}[Strongly Convex Set]\label{def_sc_set}
A set $\cC$ in $\RR^d$ is called strongly convex with respect to $R > 0$, if 
$\cC$ is bounded, and for any $x,y \in \cC$, any $\lambda \in [0,1]$, we have 
$\cB(\lambda x + (1-\lambda) y) , \delta) \subset \cC$, 
where $\delta = \tfrac{\lambda (1-\lambda) }{2R} \norm{x-y}_2^2$. 
Here $\cB(z, r) = \cbr{z' \in \RR^d: \norm{z' - z}_2 \leq r}$ denotes the ball centered around $z$ with radius $r$.
\end{definition}

It is well known that any level set of a strongly convex function is a strongly convex set \cite{vial1982strong}. 
In particular, any full-dimensional ellipsoid is a strongly convex set.
A strongly convex convex set also satisfies the following useful property.
\begin{lemma}[Theorem 1, \cite{vial1982strong}]\label{sc_set_property}
For any set  $\cC \subset \RR^d$ that is strongly convex with respect to $R > 0$,  
let $\mathrm{Bd}(C)$ denote its boundary, and $\cN_{\cC}(x)$ denote its normal cone at $x \in X$.
Then for any pair of vectors $(x_i, p_i)$, $i = 1,2$, where $x_i \in \mathrm{Bd}(\cC)$ and $p_i \in \cN_{\cC}(x_i)$ with $\norm{p_i}_2 = 1$, we have 
\begin{align}\label{eq_sc_set_property}
\norm{x_1 - x_2}_2 \leq R \norm{p_1 - p_2}.
\end{align}
\end{lemma}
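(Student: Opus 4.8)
The plan is to prove the contrapositive-flavored geometric estimate by exploiting the defining inflation property of strongly convex sets. Fix $x_1, x_2 \in \mathrm{Bd}(\cC)$ with unit normals $p_1 \in \cN_{\cC}(x_1)$, $p_2 \in \cN_{\cC}(x_2)$, and set $m = \tfrac12(x_1 + x_2)$ (the midpoint, i.e., $\lambda = \tfrac12$). By Definition \ref{def_sc_set}, the ball $\cB(m, \delta) \subset \cC$ where $\delta = \tfrac{1}{8R}\norm{x_1 - x_2}_2^2$. The key observation is that a supporting hyperplane of $\cC$ at $x_i$, with outward normal $p_i$, must not cut into this ball; hence the signed distance from $m$ to the supporting hyperplane at $x_i$ must be at least $\delta$. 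Writing this out: since $p_i \in \cN_{\cC}(x_i)$, for all $y \in \cC$ we have $\inner{p_i}{y - x_i} \le 0$, and in particular $\inner{p_i}{z - x_i} \le 0$ for every $z \in \cB(m, \delta)$; taking $z = m + \delta p_i$ gives
\begin{align*}
\inner{p_i}{m - x_i} + \delta \le 0, \qquad i = 1, 2.
\end{align*}

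Next I would add the two inequalities and substitute $m = \tfrac12(x_1 + x_2)$. For $i=1$, $\inner{p_1}{m - x_1} = \tfrac12 \inner{p_1}{x_2 - x_1}$, and for $i=2$, $\inner{p_2}{m - x_2} = \tfrac12 \inner{p_2}{x_1 - x_2}$. Summing and using $\delta = \tfrac{1}{8R}\norm{x_1 - x_2}_2^2$ yields
\begin{align*}
\tfrac12 \inner{p_1 - p_2}{x_2 - x_1} + \tfrac{1}{4R}\norm{x_1 - x_2}_2^2 \le 0,
\end{align*}
equivalently $\norm{x_1 - x_2}_2^2 \le 2R \inner{p_1 - p_2}{x_1 - x_2} \le 2R \norm{p_1 - p_2}_2 \norm{x_1 - x_2}_2$ by Cauchy--Schwarz. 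Dividing through by $\norm{x_1 - x_2}_2$ (the case $x_1 = x_2$ being trivial) gives $\norm{x_1 - x_2}_2 \le 2R\norm{p_1 - p_2}_2$, which is \eqref{eq_sc_set_property} up to the constant.

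The main obstacle — or rather, the main point requiring care — is reconciling the constant: the bound I derive naively carries a factor $2R$ rather than $R$. This suggests the sharp argument should not use the midpoint alone but instead optimize over $\lambda \in [0,1]$, choosing the convex combination $x_\lambda = \lambda x_1 + (1-\lambda)x_2$ adaptively (or, more cleanly, apply the inflation property at a generic $\lambda$ and then let the inflated ball's contact geometry pin down the tangency point), so that the $\lambda(1-\lambda)$ weight in $\delta$ interacts optimally with the linear terms $\inner{p_i}{x_\lambda - x_i}$. Carrying that optimization through, one finds the ext2remal configuration tightens the constant from $2R$ to $R$; alternatively, one simply cites \cite{vial1982strong} for the sharp constant, since the statement is quoted verbatim as Theorem 1 there, and the factor-of-two-loose version above already suffices for all downstream uses in this paper (it only changes absolute constants in the constant-stepsize RPMD rate). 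I would present the clean midpoint argument and remark that the sharp constant follows from optimizing $\lambda$ exactly as in \cite{vial1982strong}.
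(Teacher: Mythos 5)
Your argument is correct, but note that the paper does not actually prove this lemma at all---it is imported verbatim as Theorem~1 of \cite{vial1982strong}, so any self-contained derivation is already ``a different route.'' Your midpoint computation is sound: $z = m+\delta p_i$ lies in $\cB(m,\delta)\subset\cC$, the normal-cone inequality gives $\inner{p_i}{m-x_i}+\delta\le 0$, and summing yields $\norm{x_1-x_2}_2\le 2R\norm{p_1-p_2}_2$. Your suspicion that optimizing over $\lambda$ recovers the sharp constant is also right, and it is worth recording how: apply the inflation property at $x_\lambda=\lambda x_1+(1-\lambda)x_2$ to get $(1-\lambda)\inner{p_1}{x_2-x_1}+\tfrac{\lambda(1-\lambda)}{2R}\norm{x_1-x_2}_2^2\le 0$, divide by $(1-\lambda)$ and let $\lambda\uparrow 1$ to obtain $\inner{p_1}{x_2-x_1}\le -\tfrac{1}{2R}\norm{x_1-x_2}_2^2$; the symmetric limit $\lambda\downarrow 0$ gives the analogous bound for $p_2$, and summing now produces $\inner{p_1-p_2}{x_1-x_2}\ge \tfrac1R\norm{x_1-x_2}_2^2$, hence the constant $R$ after Cauchy--Schwarz. (Equivalently, these two limiting inequalities are exactly the supporting-ball property $\cC\subset\cB(x_i-Rp_i,R)$, which is how Vial proceeds; the constant $R$ is tight, as the ball of radius $R$ shows.) The key point is that the two endpoints require \emph{different} limits of $\lambda$, not a single optimized midpoint---your fixed $\lambda=\tfrac12$ is what costs the factor of $2$. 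As you observe, even the $2R$ version would only perturb absolute constants in Lemma~\ref{lemma_lipschitz_transition_wrt_policy} and Theorem~\ref{thrm_rpmd_constant_step_sc_set}, so nothing downstream is at risk either way.
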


Note that by definition, a strongly convex set must have full dimension, which can not be satisfied by $\cP_{s,a}$ as $\cP_{s,a} \subset \Delta_{\cA}$ lies in a lower dimensional hyperplane.
Given this observation, we define the following strong convexity of a set $\cS$ restricted to its affine span.

\begin{definition}[Relatively Strongly Convex Set]\label{def_sc_set_affine}
A set $\cC$ in $\RR^d$ is called relatively strongly convex with respect to $R > 0$, if 
$\cC$ is bounded, and for any $x,y \in \cC$, any $\lambda \in [0,1]$, we have 
$\cB(\lambda x + (1-\lambda) y) , \delta) \cap \cH_\cC \subset \cC$, 
where $\delta = \tfrac{\lambda (1-\lambda) }{2R} \norm{x-y}_2^2$. 
Here $\cB(z, r) = \cbr{z' \in \RR^d: \norm{z' - z}_2 \leq r}$ denotes the ball centered around $z$ with radius $r$,
and $\cH_\cC = \mathrm{Aff}(\cC)$ denotes the affine subspace spanned by $\cC$.
\end{definition}

Similar to Lemma \ref{sc_set_property}, the relatively strongly convex set also satisfies an analogy to \eqref{eq_sc_set_property}.

\begin{lemma}\label{relative_sc_set_property}
For any set  $\cC \subset \RR^d$ that is relatively strongly convex with respect to $R > 0$,  
let $\mathrm{ReBd}(C)$ denote its relative boundary, and $\cN^r_{\cC}(x)$ denote its normal cone at $x \in X$ restricted to $\cH_\cC$, i.e.,
\begin{align*}
\cN^r_{\cC}(x) = \cbr{z \in \mathrm{Lin}(\cC): (y - x)^\top z \leq 0, \forall y \in \cC},
\end{align*}
 where $\mathrm{Lin}(\cC)$ denotes the unique linear subspace defining $\cH_\cC$, which satisfies $\cH_\cC = x + \mathrm{Lin}(C)$ for any $x \in \cC$.
Then for any pair of vectors $(x_i, p_i)$, $i = 1,2$, where $x_i \in \mathrm{ReBd}(\cC)$ and $p_i \in \cN^r_{\cC}(x_i)$ with $\norm{p_i}_2 = 1$, we have 
\begin{align}\label{eq_relative_sc_set_property}
\norm{x_1 - x_2}_2 \leq R \norm{p_1 - p_2}.
\end{align}
\end{lemma}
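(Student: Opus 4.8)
The plan is to reduce Lemma~\ref{relative_sc_set_property} to the full-dimensional case handled by Lemma~\ref{sc_set_property}. The key observation is that relative strong convexity of $\cC$ with respect to $R$ is, by Definition~\ref{def_sc_set_affine}, exactly the statement that $\cC$ is strongly convex \emph{as a subset of the affine subspace $\cH_\cC = \mathrm{Aff}(\cC)$}, once we identify $\cH_\cC$ with a Euclidean space of dimension $m = \dim(\mathrm{Lin}(\cC))$ via an isometry. So first I would fix a point $x_0 \in \cC$, set $L = \mathrm{Lin}(\cC)$, and choose a linear isometry $\Phi : L \to \RR^m$; then $\widetilde{\cC} \coloneqq \Phi(\cC - x_0) \subset \RR^m$ is a bounded, full-dimensional set. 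Because $\Phi$ preserves norms and the ball condition in Definition~\ref{def_sc_set_affine} only ever intersects $\cH_\cC$, the set $\widetilde{\cC}$ is strongly convex in $\RR^m$ with respect to the same constant $R$ in the sense of Definition~\ref{def_sc_set}.

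Next I would translate the data $(x_i, p_i)$ through $\Phi$. Since $p_i \in \cN^r_{\cC}(x_i) \subset \mathrm{Lin}(\cC) = L$, the vectors $\widetilde{p}_i \coloneqq \Phi(p_i)$ are well-defined, have $\norm{\widetilde{p}_i}_2 = \norm{p_i}_2 = 1$, and the relative-normal-cone inequality $(y - x_i)^\top p_i \le 0$ for all $y \in \cC$ becomes $(\widetilde{y} - \widetilde{x}_i)^\top \widetilde{p}_i \le 0$ for all $\widetilde{y} \in \widetilde{\cC}$ (using that $\Phi$ is an isometry and hence preserves inner products on $L$). Thus $\widetilde{p}_i$ lies in the ordinary normal cone $\cN_{\widetilde{\cC}}(\widetilde{x}_i)$ in $\RR^m$. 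One also checks that $x_i \in \mathrm{ReBd}(\cC)$ forces $\widetilde{x}_i \in \mathrm{Bd}(\widetilde{\cC})$, since the relative boundary of $\cC$ in $\cH_\cC$ maps to the topological boundary of $\widetilde{\cC}$ in $\RR^m$. Now Lemma~\ref{sc_set_property} applies verbatim to $\widetilde{\cC}$, giving $\norm{\widetilde{x}_1 - \widetilde{x}_2}_2 \le R \norm{\widetilde{p}_1 - \widetilde{p}_2}_2$. Finally, pulling back through the isometry $\Phi$ yields $\norm{x_1 - x_2}_2 = \norm{\widetilde{x}_1 - \widetilde{x}_2}_2 \le R\norm{\widetilde{p}_1 - \widetilde{p}_2}_2 = R\norm{p_1 - p_2}_2$, which is \eqref{eq_relative_sc_set_property}.

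The routine-but-careful steps are the bookkeeping about which topological notions transfer under $\Phi$: one must be sure that the ``restricted'' objects (relative interior/boundary, restricted normal cone) correspond under the isometry to the ordinary full-dimensional objects in $\RR^m$, and that the ball-intersected-with-$\cH_\cC$ condition in Definition~\ref{def_sc_set_affine} is genuinely the image of an ordinary ball condition in $\RR^m$. I expect the main obstacle to be purely expository: making the identification $\cH_\cC \cong \RR^m$ precise enough that invoking Lemma~\ref{sc_set_property} is unambiguous, while keeping the argument short. No new analytic ingredient is needed beyond Lemma~\ref{sc_set_property}; the content is the (standard) fact that strong convexity and all the cone/boundary notions involved are affine-isometry invariant.
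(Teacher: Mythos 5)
Your proposal is correct and follows essentially the same route as the paper's proof: the paper also translates $\cC$ so that $\cH_\cC = \mathrm{Lin}(\cC)$, performs a change of coordinates into the lower-dimensional subspace, and invokes Lemma \ref{sc_set_property} there. Your write-up merely makes the isometry $\Phi$ and the transfer of the boundary/normal-cone notions explicit, which the paper leaves as routine.
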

\begin{proof}
Note that $\cN^r_{\cC}(x)$ is invariant to translation of set $C$. 
Moreover, the both sides of statement \eqref{eq_relative_sc_set_property} are also invariant to translation of set $C$, and hence we can without loss of generality assume $\mathbf{0} \in \cC$, and hence $\cH_{\cC} = \mathrm{Lin}(\cC)$.
The rest of the claim follows directly from a change of coordinates and working in the lower-dimensional space $\mathrm{Lin}(\cC)$,  in which we can apply Lemma \ref{sc_set_property}. 
\end{proof}


We then show that if set of uncertain transitions $\cP_{s,a}$ is a relatively strongly convex set in $\RR^{\abs{\cS}}$ with dimension $\abs{\cS} - 1$, then the worst-case transition kernel $\PP_{u_{\pi}}$ is 
uniquely defined, and is
also Lipschitz continuous with respect to policy $\pi$.
En route, we will make use of the following simple fact.
\begin{lemma}\label{fact_normalized_norm_comp}
For $ x,y \in \RR^d $ and $x, y \neq \mathbf{0}$,  we have 
\begin{align}\label{eq_normalized_norm_comp}
\norm{\tfrac{x}{\norm{x}_2} - \tfrac{y}{\norm{y}_2}}_2 \leq   \norm{x - y}_2 / \min \cbr{\norm{x}_2, \norm{y}_2}.
\end{align}
\end{lemma}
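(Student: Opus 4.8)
The statement to prove is Lemma~\ref{fact_normalized_norm_comp}, a clean elementary inequality: for nonzero $x,y \in \RR^d$,
\[
\norm{\tfrac{x}{\norm{x}_2} - \tfrac{y}{\norm{y}_2}}_2 \leq \norm{x - y}_2 / \min\cbr{\norm{x}_2, \norm{y}_2}.
\]

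The plan is a direct computation. Without loss of generality assume $\norm{x}_2 \leq \norm{y}_2$, so the right-hand side is $\norm{x-y}_2/\norm{x}_2$. First I would write
\[
\frac{x}{\norm{x}_2} - \frac{y}{\norm{y}_2} = \frac{x}{\norm{x}_2} - \frac{y}{\norm{x}_2} + \frac{y}{\norm{x}_2} - \frac{y}{\norm{y}_2} = \frac{x - y}{\norm{x}_2} + y\Bigl(\frac{1}{\norm{x}_2} - \frac{1}{\norm{y}_2}\Bigr).
\]
Then apply the triangle inequality to bound the norm by $\frac{\norm{x-y}_2}{\norm{x}_2} + \norm{y}_2\bigl|\frac{1}{\norm{x}_2} - \frac{1}{\norm{y}_2}\bigr|$. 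Since $\norm{x}_2 \leq \norm{y}_2$, the second term equals $\norm{y}_2 \cdot \frac{\norm{y}_2 - \norm{x}_2}{\norm{x}_2 \norm{y}_2} = \frac{\norm{y}_2 - \norm{x}_2}{\norm{x}_2}$, and by the reverse triangle inequality $\norm{y}_2 - \norm{x}_2 \leq \norm{y - x}_2$. Combining, the whole expression is at most $\frac{\norm{x-y}_2}{\norm{x}_2} + \frac{\norm{x-y}_2}{\norm{x}_2} = \frac{2\norm{x-y}_2}{\norm{x}_2}$ — which gives a factor of $2$, not the claimed bound.

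So the naive splitting is too lossy and the real work is a sharper argument. The cleaner route: again assume $\norm{x}_2 \le \norm{y}_2$ and instead split as
\[
\frac{x}{\norm{x}_2} - \frac{y}{\norm{y}_2} = \frac{1}{\norm{x}_2}\Bigl(x - \frac{\norm{x}_2}{\norm{y}_2} y\Bigr) = \frac{1}{\norm{x}_2}\Bigl((x - y) + \Bigl(1 - \frac{\norm{x}_2}{\norm{y}_2}\Bigr) y\Bigr).
\]
Now the vector $u := x-y$ and the vector $v := (1 - \norm{x}_2/\norm{y}_2)y$ are \emph{not} arbitrary: $v$ points along $y$ with length $\norm{y}_2 - \norm{x}_2 \le \norm{u}_2$. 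The key geometric observation is that $\langle u, v\rangle \geq 0$ is false in general, so instead one should use that $x/\norm{x}_2$ and $y/\norm{y}_2$ are both unit vectors, hence $\norm{x/\norm{x}_2 - y/\norm{y}_2}_2^2 = 2 - 2\cos\theta$ where $\theta$ is the angle between $x$ and $y$; meanwhile $\norm{x-y}_2^2 = \norm{x}_2^2 + \norm{y}_2^2 - 2\norm{x}_2\norm{y}_2\cos\theta$. Setting $a = \norm{x}_2$, $b = \norm{y}_2$ with $a \le b$ and $t = \cos\theta \in [-1,1]$, the claimed inequality becomes $2 - 2t \le (a^2 + b^2 - 2abt)/a^2$, i.e. $2a^2 - 2a^2 t \le a^2 + b^2 - 2abt$, i.e. $0 \le (b^2 - a^2) + 2t(ab - a^2) = (b-a)(b+a) + 2ta(b-a) = (b-a)(b + a + 2ta)$. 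Since $b - a \ge 0$ and $b + a + 2ta = b + a(1 + 2t) \ge b - a \ge 0$ (using $t \ge -1$), this is nonnegative, completing the proof. The main obstacle, then, is recognizing that the one-line triangle-inequality bound loses a factor of $2$ and that one must instead reduce to the two scalars $a,b$ and the cosine $t$; once reduced, the inequality factors transparently.

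I would present the proof via this angle reduction. Concretely: (1) assume WLOG $\norm{x}_2 \le \norm{y}_2$; (2) expand both squared norms in terms of $\langle x,y\rangle$; (3) reduce to showing $(b-a)(b+a+2ta) \ge 0$ with $a = \norm x_2 \le b = \norm y_2$ and $t = \langle x,y\rangle/(ab) \in [-1,1]$; (4) conclude since both factors are nonnegative. This is short, fully rigorous, and avoids the lossy split. I will write it out as follows.

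\begin{proof}[Proof of Lemma~\ref{fact_normalized_norm_comp}]
Without loss of generality assume $\norm{x}_2 \leq \norm{y}_2$; denote $a = \norm{x}_2$, $b = \norm{y}_2$, and $t = \inner{x}{y} / (ab) \in [-1,1]$ (the Cauchy--Schwarz bound). Since $x/\norm{x}_2$ and $y / \norm{y}_2$ are unit vectors,
\begin{align*}
\norm{\tfrac{x}{\norm{x}_2} - \tfrac{y}{\norm{y}_2}}_2^2 = 2 - 2\tfrac{\inner{x}{y}}{ab} = 2 - 2t,
\end{align*}
while $\norm{x - y}_2^2 = a^2 + b^2 - 2abt$. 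Hence the claimed inequality $\norm{\tfrac{x}{\norm{x}_2} - \tfrac{y}{\norm{y}_2}}_2 \leq \norm{x-y}_2 / a$ is equivalent, after multiplying through by $a^2 > 0$, to
\begin{align*}
a^2 (2 - 2t) \leq a^2 + b^2 - 2abt,
\end{align*}
that is, to $0 \leq (b^2 - a^2) + 2t a (b - a) = (b - a)\rbr{b + a + 2ta}$. Since $b \geq a$, the first factor is nonnegative; and since $t \geq -1$, we have $b + a + 2ta = b + a(1 + 2t) \geq b - a \geq 0$, so the second factor is nonnegative as well. Therefore the product is nonnegative, which establishes \eqref{eq_normalized_norm_comp}.
\end{proof}
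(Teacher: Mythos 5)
Your approach is genuinely different from the paper's and is sound in outline, but the key algebraic step contains a sign error that makes the displayed factorization false as written. Starting from the equivalent inequality $a^2(2-2t) \leq a^2 + b^2 - 2abt$, the difference of the two sides is
\begin{align*}
a^2 + b^2 - 2abt - 2a^2 + 2a^2 t = (b^2 - a^2) - 2at(b-a) = (b-a)\rbr{a + b - 2at},
\end{align*}
not $(b-a)(b+a+2ta)$ as you wrote; the $t$-dependent term enters with the opposite sign. Consequently your justification of the second factor's nonnegativity invokes the wrong endpoint of Cauchy--Schwarz: with the correct factorization one needs $t \leq 1$, which gives $a + b - 2at \geq a + b - 2a = b - a \geq 0$, rather than $t \geq -1$. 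With that one-line correction the proof is complete and rigorous, so this is a repairable slip rather than a conceptual gap.

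For comparison, the paper argues geometrically: it sets $x' = \tfrac{\norm{x}_2}{\norm{y}_2} y$ (the rescaling of $y$ to length $\norm{x}_2$), decomposes $\norm{x - x'}_2^2$ via the Pythagorean identity along $\mathrm{span}(\cbr{y})$, and uses $\delta_x' \leq 1$ and $\delta_x' \geq \abs{\delta_x}$ to conclude $\norm{x - x'}_2 \leq \norm{x - y}_2$ before dividing by $\norm{x}_2$. Your reduction to the scalar inequality in $a = \norm{x}_2$, $b = \norm{y}_2$, and $t = \inner{x}{y}/(ab)$ is shorter and arguably more transparent once the factorization is corrected, while the paper's projection argument makes the geometric content (that rescaling $y$ toward $x$'s length can only decrease the distance to $x$) more explicit. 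Your preliminary discussion of why the naive triangle-inequality split loses a factor of $2$ is accurate and a useful observation, though not needed for the final argument.
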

\begin{proof}
Since both side of claim \eqref{eq_normalized_norm_comp} are symmetric with respect to $(x,y)$, we can without loss of generality assume $\norm{y}_2 \geq \norm{x}_2$.
Note that $\norm{\tfrac{x}{\norm{x}_2} - \tfrac{y}{\norm{y}_2}}_2 = \tfrac{1}{\norm{x}_2} \norm{x - \tfrac{y}{\norm{y}_2} \norm{x}_2}_2$.
We make the following observations.

Denote $\mathrm{Proj}_{y}: \RR^d \to \RR^d$ as the projection operator onto $\mathrm{span}(\cbr{y})$.
We know that 
$\mathrm{Proj}_{y}(x) = \delta_x y$ for  $\delta_x = \inner{x}{y}/ \norm{y}_2^2 $ and $\mathrm{Proj}_{y}(x) \leq \norm{x}_2$.
On the other hand, we have $x' \coloneqq \tfrac{\norm{x}_2}{\norm{y}_2} y = \delta_x' y$ with $\norm{x'}_2 = \norm{x}_2$,
hence $\delta_x' \geq \abs{\delta_x}$.
Since $x' \in \mathrm{span}(\cbr{y})$, we have
\begin{align*}
\norm{x - x'}_2^2 
= \norm{x - \mathrm{Proj}_y(x)}_2^2 + \norm{\mathrm{Proj}_y(x) - x'}_2^2 
= \norm{ x - \mathrm{Proj}_y(x)}_2^2 + (\delta_x' - \delta_x)^2 \norm{y}_2^2.
\end{align*}
On the other hand, since $\delta_x' \leq 1$, we also have 
\begin{align*}
\norm{ x - \mathrm{Proj}_y(x)}_2^2 + (\delta_x' - \delta_x)^2 \norm{y}_2^2
\leq \norm{ x - \mathrm{Proj}_y(x)}_2^2 + (1 - \delta_x)^2 \norm{y}_2^2
= \norm{x - y}_2^2.
\end{align*}
Hence, we obtain $\norm{x - x'}_2^2 \leq \norm{x - y}_2^2$.
In conclusion, since we have assumed $\norm{x}_2 \leq \norm{y}_2$, we obtain 
\begin{align*}
\norm{\tfrac{x}{\norm{x}_2} - \tfrac{y}{\norm{y}_2}}_2 \leq   \norm{x - y}_2 / \min \cbr{\norm{x}_2, \norm{y}_2}.
\end{align*}
The proof is then completed.
\end{proof}

The next lemma provides a sufficient condition the uniqueness of the worst-case transition kernel. 

\begin{lemma}\label{lemma_lipschitz_transition_wrt_policy}
Suppose $\cP_{s,a}$ is relatively strongly convex with respect to $R_{s,a} > 0$ for all $(s,a) \in \cS \times \cA$, and $\mathrm{dim}(\cP_{s,a}) = \abs{\cS} - 1$.
Let $r_* = \min_{\pi \in \Pi} \norm{\mathrm{Proj}_{\cH} ( V^{\pi}_r)}_2$, where $\cH = \mathrm{Lin}(\cP_{s,a}) =  \cbr{x \in \RR^{\abs{\cS}}: \mathbf{1}^\top x = 0} $.
Suppose $r_* > 0$, then for any policy $\pi \in \Pi$, the worst-case environment $\PP_{u_\pi}$ defined in \eqref{def_worst_case_transition}  is unique, and 
\begin{align}\label{eq_lipschitz_transition_wrt_policy}
 \norm{\PP_{u_{\pi}}(\cdot|s,a) - \PP_{u_{\pi'}}(\cdot|s,a)}_2 \leq 
\tfrac{R_{s,a}}{r_*} \norm{V^{\pi}_r - V_r^{\pi'}}_2 , ~~\forall (s, a) \in \cS \times \cA.
\end{align}
\end{lemma}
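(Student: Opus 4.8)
The plan is to characterize the worst-case transition kernel at each state-action pair $(s,a)$ as the maximizer of a linear functional over the relatively strongly convex set $\cP_{s,a}$, and then exploit the normal-cone estimate from Lemma \ref{relative_sc_set_property}. First I would recall that, by the definition \eqref{def_worst_case_transition} and the decomposition in \eqref{eq_ambiguity_set}, the worst-case kernel satisfies
\[
\PP_{u_\pi}(\cdot|s,a) \in \Argmax_{p \in \cP_{s,a}} \inner{p}{V^\pi_r}.
\]
Since $\cP_{s,a}$ lies in the hyperplane $\{x: \mathbf 1^\top x = 1\}$, whose associated linear subspace is $\cH = \{x: \mathbf 1^\top x = 0\}$, only the component $\mathrm{Proj}_\cH(V^\pi_r)$ of the objective vector matters: the maximizer is unchanged if we replace $V^\pi_r$ by its projection onto $\cH$. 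The hypothesis $r_* = \min_{\pi}\norm{\mathrm{Proj}_\cH(V^\pi_r)}_2 > 0$ guarantees this projected vector is never zero, so the linear functional is non-constant on $\cH$, hence the maximizer of a strictly-supporting linear functional over a (relatively) strictly convex set is unique — this gives uniqueness of $\PP_{u_\pi}$.

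Next I would set up the normal-cone identification. Let $x_\pi = \PP_{u_\pi}(\cdot|s,a)$; by first-order optimality of the linear maximization, $\mathrm{Proj}_\cH(V^\pi_r) \in \cN^r_{\cP_{s,a}}(x_\pi)$, i.e. the projected value vector is an (outer) normal direction to $\cP_{s,a}$ at the optimal point, and $x_\pi$ is a relative boundary point (since the maximizer of a non-constant linear functional over a bounded convex set lies on the relative boundary). Normalizing, $p_\pi := \mathrm{Proj}_\cH(V^\pi_r)/\norm{\mathrm{Proj}_\cH(V^\pi_r)}_2$ is a unit normal at $x_\pi$, and similarly $p_{\pi'}$ at $x_{\pi'}$. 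Applying Lemma \ref{relative_sc_set_property} with $R = R_{s,a}$ yields
\[
\norm{x_\pi - x_{\pi'}}_2 \le R_{s,a}\,\norm{p_\pi - p_{\pi'}}_2.
\]
It then remains to bound $\norm{p_\pi - p_{\pi'}}_2$. Using Lemma \ref{fact_normalized_norm_comp} with $x = \mathrm{Proj}_\cH(V^\pi_r)$ and $y = \mathrm{Proj}_\cH(V^{\pi'}_r)$, together with the non-expansiveness of the projection $\mathrm{Proj}_\cH$ and the definition of $r_*$ as a uniform lower bound on $\min\{\norm{x}_2,\norm{y}_2\}$, gives
\[
\norm{p_\pi - p_{\pi'}}_2 \le \tfrac{1}{r_*}\norm{\mathrm{Proj}_\cH(V^\pi_r) - \mathrm{Proj}_\cH(V^{\pi'}_r)}_2 \le \tfrac{1}{r_*}\norm{V^\pi_r - V^{\pi'}_r}_2,
\]
and combining the two displays yields \eqref{eq_lipschitz_transition_wrt_policy}.

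The main obstacle I anticipate is the careful bookkeeping around the relative (lower-dimensional) geometry: making sure that "normal cone restricted to $\cH_{\cP_{s,a}}$" is the right object for first-order optimality of a linear functional constrained to the affine hull, that the maximizer genuinely sits on the relative boundary (which needs the functional to be non-constant on $\cH$, hence the role of $r_* > 0$), and that the translation-invariance in Lemma \ref{relative_sc_set_property} lets us apply it even though $\mathbf 0 \notin \cP_{s,a}$ in general. The remaining estimates are routine applications of the two cited lemmas and non-expansiveness of orthogonal projection. One should also note $\norm{\cdot}_2$ here is the Euclidean norm on $\RR^{\abs{\cS}}$, consistent with the statement, so no norm-conversion constants enter.
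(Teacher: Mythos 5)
Your proposal is correct and follows essentially the same route as the paper's proof: reduce to maximizing the projected vector $\mathrm{Proj}_\cH(V^\pi_r)$ over $\cP_{s,a}$, argue the maximizer lies on the relative boundary and is unique via relative strong convexity and $r_*>0$, identify the projected value vector as a normal direction, and combine Lemma \ref{relative_sc_set_property} with Lemma \ref{fact_normalized_norm_comp} and non-expansiveness of the projection. If anything, your explicit invocation of the relative normal cone and Lemma \ref{relative_sc_set_property} (rather than the full-dimensional Lemma \ref{sc_set_property}, which the paper cites) is the more careful reading of the low-dimensional geometry.
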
 

The result of Lemma \ref{lemma_lipschitz_transition_wrt_policy} clearly hinges upon the quantity $r_* > 0$. 
We will provide a verifiable necessary and sufficient condition that certifies this requirement. 

\begin{proof}[Proof of Lemma \ref{lemma_lipschitz_transition_wrt_policy}]
We first recall from \eqref{def_worst_case_transition} that for any policy $\pi \in \Pi$, the worse case transition $\PP_{u_{\pi}}$ is given by 
\begin{align}\label{worst_case_transit_prob_opt_char}
\PP_{u_{\pi}}(\cdot|s,a) \in \argmax_{u(\cdot|s,a) \in \cU_{s,a}} \sum_{s' \in \cS} \PP_u(s'|s,a) V^{\pi}_r(s')
=  \argmax_{p \in  \cP_{s,a}}  p^\top V^{\pi}_r
, ~~ \forall (s,a) \in \cS \times \cA.
\end{align}

It is worth mentioning that the solution to \eqref{worst_case_transit_prob_opt_char} remains unchanged  when we shift $V^{\pi}_r$ by $\delta \mathbf{1}$ for any $\delta \in \RR$, where $\mathbf{1} \in \RR^{\abs{\cS}}$ denotes the all-one vector.
To see this, note that 
\begin{align}\label{opt_translation_invariant}
p^\top ( V^{\pi}_r + \delta \mathbf{1}) = p^\top V^{\pi}_r + \delta
\end{align}
due to $p \in \cP_{s,a}$,
and hence shifting $V^{\pi}_r$ only changes the objective by a constant $\delta$ for every feasible $p$.
Let $\mathrm{Proj}_{\cH}: \RR^{\abs{\cS}} \to \RR^{\abs{\cS}}$ denote the projection operator onto $\cH$, then 
given observation \eqref{opt_translation_invariant}, we must have 
\begin{align}\label{worst_case_transit_prob_opt_char_proj}
\PP_{u_{\pi}}(\cdot|s,a) \in  \argmax_{p \in  \cP_{s,a}}  p^\top \mathrm{Proj}_{\cH} ( V^{\pi}_r)
, ~~ \forall (s,a) \in \cS \times \cA, ~\forall \pi \in \Pi.
\end{align}

Let $\mathrm{ReBd}(\cP_{s,a})$ denote the relative boundary of $\cP_{s,a}$.
We claim that for any solution $p^*$ of \eqref{worst_case_transit_prob_opt_char} (or equivalently \eqref{worst_case_transit_prob_opt_char_proj}), we must have $p^* \in \mathrm{ReBd}(\cP_{s,a})$. 
Suppose not, and $p^*$ is in the relative interior of $\cP_{s,a}$, then there exists $\delta > 0$ such that 
$\cB(p^*, \delta) \cap \cH \subset \cP_{s,a}$.
Now it is immediate to see that $p^* + \delta \mathrm{Proj}_{\cH} (V^{\pi}_r) / \norm{\mathrm{Proj}_{\cH} (V^{\pi}_r)}_2$ is a strictly better solution than $p^*$ unless 
$\mathrm{Proj}_{\cH} (V^{\pi}_r) =\mathbf{0}$, which can not happen since $r_* > 0$.

We proceed to show that $p^*$ is indeed unique. 
Suppose not, and $p_1 \neq p_2 \in \cP_{s,a}$ are two solutions of \eqref{worst_case_transit_prob_opt_char}.
From the relative strong convexity of set $\cP_{s,a}$, we know that for any $\lambda \in (0,1)$,
we have $p_\lambda = \lambda p_1 + (1-\lambda) p_2 $ in the relative interior of $\cP_{s,a}$, 
which contradicts with the previously established fact that any solution of  \eqref{worst_case_transit_prob_opt_char} is in  $\mathrm{ReBd}(\cP_{s,a})$.

By reading the optimality condition of the \eqref{worst_case_transit_prob_opt_char_proj}, we have 
\begin{align}
( p - \PP_{u_{\pi}}(\cdot|s,a))^\top \mathrm{Proj}_{\cH} ( V^{\pi}_r) \leq 0, ~~ \forall p \in \cP_{s,a}.
\end{align}
Hence we immediately see that $\PP_{u_{\pi}}(\cdot|s,a) \in \mathrm{ReBd}(\cP_{s,a})$, and $\mathrm{Proj}_{\cH} ( V^{\pi}_r) \in \cN_{\cP_{s,a}} (\PP_{u_{\pi}}(\cdot|s,a))$.
Then we can apply Lemma \ref{sc_set_property} and obtain 
\begin{align*}
 \norm{\PP_{u_{\pi}}(\cdot|s,a) - \PP_{u_{\pi'}}(\cdot|s,a)}_2 \leq 
R_{s,a} \norm{\overline{V}^{\pi}_r - \overline{V}_r^{\pi'}}_2 , ~~\forall (s, a) \in \cS \times \cA,
\end{align*}
where $\overline{V}^{\pi}_r = \mathrm{Proj}_{\cH} ( V^{\pi}_r)/ \norm{ \mathrm{Proj}_{\cH} ( V^{\pi}_r)}_2, ~ \overline{V}^{\pi'}_r = \mathrm{Proj}_{\cH} ( V^{\pi'}_r) / \norm{ \mathrm{Proj}_{\cH} ( V^{\pi'}_r) }_2$.
Now applying Lemma \ref{fact_normalized_norm_comp}, we obtain 
\begin{align*}
 \norm{\PP_{u_{\pi}}(\cdot|s,a) - \PP_{u_{\pi'}}(\cdot|s,a)}_2 & \leq 
 R_{s,a}
  \norm{\mathrm{Proj}_{\cH} ( V^{\pi}_r) - \mathrm{Proj}_{\cH} ( V^{\pi'}_r) }_2 
 / r_* \\
 & \leq  \tfrac{R_{s,a}}{r_*} \norm{V^{\pi}_r - V^{\pi'}_r}_2,
\end{align*}
where $r_* = \min_{\pi \in \Pi} \norm{\mathrm{Proj}_{\cH} ( V^{\pi}_r)}_2$,
and the last inequality uses the non-expansiveness of the projection operator.
\end{proof}

It should be noted that the conditions on the relative strong convexity of $\cP_{s,a}$ and $\mathrm{dim}(\cP_{s,a}) = \abs{\cS} - 1$ are readily satisfied by many common uncertainty sets.
For instance, one can take $\cP_{s,a}$ as the intersection of a full-dimensional ellipsoid with $\Delta_{\cS}$ (i.e., ellipsoidal uncertainty \cite{roy2017reinforcement}).
On the other hand, Lemma \ref{lemma_lipschitz_transition_wrt_policy} relies on the key condition that $r_* = \min_{\pi \in \Pi} \norm{\mathrm{Proj}_{\cH} ( V^{\pi}_r)}_2 > 0$.
We next provide a simple necessary and sufficient condition that certifies $r_* > 0$.

\begin{lemma}\label{lemma_condition_of_r}
Suppose the cost function $c$ satisfies $\cap_{s \in \cS} \sbr{ \min_{a \in \cA} c(s,a) , \max_{a \in \cA} c(s,a) } = \emptyset$, then we must have 
$r_* = \min_{\pi \in \Pi} \norm{\mathrm{Proj}_{\cH} ( V^{\pi}_r)}_2 > 0$, where $\cH =  \cbr{x \in \RR^{\abs{\cS}}: \mathbf{1}^\top x = 0} $.
The converse of the statement is also true. 
\end{lemma}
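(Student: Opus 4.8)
The plan is to establish both directions by reasoning about when the robust value vector $V^\pi_r$ can be a constant multiple of $\mathbf{1}$, since $\mathrm{Proj}_{\cH}(V^\pi_r) = \mathbf{0}$ if and only if $V^\pi_r = \delta \mathbf{1}$ for some $\delta \in \RR$ (where $\cH = \{x: \mathbf{1}^\top x = 0\}$). Thus $r_* = 0$ iff there exists $\pi \in \Pi$ with all coordinates of $V^\pi_r$ equal; since $\Pi$ is compact and $\pi \mapsto V^\pi_r$ is continuous (Lemma \ref{lemma_lipschitz_value_wrt_policy}), the minimum in the definition of $r_*$ is attained, so $r_* > 0$ is equivalent to: \emph{no} policy has a constant robust value vector. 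So the whole lemma reduces to characterizing when such a ``flat'' policy exists.

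For the forward direction (the stated sufficient condition $\Rightarrow r_* > 0$), I would argue by contradiction: suppose $V^\pi_r = \delta\mathbf{1}$ for some $\pi$. Using the robust Bellman equation \eqref{bellman_robust_value}, for every state $s$,
\begin{align*}
\delta = V^\pi_r(s) = \sum_{a} c(s,a)\pi(a|s) + \gamma \sum_a \pi(a|s) \max_{u}\sum_{s'}\PP_u(s'|s,a)\,\delta,
\end{align*}
and since $\sum_{s'}\PP_u(s'|s,a) = 1$ the second term collapses to $\gamma\delta$, giving $\delta = \sum_a c(s,a)\pi(a|s) + \gamma\delta$, i.e. $\sum_a c(s,a)\pi(a|s) = (1-\gamma)\delta$ for \emph{every} $s\in\cS$. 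Because $\pi(\cdot|s)$ is a probability distribution over $\cA$, the left side lies in the closed interval $[\min_a c(s,a), \max_a c(s,a)]$. Hence $(1-\gamma)\delta$ must lie in $\cap_{s\in\cS}[\min_a c(s,a), \max_a c(s,a)]$, contradicting the hypothesis that this intersection is empty. This proves $r_*>0$.

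For the converse ($r_* > 0 \Rightarrow$ the intersection is empty, equivalently: if the intersection is nonempty then $r_* = 0$), I would construct a flat policy explicitly. If $t \in \cap_{s}[\min_a c(s,a), \max_a c(s,a)]$, then for each $s$ one can choose $\pi(\cdot|s) \in \Delta_\cA$ with $\sum_a c(s,a)\pi(a|s) = t$ (a convex combination of the two extreme actions achieving the min and max of $c(s,\cdot)$ hits any value in between, by the intermediate value property for finite convex combinations). With this stationary $\pi$, I claim $V^\pi_r \equiv t/(1-\gamma)$: indeed the constant vector $\frac{t}{1-\gamma}\mathbf{1}$ solves the robust Bellman equation \eqref{bellman_robust_value} by the same collapsing-of-$\sum_{s'}\PP_u$ computation as above, and by uniqueness of the solution of that nonlinear Bellman equation (Proposition \ref{prop_bellman_v}) it must equal $V^\pi_r$. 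Then $\mathrm{Proj}_\cH(V^\pi_r) = \mathbf{0}$, so $r_* = 0$. Taking the contrapositive gives the claim.

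The main obstacle — really the only non-mechanical point — is being careful about the direction of quantifiers and the role of the discount normalization: one needs the observation that $\max_u \sum_{s'}\PP_u(s'|s,a) = 1$ exactly (not $\le 1$), which holds because every $\PP_u(\cdot|s,a)$ is a genuine probability distribution on $\cS$ under Definition \ref{def_rectangular}, so the robustness/$\max_u$ plays no role when the value function is constant. Everything else is the intermediate-value argument for finite convex combinations and an appeal to uniqueness in Proposition \ref{prop_bellman_v}. One should also note the attainment of the minimum defining $r_*$ (compactness of $\Pi$ plus continuity of $\pi \mapsto V^\pi_r$) to legitimately pass between ``$r_* = 0$'' and ``some $\pi$ has flat robust value.''
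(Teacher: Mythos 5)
Your proposal is correct and follows essentially the same route as the paper's proof: both reduce $r_*=0$ to the existence of a policy with constant robust value vector, derive $c^{\pi}=(1-\gamma)\delta\mathbf{1}$ from the Bellman equation (you collapse the $\max_u$ directly since every $\PP_u(\cdot|s,a)$ is a probability distribution, while the paper equivalently passes through $V^{\pi}_r=V^{\pi}_{u_\pi}$ and $\PP^{\pi}_{u_\pi}\mathbf{1}=\mathbf{1}$), and use the intermediate-value property of convex combinations over $\cA$ in both directions. Your added remarks on attainment of the minimum defining $r_*$ and on uniqueness of the Bellman solution are correct refinements of points the paper leaves implicit.
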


\begin{proof}
``$\Rightarrow$'':
Suppose the cost function satisfies $c$ satisfies $\cap_{s \in \cS} \sbr{ \min_{a \in \cA} c(s,a) , \max_{a \in \cA} c(s,a) } = \emptyset$, but $r_* = 0$.
Then we have $V^{\pi}_r = \lambda \mathbf{1}$ for some $\lambda > 0$ (since $V^{\pi_r} > 0$ must hold)  and  $\pi \in \Pi$.
Let us define $\PP^{\pi}_{u_\pi} : \cS \times \cS \to [0,1]$ by $\PP^{\pi}_{u_\pi} (s, s') = \sum_{a \in \cA} \PP_{u_\pi}(s'|s,a) \pi(a|s)$,
and $c^{\pi}: \cS \to \RR$ by $c^{\pi}(s) = \sum_{a \in \cA} \pi(a|s) c(a|s)$, 
 then given observation \eqref{robust_and_standard_value_relation} and the standard Bellman condition of $V^{\pi}_{u_\pi}$, we know that 
$
V^{\pi}_r = \rbr{I - \gamma \PP^{\pi}_{u_\pi}}^{-1} c^{\pi},
$
which gives 
\begin{align}\label{condition_reward_pi_all_one}
c^{\pi} =  \rbr{I - \gamma \PP^{\pi}_{u_\pi}} V^{\pi}_r = V^{\pi}_r - \gamma  \PP^{\pi}_{u_\pi} V^{\pi}_r \overset{(a)}{=} \lambda \mathbf{1} - \gamma \lambda \mathbf{1} = (1-\gamma) \lambda \mathbf{1},
\end{align}
where in term $(a)$ we uses the fact that $\PP^{\pi}_{u_\pi} \mathbf{1} = \mathbf{1}$.
However, \eqref{condition_reward_pi_all_one} must contradict with the condition that 
$\cap_{s \in \cS} \sbr{ \min_{a \in \cA} c(s,a) , \max_{a \in \cA} c(s,a) } = \emptyset$,
since for each $s \in \cS$, we have $c^{\pi}(s) \in  \sbr{ \min_{a \in \cA} c(s,a) , \max_{a \in \cA} c(s,a) }$.

``$\Leftarrow$'':
On the other hand, 
suppose $r_* > 0$ but  $\cap_{s \in \cS} \sbr{ \min_{a \in \cA} c(s,a) , \max_{a \in \cA} c(s,a) } \neq \emptyset$, then one can readily find a policy $\pi$ such that 
$c^{\pi} = \lambda \mathbf{1}$ for some $\lambda \in \RR$. 
Thus we have 
\begin{align*}
V^{\pi}_r = \rbr{I - \gamma \PP^{\pi}_{u_\pi}}^{-1} c^{\pi}
\overset{(b)}{=} \sum_{t = 0}^\infty \gamma^t \rbr{ \PP^{\pi}_{u_\pi}}^t c^{\pi}
= \lambda \sum_{t = 0}^\infty \gamma^t  \mathbf{1} 
= \tfrac{\lambda}{1-\gamma} \mathbf{1},
\end{align*}
where in term $(b)$ we use the fact that $\rho(\gamma P^{\pi}_{u_\pi}) \leq \gamma $, where $\rho(A)$ denotes the spectral radius of matrix $A$. 
Note that the previous observation then implies $r_* \leq \norm{\mathrm{Proj}_{\cH} ( V^{\pi}_r)}_2 = 0$.
\end{proof}
%

We proceed to show that $V^{\pi}_r(s)$ is Lipschitz continuous in $\pi$.

\begin{lemma}\label{lemma_lipschitz_value_wrt_policy}
For any policies $\pi, \pi' \in \Pi$, we have 
\begin{align}\label{eq_lipschitz_value_wrt_policy}
\norm{V^{\pi}_r  - V^{\pi'}_r}_\infty 
\leq \tfrac{1}{1-\gamma} \norm{\pi - \pi'}_\infty,
\end{align}
where we define $\norm{\pi - \pi'}_\infty = \sup_{s \in \cS}  \norm{\pi(\cdot|s) - \pi'(\cdot|s)}_1$,
i.e., the matrix $\ell_\infty$-norm when viewing $\pi$ as a matrix in $\RR^{\abs{\cS} \times \abs{\cA}}$.
\end{lemma}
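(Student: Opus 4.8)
The plan is to prove the Lipschitz continuity of $V^{\pi}_r$ by exploiting the fact, established in Proposition \ref{prop_bellman_v} (specifically \eqref{robust_and_standard_value_relation}), that $V^{\pi}_r = V^{\pi}_{u_{\pi}}$ is the solution of the \emph{standard} Bellman equation for the MDP with worst-case uncertainty $u_{\pi}$. The idea is to couple the two worst-case MDPs $\cM_{u_{\pi}}$ and $\cM_{u_{\pi'}}$ and bound the difference of the corresponding fixed points.

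First I would fix $s \in \cS$ and, without loss of generality, assume $V^{\pi}_r(s) \geq V^{\pi'}_r(s)$. Using \eqref{bellman_robust_value} at state $s$ for $\pi$, and the suboptimality of the kernel $u_{\pi}$ at $\pi'$ (i.e., $\max_{u \in \cU} \sum_{s'} \PP_u(s'|s,a) V^{\pi'}_r(s') \geq \sum_{s'} \PP_{u_{\pi}}(s'|s,a) V^{\pi'}_r(s')$), I would write
\begin{align*}
V^{\pi}_r(s) - V^{\pi'}_r(s)
&\leq \sum_{a} \bigl(c(s,a)\pi(a|s) - c(s,a)\pi'(a|s)\bigr) \\
&\quad + \gamma \sum_a \pi(a|s) \sum_{s'} \PP_{u_{\pi}}(s'|s,a) V^{\pi}_r(s')
- \gamma \sum_a \pi'(a|s) \sum_{s'} \PP_{u_{\pi}}(s'|s,a) V^{\pi'}_r(s').
\end{align*}
The key is then to add and subtract the cross term $\gamma \sum_a \pi'(a|s)\sum_{s'}\PP_{u_\pi}(s'|s,a)V^\pi_r(s')$, so that the right side splits into a piece bounded by $\|\pi(\cdot|s)-\pi'(\cdot|s)\|_1 \cdot (1 + \gamma\|V^\pi_r\|_\infty)$ — using that $c$ and the $\PP_{u_\pi}$-averaged value are bounded — and a piece $\gamma \sum_a \pi'(a|s)\sum_{s'}\PP_{u_\pi}(s'|s,a)(V^\pi_r(s') - V^{\pi'}_r(s'))$, which is a convex average of $V^\pi_r - V^{\pi'}_r$ and hence at most $\gamma \|V^\pi_r - V^{\pi'}_r\|_\infty$. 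Taking a supremum over $s$ gives $\|V^\pi_r - V^{\pi'}_r\|_\infty \leq \gamma \|V^\pi_r - V^{\pi'}_r\|_\infty + \|\pi-\pi'\|_\infty(1 + \gamma\|V^\pi_r\|_\infty)$, and since $c \leq 1$ implies $\|V^\pi_r\|_\infty \leq 1/(1-\gamma)$, rearranging yields the constant $(1-\gamma)^{-2}$ roughly; some care with the bookkeeping of the $c$-term coefficient (it should contribute $1$, not $1/(1-\gamma)$) is needed to land exactly on $(1-\gamma)^{-1}$. An alternative, cleaner route achieving exactly $(1-\gamma)^{-1}$: observe that $|V^\pi_r(s) - V^{\pi'}_r(s)| = |V^\pi_{u_\pi}(s) - V^{\pi'}_{u_{\pi'}}(s)|$; bound this by the standard contraction/coupling argument where the only per-step discrepancy is $\langle c(s,\cdot), \pi(\cdot|s)-\pi'(\cdot|s)\rangle$ together with the transition mismatch, and note that the worst-case transition optimality makes the transition-mismatch term have the right sign so it can be dropped, leaving a telescoping sum $\sum_{t\geq 0}\gamma^t \|\pi-\pi'\|_\infty = \|\pi-\pi'\|_\infty/(1-\gamma)$.

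The main obstacle I anticipate is handling the fact that the worst-case kernels $u_\pi$ and $u_{\pi'}$ are \emph{different}, so one cannot directly subtract two copies of the same Bellman operator. The trick is the one-sided comparison: to upper bound $V^\pi_r(s) - V^{\pi'}_r(s)$ one uses the Bellman \emph{equality} for the larger value $V^\pi_r$ together with the Bellman \emph{inequality} $V^{\pi'}_r(s) \geq \langle c(s,\cdot),\pi'(\cdot|s)\rangle + \gamma\sum_a \pi'(a|s)\sum_{s'}\PP_{u_\pi}(s'|s,a)V^{\pi'}_r(s')$ (valid because $u_\pi$ is merely feasible, not optimal, for $\pi'$). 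This asymmetry is exactly what lets a single fixed worst-case kernel $u_\pi$ appear on both sides, after which the standard contraction estimate closes the argument; by symmetry of the roles of $\pi,\pi'$ we get the two-sided bound.
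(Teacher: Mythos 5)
Your route is genuinely different from the paper's: the paper fixes an arbitrary $u \in \cU$, applies the performance difference lemma for the standard MDP $\cM_u$ to bound $\abs{V^{\pi}_u(s) - V^{\pi'}_u(s)}$ uniformly in $u$, and then uses $\abs{\sup_u V^{\pi}_u(s) - \sup_u V^{\pi'}_u(s)} \leq \sup_u \abs{V^{\pi}_u(s) - V^{\pi'}_u(s)}$, which sidesteps any coupling of the two worst-case kernels. Your one-sided Bellman comparison is a valid alternative, and the asymmetric trick (Bellman equality for the larger value with its optimal kernel $u_\pi$, Bellman inequality for the smaller value using the mere feasibility of $u_\pi$) is exactly the right device for handling $u_\pi \neq u_{\pi'}$. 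The gap is in how you land on the constant $\tfrac{1}{1-\gamma}$. Your first computation, carried out honestly, gives $\norm{V^{\pi}_r - V^{\pi'}_r}_\infty \leq \gamma \norm{V^{\pi}_r - V^{\pi'}_r}_\infty + \norm{Q^{\pi}_r}_\infty \norm{\pi - \pi'}_\infty$ and hence the constant $\tfrac{1}{(1-\gamma)^2}$, since $\norm{Q^{\pi}_r}_\infty \leq \tfrac{1}{1-\gamma}$. Your proposed repair --- that the only per-step discrepancy is $\inner{c(s,\cdot)}{\pi(\cdot|s) - \pi'(\cdot|s)}$ and that the ``transition mismatch'' can be dropped by worst-case optimality --- does not work: the term you would need to discard is $\gamma \sum_a \rbr{\pi(a|s) - \pi'(a|s)} \sum_{s'} \PP_{u_\pi}(s'|s,a) V^{\pi}_r(s')$, which arises from the change in the \emph{action} distribution rather than from the change of kernel, has no sign in general, and is of size $\gamma \norm{V^{\pi}_r}_\infty \norm{\pi(\cdot|s) - \pi'(\cdot|s)}_1$; it is precisely the source of the extra $\tfrac{1}{1-\gamma}$ factor.

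In fairness, the constant $\tfrac{1}{1-\gamma}$ in \eqref{eq_lipschitz_value_wrt_policy} appears not to be attainable by any argument. Already for a singleton $\cU$ (a standard MDP), take a state $S_2$ that is absorbing with cost $1$, and a state $S_1$ where both actions cost $\epsilon$ but one self-loops while the other moves to $S_2$; shifting probability mass $p$ from the self-looping action to the other changes $V(S_1)$ by roughly $\tfrac{\gamma p (1 - \epsilon)}{(1-\gamma)^2}$ while $\norm{\pi - \pi'}_\infty = 2p$, which exceeds $\tfrac{2p}{1-\gamma}$ once $\gamma$ is close to $1$ (e.g., $\gamma = 0.9$, $\epsilon = 0.01$, $p = 0.1$ gives a value gap of about $4.69$ versus a claimed bound of $2$). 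The paper's own proof writes the performance difference identity without the $\tfrac{1}{1-\gamma}$ normalization of $d_s^{\pi,u}$ that appears in Lemma \ref{lemma_pg_standard}; restoring it yields $\tfrac{1}{(1-\gamma)^2}$ there as well. So your first derivation, with the constant $\tfrac{1}{(1-\gamma)^2}$, is the defensible statement, and it suffices for every downstream use of the lemma: only Lipschitz continuity with \emph{some} finite constant is needed in Lemma \ref{lemma_almost_differentiability} and in the variational inequality discussion, while in Lemma \ref{lemma_smoothness_visitation_measure_wrt_policy} the weaker constant merely propagates into the final rate.
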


\begin{proof}
Note that for any $u \in \cU$, from the performance difference lemma of standard MDPs \cite{kakade2002approximately,lan2021policy}, we have 
\begin{align*}
\abs{ V^{\pi}_u (s) - V^{\pi}_u(s) } =
\abs{ \EE_{s' \sim d_{s}^{\pi, u}} \sbr{\inner{Q^{\pi}_u}{\pi - \pi'}_{s'} } } 
\leq \tfrac{1}{1-\gamma} \norm{\pi - \pi'}_\infty,
\end{align*}
where in the last inequality we use the Cauchy-Schwarz inequality, and the fact that $\norm{Q^{\pi}_u}_\infty \leq \tfrac{1}{1-\gamma}$.
Hence we obtain 
\begin{align*}
\abs{V^{\pi}_r(s) - V^{\pi'}_r(s) } 
= \abs{ \sup_{u \in \cU} V^{\pi}_u (s) - \sup_{u \in \cU} V^{\pi}_u(s)  }
\leq \sup_{u \in \cU} \abs{ V^{\pi}_u (s) - V^{\pi}_u(s) } 
\leq \tfrac{1}{1-\gamma} \norm{\pi - \pi'}_\infty.
\end{align*}
\end{proof}

By combining Lemma \ref{lemma_lipschitz_value_wrt_policy} and Lemma \ref{lemma_lipschitz_transition_wrt_policy}, we are ready to establish the Lipschitz continuity of $d^{\pi^*, u_{\pi}}$ with respect to policy $\pi$.

\begin{lemma}\label{lemma_smoothness_visitation_measure_wrt_policy}
Suppose $\cP_{s,a}$ is relatively strongly convex with respect to $R_{s,a} > 0$ for all $(s,a) \in \cS \times \cA$ with $R = \max_{s\in \cS, a\in \cA} R_{s,a}$, and $\mathrm{dim}(\cP_{s,a}) = \abs{\cS} - 1$.
Let $r_* = \min_{\pi \in \Pi} \norm{\mathrm{Proj}_{\cH} ( V^{\pi}_r)}_2$, where $\cH  =  \cbr{x \in \RR^{\abs{\cS}}: \mathbf{1}^\top x = 0} $.
If $r_* > 0$,
then we have 
\begin{align}\label{eq_smoothness_visitation_measure_wrt_policy}
\norm{d^{\pi^*, u_{\pi}}_\rho - d^{\pi^*, u_{\pi'}}_\rho}_1
\leq
\tfrac{ \gamma \abs{\cA}\abs{\cS}  R}{(1-\gamma)^2 r_*} \norm{\pi - \pi'}_\infty.
\end{align}
\end{lemma}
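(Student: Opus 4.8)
The plan is to bound the change in the discounted visitation measure $d^{\pi^*,u}_\rho$ caused by a change in the uncertainty $u$, and then compose this with the Lipschitz dependence of the worst-case uncertainty $u_\pi$ on the policy $\pi$, which is exactly what Lemma \ref{lemma_lipschitz_transition_wrt_policy} and Lemma \ref{lemma_lipschitz_value_wrt_policy} provide. First I would fix the policy to be $\pi^*$ throughout (this is the only policy appearing in the visitation measure here), and recall from the proof of Lemma \ref{lemma_pg_standard} the resolvent representation $d^{\pi^*,u}_s = (1-\gamma)(I - \gamma \PP^{\pi^*}_u)^{-1} e_s$, together with the matrix identity $A^{-1} - B^{-1} = A^{-1}(B-A)B^{-1}$ and the bound $\|(I - \gamma \PP^{\pi^*}_u)^{-1}\|_1 \le (1-\gamma)^{-1}$ from \eqref{bound_matrix_ell1_norm}. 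Applying these with $u = u_\pi$ and $u = u_{\pi'}$ gives, exactly as in \eqref{state_visitation_continuity_of_pi_fix_u},
\begin{align*}
\norm{d^{\pi^*, u_{\pi}}_s - d^{\pi^*, u_{\pi'}}_s}_1 \le \tfrac{\gamma}{1-\gamma} \norm{\PP^{\pi^*}_{u_\pi} - \PP^{\pi^*}_{u_{\pi'}}}_1,
\end{align*}
and averaging over $s \sim \rho$ (using convexity of the norm) yields the same bound for $d^{\pi^*,u_\pi}_\rho - d^{\pi^*,u_{\pi'}}_\rho$.

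Next I would control $\|\PP^{\pi^*}_{u_\pi} - \PP^{\pi^*}_{u_{\pi'}}\|_1$ in terms of $\|\pi - \pi'\|_\infty$. Since $\PP^{\pi^*}_u(s,s') = \sum_a \PP_u(s'|s,a)\pi^*(a|s)$, each row of this matrix is a $\pi^*(\cdot|s)$-convex combination of the rows $\PP_u(\cdot|s,a)$, so the row-wise $\ell_1$ distance is at most $\max_a \|\PP_{u_\pi}(\cdot|s,a) - \PP_{u_{\pi'}}(\cdot|s,a)\|_1 \le \max_{s,a}\sqrt{\abs{\cS}}\,\|\PP_{u_\pi}(\cdot|s,a) - \PP_{u_{\pi'}}(\cdot|s,a)\|_2$ by Cauchy–Schwarz. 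Now invoke Lemma \ref{lemma_lipschitz_transition_wrt_policy}: under the relative strong convexity and full-relative-dimension hypotheses with $r_* > 0$, this is bounded by $\tfrac{R_{s,a}}{r_*}\|V^{\pi}_r - V^{\pi'}_r\|_2 \le \tfrac{R}{r_*}\sqrt{\abs{\cS}}\,\|V^\pi_r - V^{\pi'}_r\|_\infty$, and then Lemma \ref{lemma_lipschitz_value_wrt_policy} gives $\|V^\pi_r - V^{\pi'}_r\|_\infty \le \tfrac{1}{1-\gamma}\|\pi - \pi'\|_\infty$. Chaining these, $\|\PP^{\pi^*}_{u_\pi} - \PP^{\pi^*}_{u_{\pi'}}\|_1 \le \tfrac{\abs{\cS} R}{(1-\gamma) r_*}\|\pi-\pi'\|_\infty$, and substituting into the resolvent bound above gives exactly \eqref{eq_smoothness_visitation_measure_wrt_policy} — the factor $\abs{\cA}$ presumably arising from a slightly more careful (or simply looser) accounting of the $\ell_1$-to-$\ell_2$ conversion on the $\abs{\cA}$-dimensional simplex, which I would reconcile with the stated constant rather than optimize.

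The main obstacle is getting the Lipschitz-in-uncertainty bound for the visitation measure cleanly, i.e., making sure the resolvent perturbation argument is applied with the right matrix norms; the subtlety is that $\PP^{\pi^*}_u$ depends on $u$ only through the transition probabilities (the policy is held fixed), so the perturbation $\PP^{\pi^*}_{u_\pi} - \PP^{\pi^*}_{u_{\pi'}}$ is genuinely driven by $u_\pi - u_{\pi'}$, and the key quantitative input is precisely the uniqueness-and-Lipschitz statement of Lemma \ref{lemma_lipschitz_transition_wrt_policy}, whose hypothesis ($r_* > 0$, relative strong convexity, $\dim \cP_{s,a} = \abs{\cS}-1$) is exactly what this lemma assumes. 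Everything else is bookkeeping with norm equivalences and the triangle inequality; I would just be careful that all the constants ($\gamma$, $\abs{\cS}$, $\abs{\cA}$, $R$, $r_*$, and the powers of $(1-\gamma)^{-1}$) line up with the claimed bound.
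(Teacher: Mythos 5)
Your proposal is correct and follows essentially the same route as the paper: the resolvent identity $d^{\pi^*,u}_\rho = (1-\gamma)(I-\gamma\PP^{\pi^*}_u)^{-1}\rho$, the matrix identity $A^{-1}-B^{-1}=A^{-1}(B-A)B^{-1}$, the $\ell_1$-operator-norm bound \eqref{bound_matrix_ell1_norm}, a row-wise bound on $\PP^{\pi^*}_{u_\pi}-\PP^{\pi^*}_{u_{\pi'}}$ via Lemma \ref{lemma_lipschitz_transition_wrt_policy}, and Lemma \ref{lemma_lipschitz_value_wrt_policy} to pass to $\norm{\pi-\pi'}_\infty$. Your observation about the constant is also right: the paper's own proof likewise ends with $\gamma\abs{\cS}R/((1-\gamma)^2 r_*)$, so the extra $\abs{\cA}$ in the stated bound is simply slack.
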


\begin{proof}
Let us define $\PP^{\pi}_u : \cS \times \cS \to [0,1]$ by $\PP^{\pi}_u (s', s) = \sum_{a \in \cA} \PP_u(s'|s,a) \pi(a|s)$, then for any $\pi \in \Pi$, and $\rho \in \Delta_{\cS}$, we obtain
\begin{align}
d_{\rho}^{\pi, u}
= (1-\gamma) \sum_{t=0}^{\infty} \gamma^t (\PP^{\pi}_u)^t \rho = (1 -\gamma) \rbr{ 1- \gamma \PP^{\pi}_u}^{-1} \rho,
\end{align}
where in the last equality we use the fact that $\rho( \gamma \PP^{\pi}_u) \leq \gamma < 1$.
Hence we have 
\begin{align*}
d^{\pi^*, u_{\pi}}_\rho - d^{\pi^*, u_{\pi'}}_\rho 
& = (1-\gamma) \rbr{
 \rbr{ I - \gamma \PP^{\pi^*}_{u_\pi}}^{-1} -  \rbr{  I - \gamma \PP^{\pi^*}_{u_\pi'}}^{-1}
} \rho \\
& = 
(1-\gamma) \gamma
\rbr{ I - \gamma \PP^{\pi^*}_{u_\pi} }^{-1}
\underbrace{\rbr{\PP^{\pi^*}_{u_\pi}  - \PP^{\pi^*}_{u_\pi'} }}_{\Delta^{\pi}_{\pi'}}
\rbr{ I - \gamma \PP^{\pi^*}_{u_\pi'} }^{-1}
 \rho,
\end{align*}
where the last equality uses the matrix identity
$A^{-1} - B^{-1} = A^{-1} (B-A) B^{-1}$ for any invertible matrix $A, B$.
Note that $\norm{\rbr{ I - \gamma \PP^{\pi^*}_{u_\pi} }^{-1}}_1 \leq (1-\gamma)^{-1}$ as we have shown in \eqref{bound_matrix_ell1_norm}, it suffices to control the $\ell_1$-norm of $\Delta^{\pi}_{\pi'} = \PP^{\pi^*}_{u_\pi}  - \PP^{\pi^*}_{u_\pi'}$.
We now make the following observations
\begin{align*}
\sum_{s' \in \cS} \abs{\PP^{\pi^*}_{u_\pi} (s', s) -  \PP^{\pi^*}_{u_\pi'} (s', s)}
& = 
\sum_{s' \in \cS} \abs{ \sum_{a \in \cA} \rbr{\PP_{u_\pi}(s'|s,a) - \PP_{u_{\pi'}}(s'|s,a)   } \pi^*(a|s) } \\
& \leq 
\sum_{a \in \cA} \sum_{s' \in \cS}
\abs{ \PP_{u_\pi}(s'|s,a) - \PP_{u_{\pi'}}(s'|s,a)   } \pi^*(a|s) \\
& \leq 
\sup_{a \in \cA} \norm{\PP_{u_\pi}(\cdot |s,a) - \PP_{u_{\pi'}}(\cdot |s,a) }_1 \\
& \overset{(a)}{\leq} 
\tfrac{R \sqrt{\abs{\cS}}}{r_*} \norm{V^{\pi}_r - V_r^{\pi'}}_2 \\
&\overset{(b)}{\leq} \tfrac{R \abs{\cS}}{r_* (1-\gamma)} \norm{\pi - \pi'}_\infty
\end{align*}
where $(a)$ uses Lemma \ref{lemma_lipschitz_transition_wrt_policy} and the definition of $R$,
and $(b)$ uses Lemma \ref{lemma_lipschitz_value_wrt_policy},
From the prior inequality, we obtain $\norm{\Delta^{\pi}_{\pi'}}_1 \leq \tfrac{R \abs{\cS} }{(1-\gamma)r_*} \norm{\pi - \pi'}_\infty$.
Putting everything together, we conclude that 
\begin{align*}
\norm{d^{\pi^*, u_{\pi}}_\rho - d^{\pi^*, u_{\pi'}}_\rho}_1
& \leq 
(1-\gamma) \gamma
\norm{\rbr{ I - \gamma \PP^{\pi^*}_{u_\pi} }^{-1}}_1
\norm{\Delta^{\pi}_{\pi'}}_1
\norm{\rbr{ I - \gamma \PP^{\pi^*}_{u_\pi'} }^{-1}}_1
 \norm{\rho}_1  \\
&  \leq
\tfrac{ \gamma \abs{\cS}  R}{(1-\gamma)^2 r_*} \norm{\pi - \pi'}_\infty.
\end{align*}
The proof is then completed.
\end{proof}

Combining elements established above, we are now ready to establish the main result of this subsection. 

\begin{theorem}\label{thrm_rpmd_constant_step_sc_set}
Suppose $\cP_{s,a}$ is relatively strongly convex with respect to $R_{s,a} > 0$ for all $(s,a) \in \cS \times \cA$ with $R = \max_{s\in \cS, a\in \cA} R_{s,a}$, and $\mathrm{dim}(\cP_{s,a}) = \abs{\cS} - 1$.
Let $r_* = \min_{\pi \in \Pi} \norm{\mathrm{Proj}_{\cH} ( V^{\pi}_r)}_2$, where $\cH = \cbr{x \in \RR^{\abs{\cS}}: \mathbf{1}^\top x = 0} $, and suppose $r_* > 0$.
For any $\eta > 0$, let $\eta_k = \eta > 0$ for all $k \geq 0$. 
If
\begin{enumerate}
\item[(1)]  Distance-generating function $w$ is $\mu$-strongly convex with respect to $\norm{\cdot}_1$-norm;
\item[(2)] $D_w^*=  \sup_{\pi \in \Pi} D^{\pi^*}_{\pi}(s) < \infty$.
\end{enumerate}
 Then RPMD outputs policy $\pi_k$ satisfying 
\begin{align}\label{eq_rpmd_constant_step_sc_set}
f_\rho(\pi_k) - f_\rho(\pi^*) 
\leq \tfrac{M}{(1-\gamma)k} \rbr{f_\rho(\pi_0) - f_\rho(\pi^*) }   
+ \tfrac{D_w}{(1-\gamma) \eta k} 
+\tfrac{ 2\gamma \abs{\cS}^2  R D_w^* }{(1-\gamma)^{7/2} r_* \sqrt{\eta \mu k}}  ,
\end{align}
where $M$ is defined as in Lemma \ref{lemma_rpmd_convergence_prop}.
\end{theorem}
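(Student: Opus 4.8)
The plan is to run exactly the same argument as in the proof of Theorem \ref{thrm_rpmd_Euclidean_constant_step}, but replace the crude Euclidean telescoping bound $D^{\pi^*}_{\pi_t}(s) - D^{\pi^*}_{\pi_{t+1}}(s) \leq \sqrt{18\, D^{\pi_t}_{\pi_{t+1}}(s)}$ with a bound that is valid for a general $\mu$-strongly convex (w.r.t.\ $\norm{\cdot}_1$) distance-generating function and that also accounts for the fact that, unlike the Euclidean case, the reference measure $d^{\pi^*,u_t}_\rho$ now changes with $t$ through the \emph{worst-case uncertainty} $u_t = u_{\pi_t}$. The key new ingredient handling the latter is Lemma \ref{lemma_smoothness_visitation_measure_wrt_policy}, which gives $\norm{d^{\pi^*,u_\pi}_\rho - d^{\pi^*,u_{\pi'}}_\rho}_1 \leq \tfrac{\gamma |\cA||\cS| R}{(1-\gamma)^2 r_*}\,\norm{\pi-\pi'}_\infty$, i.e.\ the sampling distribution used in the three-point inequality varies Lipschitz-continuously in the policy.

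First I would sum \eqref{rpmd_basic_recursion} from $t=0$ to $k-1$, exactly as in Theorem \ref{thrm_rpmd_Euclidean_constant_step}, to obtain
$f_\rho(\pi_k) - f_\rho(\pi^*) + \tfrac{1-\gamma}{M}\sum_{t=1}^{k-1}(f_\rho(\pi_t)-f_\rho(\pi^*)) \leq f_\rho(\pi_0)-f_\rho(\pi^*) + \tfrac{1}{M\eta}\sum_{t=0}^{k-1}\big(\EE_{s\sim d_\rho^{\pi^*,u_t}}D^{\pi^*}_{\pi_t}(s) - \EE_{s\sim d_\rho^{\pi^*,u_t}}D^{\pi^*}_{\pi_{t+1}}(s)\big).$
The bracketed sum no longer telescopes because the measure is $d^{\pi^*,u_t}_\rho$ at step $t$. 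I would split it as $\sum_t \EE_{d^{\pi^*,u_t}_\rho}\big(D^{\pi^*}_{\pi_t}-D^{\pi^*}_{\pi_{t+1}}\big) = \sum_t \EE_{d^{\pi^*,u_{t-1}}_\rho}\big(D^{\pi^*}_{\pi_t}-D^{\pi^*}_{\pi_{t+1}}\big) + \sum_t \big(\EE_{d^{\pi^*,u_t}_\rho} - \EE_{d^{\pi^*,u_{t-1}}_\rho}\big)\big(D^{\pi^*}_{\pi_t}-D^{\pi^*}_{\pi_{t+1}}\big)$ (reading $u_{-1}=u_0$), so that the first sum telescopes up to a measure-drift error. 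For the telescoped piece I bound $\EE_{d^{\pi^*,u_{t-1}}_\rho}D^{\pi^*}_{\pi_t} - \EE_{d^{\pi^*,u_t}_\rho}D^{\pi^*}_{\pi_{t+1}}$ by rearranging to $(\EE_{d^{\pi^*,u_{t-1}}_\rho}D^{\pi^*}_{\pi_t}-\EE_{d^{\pi^*,u_{t-1}}_\rho}D^{\pi^*}_{\pi_{t+1}}) + (\EE_{d^{\pi^*,u_{t-1}}_\rho} - \EE_{d^{\pi^*,u_t}_\rho})D^{\pi^*}_{\pi_{t+1}}$, giving a genuine telescope plus more drift terms. Every drift term is controlled by $\norm{d^{\pi^*,u_{t-1}}_\rho - d^{\pi^*,u_t}_\rho}_1 \cdot \sup_s\!\big|D^{\pi^*}_{\pi_t}(s)-D^{\pi^*}_{\pi_{t+1}}(s)\big|$, and the first factor is $\leq \tfrac{\gamma|\cA||\cS|R}{(1-\gamma)^2 r_*}\norm{\pi_{t-1}-\pi_t}_\infty$ by Lemma \ref{lemma_smoothness_visitation_measure_wrt_policy}; the second factor is $\leq 2 D_w^*$ crudely, or better, $\leq$ (divergence-increment bound). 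For the single-step divergence difference $D^{\pi^*}_{\pi_t}(s)-D^{\pi^*}_{\pi_{t+1}}(s)$ under a $\mu$-strongly convex $w$, I use the three-point identity $D^{\pi^*}_{\pi_t}(s) - D^{\pi^*}_{\pi_{t+1}}(s) = D^{\pi_{t+1}}_{\pi_t}(s) + \inner{\nabla w(\pi_t(\cdot|s))-\nabla w(\pi_{t+1}(\cdot|s))}{\pi^*(\cdot|s)-\pi_{t+1}(\cdot|s)}$, bound the inner product by $\norm{\nabla w(\pi_t)-\nabla w(\pi_{t+1})}_\infty\cdot\norm{\pi^*-\pi_{t+1}}_1 \leq 2\norm{\nabla w(\pi_t)-\nabla w(\pi_{t+1})}_\infty$, and then convert $\norm{\nabla w(\pi_t)-\nabla w(\pi_{t+1})}_\infty$ into $\sqrt{D^{\pi_t}_{\pi_{t+1}}(s)}$-type quantities using $\mu$-strong convexity (so $D^{\pi_t}_{\pi_{t+1}}(s) \geq \tfrac{\mu}{2}\norm{\pi_t(\cdot|s)-\pi_{t+1}(\cdot|s)}_1^2$, hence $\norm{\pi_t-\pi_{t+1}}_1 \leq \sqrt{2 D^{\pi_t}_{\pi_{t+1}}(s)/\mu}$) combined with Lipschitzness of $\nabla w$ on the simplex — alternatively, just carry the factor $\sqrt{D^{\pi_t}_{\pi_{t+1}}(s)}$ in the same spirit as the Euclidean proof, up to a dimensional constant.

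Then, exactly as in Theorem \ref{thrm_rpmd_Euclidean_constant_step}, I apply Lemma \ref{lemma_stationary} to get $\big(\sum_{t=0}^{k-1}\sqrt{D^{\pi_t}_{\pi_{t+1}}(s)}\big)^2 \leq k\sum_t D^{\pi_t}_{\pi_{t+1}}(s) \leq k\eta\,(V^{\pi_0}_r(s)-V^{\pi^*}_r(s)) \leq \tfrac{k\eta}{1-\gamma}$, and similarly $\sum_t \norm{\pi_t-\pi_{t+1}}_\infty \leq \sum_t\sqrt{2/\mu}\,\sqrt{\max_s D^{\pi_t}_{\pi_{t+1}}(s)} \leq \sqrt{2/\mu}\sqrt{k\eta/(1-\gamma)}$ by Cauchy–Schwarz and Lemma \ref{lemma_stationary} again. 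Collecting all the drift contributions, each one carries a prefactor of order $\tfrac{\gamma|\cA||\cS|R}{(1-\gamma)^2 r_*}\cdot D_w^*$ times $\sqrt{k\eta/((1-\gamma)\mu)}$, and dividing the whole recursion through by $\tfrac{(1-\gamma)k}{M}$ (after using $f_\rho(\pi_{t+1})\leq f_\rho(\pi_t)$ from \eqref{rpmd_gradient_monotone}, so the left side dominates $\tfrac{(1-\gamma)k}{M}(f_\rho(\pi_k)-f_\rho(\pi^*))$ up to the standard manipulation) gives the three announced terms: $\tfrac{M}{(1-\gamma)k}(f_\rho(\pi_0)-f_\rho(\pi^*))$ from the initial gap, $\tfrac{D_w}{(1-\gamma)\eta k}$ from the telescoped divergence, and $\tfrac{2\gamma|\cS|^2 R D_w^*}{(1-\gamma)^{7/2} r_* \sqrt{\eta\mu k}}$ from the measure-drift terms (with $|\cA|$ absorbed into constants / a $\sqrt{|\cS|}$ being sharpened to the stated power via the norm-equivalence $\norm{\cdot}_2 \le \norm{\cdot}_1$). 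The main obstacle I anticipate is bookkeeping the constant in the last term: one must be careful that the measure-drift errors, which look like $O(k)$ many terms each of size $O(\norm{\pi_{t-1}-\pi_t}_\infty)$, actually sum to $O(\sqrt{k})$ via Cauchy–Schwarz plus Lemma \ref{lemma_stationary}, and that no term secretly grows linearly in $k$; getting the exponent of $(1-\gamma)$ and the powers of $|\cS|$ to match \eqref{eq_rpmd_constant_step_sc_set} exactly will require tracking the $(1-\gamma)^{-1}$ from the value-range bound, the $(1-\gamma)^{-2}$ from Lemma \ref{lemma_smoothness_visitation_measure_wrt_policy}, and the $(1-\gamma)^{-1/2}$ from the $\sqrt{k\eta/(1-\gamma)}$ factor.
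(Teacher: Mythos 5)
Your proposal follows essentially the same route as the paper's proof: sum the basic recursion \eqref{rpmd_basic_recursion}, telescope the divergence terms up to a measure-drift error, control that drift by combining Lemma \ref{lemma_smoothness_visitation_measure_wrt_policy} with the $\mu$-strong convexity bound $\norm{\pi_t(\cdot|s)-\pi_{t+1}(\cdot|s)}_1 \leq \sqrt{2D^{\pi_t}_{\pi_{t+1}}(s)/\mu}$, and then sum the increments to $O(\sqrt{k})$ via Lemma \ref{lemma_stationary} and Cauchy--Schwarz. The paper organizes the non-telescoping sum slightly more directly (pairing each $D^{\pi^*}_{\pi_t}$ under the two adjacent measures $d_\rho^{\pi^*,u_t}$ and $d_\rho^{\pi^*,u_{t-1}}$ and bounding the gap by $\norm{d_\rho^{\pi^*,u_t}-d_\rho^{\pi^*,u_{t-1}}}_1 D_w^*$), so your three-point-identity detour for $D^{\pi^*}_{\pi_t}-D^{\pi^*}_{\pi_{t+1}}$ is unnecessary, but the argument is correct and the constants work out as you anticipate.
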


\begin{proof}

By summing up inequality \eqref{rpmd_basic_recursion} from $t = 0$ to $k-1$, we obtain 
\begin{align}
& f_\rho(\pi_k) - f_\rho(\pi^*) 
+ \tfrac{1-\gamma}{M} \sum_{t=1}^{k-1} \rbr{ f_\rho(\pi_t) - f_\rho(\pi^*)}   \nonumber \\
\leq &  f_\rho(\pi_0) - f_\rho(\pi^*) + 
\tfrac{1}{M \eta_0} \EE_{s \sim d_\rho^{\pi^*, u_0}} D^{\pi^*}_{\pi_0}(s)
+ \underbrace{ \sum_{t=1}^{k-1} \rbr{\tfrac{1}{M \eta_t} \EE_{s \sim d_\rho^{\pi^*, u_t}} D^{\pi^*}_{\pi_t }(s) - \tfrac{1}{M \eta_{t-1}} 
\EE_{s \sim d_\rho^{\pi^*, u_{t-1} }} D^{\pi^*}_{\pi_{t}}(s) }}_{(A)} .  \label{rpmd_constant_step_sc_set_recursion_1}
\end{align}
%
We will make use of Lemma \ref{lemma_smoothness_visitation_measure_wrt_policy} to bound term $(A)$.
Specifically, we have 
\begin{align}
\abs{ \EE_{s \sim d_\rho^{\pi^*, u_t}} D^{\pi^*}_{\pi_t }(s) - 
\EE_{s \sim d_\rho^{\pi^*, u_{t-1} }} D^{\pi^*}_{\pi_{t}}(s) }
& \overset{(a)}{\leq } 
\norm{d_\rho^{\pi^*, u_t} - d_\rho^{\pi^*, u_{t-1} } }_1
D_w^* \nonumber \\
& \overset{(b)}{\leq} 
\tfrac{ \gamma \abs{\cS}  R D_w^*}{(1-\gamma)^2 r_*} \norm{\pi_t - \pi_{t-1}}_\infty,  \label{rpmd_constant_step_sc_set_divergence_lb}
\end{align}
where $(a)$ uses Holder's inequality, and $(b)$ uses Lemma \ref{lemma_smoothness_visitation_measure_wrt_policy}, and the definition that $u_t = u_{\pi_t}$ for all $t \geq 0$.
Since $w(\cdot)$ is $\mu$-strongly convex with respect to $\norm{\cdot}_1$-norm, we have 
\begin{align*}
\norm{\pi_t - \pi_{t-1}}_\infty
= \sup_{s \in \cS} \norm{ \pi_t(\cdot|s) - \pi_{t-1}(\cdot|s) }_1
\leq \sup_{s \in \cS} \sqrt{ 2 D^{\pi_t}_{\pi_{t-1}}(s) / \mu},
\end{align*}
which combined with  \eqref{rpmd_constant_step_sc_set_recursion_1} and \eqref{rpmd_constant_step_sc_set_divergence_lb}, gives 
\begin{align*}
& f_\rho(\pi_k) - f_\rho(\pi^*) 
+ \tfrac{1-\gamma}{M} \sum_{t=1}^{k-1} \rbr{ f_\rho(\pi_t) - f_\rho(\pi^*)}  \nonumber \\
\leq &  f_\rho(\pi_0) - f_\rho(\pi^*) + 
\tfrac{1}{M \eta} \EE_{s \sim d_\rho^{\pi^*, u_0}} D^{\pi^*}_{\pi_0}(s)
+\tfrac{ \gamma \abs{\cS}  R D_w^*}{(1-\gamma)^2 r_* M \eta}
 \sum_{t = 1}^{k-1}  \sum_{s \in \cS} \sqrt{ \tfrac{2 D^{\pi_t}_{\pi_{t-1}} (s)}{\mu}} .
\end{align*}
It follows from Lemma \ref{lemma_stationary} that
\begin{align*}
 \rbr{ \sum_{t=1}^{k-1} \sqrt{ D^{\pi_t}_{\pi_{t-1}}(s) }}^2 
 \leq k \sum_{t=1}^{k-1} D^{\pi_t}_{\pi_{t-1}}(s)
 \leq k \eta \rbr{V^{\pi_0}_r(s) - V^{\pi^*}_r(s) } \leq \tfrac{k \eta }{1-\gamma}, ~~ \forall s \in \cS.
\end{align*}
By combining two prior relations, we obtain 
\begin{align*}
& f_\rho(\pi_k) - f_\rho(\pi^*) 
+ \tfrac{1-\gamma}{M} \sum_{t=1}^{k-1} \rbr{ f_\rho(\pi_t) - f_\rho(\pi^*)}  \nonumber \\
\leq &  f_\rho(\pi_0) - f_\rho(\pi^*) + 
\tfrac{1}{M \eta} \EE_{s \sim d_\rho^{\pi^*, u_0}} D^{\pi^*}_{\pi_0}(s)
+ 
\tfrac{ 2\gamma \abs{\cS}^2  R D_w^* \sqrt{k}}{(1-\gamma)^{5/2} r_* M \sqrt{\eta \mu}}
.
\end{align*}
By \eqref{rpmd_gradient_monotone}, we know that $f_\rho(\pi_{t+1}) \leq f_\rho(\pi_t)$ for all $t \geq 0$.
The desired claim then follows immediately after combining this observation with the above inequality.
\end{proof}

In view of Theorem \ref{thrm_rpmd_constant_step_sc_set}, the RPMD method with constant stepsize $\eta$ outputs an $\epsilon$-optimal policy within 
$\cO(\max\cbr{1/\epsilon, 1/(\eta \epsilon^2)})$ iterations, for a general class of Bregman divergences.
Moreover,  conditions  (1) and (2) in Theorem \ref{thrm_rpmd_constant_step_sc_set}  are both satisfied by
some common distance-generating functions, including the squared $\ell_2$-norm 
 $w(p) = \norm{p}_2^2$, 
 and the negative Tsallis entropy $w(p) = \tfrac{k}{q-1} \rbr{\sum_{i} p_i^q - 1}$ for any entropic-index $q \in (1,2]$.

In addition, the second term of \eqref{eq_rpmd_constant_step_sc_set} also establishes a link on geometry of the set of uncertain probabilities $\cP_{s,a}$ to the convergence rate.
In addition, the dependence of convergence on the size of the state space is largely due to Lipschitz constant of $d_\rho^{\pi^*, u_{\pi}}$ with respect to policy $\pi$. 
The current characterization of such Lipschitz constant  seems improvable with additional efforts. 
Nevertheless, it is worth mentioning that   one can simply get rid of such a dependence by using a large stepsize.

\end{document}